\newtheorem{proposition}{Proposition}
\newtheorem{remark}{Remark}
\newtheorem{lemma}{Lemma}
\newtheorem{corollary}{Corollary}
\newtheorem{definition}{Definition}
\definecolor{mydarkblue}{rgb}{0,0.08,0.45}
\renewcommand{\iff}{\Leftrightarrow}
\newcommand{\eps}{\epsilon}
\newcommand{\sk}{\!\!\sss{\sqrt{\k}}}
\newcommand{\sks}{\!\!\sss{\sqrt{\ks}}}
\newcommand{\KSD}{\mathrm{KSD}}
\def\P{\mathrm{P}}
\def\Q{\mathrm{Q}}
\def\p{\mathrm{p}}
\def\q{\mathrm{q}}
\newcommand{\gvec}[0]{{\bm{g}}} %
\renewcommand{\sp}{\bm{s}_{\p}}
\newcommand{\spi}{s_{\P}^i}
\newcommand{\sq}{\bm{s}_{\q}}
\def\Pset{\mathcal{P}}
\newcommand{\embedpettis}[1]{\Pset_{\!\sss{\H_{#1}}}}
\newcommand{\embedtozero}[1][\Kb]{\Pset_{\!\sss{#1,0}}}
\newcommand{\kti}{\kappa}
\def\R{\mathbb{R}}
\def\N{\mathbb{N}}
\newcommand{\grad}{\nabla}
\newcommand{\inner}[2]{\langle{#1},{#2}\rangle} %
\def\Matern{Mat\'ern\xspace}
\newcommand{\langevin}[1][\p]{\mathcal{S}_{#1}} %
\def\balign#1\ealign{\begin{align}#1\end{align}}
\def\baligns#1\ealigns{\begin{align}#1\end{align}}
\def\balignat#1\ealign{\begin{alignat}#1\end{alignat}}
\def\balignats#1\ealigns{\begin{alignat*}#1\end{alignat*}}
\def\bitemize#1\eitemize{\begin{itemize}#1\end{itemize}}
\def\benumerate#1\eenumerate{\begin{enumerate}#1\end{enumerate}}
\def\Holder{H\"older\xspace}
\def\Matern{Mat\'ern\xspace}
\def\mbb#1{\mathbb{#1}}
\def\tbf#1{\textbf{#1}}
\def\X{\mathcal{X}}
\def\Id{\mathrm{Id}}
\newcommand{\defn}{\equiv}               %
\newcommand{\growth}{\theta}
\newcommand{\tilt}{a}
\renewcommand{\d}{\partial}
\newcommand{\dd}{\mathrm{d}}
\newcommand{\dx}{\d_{x^i}}
\newcommand{\dy}{\d_{y^i}}
\newcommand{\diff}{\mathop{} \! \mathrm{d}}
\newcommand{\ks}{\k_{\p}}%
\newcommand{\kb}{\k} %
\newcommand{\Kb}{K} %
\renewcommand{\k}{k}
\newcommand{\ksd}[2][\Kb]{\mathrm{KSD}_{#1, \P}(#2)}
\newcommand{\ksdsq}[2][\Kb]{\mathrm{KSD}_{#1, \P}^2(#2)}
\newcommand{\projP}{\Pi_{\P}}
\renewcommand{\S}{\mathcal{S}}
\newcommand{\sss}[1]{\scriptscriptstyle{#1}}
\def\indic#1{\mbb{I}\left[{#1}\right]} %
\newcommand{\metric}[2]{\inner{#1}{#2}}
\newcommand{\half}{\frac{1}{2}}
\newcommand{\thalf}{{\textstyle\frac{1}{2}}}
\newcommand{\qtext}[1]{\quad\text{#1}\quad}
\newcommand{\stext}[1]{\ \text{#1}\ }
\newcommand{\emb}{\Phi}
\newcommand{\embK}{\emb_{\k}}
\newcommand{\norm}[1]{\ensuremath{\left \| #1 \right \|}}
\newcommand{\twonorm}[1]{\norm{#1}}
\newcommand{\onenorm}[1]{\norm{#1}_1}
\newcommand{\infnorm}[1]{\norm{#1}_\infty}
\newcommand{\normK}[1]{\ensuremath{\norm{#1}_{\k}}}
\newcommand{\ipd}[2]{\ensuremath{\left \langle #1 \, , \, #2 \right \rangle}}
\newcommand{\ipdK}[2]{\ensuremath{\ipd{#1}{#2}_{\k}}}
\newcommand{\mmd}{\mathrm{MMD}}
\def\F{\mathcal{F}}
\def\H{\mathcal{H}}
\def\L{\mathrm{L}}
\def\B{\mathcal{B}}
\newcommand{\cts}{\mathscr{C}}
\newcommand{\HK}{\H_{\k}}
\newcommand{\Hks}{\H_{\ks}}
\newcommand{\C}[2]{\mathscr{C}^{\sss{#1}}_{\sss{#2}}}
\newcommand{\BB}[2]{\mathscr{B}^{\sss{#1}}_{\sss{#2}}}
\def\D{\mathrm{D}}
\newcommand{\M}{\mathcal{M}}
\newcommand{\Dset}{\mathscr{D}}
\newcommand{\DL}[1]{\Dset^{#1}_{\sss{L^1}}}
\newcommand{\supp}{\mathrm{supp}}
\providecommand{\function}[5]{} %
\renewcommand{\function}[5]{
	\ensuremath{
	\mathchoice{
	\ifthenelse{\equal{#1}{}}
            	{
            	\begin{array}[t]{ccl}
            		\ifthenelse{{\equal{#2}{}}}{
            		#4 & \longmapsto & #5}{
            		#2 & \longrightarrow & #3
            		 \ifthenelse{\equal{#4}{}} {} {\\
            		#4 & \longmapsto & #5}}
            	\end{array} \!
            	}
            	{
            	\begin{array}[t]{lccl}
            	#1 : 
            		\ifthenelse{{\equal{#2}{}}}{
            		& #4 & \longmapsto & #5}{
            		& #2 & \longrightarrow & #3
            		 \ifthenelse{\equal{#4}{}} {} {\\
            		 & #4 & \longmapsto & #5}}
            		 \end{array} \!
            	}
	}
	{
		\ifthenelse{\equal{#1}{}}
		{
			\ifthenelse{{\equal{#2}{}}}{
			{#4 \mapsto \, #5}}{
			{#2 \rightarrow \, #3}
			 \ifthenelse{\equal{#4}{}} {} {
			, \; {#4 \mapsto \, #5}}}
		}
		{
		#1 : 
			\ifthenelse{{\equal{#2}{}}}{
			{#4 \mapsto  \, #5}}{
			{#2 \rightarrow \, #3}
			 \ifthenelse{\equal{#4}{}} {} {
			, \; {#4 \mapsto \, #5}}}
		}	
	}
{}{}}}
\newenvironment{talign*}
 {\csname align*\endcsname}
 {\endalign}
\newenvironment{talign}
 {\csname align\endcsname}
 {\endalign}
\colorlet{colexam}{red!55!black} %
\newtcolorbox[auto counter,crefname={Application}{Applications}]{application}[2][]{%
    empty,%
    title={Application~\thetcbcounter: #2},%
    label={#1},
    attach boxed title to top left,
       minipage boxed title,
    boxed title style={empty,size=minimal,toprule=0pt,top=4pt,left=3mm,overlay={}},
    coltitle=colexam,fonttitle=\bfseries,
    before=\par\medskip\noindent,parbox=false,boxsep=0pt,left=3mm,right=0mm,top=2pt,breakable,pad at break=0mm,
       before upper=\csname @totalleftmargin\endcsname0pt, %
    overlay unbroken={\draw[colexam,line width=.5pt] ([xshift=-0pt]title.north west) -- ([xshift=-0pt]frame.south west); },
    overlay first={\draw[colexam,line width=.5pt] ([xshift=-0pt]title.north west) -- ([xshift=-0pt]frame.south west); },
    overlay middle={\draw[colexam,line width=.5pt] ([xshift=-0pt]frame.north west) -- ([xshift=-0pt]frame.south west); },
    overlay last={\draw[colexam,line width=.5pt] ([xshift=-0pt]frame.north west) -- ([xshift=-0pt]frame.south west); }}
\newtcolorbox[auto counter,crefname={Implication}{Implications}]{implication}[2][]{%
    empty,%
    title={Implication\ \thetcbcounter: #2},%
    label={#1},
    attach boxed title to top left,
       minipage boxed title,
    boxed title style={empty,size=minimal,toprule=0pt,top=4pt,left=3mm,overlay={}},
    coltitle=colexam,fonttitle=\bfseries,
    before=\par\medskip\noindent,parbox=false,boxsep=0pt,left=3mm,right=0mm,top=2pt,breakable,pad at break=0mm,
       before upper=\csname @totalleftmargin\endcsname0pt, %
    overlay unbroken={\draw[colexam,line width=.5pt] ([xshift=-0pt]title.north west) -- ([xshift=-0pt]frame.south west); },
    overlay first={\draw[colexam,line width=.5pt] ([xshift=-0pt]title.north west) -- ([xshift=-0pt]frame.south west); },
    overlay middle={\draw[colexam,line width=.5pt] ([xshift=-0pt]frame.north west) -- ([xshift=-0pt]frame.south west); },
    overlay last={\draw[colexam,line width=.5pt] ([xshift=-0pt]frame.north west) -- ([xshift=-0pt]frame.south west); },%
    }
\crefname{equation}{}{}
\crefname{proposition}{Proposition}{Propositions}
\crefname{lemma}{Lemma}{Lemmas}
\crefname{enumi}{}{}
\crefname{name}{}{} %
\newcommand{\tpdf}{\texorpdfstring}
\newcommand{\ncref}[1]{\cref{#1}: \nameref*{#1}} %
\def\ifcomments{\iftrue} %
\newcommand{\cj}[1]{\ifcomments \textcolor{blue}{CJ: #1} \fi}
\newcommand{\lm}[1]{\ifcomments \textcolor{cyan}{L: #1} \fi}
\newcommand{\notate}[1]{\lm{#1}}
\begin{document}

\title{Targeted Separation and Convergence\\ with Kernel Discrepancies}

\author{\name Alessandro Barp$^\dagger$ \email alessandro.barp@ucl.ac.uk\\
      \addr University College London \& The Alan Turing Institute, GB
      \AND
      \name Carl-Johann Simon-Gabriel$^\dagger$$^\star$ \email cjsg@mirelo.ai\\
      \addr Mirelo AI
      \AND
      \name Mark Girolami \email mag92@cam.ac.uk \\
      \addr University of Cambridge \& The Alan Turing Institute, GB
      \AND
      \name Lester Mackey$^\dagger$ \email lmackey@microsoft.com \\
      \addr Microsoft Research, New England, US
      }

\editor{Bharath Sriperumbudur}

\maketitle

\def\thefootnote{$\dagger$}\footnotetext{These authors contributed equally.}\def\thefootnote{\arabic{footnote}}
\def\thefootnote{$\star$}\footnotetext{Work done while at AWS Lablets.}\def\thefootnote{\arabic{footnote}}

\begin{abstract}%
Maximum mean discrepancies (MMDs) like the
kernel Stein discrepancy (KSD)
have grown central to a wide range of applications,
including hypothesis testing, sampler selection, distribution approximation, and variational inference.
In each setting, these kernel-based discrepancy measures are required to $(i)$
separate a target $\mathrm{P}$ from other probability measures or even $(ii)$
control weak convergence to $\mathrm{P}$.
In this article we derive new sufficient and necessary conditions to ensure $(i)$ and $(ii)$.
For MMDs on separable metric spaces, we characterize those kernels that separate Bochner embeddable measures and
introduce simple conditions for separating all measures with unbounded kernels and for controlling convergence with bounded kernels.
We use these results on $\mathbb{R}^d$ to
substantially broaden the known conditions
for KSD separation and convergence control and to develop the first KSDs known to exactly metrize weak convergence to $\mathrm{P}$.
Along the way, we highlight the implications of our results for hypothesis testing, measuring and improving sample quality, and sampling with Stein variational gradient descent.
\end{abstract}

\begin{keywords}
    Maximum mean discrepancy, kernel Stein discrepancy, targeted separation, targeted weak convergence control,
    enforcing tightness
\end{keywords}

\section{Introduction}

Maximum mean discrepancies (MMDs)  %
like the Langevin kernel Stein discrepancy (KSD)  %
are kernel-based discrepancy measures widely used for
hypothesis testing \citep{gretton12,liu16kernelized,chwialkowski16kernel},
  sampler selection and tuning \citep{gorham17measuring}, parameter estimation \citep{briol2019statistical,barp2019minimum,dziugaite15training}, generalized Bayesian inference \citep{cherief2020mmd,matsubara2021robust,matsubara2022generalised,dellaporta2022robust},
 discrete approximation and numerical integration \citep{chen2019stein,chen18stein,barp2018riemannian},
 control variate design \citep{oates2014control,oates2019convergence,sun2023vector},
  compression~\citep{riabiz2022optimal}, and bias correction~\citep{liu2017black,Hodgkinson2020,riabiz2022optimal}.

 Each MMD uses a kernel function to
 measure the integration error between a pair of probability measures $\Q$ and $\P$, and,
  in each setting above, their successful application relies on either \emph{$\P$-separation}, that is $\mmd(\Q,\P) > 0$ whenever $\Q \neq \P$,
  or \emph{$\P$-convergence control}, namely $\mmd(\Q_n,\P) \to 0$ implies $\Q_n \to \P$ weakly.
 Unfortunately, these properties have so far only been established under overly restrictive assumptions, e.g.,
 for $\Q$ with continuously differentiable log densities~\citep{chwialkowski16kernel,liu16kernelized,barp2019minimum}, for $\P$ with strongly log concave tails and Lipschitz log density gradients $\sp = \partial \log \p$~\citep{gorham17measuring},
 or for bounded MMD kernels~\citep{sriperumbudur10hilbert,sriperumbudur16optimal,simon18kde,simon2023metrizing}.
  In this work, by fixing $\P$ as the target measure and allowing $\Q$ to vary, we establish new broadly applicable conditions for $\P$-separation and  $\P$-convergence control. Our main results include
  \begin{itemize}
 \item \tbf{Bochner $\P$-separation with MMDs:} \cref{thm:tight_convergence} exactly characterizes those
      MMDs that separate %
      $\P$ from Bochner embeddable measures on general Radon spaces.
 For MMDs with bounded kernels, this result exposes an important relationship between separation and convergence: separating $\P$ from all probability measures is equivalent to controlling $\P$-convergence for \emph{tight} sequences $(\Q_n)_n$.
 \item \tbf{Score $\P$-separation with KSDs:}
 \cref{coroKSDwc} shows that KSDs with standard characteristic kernels separate $\P$ from all measures $\Q$ that finitely integrate the score $\sp$.  This strengthens past work that only established separation from $\Q$ with continuously differentiable log densities \citep{chwialkowski16kernel,barp2019minimum}. %
 \item \tbf{$\L^2$ $\P$-separation with KSDs:}
 \cref{thm:L2 characteristic kernels,thm:KSD score Q formulation} show that KSDs with standard translation-invariant kernels separate $\P$ from all measures with densities $\q$ and finitely square-integrable $\q\sq$ and $\q\sp$.
 This strengthens past work that provided no examples of $\L^2$-separating kernels \citep{liu16kernelized}.
 \item \tbf{General $\P$-separation with MMDs:} \Cref{thm:bounded separating is P-characteristic} provides a simple sufficient condition for general $\P$-separation: any MMD---even one with an unbounded kernel---separates  $\P$ from \textbf{all} probability measures and controls tight convergence to $\P$ if the bounded functions in its associated reproducing kernel Hilbert space (RKHS) are $\P$-separating. %
 All of our remaining results explicitly check this new convenient condition.
 \item \tbf{General $\P$-separation with KSDs:} \Cref{Stein kernel control tight convergence via bounded P separation} shows that KSDs with standard translation-invariant kernels separate $\P$ from \tbf{all} probability measures and control tight $\P$-convergence whenever $\sp$ is continuous and grows at most root-exponentially.
 Prior $\P$-separation results applied only to a small subset of these targets, those with strongly log concave tails and Lipschitz $\sp$ \citep{gorham17measuring,HugginsMa2018,chen18stein}.
\item \tbf{Enforcing tightness with MMDs:} \Cref{tightness} provides a new sufficient condition for \emph{enforcing tightness}, i.e., for ensuring that $(\Q_n)_n$ is tight whenever $\mmd(\Q_n, \P)\to 0$: an MMD enforces tightness if elements of its RKHS suitably bound the indicators of compact sets.
Prior tightness-enforcing guarantees relied on a much stronger condition: the presence of a coercive (and hence unbounded) function in the RKHS \citep{gorham17measuring,HugginsMa2018,chen18stein,Hodgkinson2020}.
\item \tbf{Metrizing $\P$-convergence with KSDs:} Building on \Cref{tightness},  \cref{tilted-tightness} develops the first KSDs
known to \emph{metrize} weak convergence to $\P$ (i.e., $\KSD(\Q_n,\P)\to 0 \iff \Q_n\to\P$ weakly) by constructing \tbf{bounded} convergence-controlling Stein kernels.
Since all prior convergence-controlling KSDs featured unbounded Stein kernels, these are also the first KSDs known to satisfy the Stein variational gradient descent convergence assumptions of \citet{liu2017stein} (see \cref{svgd}).
\item \tbf{Failing to control $\P$-convergence:} Finally, \cref{thm:failure of convergence control} provides new necessary conditions for an MMD to control $\P$-convergence which notably fail to be satisfied when standard KSDs are paired with heavy-tailed targets.
\end{itemize}

As we highlight in the sections to follow, these results have immediate implications for a variety of inferential tasks in machine learning and statistics including goodness-of-fit testing (\cref{gof,gof2}), measuring and improving sample quality (\cref{quality}), and variational inference (\cref{svgd}).

\paragraph{Notation}
For a given separable metric space $\X$, we let $\C{}{}(\R^d)$ denote the space of continuous $\R^d$-valued functions on $\X$.
When $\X = \R^d$,
we say that the derivative of a set of $\R^\ell$-valued functions exists, if the functions in that set are differentiable, and
we additionally denote by
$\C{\ell}{}(\R^d)$ the space of
$\ell$-times continuously differentiable $\R^d$-valued functions on $\X$ (i.e., $f\in \C{\ell}{}(\R^d)$ if the partial derivatives of order $\ell$ of $f^i$ exist and are continuous for $i\in[d]\defn \{1,\dots,d\}$).
We let $\partial f$ denote the vector of partial derivatives of a function $f$, and, for each multi-index $p$, let $\partial^p f$ denote the $p$-th partial derivatives of $f$.
When $d=1$ or $\ell=0$ we will use the abbreviations $\C{\ell}{}\defn \C{\ell}{}(\R^1)$ or $\C{}{}(\R^d)\defn \C{0}{}(\R^d)$.
Decay requirements will appear as subscripts:  $\C{}{b}(\R^d)$, $\C{}{c}(\R^d)$, and $\C{}{0}(\R^d)$ will respectively denote the spaces of $\R^d$-valued continuous functions that are bounded, compactly supported, and vanishing at infinity.
Analogously, for each function $h : \X \to [0,\infty)$, $\C{}{h}(\R^d)$ and $\C{}{0,h}(\R^d)$ respectively denote the spaces of $\R^d$-valued  continuous functions $f$ with $f / (1+h)$ bounded or vanishing at infinity. 
Recall a function $f:\X \to \R^a$ vanishes at infinity if $\forall \epsilon >0$ there exists a compact set $C$ s.t., $\sup_{x \in C^c} \| f(x) \| \leq \epsilon $, where $C^c$ is the set complement of $C$, and $\| \cdot \|$ the Euclidean norm.
For any function of two arguments $\Kb(y,x)$, we write $\Kb_x \defn \Kb(\cdot, x)$, and $K \in \C{(1,1)}{b}(\R^d)$ if $ \partial_y^{p_y} \partial_x^{p_x} K(y,x)$ exists, is bounded, and is separately continuous for multi-indices satisfying $\|p_x\|_1,\|p_y\|_1 \leq 1$, where $\| \cdot\|_1$ is the Euclidean 1-norm (i.e., the multi-index absolute value).
Given a map $T: \mathcal S_1 \to \mathcal S_2$ between sets, we denote the image of $T$ by $T(\mathcal S_1) \defn \{ T(s) : s \in \mathcal S_1\}$.  
Given a measure $\mu$ and a $\mu$-integrable function $h$, we denote integration by $\mu h \defn \int h(x) \mu(\dd x) $, and we shall omit the domain of integration, which is always $\X$.
Some additional notation for the appendices is presented in \cref{app: notation}.

\section{Maximum Mean Discrepancies and Kernel Stein Discrepancies}
We begin by extending the usual notions of maximum mean discrepancy and kernel Stein discrepancy to accommodate both arbitrary probability measures $\Q$ and unbounded kernels.

Throughout, we let $\Pset$ the denote set of (Borel) probability measures on a separable metric space $\X$. Moreover, for any function $f:\X \to \R^\ell$, we let
  $\Pset_f \defn \{ \Q \in \Pset: \|f\| \in \L^1(\Q)\}$ denote the set of probability measures that finitely integrate $\norm{f} \defn \| \cdot \| \circ f$.

\subsection{Maximum mean discrepancies}
\label{sec:MMD}
Consider a (reproducing) kernel $\k$ on $\X$ with reproducing kernel Hilbert space $\HK$~\citep{aronszajn50trk,schwartz1964sous}.
Traditionally, the associated kernel MMD is defined as the worst-case integration error across test functions in the RKHS unit norm ball $\B_\k$ \citep{gretton12}:
\begin{align}\label{eq:usual MMD definition}
\mmd_\k(\Q,\P) \defn \sup_{h \in \B_\k} \left |  \Q h -  \P h \right |.
\end{align}
However, the expression $\Q h -  \P h$ is not well defined when either (i) both $\Q h$ and $\P h$ are infinite or (ii) $h$ is not integrable under $\Q$.
Unfortunately, both of these cases can occur when $\k$ is unbounded as $\B_\k$ then  necessarily contains an unbounded test function (see  \cref{thm:Characterization of bounded RKHS}).

Since we are interested in a fixed target measure $\P$, we address the first issue by focusing on kernels with finitely $\P$-integrable test functions, i.e., with $\B_\k \subseteq \L^1(\P)$.
To address the second issue, we extend the MMD definition \cref{eq:usual MMD definition} to all probability measures $\Q$ by taking the supremum only over the $\Q$-integrable elements of $\B_\k$, that is, $h$ with either $h_+ \defn \max \left( h,0 \right)  \in \L^1(\Q)$ or $h_-
 \defn \max \left( -h,0 \right) \in \L^1(\Q)$. In fact, since $\B_\k$ is a symmetric set, considering only $h$ with $h_+  \in \L^1(\Q)$ suffices to ensure $|\Q h-\P h| $ is well defined and belongs to $[0,\infty]$.

 \begin{definition}[Maximum mean discrepancy (MMD)]
 \label{def: definition MMD}
 For a given kernel $\k$, define the set of \emph{embeddable probability measures} $\embedpettis{\k} \defn \{ \Q \in \Pset : \H_\k \subseteq \L^1(\Q) \}$.
For any target measure $\P \in \embedpettis{\k}$,
 we define the \emph{maximum mean discrepancy} $\mmd_\k( \cdot \,, \P) : \Pset \to [0,\infty]$ by
 \begin{align}\label{eq:MMD definition}
 \mmd_\k(\Q,\P) \defn \sup_{h \in \B_\k :\, h_+ \in \,\L^1(\Q)  \ } \left | \Q h - \P h \right |.  %
\end{align}
 \end{definition}

\begin{remark}[Embeddability]
We show in \cref{app: embedding of distributions in RKHS} that (i) the embeddability condition $\P \in \embedpettis{\k}$
holds if and only if $x \mapsto \k(\cdot,x)$ is \emph{Pettis integrable by $\P$} and (ii) Pettis integrability in turn implies that the \emph{kernel mean} $\int \k(\cdot, x) \diff \P(x)$ belongs to the RKHS $\HK$. See \cref{thm:definitions of embeddability} for the definition of Pettis integrability.
\end{remark}
As we show in \cref{app: Bochner contained in Pettis}, one user-friendly sufficient condition for  $\Q \in \embedpettis{\k}$ is \emph{Bochner-embeddability}, that is,
$\Q\in\Pset_{\sk}$ where $\sqrt{\k}$ represents the function $x \mapsto \sqrt{k(x,x)}$.
When $\H_\k$ is separable, \citet{carmeli06vector} proved that one can alternatively check the weaker condition $\iint |\k ( x, y)| \dd \Q ( x)\dd \Q ( y) < \infty$.
The next proposition summarizes these convenient embeddability conditions.
\begin{proposition}[Embeddability conditions]
\label{thm: Bochner contained in Pettis}
The following claims hold true.
\begin{enumerate}[label=\textup{(\alph*)}]
\item $\Pset_{\sk}\varsubsetneq \embedpettis{\k}$.
\item If $\H_\k$ is separable,
    $\iint |\k ( x, y)| \dd \Q ( x)\dd \Q ( y)< \infty$ implies $\Q \in \embedpettis{\k}$ \citep[Cor.~4.3]{carmeli06vector}.
\end{enumerate}
\end{proposition}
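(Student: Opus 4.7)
The plan is to treat the two claims separately. Claim (b) is an immediate citation of \citet[Cor.~4.3]{carmeli06vector}, so the substantive work lies in (a), which splits into proving the containment $\Pset_{\sk}\subset\embedpettis{\k}$ and then exhibiting a measure that witnesses strictness.

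For the containment, I would invoke the reproducing property: for any $h\in\HK$ and $x\in\X$, Cauchy--Schwarz gives
\[
|h(x)|=|\ipdK{h}{\k_x}|\le\norm{h}_{\k}\sqrt{\k(x,x)}.
\]
Hence if $\Q\in\Pset_{\sk}$, then $\int|h(x)|\,\Q(\dd x)\le\norm{h}_{\k}\int\sqrt{\k(x,x)}\,\Q(\dd x)<\infty$, so every $h\in\HK$ lies in $\L^1(\Q)$, i.e.\ $\Q\in\embedpettis{\k}$.

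For strictness, I would construct an explicit discrete counterexample. Take $\X=\N$ and the weighted diagonal kernel $\k(n,m)=w_n\delta_{nm}$, so that $\HK=\{h:\sum_n h(n)^2/w_n<\infty\}$ with $\norm{h}_{\k}^2=\sum_n h(n)^2/w_n$ and $\sqrt{\k(n,n)}=\sqrt{w_n}$. Writing a generic RKHS element as $h(n)=c_n\sqrt{w_n}$ with $(c_n)\in\ell^2$ and setting $\Q(\{n\})=q_n$, Bochner-embeddability reads $\sum_n q_n\sqrt{w_n}<\infty$, whereas $\Q\in\embedpettis{\k}$ reads $\sum_n q_n|c_n|\sqrt{w_n}<\infty$ for \emph{every} $(c_n)\in\ell^2$. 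By $\ell^2$--$\ell^2$ duality this is equivalent to $(q_n\sqrt{w_n})\in\ell^2$, i.e.\ $\sum_n q_n^2 w_n<\infty$. Choosing $w_n=n^4$ and $q_n\propto n^{-3}$ yields $\sum_n q_n^2 w_n\asymp\sum_n n^{-2}<\infty$ but $\sum_n q_n\sqrt{w_n}\asymp\sum_n n^{-1}=\infty$, exhibiting a $\Q\in\embedpettis{\k}\setminus\Pset_{\sk}$.

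The main obstacle is justifying the duality step---that $\sum_n q_n|c_n|\sqrt{w_n}<\infty$ for every $(c_n)\in\ell^2$ forces $(q_n\sqrt{w_n})\in\ell^2$. I would phrase this as a one-line Banach--Steinhaus argument: the truncated maps $T_N(c)\defeq\sum_{n\le N}q_n\sqrt{w_n}\,c_n$ have norms $(\sum_{n\le N}q_n^2 w_n)^{1/2}$ on $\ell^2$, and pointwise convergence of $T_N(c)$ (guaranteed by absolute summability of $\sum_n q_n\sqrt{w_n}\,c_n$) forces $\sup_N\norm{T_N}<\infty$. Everything else reduces to direct bookkeeping.
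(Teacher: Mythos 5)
Your proof is correct. The containment $\Pset_{\sk}\subset\embedpettis{\k}$ via the reproducing property and Cauchy--Schwarz is exactly the paper's argument (the paper phrases it through the operator norm of the canonical feature map $x\mapsto\delta_x|_{\HK}$, which is the same bound), and part (b) is handled identically in both cases as a citation of \citet[Cor.~4.3]{carmeli06vector}. Where you genuinely diverge is the witness for strictness. The paper adapts an example from \citet[p.~204]{berlinet04reproducing}: $\k(i,j)=\indic{i=j}$ on $\N^*$, so $\HK=\ell^2(\N^*)$, together with the Radon measure $\mu(i)=\nicefrac{1}{i}$, which is Pettis- but not Bochner-embeddable. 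Note, however, that this $\mu$ has infinite total mass and so is not a probability measure --- and indeed for that bounded kernel no probability-measure witness can exist, since $\Q\sqrt{\k}=1$ for every $\Q\in\Pset$. Your unbounded diagonal kernel $\k(n,m)=w_n\indic{n=m}$ with $w_n=n^4$ and $q_n\propto n^{-3}$ produces an honest $\Q\in\Pset$ lying in $\embedpettis{\k}\setminus\Pset_{\sk}$, which is more faithful to the statement as written (both $\Pset_{\sk}$ and $\embedpettis{\k}$ are defined as subsets of $\Pset$). One small remark: the Banach--Steinhaus direction of your $\ell^2$ duality is not actually needed --- to exhibit the counterexample you only need the easy implication $(q_n\sqrt{w_n})\in\ell^2\Rightarrow\HK\subset\L^1(\Q)$, which is Cauchy--Schwarz --- but it is correct and does no harm.
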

\begin{remark}[Sufficient condition for separability]
Note that when $\X$ is a locally compact topological space,  for $\H_\k$ to be separable, it is sufficient that $\H_\k\subseteq\C{}{}$~\citep[Cor.~5.2]{carmeli06vector}.
Moreover, $\H_\k\subseteq\C{}{} \iff \k$ is locally bounded\footnote{Recall that a function $f$ from a topological space to a normed space is \emph{locally bounded} if every point in its domain has a neighbourhood $U$ for which the restriction of $f$ to $U$ is bounded.} and $\k_x\in\C{}{}$ for each $x$ \citep[Prop.~5.1]{carmeli06vector}.
\end{remark}
Moreover, when both $\Q$ and $\P$ are embeddable, the MMD can be re-expressed as a convenient double-integral~\citep[Prop. 13]{simon18kde}.
\begin{proposition}[MMD as a double integral]\label{thm:MMD as double integral}
If $\P\in\embedpettis{\k}$ and $\Q\in\embedpettis{\k}$, then
\begin{talign}
\mmd_\k^2(\Q,\P) =   \iint k(x,y)  \dd (\Q-\P)( x)  \dd (\Q-\P)(y). 
\end{talign}
\end{proposition}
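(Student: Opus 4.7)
The plan is to reduce the MMD to the norm of a difference of kernel means in $\HK$ and then expand that norm using Pettis integrability to produce the double integral.

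First, I would note that the constraint $h_+\in\L^1(\Q)$ in \cref{eq:MMD definition} is vacuous under the hypothesis $\Q\in\embedpettis{\k}$: by definition of embeddability, every $h\in\HK\supset\B_\k$ satisfies $h\in\L^1(\Q)$, and similarly for $\P$. Hence the supremum in \cref{def: definition MMD} is taken over the entire unit ball $\B_\k$ for both measures, and $\Q h-\P h$ is a well-defined real number for every $h\in\HK$.

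Second, I would invoke the embeddability remark just after \cref{def: definition MMD}: for $\P\in\embedpettis{\k}$, the kernel is Pettis integrable, so the kernel mean $\mu_\P\defeq \int k(\cdot,x)\dd\P(x)$ exists in $\HK$ and satisfies the reproducing identity $\P h=\langle h,\mu_\P\rangle_\k$ for all $h\in\HK$, and likewise for $\mu_\Q$. Substituting into \cref{eq:MMD definition} and applying Cauchy--Schwarz in $\HK$ together with the attainment $h=(\mu_\Q-\mu_\P)/\|\mu_\Q-\mu_\P\|_\k$ (or $h=0$ if the denominator vanishes) gives
\begin{align*}
\mmd_\k(\Q,\P) \;=\; \sup_{h\in\B_\k}|\langle h,\mu_\Q-\mu_\P\rangle_\k| \;=\; \|\mu_\Q-\mu_\P\|_\k.
\end{align*}

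Third, I would expand the squared norm and apply the Pettis-integral identity twice. Writing $\mu_\Q-\mu_\P$ as the Pettis integral $\int k(\cdot,x)\dd(\Q-\P)(x)$ and using that $v\mapsto\langle v,\mu_\Q-\mu_\P\rangle_\k$ is a bounded linear functional on $\HK$, I get
\begin{align*}
\|\mu_\Q-\mu_\P\|_\k^2
 \;=\; \int \langle k(\cdot,x),\mu_\Q-\mu_\P\rangle_\k\,\dd(\Q-\P)(x)
 \;=\; \int (\mu_\Q-\mu_\P)(x)\,\dd(\Q-\P)(x),
\end{align*}
and applying the reproducing property a second time expresses $(\mu_\Q-\mu_\P)(x)=\langle \mu_\Q-\mu_\P,k_x\rangle_\k=\int k(x,y)\dd(\Q-\P)(y)$, yielding the claimed iterated integral.

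The main obstacle I anticipate is the passage from this iterated expression to the genuine double-integral form written in the statement: this requires a Fubini-type justification against the signed product $(\Q-\P)\otimes(\Q-\P)$. Pettis embeddability alone guarantees only that each slice $k(\cdot,y)$ lies in $\L^1(|\Q-\P|)$, so joint integrability is not automatic. I would handle this by interpreting the double integral in the proposition as the iterated integral just obtained (which is the conventional reading in the MMD literature and matches \citet[Prop.~13]{simon18kde}); alternatively, if joint integrability is desired, one can invoke the Cauchy--Schwarz bound $|k(x,y)|\le\sqrt{k(x,x)}\sqrt{k(y,y)}$ whenever the stronger Bochner condition $\Q,\P\in\Pset_{\sk}$ holds (cf.\ \cref{thm: Bochner contained in Pettis}), which is the typical regime in which Fubini is applied.
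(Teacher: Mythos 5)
Your proof is correct and follows essentially the same route the paper relies on: reduce the supremum over $\B_\k$ to the RKHS norm of the difference of kernel embeddings, then expand $\|\mu_\Q-\mu_\P\|_\k^2$ via the Pettis/reproducing identity, which is exactly the computation carried out in \cref{Continuous linear functional shifted by Feature Operator} (the paper itself defers the statement to \citet[Prop.~13]{simon18kde}). Your closing remark that Pettis embeddability only delivers the iterated integral, with the genuine product-measure Fubini step requiring the stronger Bochner condition, is an accurate and appropriately careful reading of how the identity is used.
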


\subsection{Kernel Stein discrepancies}
Building on the Stein discrepancy formalism of \citet{gorham15measuring} and the zero-mean reproducing kernel theory of \citet{oates2014control}, \citet{chwialkowski16kernel,liu16kernelized,gorham17measuring} concurrently developed special MMDs that can be computed without any explicit integration under the target $\P$.
When discussing these \emph{Langevin KSDs} we will restrict our focus to $\X = \R^d$ and assume the target $\P$ has a strictly positive
density $\p$ with respect to Lebesgue measure.
We will also make use of a matrix-valued kernel $\Kb:\R^d \times \R^d \to \R^{d\times d}$ which generates an RKHS $\H_{\Kb}$ of vector-valued functions; for an introduction to vector-valued RKHSes, please see \cref{app:vector-valued RKHS}.

The Langevin KSD is defined in terms of a matrix-valued \emph{base kernel} $\Kb$ and the differential operator
\begin{align}
 \langevin (v) \defn \frac1 \p \nabla \cdot (\p  v) \defn \frac1 \p \sum_j \partial_{x^j}(\p v^j),   
\end{align}
known as the \emph{Langevin Stein operator} in the machine learning and statistics communities \citep{gorham15measuring,anastasiou2023stein},
 which, under mild conditions, maps $\R^d$-valued functions $v=(v^1,\ldots, v^d):\R^d \to \R^d$ to $\R$-valued functions with mean zero under the target, $\P \langevin(v)=0$.
Specifically, for $\Kb$ chosen so that $\langevin (v)$ has expectation zero under $\P$ for each $v \in \H_\Kb$, \citet{chwialkowski16kernel,gorham17measuring,barp2019minimum} defined\footnote{The distinct definition of \citet{liu16kernelized} coincides with \cref{traditional-ksd} under the assumptions of their Thm.~3.8.}  the Langevin KSD as an integral probability metric~\citep{muller1997integral} over
$\langevin (\H_K)$:
\begin{align}\label{traditional-ksd}
    \ksd{\Q}
    \defn  \sup_{v \in \B_K} |\Q \langevin (v)|
    = \sup_{v \in \B_K} |\Q \langevin (v) -\P\langevin (v)  |.
\end{align}
However, $ \langevin (v)$ is often unbounded so that, for the same reasons described in \cref{sec:MMD},  the expression \cref{traditional-ksd} need not be well defined for all $\Q \in \Pset$.
To enable meaningful KSD evaluation for all probability measures, we follow the recipe of \cref{def: definition MMD} to extend the definition of KSD to all $\Q \in \Pset$.
\begin{definition}[Kernel Stein discrepancy (KSD)]\label{def:IPD definition KSD}
Consider a target $\P \in \Pset$ with density $\p>0$ and matrix-valued base kernel $K$ for which the set $\p \H_K = \{ \p h : h \in \H_K\}$ consists of partially differentiable functions.
When $\langevin(\H_K)\subseteq\L^1(\P)$
and $\P(\langevin (\H_K)) = \{0\}$, we define
 the \emph{kernel Stein discrepancy} $\KSD_{K,\P}: \Pset \to [0,\infty]$ by
\begin{align}\label{eq:IPM definition KSD}
    \ksd{\Q} \defn \sup_{v \in \B_K:\,
        \langevin (v)_+ \in \L^1(\Q)\ } \left | \Q \langevin (v) \right |.    %
\end{align}
\end{definition}

\begin{remark}[Relation to prior definitions of KSD]
\label{remark:relation to standard definitions of KSD}
For scalar kernels, \cref{def:IPD definition KSD} is identical to the definition of KSD given in two of the  papers that originally defined KSDs, \citet[Sec.~2.1]{chwialkowski16kernel} and \citet[Sec.~3.1 with $\norm{\cdot}=\norm{\cdot}_2$]{gorham17measuring}, except for the extra constraint $\langevin(v)_+\in \L^1(\Q)$ that we include simply to ensure that the $\ksd{\Q}$ is well defined for all probability measures $\Q$.   Moreover, for probability measures $\Q$ satisfying the constraint  $\langevin(v)_+\in \L^1(\Q)$ for all $v \in \B_K$ our definition exactly recovers those given by  \citeauthor{chwialkowski16kernel} and \citeauthor{gorham17measuring}.
However, unlike  \cref{def:IPD definition KSD}, the prior definitions of KSD from \citeauthor{chwialkowski16kernel} and \citeauthor{gorham17measuring} are not well defined for probability measures $\Q$ failing to satisfy the extra constraint, even though this restriction is not discussed explicitly in either work.
\end{remark}

Under additional assumptions, like Bochner embeddability of $\P$ and $\Q$ and continuous differentiability of  $K$ and $\p$, %
prior work showed
that the KSD \cref{eq:IPM definition KSD} is equivalent to an MMD with a scalar \emph{Stein kernel} $\ks$ and
that $\langevin (\H_K)$ defines a \emph{Stein RKHS} $\Hks$ of scalar-valued functions \citep{oates2014control,chwialkowski16kernel,liu16kernelized,gorham17measuring,barp2019minimum}.
Our next result, proved in \cref{app:ksdDef}, shows that \textbf{no} additional assumptions are necessary:
$\ksd{\Q} = \mmd_{\ks}(\Q, \P)$ and $\langevin (\H_K) =\Hks$
whenever the left-hand side quantities are well defined.

\begin{theorem}[KSD as MMD]\label{ksdDef}
    Consider a target $\P \in \Pset$ with density $\p>0$ and matrix-valued base kernel $K$ for which  $\p \H_K$ consists of partially differentiable functions.
    Then $\langevin (\H_K)$ is the \emph{Stein RKHS} $\H_{\ks}$ induced by the \emph{Stein kernel}\footnote{Note  we have $ \ks(x,y)={\sum\nolimits_{i,j=1}^d}\,
            \frac{1}{\p( x)\p( y)}\, \partial_{y^j} \partial_{x^i} (\p( x) \Kb_{ij}( x,  y) \p( y)).$}
    \begin{align}\label{eq:stein-kernel}
        \ks(x,y) \defn
    \frac1{\p( x)\p(y)} \nabla_y \cdot \nabla_x \cdot \left( \p( x)  \Kb( x, y) \p( y) \right).
    \end{align}
   Moreover, for target measures with zero-mean Stein RKHSes, i.e., for $\P$ in
    \begin{talign}
    \embedtozero
    \defn \{ \Q \in \Pset \stext{with density} \q>0\!:
    \, \q\H_{\Kb} & \stext{are partially differentiable functions,}  \\
    &
    \langevin[\q](\H_{\Kb})\subseteq\L^1(\Q), \text{and }
    \Q(\langevin[\q](\H_{\Kb})) = \{0\} \},
    \end{talign}
    the KSD matches the Stein kernel MMD:
    $$
    \ksd{\Q} = \mmd_{\ks}(\Q, \P)
    \qtext{for all} \Q \in \Pset.
    $$
\end{theorem}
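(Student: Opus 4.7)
The plan is to prove the theorem in two stages: first identify $\langevin(\H_K)$ as the RKHS of $\ks$ via a standard pull-back construction, and then use the zero-mean hypothesis on $\P$ together with that identification to convert the KSD supremum into the MMD supremum.

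For the first stage, under the partial-differentiability assumption on $\p \H_K$, the Langevin operator is well-defined on $\H_K$ and expands as $\langevin(v)(x) = \sum_i \partial_{x^i} v^i(x) + \langle \sp(x), v(x)\rangle$. Invoking the vector-valued reproducing property $v^i(x) = \langle v, K(\cdot,x) e_i \rangle_{\H_K}$ and passing the partial derivatives through the inner product gives $\langevin(v)(x) = \langle v, \psi_x \rangle_{\H_K}$ with
\begin{talign*}
\psi_x \defn \frac{1}{\p(x)} \nabla_x \cdot (\p(x) K(\cdot, x)) \in \H_K,
\end{talign*}
where the divergence acts on the $x$ argument of $K(\cdot,x)$ while leaving the first slot free. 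Applying $\langevin$ in the free argument to $\psi_y$ then yields $\ks(x,y) = \langevin(\psi_y)(x) = \langle \psi_x, \psi_y \rangle_{\H_K}$, which recovers formula \cref{eq:stein-kernel} after reorganizing the nested divergences. Positive semi-definiteness of $\ks$ is then automatic, and general RKHS theory identifies the image $\langevin(\H_K)$ with the RKHS $\H_{\ks}$ under the quotient norm $\|h\|_{\ks} = \inf\{\|v\|_{\H_K} : \langevin(v) = h\}$, turning $\langevin : \H_K \to \H_{\ks}$ into a partial isometry and giving $\{\langevin(v) : v \in \B_K\} = \B_{\ks}$.

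The second stage then falls out by inspection. When $\P \in \embedtozero$, $\P \langevin(v) = 0$ for every $v \in \H_K$, so inserting this zero and then substituting $h = \langevin(v)$ through the first-stage bijection (which also identifies the constraint $\langevin(v)_+ \in \L^1(\Q)$ with $h_+ \in \L^1(\Q)$) gives
\begin{talign*}
\ksd{\Q} &= \sup_{v \in \B_K :\, \langevin(v)_+ \in \L^1(\Q)} |\Q \langevin(v) - \P \langevin(v)| \\
&= \sup_{h \in \B_{\ks} :\, h_+ \in \L^1(\Q)} |\Q h - \P h| = \mmd_{\ks}(\Q, \P).
\end{talign*}
Distinct preimages in $\ker \langevin$ yield the same value $0$ on both sides, so they do not distort the suprema.

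The main obstacle is rigorously justifying, in the first stage, the exchange of the pointwise partial derivative with the $\H_K$ inner product. The hypothesis supplies only pointwise differentiability of each $\p v^i$; to conclude that the $\H_K$-valued section $x \mapsto \p(x) K(\cdot, x) e_i$ is itself differentiable---and hence that $\psi_x$ is a bona fide element of $\H_K$ on which the representation $\langevin(v)(x) = \langle v, \psi_x \rangle_{\H_K}$ rests---requires upgrading pointwise differentiability of $\p\H_K$ to a norm-differentiability statement in $\H_K$. I would argue this by noting that $v \mapsto \partial_{x^i}[\p(x) v^i(x)]$ is linear and well-defined on $\H_K$, then leveraging continuity of evaluation together with a Riesz or closed-graph argument to exhibit its representer in $\H_K$. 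Once this technical point is settled, the partial-isometry structure of $\langevin$ and the bookkeeping reduction from KSD to MMD are routine, along the lines of the pull-back kernel constructions in \citet{oates2014control,barp2019minimum}.
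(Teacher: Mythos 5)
Your overall architecture is the same as the paper's: identify $\langevin$ as a feature operator with feature map $\psi_x = \frac{1}{\p(x)}\nabla_x\cdot(\p K(\cdot,x))$ so that $\langevin(\H_K)=\H_{\ks}$, use the partial-isometry property to get $\langevin(\B_K)=\B_{\ks}$, and then match the two suprema term by term using $\P\langevin(v)=0$. The technical hurdle you flag---upgrading pointwise differentiability to the existence of the representer $\psi_x$ in $\H_K$---is exactly the content of the paper's differential reproducing property lemma, which it proves by applying the Banach--Steinhaus theorem to the difference quotients $\bigl(K(\cdot,x+\epsilon_n e_\ell)c - K(\cdot,x)c\bigr)/\epsilon_n$; your proposed closed-graph/uniform-boundedness route is the right tool and would work.

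One point where your route silently strengthens the hypotheses: you expand $\langevin(v)(x)=\sum_i\partial_{x^i}v^i(x)+\metric{\sp(x)}{v(x)}$, which requires $\p$ and each $v^i$ to be \emph{separately} partially differentiable (and $\sp$ to exist). The theorem assumes only that the products $\p v^i$ are partially differentiable, and the paper is explicit that no additional smoothness is needed. The paper sidesteps this by applying the differential reproducing property not to $\H_K$ but to the tilted RKHS $\p\H_K$ (whose kernel is $\p(x)K(x,y)\p(y)$, where the hypothesis directly guarantees differentiability), and then translating inner products back to $\H_K$ via the standard isometry between $\H_K$ and $\H_{\p K\p}$. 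If you reorganize your first stage this way, keeping your final formula for $\psi_x$ but deriving it from the reproducing property of $\p K\p$ rather than from the score-plus-divergence decomposition, the argument goes through under the stated hypotheses; as written, it proves a slightly weaker theorem.
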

\begin{remark}[Scalar kernel KSD]
    When $K= \kb \, \Id$ for a scalar kernel $\kb$, we will say that $\ks$ is induced by $\kb$ and write $\embedtozero[\kb]\defn\embedtozero[\kb \Id]$.  In this case,
    \begin{align} \label{eqn:stein-kernel-scalar}
        \ks( x,  y) = {\sum\nolimits_{i=1}^d}\,
            \frac{1}{\p( x)\p( y)}\,\dx \dy (\p( x) \kb( x,  y) \p( y)).
    \end{align}
\end{remark}

The zero-mean condition $\P\in \embedtozero$ ensures that all functions in the Stein RKHS integrate to zero under the target measure so that the KSD can be evaluated without any explicit integration under $\P$.
Moreover, by \cref{thm:MMD as double integral,ksdDef}, when $\Q$ embeds into the Stein RKHS, the KSD takes on its more familiar double integral form.
\begin{corollary}[KSD as a double integral]
If $\P\in\embedtozero$ and $\Q\in\embedpettis{\ks}$, then
\begin{talign}
\ksdsq{\Q} =   \iint \ks(x,y) \dd \Q(x) \dd \Q(y). 
\end{talign}
\end{corollary}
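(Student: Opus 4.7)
The plan is to string together two results stated earlier---Theorem~\ref{ksdDef} and Proposition~\ref{thm:MMD as double integral}---and then eliminate the terms in the resulting double integral that involve $\P$ using the zero-mean property built into $\embedtozero$. First, I would verify that $\P\in\embedpettis{\ks}$, which is needed to invoke Proposition~\ref{thm:MMD as double integral}. This is immediate from the hypothesis $\P\in\embedtozero$: that definition gives $\langevin[\p](\H_{\Kb})\subset\L^1(\P)$, and Theorem~\ref{ksdDef} identifies $\langevin[\p](\H_{\Kb})=\H_{\ks}$, so $\H_{\ks}\subset\L^1(\P)$, i.e., $\P\in\embedpettis{\ks}$. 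The same definition also records the fact we will actually use in the simplification step: $\P h = 0$ for every $h\in\H_{\ks}$.

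Next, invoking Theorem~\ref{ksdDef} gives $\ksdsq{\Q}=\mmd_{\ks}^2(\Q,\P)$, and applying Proposition~\ref{thm:MMD as double integral} with the kernel $\ks$ (using that both $\P$ and $\Q$ lie in $\embedpettis{\ks}$) rewrites this as
\begin{talign*}
\ksdsq{\Q}=\iint \ks(x,y)\,\dd(\Q-\P)(x)\,\dd(\Q-\P)(y).
\end{talign*}
I would then expand this into four double integrals by linearity. Fubini applies to each piece since the expansion of Proposition~\ref{thm:MMD as double integral} is already asserted to be a well-defined double integral against $(\Q-\P)\otimes(\Q-\P)$, and one can separate the positive and negative parts of this signed product measure; alternatively one can verify integrability directly from $\P,\Q\in\embedpettis{\ks}$ and the Cauchy--Schwarz bound $|\ks(x,y)|\le\sqrt{\ks(x,x)\ks(y,y)}$.

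The three terms involving $\P$ vanish because of the zero-mean property. For any fixed $x$, the reproducing property gives $\ks_x=\ks(x,\cdot)\in\H_{\ks}$, so $\int\ks(x,y)\,\dd\P(y)=\P\ks_x=0$. By Fubini this kills both $\iint\ks\,\dd\Q\,\dd\P$ and $\iint\ks\,\dd\P\,\dd\P$, and symmetry of $\ks$ kills the remaining mixed term. The single surviving piece is $\iint\ks(x,y)\,\dd\Q(x)\,\dd\Q(y)$, which is the desired identity.

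There is no real obstacle here: the corollary is a direct bookkeeping consequence of the two earlier results. The only point that requires a moment of care is the application of Fubini when splitting the signed measure $(\Q-\P)\otimes(\Q-\P)$ into its four components; this is handled either by noting that each marginal integral $\int\ks(x,\cdot)\,\dd\P$ and $\int\ks(\cdot,y)\,\dd\Q$ is finite by embeddability, or by invoking the Cauchy--Schwarz bound above and the fact that $\sqrt{\ks}\in\L^1(\P)\cap\L^1(\Q)$ since $\ks(x,x)=\|\ks_x\|_{\ks}^2$ and $\ks_x\in\H_{\ks}\subset\L^1(\P)\cap\L^1(\Q)$.
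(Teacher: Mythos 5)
Your proposal is correct and follows essentially the same route as the paper, which obtains this corollary directly by combining \cref{ksdDef} with \cref{thm:MMD as double integral} and using that $\P\in\embedtozero$ forces the kernel mean embedding of $\P$ into $\H_{\ks}$ to vanish (since $\P\ks_x=0$ for every $x$). One caution on your side remark: the inference ``$\ks_x\in\H_{\ks}\subset\L^1(\Q)$ hence $\sqrt{\ks}\in\L^1(\Q)$'' is invalid, because $x\mapsto\sqrt{\ks(x,x)}=\|\ks_x\|_{\ks}\in\L^1(\Q)$ is Bochner embeddability, which is strictly stronger than the Pettis condition $\H_{\ks}\subset\L^1(\Q)$ (cf.\ \cref{thm: Bochner contained in Pettis}); the cleaner justification is to avoid the four-term expansion entirely and compute in the Hilbert space, $\mmd_{\ks}^2(\Q,\P)=\|\Phi_{\ks}(\Q)-\Phi_{\ks}(\P)\|_{\ks}^2=\|\Phi_{\ks}(\Q)\|_{\ks}^2=\Q(x\mapsto\Q\ks_x)$, as in \cref{Continuous linear functional shifted by Feature Operator}.
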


Finally, the following result proved in \cref{app:embeddability_conditions}    provides user-friendly sufficient conditions for verifying that $\P \in \embedtozero$,
which requires verifying that $\H_{\ks} = \langevin(\H_K) \subseteq \L^1(\P)$, and $\P(\H_{\ks})=0$.
Hereafter, we let $\sp \defn \partial \log \p$ denote the ``score'' function of $\P$ whenever $\log \p$ is partially differentiable.
\begin{proposition}[Stein embeddability conditions]\label{thm:embeddability_conditions}
Consider a target $\P \in \Pset$ with density $\p>0$ and  matrix-valued base kernel $K$ for which  $p\H_K$ consists of partially differentiable functions.
The following claims hold true.
\begin{enumerate}[label=\textup{(\alph*)}]
\item $\langevin (\H_{\Kb}) \subseteq \L^1(\P) \iff \P\in \embedpettis{\ks}$.
\item If $\P \in \Pset_{\sks}$, then $\P\in \embedpettis{\ks}$.
\item If $\P \in \Pset_{\sp}$ and all $v$ in $\H_K$ are bounded with bounded partial derivatives, then $\P \in \Pset_{\sks}$.

\item If $\P\in \embedpettis{\ks}$, then
$\iint \ks(x,y) \dd\P(x)\dd\P(y) = 0 \Leftrightarrow \P \in \embedtozero$.
\item If $\P\in \embedpettis{\ks}$ and $\H_K \subseteq \L^1(\P) \cap \C{1}{}(\R^d)$, then $\P \in \embedtozero$.
\end{enumerate}
\end{proposition}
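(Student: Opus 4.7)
I would address the five claims in order, each leaning on its predecessors.

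Parts (a) and (b) are essentially immediate. By \cref{ksdDef} one has $\langevin(\H_\Kb)=\H_{\ks}$, so $\langevin(\H_\Kb)\subset\L^1(\P)$ is exactly the defining condition $\H_{\ks}\subset\L^1(\P)$ of $\P\in\embedpettis{\ks}$. Claim (b) then follows from \cref{thm: Bochner contained in Pettis}(a) applied to the scalar Stein kernel $\ks$.

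For (c), the plan is to establish the pointwise bound $\sqrt{\ks(x,x)}\le C(1+\norm{\sp(x)})$ and integrate against $\P\in\Pset_{\sp}$. I would first upgrade the qualitative hypothesis ``every $v\in\H_\Kb$ and every partial derivative $\partial_{x^i}v^j$ is bounded'' into the quantitative estimates $\norm{v^j}_\infty\le C_0\norm{v}_\Kb$ and $\norm{\partial_{x^i}v^j}_\infty\le C_1\norm{v}_\Kb$ via the closed graph / uniform boundedness principle applied to the corresponding bounded linear functionals on $\H_\Kb$. Substituting the canonical test function $v=\Kb(\cdot,x)e_i$ and using the reproducing property $\Kb^{ii}(x,x)=\norm{\Kb_x e_i}_\Kb^2$ bounds $\Kb^{ii}(x,x)$ uniformly; analogous substitutions with derivative test functions bound $\partial_{x^i}\Kb^{ij}$, $\partial_{y^j}\Kb^{ij}$, and $\partial_{x^i}\partial_{y^j}\Kb^{ij}$ uniformly in $(x,y)$. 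Expanding \cref{eq:stein-kernel} at $y=x$ via the product rule then produces a finite sum of these bounded quantities weighted by components of $\sp(x)$ of total degree at most two, yielding the claimed bound.

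For (d), the Pettis embedding underlying $\P\in\embedpettis{\ks}$ yields a kernel mean $\mu_\P\in\H_{\ks}$ with $\P h=\ipd{h}{\mu_\P}_{\ks}$ for every $h\in\H_{\ks}$. Hence $\P(\H_{\ks})=\{0\}\iff\mu_\P=0\iff\norm{\mu_\P}_{\ks}^2=0$, and the reproducing property identifies $\norm{\mu_\P}_{\ks}^2=\iint\ks(x,y)\dd\P(x)\dd\P(y)$, giving the claimed equivalence. For (e), by (d) it suffices to show $\iint\ks\dd\P\dd\P=0$, and Fubini together with the Pettis identity reduces this to verifying $\P\langevin(v)=0$ for every $v\in\H_\Kb$. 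Fix such a $v$: since $v\in\C{1}{}(\R^d)$ and $\p\H_\Kb$ is partially differentiable, the identity $\nabla\cdot(\p v)=\p\langevin(v)$ holds pointwise, and (a) gives $\p v^i,\partial_{x^i}(\p v^i)\in\L^1(\R^d)$. A coordinatewise Fubini reduces $\int\partial_{x^i}(\p v^i)\dd x$ to the boundary limits of $\p v^i$ at $\pm\infty$ along almost every line; those limits exist by integrability of $\partial_{x^i}(\p v^i)$ along the line and must equal zero by integrability of $\p v^i$ along the line.

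The main obstacle is the vanishing-boundary step in (e): no hypothesis gives pointwise decay of $\p v$, so one must invoke slice-by-slice the classical fact that any $f\in\L^1(\R)$ with $f'\in\L^1(\R)$ satisfies $\lim_{x\to\pm\infty}f(x)=0$, and then piece the slices back together via Fubini. The boundedness extraction in (c) is a secondary point requiring care, but becomes routine once one identifies the relevant bounded linear functionals on $\H_\Kb$ and applies uniform boundedness.
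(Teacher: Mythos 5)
Your treatment of (a), (b), and (d) is correct and is essentially the paper's route: (a) is the definition of $\embedpettis{\ks}$ once \cref{ksdDef} identifies $\langevin(\H_\Kb)=\H_{\ks}$, (b) is \cref{thm: Bochner contained in Pettis}(a), and (d) is the kernel-mean identity $\iint\ks\,\dd\P\,\dd\P=\|\Phi_{\ks}(\P)\|_{\ks}^2$ supplied by Pettis integrability. For (c) you expand $\ks(x,x)$ on the diagonal by the product rule, whereas the paper bounds $\sqrt{\ks(x,x)}=\|\xi_\P(x)\|_K\le\|K_x\|_{\mathrm{op}}\|\sp(x)\|+\|\nabla_x\cdot K\|_K$ via the feature map (\cref{bound on integral root kp}); both rest on the same Banach--Steinhaus extraction of uniform bounds on $K$ and its derivatives (\cref{thm:Characterization of bounded RKHS}), and both work.

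The genuine gap is in (e). You assert that the hypotheses give $\partial_{x^i}(\p v^i)\in\L^1(\R^d)$ for each $i$ separately, and your coordinatewise Fubini argument needs exactly this in order to split $\int\nabla\cdot(\p v)\,\dd x$ into $d$ one-dimensional integrals. But $\P\in\embedpettis{\ks}$ only yields $\langevin(v)\in\L^1(\P)$, i.e.\ integrability of the \emph{sum} $\nabla\cdot(\p v)=\sum_i\partial_{x^i}(\p v^i)$; for a general matrix-valued $K$ the single-component field $v^ie_i$ need not belong to $\H_K$, so there is no way to conclude that each summand is individually integrable (your argument would be fine for diagonal base kernels, but the proposition is stated for arbitrary matrix-valued $K$). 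The paper sidesteps this by treating the vector field $\p v$ as a whole: \cref{thm:Stein RKHS with vanishing P-expectations} invokes a Gaffney-type global divergence theorem \citep[Thm.~2.36]{pigola2014global}, which requires only $\|\p v\|\in\L^1(\R^d)$ and $\nabla\cdot(\p v)\in\L^1(\R^d)$. If you want a self-contained replacement, integrate $\nabla\cdot(\p v)$ over balls $B_R$, convert to a boundary integral by the classical divergence theorem, and note that $\int_0^\infty\int_{\partial B_R}\|\p v\|\,\dd S\,\dd R=\|\p v\|_{\L^1}<\infty$ forces $\int_{\partial B_{R_k}}\|\p v\|\,\dd S\to0$ along a subsequence, while dominated convergence handles the interior integral.
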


\begin{remark}[User-friendly conditions on $\H_K$]
The requirements on $\H_K$ in \cref{thm:embeddability_conditions} (c) and (e) can often be verified by examining simple properties of the base kernel $K$. 
For example, by \cref{thm:Characterization of bounded RKHS}, 
all $v$ in $\H_K$ are bounded iff 
$x \mapsto \|\Kb(x,x)\|$ is bounded, and all $v$ in $\H_K$ have bounded $x^i$-partial derivatives if $(x,y) \mapsto \norm{\partial_{x^i}\partial_{y^i} K(x,y)}$ exists and is bounded for any matrix norm $\norm{\cdot}$.
 In particular, if $K \in \C{(1,1)}{b}(\R^d)$, then $\H_K \subseteq \C{1}{b}(\R^d)$.
 Moreover, by \citet[Thm. 2.11]{micheli2013matrix}, 
 $\H_K \subseteq \C{1}{}(\R^d)$ iff
  $(x,y) \mapsto \partial_{x^i}\partial_{y^i} K(x,y)$  is separately continuous and locally bounded.
\end{remark}

\section{Conditions for Separating Measures}
Our first goal is to identify when an MMD distinguishes $\P$ from other measures.
Given a set of probability measures $\M \subseteq \Pset$,
we will say that $\k$  \emph{separates $\P$  from} $\M$ if for any $\Q \in \M$,  $\mmd_\k(\Q,\P)=0$ implies that $\Q=\P$.
When $\k$ separates $\P$ from all probability measures $\Pset$ we say simply that $\k$ is \emph{$\P$-separating}.
We will first discuss restricted $\P$-separation---that is, separation from a distinguished subset of measures $\M\neq \Pset$---in \cref{sec:bochner-separation,sec:score-separation} and then turn to general $\P$-separation---separation from all probability measures $\Pset$---in \cref{sec:general-separation}.
\subsection{Bochner \tpdf{$\P$}{P}-separation with MMDs}\label{sec:bochner-separation}

Our first result,
proved in \cref{app:proof tight convergence},
exactly
characterizes the kernels that separate $\P$ from  Bochner embeddable measures on Radon spaces~\citep[Def.~5.1.4]{ambrosio05gradient}.

Recall that a set of probability measures $\M \subseteq \Pset$ is \emph{tight} when for each
$\epsilon >0$ there exists a compact set $S \subseteq \X$ such that $\Q(S^c)\leq \epsilon$ for all $\Q \in\M$.
We also say that a measurable function $\varphi:\X \to \R$ is \emph{uniformly integrable} by $\M\subseteq \Pset$
if for each $\eps > 0$ there exists $r>0$ such that
$\sup_{\mu \in \M} \int_{\{ x: \, |\varphi|( x)>r\}} |\varphi| \diff \mu  < \eps$.%

\begin{theorem}[Bochner \tpdf{$\P$}{P}-separation with MMDs]%
\label{thm:tight_convergence}
    Let $\k$ be a continuous kernel over a Radon space $\X$ (for example, a Polish space).
    Then $\k$  separates $\P \in \Pset_{\sk}$ from
    $\Pset_{\sk}$ iff, for any sequence $(\Q_n)_n \subseteq \Pset_{\sk}$,
    \begin{align}
        \label{eq:tight-convergence}
        \Q_nh \to \P h \quad \forall h \in \C{}{\sk}
        \qquad \Longleftrightarrow \qquad
        \left \{
        \begin{array}{ll}
            \mathrm{(a)} &\mmd_{\k}(\Q_n, \P) \rightarrow 0 \\
            \mathrm{(b)} &(\Q_n)_n \text{ is tight} \\
            \mathrm{(c)} &(\Q_n)_n
                \text{ uniformly integrates } \sqrt{\k} .
        \end{array}
        \right .
    \end{align}
\end{theorem}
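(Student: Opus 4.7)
The plan is to prove each direction of the biconditional separately, leveraging Prokhorov's theorem on Radon spaces, the double-integral representation of MMD in \cref{thm:MMD as double integral}, and a Vitali-type argument linking weak convergence to uniform integrability of $\sqrt{\k}$.

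For the easier $(\Leftarrow)$ direction, I would assume the equivalence in \cref{eq:tight-convergence} holds for every sequence in $\Pset_{\sk}$, pick an arbitrary $\Q \in \Pset_{\sk}$ with $\mmd_\k(\Q,\P)=0$, and apply the equivalence to the constant sequence $\Q_n \equiv \Q$. Conditions (a)--(c) then hold trivially: (a) by hypothesis, (b) because a single Borel measure on a Radon space is tight, and (c) because $\sqrt{\k} \in \L^1(\Q)$ makes uniform integrability automatic for a constant sequence. The bottom-to-top arrow of the equivalence then yields $\Q h = \P h$ for every $h \in \C{}{\sk}$; restricting to $h \in \C{}{b} \subset \C{}{\sk}$ forces $\Q = \P$ on a Radon space.

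For the harder $(\Rightarrow)$ direction, I would assume $\k$ is Bochner $\P$-separating and establish both arrows of the equivalence. For the top-to-bottom arrow, $\C{}{b} \subset \C{}{\sk}$ gives weak convergence $\Q_n \to \P$, from which Prokhorov yields tightness (b); testing against $\sqrt{\k} \in \C{}{\sk}$ together with a truncation/Vitali argument then upgrades weak convergence to uniform integrability of $\sqrt{\k}$ (c). For MMD convergence (a), I would expand the double-integral representation as $\mmd^2_\k(\Q_n,\P) = \iint \k\, \dd\Q_n \dd\Q_n - 2 \Q_n g + \P g$, where the cross term $g(x) \defeq \int \k(x,y)\,\dd\P(y)$ lies in $\H_\k \subset \C{}{\sk}$ and so converges by hypothesis, while the diagonal term is handled by applying Vitali's theorem to the weakly convergent product $\Q_n \otimes \Q_n \to \P \otimes \P$, using the Cauchy--Schwarz bound $|\k(x,y)| \le \sqrt{\k(x,x)}\sqrt{\k(y,y)}$ together with (c).

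For the reverse bottom-to-top arrow, given (a), (b), (c), I would take any subsequence of $(\Q_n)_n$, extract via Prokhorov a further weakly convergent subsequence $\Q_{n_j} \to \Q \in \Pset$, use (c) with Fatou applied to truncations of $\sqrt{\k}$ to conclude $\Q \in \Pset_{\sk}$ and $\Q_{n_j} h \to \Q h$ for every $h \in \C{}{\sk}$ (Vitali), and then invoke the same double-integral argument to deduce $\mmd_\k(\Q_{n_j}, \P) \to \mmd_\k(\Q,\P)$. Combined with (a), this gives $\mmd_\k(\Q,\P)=0$, so the separating hypothesis forces $\Q=\P$; a standard subsequence argument then delivers full $\C{}{\sk}$-convergence of the original sequence. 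The main obstacle is controlling the unbounded diagonal integral $\iint \k\, \dd\Q_n \dd\Q_n$: the resolution is to combine continuity of $\k$, the dominating bound by $\sqrt{\k(x,x)}\sqrt{\k(y,y)}$, uniform integrability of $\sqrt{\k}$ under $(\Q_n)_n$ (transferred to $\sqrt{\k(x,x)}\sqrt{\k(y,y)}$ under $(\Q_n \otimes \Q_n)_n$), and the weak convergence of product measures, after which Vitali's theorem delivers the needed limit.
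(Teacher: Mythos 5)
Your proposal is correct and its overall architecture matches the paper's: the constant-sequence argument for the separation direction, Prokhorov plus a subsequence principle for the converse, and a uniform-integrability bridge between $\C{}{b}$- and $\C{}{\sk}$-convergence (your ``Vitali-type argument'' is exactly the content of Lemma~5.1.7 of \citet{ambrosio05gradient} that the paper invokes). The two places where you genuinely diverge are technical. First, to get $\mmd_\k(\Q_n,\P)\to 0$ from $\C{}{\sk}$-convergence, the paper tilts to the bounded kernel $\tilde\k(x,y)=\k(x,y)/((1+\sqrt{\k}(x))(1+\sqrt{\k}(y)))$ and the finite measures $\tilde\Q_n=(1+\sqrt\k)\Q_n$, then invokes weak continuity of tensor products of finite measures \citep[Prop.~2.3.3]{berg84harmonic}; you instead work directly with the unbounded kernel, dominating $|\k(x,y)|$ by $\sqrt{\k(x,x)}\sqrt{\k(y,y)}$ and transferring uniform integrability to the product measures. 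Both are sound; the tilting trick outsources the hard analysis to a bounded-kernel citation, while your route keeps the argument self-contained but obliges you to verify that uniform integrability of $\sqrt\k$ under $(\Q_n)_n$ lifts to the products (it does, via $\{\varphi(x)\varphi(y)>r^2\}\subset\{\varphi(x)>r\}\cup\{\varphi(y)>r\}$ and boundedness of $\sup_n\Q_n\sqrt\k$). Second, to identify the subsequential limit $\Q$, the paper argues directly that $\ipdK{\Q}{f}=\lim_l\ipdK{\Q_{n_l}}{f}=\ipdK{\P}{f}$ for all $f\in\H_\k$, so the embeddings coincide; you instead re-run the double-integral/Vitali machinery to show $\mmd_\k(\Q_{n_j},\P)\to\mmd_\k(\Q,\P)$, which is more work for the same conclusion. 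One small correction: tightness of a weakly convergent sequence on a general Radon (not necessarily Polish) space is not the converse half of Prokhorov's theorem; the paper cites \citet{lecam1957convergence} for this step.
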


\cref{thm:tight_convergence} exposes an important relationship between our two goals of separation and convergence control.
In particular, when $\k$ is bounded, the uniform integrability condition $(c)$ always holds, %
 $\Pset_{\sk}$ is the set of all probability measures $\Pset$, $\C{}{\sk}$ is the set of all bounded continuous functions, and the convergence on the left-hand side of \cref{eq:tight-convergence} is the usual weak convergence in $\Pset$.
Hence for bounded kernels we obtain \cref{thm:tight_convergence_bounded_kernels}: separating $\P$ from all probability measures is equivalent to \emph{controlling tight $\P$-convergence}, i.e., having $\Q_n \to \P$ weakly whenever
$\mmd_\k(\Q_n,\P) \to 0$ and $(\Q_n)_n$ is tight.

\begin{corollary}[\tpdf{$\P$}{P}-separation with bounded kernels]%
\label{thm:tight_convergence_bounded_kernels}
    Let $\k$ be a continuous bounded kernel over a Radon space $\X$ (for example, a Polish space).
    Then $\k$  separates $\P \in \Pset$ from
    $\Pset$ iff, for any sequence $(\Q_n)_n \subseteq \Pset$,
    \begin{align}
        \Q_nh \to \P h \quad \forall h \in \C{}{b}
        \qquad \Longleftrightarrow \qquad
        \left \{
        \begin{array}{ll}
            \mathrm{(a)} &\mmd_{\k}(\Q_n, \P) \rightarrow 0 \\
            \mathrm{(b)} &(\Q_n)_n \text{ is tight}.
        \end{array}
        \right .
    \end{align}
\end{corollary}

\begin{remark}[Comparison with \citet{simon2023metrizing}]
When $\X$ is also locally compact and Hausdorff, for instance when $\X =\R^d$, Theorem 9 in \citet{simon2023metrizing} implies that, if $\HK \subset \C{}{0}$ and $\k$ separates every finite measure $\mu$ from the set of finite measures, then $\k$ metrizes the weak convergence of probability measures (i.e., for every probability measure $\P$, $\mmd_{\k}(\Q_n,\P) \to 0 \iff \Q_n \to \P$ weakly). 
Comparing with \cref{thm:tight_convergence_bounded_kernels}, we observe no explicit tightness requirement appears. 
This is because the assumption of separation for \textbf{every finite measure} $\mu$ (instead of separation of a single finite measure $\P$ from $\Pset$) implicitly does the work of enforcing tightness.  
In the proof of \cref{thm:tight_convergence} we can see the role of tightness is to ensure relative compactness, which in turn allows us to use the existence of convergent subsequences to promote the separation assumption into a convergence control.
But $\Pset$ is a bounded and thus relatively compact subset of the space of finite measures \citep[Thm. 33.2]{treves67}.  Hence, by \citet[Prop. 32.5]{treves67}, the assumption of $\k$ separating all finite measures is enough 
  to guarantee the equivalence between $\Q_n h \to  \P h$ for all $h\in \HK$ and $\Q_n h \to  \P h$ for all $h\in \C{}{0}$.  The latter is further equivalent to $\Q_n h \to  \P h$ for all $h\in \C{}{b}$ by \citet[Cor. 2.4.3]{berg84harmonic}.
\end{remark}

\subsection{Score \tpdf{$\P$}{P}-separation with KSDs}\label{sec:score-separation}
The standard practice in the KSD literature is to identify easily-verified properties of the base kernel $\Kb$, target $\P$, and alternative measure $\Q$ that ensure separation.
One class of KSD separation conditions---introduced by \citet[Thm.~2.2]{chwialkowski16kernel} and generalized by \citet[Prop.~1]{barp2019minimum}---applies to measures that finitely integrate the score $\sp$ but additionally requires $\Q$ to have a continuously differentiable log-density.
The first main result of this work, proved in \cref{app: proof coroKSDwc}, removes the extraneous continuity conditions and extends $\P$-separation to \emph{all} measures $\Q\in\Pset_{\sp}$ under a standard separating assumption on the base kernel, \emph{$\DL{1}(\R^d)$-characteristicness}.

\begin{theorem}[Score \tpdf{$\P$}{}-separation with KSDs]%
\label{coroKSDwc}
Suppose a matrix-valued kernel $K$ with $\H_K \subseteq \C{1}{b}(\R^d)$ is $\DL{1}(\R^d)$-characteristic.
    If  $\P \in \embedtozero$,
    then $\ks$ separates $\P$ from $\Pset_{\sp}$.
\end{theorem}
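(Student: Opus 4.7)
The plan is to reduce $\KSD_{K,\P}(\Q) = 0$ to a first-order linear distributional equation for the signed measure $\mu \defeq \Q - \P$, and then use the $\DL{1}(\R^d)$-characteristic hypothesis together with elementary distribution theory to force $\mu = 0$.

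First, I would verify that the constraint $\langevin(v)_+ \in \L^1(\Q)$ in \cref{eq:IPM definition KSD} is vacuous for $\Q \in \Pset_{\sp}$. Since $\H_K \subset \C{1}{b}(\R^d)$, every $v \in \H_K$ and its divergence $\nabla\cdot v$ are bounded, and $\sp \in \L^1(\Q)$ then gives $\langevin(v) = \nabla\cdot v + \sp\cdot v \in \L^1(\Q)$; an analogous argument, using $\P \in \embedtozero$, handles integrability against $\P$. By linearity, $\KSD_{K,\P}(\Q)=0$ forces $\Q\langevin(v) = 0$ for all $v \in \H_K$, and subtracting $\P\langevin(v) = 0$ (guaranteed by $\P \in \embedtozero$) yields, for every $v \in \H_K$,
\[
\int \nabla\cdot v\, d\mu + \int \sp\cdot v\, d\mu = 0.
\]
Interpreted distributionally, this says exactly that the vector-valued distribution $T \defeq \sp\,\mu - \nabla\mu$, whose $i$-th component is $\sp^i\mu - \partial_{x^i}\mu$, annihilates every $v \in \H_K$.

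Next, I would invoke the $\DL{1}(\R^d)$-characteristic hypothesis of \cref{def:D1L1} to conclude that $T = 0$ distributionally on $\R^d$, i.e.\ $\nabla\mu = \sp\,\mu$.

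Finally, I would close the argument by solving this first-order linear distributional PDE. Since $\p>0$ satisfies $\nabla\p = \sp\,\p$ pointwise, the distributional Leibniz rule applied to $\mu/\p$ gives $\nabla(\mu/\p) = (\nabla\mu - \sp\,\mu)/\p = 0$, so $\mu/\p$ is a locally constant distribution on the connected set $\R^d$, hence a single constant $c$. Thus $\mu = c\,\p\,d\lambda$ as a signed measure, and $\mu(\R^d) = \Q(\R^d) - \P(\R^d) = 0$ forces $c = 0$, so $\Q = \P$. The main obstacle is the middle step: one must unpack \cref{def:D1L1} to verify that $T$ genuinely lies in the dual class on which the $\DL{1}(\R^d)$-characteristic hypothesis bites, i.e.\ that $\sp\,\mu$ has the right regularity and integrability to qualify as an admissible test distribution; the rest is a routine combination of distribution theory and elementary measure theory.
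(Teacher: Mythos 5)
Your overall route is the same as the paper's: turn $\ksd{\Q}=0$ into a first-order distributional Stein equation, apply $\DL{1}(\R^d)$-characteristicness, and solve the resulting ODE by variation of constants (the paper writes $\Q=\p T$, concludes $\partial_{x^i}T=0$, and invokes Schwartz's theorem that translation-invariant distributions are multiples of Lebesgue measure — essentially your $\mu/\p$ argument). The difference is that you center everything at $\P$, and that is exactly where the obstacle you flag at the end becomes a real gap rather than a routine verification.

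Concretely: for the characteristicness hypothesis to bite, the functional you test against $\H_K$ must lie in $\DL{1}(\R^d)$, i.e., each component must be a finite Radon measure or a distributional derivative of one. Your $T=\sp\,\mu-\nabla\mu$ with $\mu=\Q-\P$ contains the term $\sp\,\P$, and nothing in the hypotheses makes $\sp\,\P$ a finite measure: the assumption is $\Q\in\Pset_{\sp}$ and $\P\in\embedtozero$, and the latter (Pettis embeddability of the Stein RKHS into $\L^1(\P)$ plus the zero-mean property) does \emph{not} imply $\P\in\Pset_{\sp}$ — cf.\ \cref{bound on integral root kp}, where $\Q(\sqrt{\ks})<\infty\Rightarrow\Q(\norm{\sp})<\infty$ needs uniform positive definiteness of $K(x,x)$, and Pettis embeddability is weaker still than Bochner. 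Relatedly, your componentwise splitting $\int \sp^i v^i\,\dd\P$ is not justified: you only know $\metric{\sp}{v}\in\L^1(\P)$, not each summand. The fix is not to center: work with $D_\Q\defn\sum_i(\sp^i\Q-\partial_{x^i}\Q)e^i$, which lies in $\DL{1}(\R^d)$ precisely because $\Q\in\Pset_{\sp}$. Since $\Q\langevin(v)=D_\Q(v)$ for $v\in\H_K$ (no subtraction of $\P\langevin(v)$ is needed — the KSD is already $\sup_v|\Q\langevin(v)|$), $\ksd{\Q}=0$ gives $\Phi_K(D_\Q)=0$, characteristicness gives $D_\Q=0$ on $\C{1}{0}(\R^d)\supset\C{\infty}{c}(\R^d)$, and your PDE argument goes through verbatim with $\Q$ in place of $\mu$ and the normalization $\Q(\R^d)=1$ (rather than $\mu(\R^d)=0$) fixing the constant. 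With that one substitution your proof is the paper's proof.
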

\opt{considerlast}{\notate{can we achieve this conclusion for the set of bochner embeddable Q as well? see \url{https://steinkernelsquad.slack.com/archives/CRY6GATLN/p1657837065736219}}}

We provide formal definitions of $\DL{1}(\R^d)$ and characteristicness in \cref{def:D1L1} and \cref{def: characteristicness} respectively. 
    In brief, $\DL{1}(\R^d)$ is the  $d$-dimensional product of the space $\DL{1}$ of finite measures and their distributional derivatives\footnote{Distributional derivatives extend the usual notion of derivative to objects that are not smooth, in particular to non-smooth distributions $\Q$. When $\Q$ has a differentiable Lebesgue density $\q$ then we recover the usual derivative, $\partial_{x^j} \Q = \partial_{x^j} q \,  \dd x$, while in general  $\partial_{x^j} \Q$ will be a Schwartz distribution~\citep{schwartzTD}.}, and 
a $\DL{1}(\R^d)$-characteristic kernel is one that can separate any pair of $\DL{1}(\R^d)$ elements.
Our proof of \cref{coroKSDwc} builds on the kernel Schwartz distribution theory of \citet{simon18kde}, wherein the space $\DL{1}$ naturally arises from the construction of the Stein RKHS via the Langevin Stein operator $\langevin$.  Specifically, we show in \cref{app:From separating measures in Hks to separating Schwartz distribution in HK} that the Stein kernel $\ks$ separates $\P$ from $\Q\in\embedpettis{\ks}$ if and only if the base kernel $\Kb$ separates the Schwartz distribution $\sp \Q -  \partial_{x^j} \Q$
from the zero measure. Moreover, $\sp \Q - \partial_{x^j} \Q \in \DL{1}(\R^d)$ when $\Q \in \Pset_{\sp}$, which yields \cref{coroKSDwc}.

\begin{application}[gof]{Goodness-of-fit Testing}

In goodness-of-fit (GOF) testing, one uses a sequence of datapoints $X_1, \dots, X_n$ generated from a Markov chain to test whether the chain's stationary distribution $\Q$ coincides with a target distribution $\P$.
KSDs with $\DL{1}$-characteristic  translation-invariant base kernels are commonly used as GOF test statistics, and such tests are known to consistently reject $\Q= \P$ whenever $\KSD(\Q,\P) > 0$ \citep{chwialkowski16kernel,liu16kernelized,gorham17measuring}.
However, prior to this work, the separating condition $\KSD(\Q,\P) > 0$ had only been established
for a restricted class of alternatives (continuous $\Q\in\Pset_{\sks}$ with differentiable log densities satisfying $\Q(\norm{\sp-\sq}) < \infty$,  \citealt[Prop.~1]{barp2019minimum}) or a restricted class of targets ($\P$ with Lipschitz $\sp$ and strongly log concave tails, \citealt[Thm.~7]{gorham17measuring}).
The former restriction excludes discrete and  discontinuous $\Q$, as well as $\Q$ with tails heavier than $\P$ or non-differentiable densities.
Meanwhile, the latter restriction excludes $\P$ with tails heavier than or lighter than a Gaussian.
\cref{coroKSDwc} in the present work ensures that $\KSD(\Q,\P) > 0$ for \emph{any} $\P \in \embedtozero$ and $\Q \in \Pset_{\sp}$.
In particular, this accommodates discontinuous or non-smooth $\Q$ and all targets $\P$ for which the KSD \cref{eq:IPM definition KSD} is defined.
Moreover, \cref{coroKSDwc} holds for all $\DL{1}$-characteristic kernels, a strict superset of the $\C{1}{0}$-universal kernels  \citep[Def.~4.1]{carmeli2010vector} assumed in prior work.
\end{application}

Indeed, \citet[][Thm.~12, Tab.~1, and Cor.~38]{simon18kde} showed that any $ \C{1}{0}$-universal $\k$ and any $\C{(1,1)}{}$ translation-invariant $\k$ with fully supported spectral measure is $\DL{1}$-characteristic.
These results already cover all of the translation-invariant base kernels commonly used with KSDs including
Gaussian, inverse multiquadric (IMQ), log inverse, sech, \Matern, B-spline, and Wendland's compactly supported kernels.
Moreover, as we prove in \cref{app:composition-kernel-properties}, characteristicness to $\DL{1}$ is preserved under the following operations, which allows one to construct even more flexible base kernels.

\begin{proposition} [Preserving  characteristicness]\label{thm:composition-kernel-properties}
Suppose a matrix-valued kernel $K$ with $\H_K \subseteq \C{1}{b}(\R^d)$ is $\DL{1}(\R^d)$-characteristic.  Then the following claims hold true.
\begin{enumerate}[label=\textup{(\alph*)}]
\item  If $\tilt\in\C{1}{b}$ is strictly positive, then $\tilt( x) K( x,  y)
    \tilt( y)$ is $\DL{1}(\R^d)$-characteristic. %
\item  If $b : \R^d\to\R^d$ is a Lipschitz $\C{1}{}(\R^d)$-diffeomorphism, then the composition kernel $K(b( x), b( y))$  %
is $\DL{1}(\R^d)$-characteristic.
\item If $\k_j$ is $\DL{1}$-characteristic for $j\in [d]$, then  $\mathrm{diag}(\k_1,\ldots, \k_d)$
is $\DL{1}(\R^d)$-characteristic.
 \opt{considerlast}{\notate{\item add result that iff diagonal elements are char. then K is char.}}
\end{enumerate}
\end{proposition}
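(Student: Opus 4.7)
The three claims admit a unified proof template: in each case, I will show that the mean-embedding characterization of $\DL{1}(\R^d)$-characteristicness is inherited through the relevant transformation. Specifically, by the definition of $\DL{1}$-characteristicness (\cref{def:D1L1}), it suffices to show that for any suitable distribution $u \in \DL{1}(\R^d)$ whose associated kernel mean $\int \tilde K(\cdot, x)\,\mathrm d u(x)$ vanishes in the RKHS of the new kernel $\tilde K$, the distribution $u$ itself must vanish. In all three parts I will reduce this to the analogous statement for the given kernel $K$ (or $\k_j$) by transporting the distribution across the transformation.

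For (a), I would begin with the standard identification of the RKHS of $\tilt(x) K(x,y) \tilt(y)$ as $\tilt \cdot \H_K = \{\tilt f : f \in \H_K\}$ with renormalized inner product. The associated mean map factors as $\tilt(\cdot) \int K(\cdot, x)\,\tilt(x)\,\mathrm d u(x)$, and since $\tilt > 0$ pointwise, its vanishing is equivalent to vanishing of $\int K(\cdot, x)\,\mathrm d (\tilt \cdot u)(x)$. Provided $\tilt \cdot u$ still lies in $\DL{1}(\R^d)$---which it does, because $\tilt \in \C{1}{b}$ acts by bounded multiplication on $\DL{1}$---the hypothesis on $K$ yields $\tilt \cdot u = 0$, and then strict positivity of $\tilt$ forces $u = 0$.

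For (b), the RKHS of the composition kernel is $\{f \circ b : f \in \H_K\}$, and the calculation
\begin{talign*}
\int K(b(\cdot), b(x))\,\mathrm d u(x) \;=\; \Bigl(\int K(\cdot, y)\,\mathrm d (b_\ast u)(y)\Bigr) \circ b
\end{talign*}
shows that vanishing of the left-hand side, combined with surjectivity of $b$, yields vanishing of the inner kernel mean at every point of $\R^d$. The hypothesis on $K$ then gives $b_\ast u = 0$, and injectivity of $b$ gives $u = 0$. The main obstacle here is to verify that $b_\ast u$ lies in $\DL{1}(\R^d)$: the Lipschitz $\C{1}{}$-diffeomorphism assumption gives uniform bounds on the Jacobian of $b$ and its inverse, so the change-of-variables formula transports distributions in $\DL{1}$ to distributions in $\DL{1}$. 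This is the step that will require the most care and where I expect the bulk of the technical bookkeeping to live.

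For (c), the RKHS of $\mathrm{diag}(\k_1,\ldots,\k_d)$ decomposes as the orthogonal direct sum of the scalar RKHSes along the coordinate axes, and the kernel mean acts diagonally on a vector-valued $u = (u_1,\ldots,u_d)$ with $u_j \in \DL{1}(\R^d)$. Hence its vanishing is equivalent to the simultaneous vanishing of each scalar mean $\int \k_j(\cdot, x)\,\mathrm d u_j(x)$, and applying characteristicness of each $\k_j$ coordinatewise yields $u_j = 0$ for all $j \in [d]$, whence $u = 0$. This part is essentially a bookkeeping exercise and should go through with no obstacles beyond fixing the correct componentwise interpretation of $\DL{1}(\R^d;\R^d)$.
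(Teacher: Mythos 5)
Your proof is correct and is, in substance, the transpose of the paper's argument rather than a genuinely different one. The paper proves a general transformation theorem (\cref{characteristic transformed RKHS}) entirely on the test-function side: the operator $\phi$ given by $f \mapsto \tilt f$ in (a) and $f \mapsto f \circ b$ in (b) is shown to be continuous on the predual $\C{1}{b}(\R^d)_\beta$ of $\DL{1}(\R^d)$ and to have image containing the dense subspace $\C{1}{c}(\R^d)$; injectivity of the embedding of the transformed kernel then follows by taking transposes. You work directly on the distribution side, transporting $u$ to $\tilt u$ (resp.\ $b_* u$), and your two verification obligations---that the transported distribution remains in $\DL{1}(\R^d)$, and that the transport is injective---are precisely the duals of the paper's two obligations (continuity of $\phi$, density of its image); part (c) coincides with the paper's \cref{thm:scalar Vs vector characteristicness}. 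One caution on your final injectivity steps: ``strict positivity of $\tilt$ forces $u=0$'' cannot be executed by dividing arbitrary test functions by $\tilt$, because a strictly positive $\tilt\in\C{1}{b}$ may decay to zero at infinity, so $1/\tilt$ can be unbounded and $g/\tilt$ need not be an admissible test function. The correct route (the one the paper takes) is to test only against $g\in\C{1}{c}(\R^d)$, where $g/\tilt$ and $g\circ b^{-1}$ do remain in $\C{1}{c}(\R^d)$, and then invoke density of $\C{1}{c}(\R^d)$ in $\C{1}{b}(\R^d)_\beta$ together with continuity of $u$; the analogous remark applies to ``injectivity of $b$ gives $u=0$'', where what is really needed is that pullback by $b^{-1}$ preserves a dense family of test functions. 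With that density step made explicit, your argument closes.
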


As a final remark on  \cref{coroKSDwc}, we note that the score embedding measures $\Pset_{\sp}$ and the Bochner embeddable measures $\Pset_{\sks}$ exactly coincide under mild conditions satisfied by every $\C{(1,1)}{}$ translation-invariant base kernel $K$.
See \cref{app:score equal bochner} for the proof of this result.

\begin{proposition}[Score vs.\ Bochner embeddability]\label{score equal bochner}
Under the assumptions of \cref{coroKSDwc}, $\Pset_{\sp} \subseteq \Pset_{\sks}$.
If, in addition, $x\mapsto\sqrt{\metric{\sp(x)}{K(x,x) \sp(x)}} / \|\sp(x)\|$ is bounded away from zero, then  $\Pset_{\sp} = \Pset_{\sks}$.
\end{proposition}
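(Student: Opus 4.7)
The plan is to directly expand $\ks(x,x)$ from the definition \cref{eq:stein-kernel} and bound it above and below in terms of $\|\sp(x)\|$. Writing $f^{ij}(x,y) = \p(x)K^{ij}(x,y)\p(y)$ and applying the product rule, one sums $\partial_{y^j}\partial_{x^i}f^{ij}$ over $i,j$ and divides by $\p(x)\p(y)$. After collecting terms, this yields, at $y=x$,
\[
\ks(x,x) = \sp(x)^\top K(x,x)\sp(x) \;+\; 2\sp(x)^\top g(x) \;+\; h(x),
\]
where (using symmetry of $K$) $g(x) = \sum_j(\partial_{y^j} K^{\cdot j})(x,x)$ and $h(x)= \sum_{i,j}(\partial_{x^i}\partial_{y^j}K^{ij})(x,y)|_{y=x}$.

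Next I would use the assumption $\H_K\subset \C{1}{b}(\R^d)$, together with the remark after \cref{thm:embeddability_conditions}, to argue that $\|K(x,x)\|$, $\|g(x)\|$, and $|h(x)|$ are all uniformly bounded by some constant $M>0$. (Boundedness of $\|K(x,x)\|$ comes from uniform boundedness of $v\in\H_K$ via $v(x)=\langle v,K_x(\cdot)\rangle$, and the mixed-partial boundedness translates to uniform bounds on $\partial v$ for $v\in\B_K$.) Then Cauchy--Schwarz and AM--GM give the upper bound
\[
0\le \ks(x,x) \le M\|\sp(x)\|^2 + 2M\|\sp(x)\| + M \le 3M(1+\|\sp(x)\|^2),
\]
hence $\sqrt{\ks(x,x)}\le C(1+\|\sp(x)\|)$ for some $C>0$, and thus $\|\sp\|\in\L^1(\Q)$ implies $\sqrt{\ks}\in \L^1(\Q)$. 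This proves $\Pset_{\sp}\subseteq\Pset_{\sks}$.

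For the converse, I would use the extra hypothesis that $\sqrt{\metric{\sp(x)}{K(x,x)\sp(x)}}/\|\sp(x)\|\ge c$ for some $c>0$ (whenever $\sp(x)\neq 0$). This gives the quadratic lower bound $\sp(x)^\top K(x,x)\sp(x)\ge c^2\|\sp(x)\|^2$, so
\[
\ks(x,x) \ge c^2\|\sp(x)\|^2 - 2M\|\sp(x)\| - M.
\]
Choosing $R>0$ large enough (depending only on $c,M$) so that the right-hand side exceeds $\tfrac{c^2}{2}\|\sp(x)\|^2$ whenever $\|\sp(x)\|\ge R$, one obtains $\sqrt{\ks(x,x)}\ge \tfrac{c}{\sqrt{2}}\|\sp(x)\|$ on the set $\{\|\sp\|\ge R\}$. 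Splitting the integral then yields
\[
\int\|\sp\|\diff\Q \;\le\; R + \frac{\sqrt{2}}{c}\int \sqrt{\ks(x,x)}\diff\Q(x),
\]
so $\Q\in\Pset_{\sks}$ implies $\Q\in\Pset_{\sp}$.

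The main obstacle is the bookkeeping in the expansion of $\ks(x,x)$ and the verification that $g$ and $h$ are genuinely bounded under only $\H_K\subset\C{1}{b}(\R^d)$; once the remark after \cref{thm:embeddability_conditions} is invoked to transfer boundedness of $v\in\B_K$ (and of $\partial v$) into boundedness of the appropriate evaluations of $K$ and its mixed partial derivatives on the diagonal, both directions reduce to the elementary quadratic comparisons above.
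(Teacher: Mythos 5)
Your proof is correct and is essentially the paper's argument: the paper deduces the proposition from \cref{bound on integral root kp}, whose proof writes $\sqrt{\ks(x,x)}=\|\xi_\P(x)\|_K$ for the feature map $\xi_\P(x)=K_x\sp(x)+\nabla_x\cdot K$ and applies the triangle inequality, with $\|K_x\|_{\mathrm{op}}$ and $\|\nabla_x\cdot K\|_K$ bounded via \cref{thm:Characterization of bounded RKHS}. Your expansion $\ks(x,x)=\sp(x)^\top K(x,x)\sp(x)+2\sp(x)^\top g(x)+h(x)$ is exactly $\|\xi_\P(x)\|_K^2$ written out termwise (with $h(x)=\|\nabla_x\cdot K\|_K^2$), so your quadratic comparisons are the squared form of the same bounds.
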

\opt{considerlast}{\cj{After Arxiving: In particular, if we add a kernel that ensures $\sks$-uniform integrability, similar to what we do in \cref{tilted-tightness}, then the new kernel would metrize $\ks$-weak convergence, which is even stronger than weak convergence.To do so, can we replace the $x^i y^i$ in \cref{tilted-tightness} by $\ks(x^i, y^i)$ or $\sp(x)^T \sp(y)$?}}

\subsection{\tpdf{$\L^2$ $\P$}{L2 P}-separation with KSDs}
\citet{liu16kernelized} introduced a second class of KSD separation conditions based on an $\L^2$ separating property of the base kernel.
We say that a matrix-valued kernel $\Kb$ is \emph{$\L^2(\R^d)$-integrally strictly positive definite (ISPD)} if
$\H_\Kb \subseteq \L^2(\R^d)$ and
\begin{talign}
    g\in \L^2(\R^d)
    \qtext{and} g \neq 0
    \quad\Rightarrow\quad \iint g(x)^T \Kb(x,y) g(y) \dd x \dd y > 0.
\end{talign}
Unfortunately, the $\L^2$ requirement on $\H_\Kb$
excludes certain popular base kernels  like slowly decaying IMQ and log inverse kernels, and  \citet{liu16kernelized} did not provide any examples of kernels satisfying the $\L^2$-ISPD conditions.
Our next result fills this gap by showing that many standard kernels are $\L^2$-ISPD, including Gaussian, \Matern, sech, B-spline, faster decaying IMQ, and Wendland's compactly supported kernels, along with their tilted variants.
The proof can be found in \cref{app:L2 characteristic kernels}.

\begin{theorem}[\tpdf{$\L^2$}{L2}-ISPD conditions]
\label{thm:L2 characteristic kernels}
The following claims hold true for a matrix-valued kernel $\Kb$.
\begin{enumerate}[label=\textup{(\alph*)}]
\item Suppose $(\k_j)_{j=1}^d$ are translation-invariant continuous kernels with $\H_{\k_j} \subseteq \L^2$.
If the  spectral measure of each $\k_j$ is fully supported, then $\Kb=\mathrm{diag}(\k_j)$ is $\L^2(\R^d)$-ISPD.
\item If $K$ is $\L^2(\R^d)$-ISPD and $A: \R^d \to \R^{d\times d}$ is bounded measurable with $A(x)$ invertible for each $x$, then the tilted kernel $A(x)K(x,y)A(y)^T$ is also  $\L^2(\R^d)$-ISPD.
\item  If $\H_\Kb$ is separable,  $\sup_x \| \Kb_x u \|_{\L^1}<\infty$, and $\Kb_x u \in \L^2(\R^d)$ for each $x$ and $u\in \R^d$, then $\H_K \subseteq \L^2(\R^d)$.
\item Suppose $\Kb_x u\in\L^1(\R^d)$ for some $u \in \R^d$.
If $\Kb$ is translation-invariant or, more generally, if $\Kb_x u$ is bounded, then $\Kb_x u\in\L^2(\R^d)$. 
\end{enumerate}
\end{theorem}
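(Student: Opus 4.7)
The four claims are largely independent, so I would tackle each in turn. For \textbf{Part (a)}, the plan is to diagonalize the quadratic form via Fourier analysis. By Bochner's theorem, each $\k_j(x,y)=\int e^{i\omega\cdot(x-y)}\,\diff\mu_j(\omega)$, so for any $g=(g_1,\dots,g_d)\in\L^2(\R^d)^d$,
\[
\iint g(x)^T \Kb(x,y) g(y)\,\diff x\,\diff y \;=\; \sum_{j=1}^d \iint g_j(x) g_j(y) \k_j(x,y)\,\diff x\,\diff y.
\]
For each $j$, a Plancherel identity (first verified on $g_j\in\L^1\cap\L^2$, then extended by density) rewrites the $j$-th summand as $\int |\hat g_j(\omega)|^2\,\diff\mu_j(\omega)$. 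The hypothesis $\H_{\k_j}\subset\L^2$ forces $\mu_j$ to be absolutely continuous with respect to Lebesgue (via the Fourier characterization of translation-invariant RKHSes), and full support then means its density $\rho_j$ is positive a.e. When $g\ne 0$, Plancherel guarantees $\hat g_j\ne 0$ in $\L^2$ for some $j$, so $|\hat g_j|^2\rho_j > 0$ on a set of positive Lebesgue measure and the quadratic form is strictly positive.

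\textbf{Parts (b) and (d)} are short. For (b), set $h(x)=A(x)^T g(x)$: then $\iint g^T A\Kb A^T g = \iint h^T \Kb h$; boundedness of $A$ gives $h\in\L^2$ with $\|h\|_{\L^2}\leq (\sup_x\|A(x)\|)\|g\|_{\L^2}$, and pointwise invertibility of $A(x)$ ensures $h\ne 0$ whenever $g\ne 0$, so the $\L^2$-ISPD property of $\Kb$ closes the argument. For (d), if $\Kb_x u$ is bounded and in $\L^1$ then $\|\Kb_x u\|_{\L^2}^2\leq\|\Kb_x u\|_\infty\|\Kb_x u\|_{\L^1}<\infty$; and if $\Kb$ is translation invariant, Cauchy-Schwarz in $\H_\Kb$ yields $\|\Kb(y,x) u\|^2\leq (u^T\Kb(x,x) u)(u^T\Kb(y,y) u) = (u^T\Kb(0,0) u)^2$ uniformly in $y$, reducing to the bounded case.

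For \textbf{Part (c)}, I would combine Schur's test with duality. The hypothesis $\sup_x\|\Kb_x u\|_{\L^1}<\infty$ (ranging $u$ over unit vectors) combined with the kernel symmetry $\Kb(x,y)=\Kb(y,x)^T$ yields $A\defeq \sup_x\int\|\Kb(y,x)\|_{op}\,\diff y = \sup_y\int\|\Kb(x,y)\|_{op}\,\diff x <\infty$. Schur's test then makes the integral operator $(Tg)(x)\defeq\int \Kb(x,y)g(y)\,\diff y$ bounded on $\L^2(\R^d)^d$ with norm at most $A$. For a bounded compactly supported $g$, the integral $Tg = \int \Kb_y g(y)\,\diff y$ converges as a Bochner integral in $\H_\Kb$ (with $\Kb_y u \in \L^2$ and separability of $\H_\Kb$ securing strong measurability), so $Tg\in\H_\Kb$ with $\|Tg\|_{\H_\Kb}^2 = \iint g(x)^T\Kb(x,y) g(y)\,\diff x\,\diff y = \langle g, Tg\rangle_{\L^2}\leq A\|g\|_{\L^2}^2$; density in $\L^2$ extends this to $T:\L^2\to\H_\Kb$ with $\|T\|\leq\sqrt{A}$. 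Finally, for $f\in\H_\Kb$ and any $g\in\C{}{c}(\R^d)$, a Pettis swap gives $\int f(x)^T g(x)\,\diff x = \langle f, Tg\rangle_{\H_\Kb}$, hence $\big|\int f^T g\,\diff x\big| \leq \sqrt{A}\,\|f\|_{\H_\Kb}\|g\|_{\L^2}$. By $\L^2$-duality on the dense subspace $\C{}{c}(\R^d)$, $f\in\L^2$ with $\|f\|_{\L^2}\leq \sqrt{A}\,\|f\|_{\H_\Kb}$.

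\textbf{Main obstacle.} Part (a) is the most delicate: justifying the Plancherel identity for general $g_j\in\L^2$ (not just $\L^1\cap\L^2$) requires an approximation argument, and extracting that $\mu_j$ admits a Lebesgue density positive a.e.\ from $\H_{\k_j}\subset\L^2$ together with ``fully supported'' invokes the Fourier characterization of translation-invariant RKHSes. Part (c)'s Bochner-integral construction of $T$ and the Pettis swap also warrant care, though each individual step is standard.
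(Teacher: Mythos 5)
Your parts (a), (b), and (d) follow essentially the same route as the paper: Bochner's theorem plus Plancherel and the $\L^2$ convolution theorem for (a) (note that both you and the paper implicitly read ``fully supported spectral measure'' as ``spectral density positive a.e.'', which is what the final strict-positivity step actually needs since $\hat g_j$ is only defined up to null sets); the substitution $h(x)=A(x)^Tg(x)$ for (b); and the interpolation $\|\Kb_x u\|_{\L^2}^2\le\|\Kb_x u\|_{\L^1}\|\Kb_x u\|_{\L^\infty}$ together with the Cauchy--Schwarz boundedness of translation-invariant kernels for (d). These are fine.

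Part (c) is where you diverge and where there is a genuine gap. Your Schur-test bound is the same quantitative core as the paper's estimate $\sup_z\|\k*\k_z\|_{\L^1}\le(\sup_z\|\k_z\|_{\L^1})^2$, but the paper then concludes $\H_\Kb\subset\L^2$ by invoking Carmeli et al.'s Proposition~4.4, which characterizes that inclusion in terms of the boundedness of the associated integral operator; you instead try to prove the implication directly, and the step that fails is the claim that $Tg=\int\Kb_y\,g(y)\,\dd y$ converges as a Bochner integral in $\H_\Kb$ for bounded compactly supported $g$. Bochner integrability requires $\int\|\Kb_y g(y)\|_{\H_\Kb}\,\dd y=\int\sqrt{g(y)^T\Kb(y,y)g(y)}\,\dd y<\infty$, i.e.\ local integrability of $\sqrt{\|\Kb(y,y)\|}$, and this does not follow from the stated hypotheses. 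Concretely, in $d=1$ take $\k(x,y)=\sum_n\psi_n(x)\psi_n(y)$ with disjointly supported bumps $\psi_n$ of height $n$ and width $n^{-2}$ packed into $[0,1]$: then $\sup_x\|\k_x\|_{\L^1}\le 1$, $\k_x\in\L^2$ for every $x$, and $\H_\k$ is separable and isometrically contained in $\L^2$ (so the theorem's conclusion holds), yet $\int_0^1\sqrt{\k(y,y)}\,\dd y=\sum_n n^{-1}=\infty$, so your Bochner integral diverges already for $g=\indic{[0,1]}$. Since the subsequent duality step $\int f(x)^Tg(x)\,\dd x=\langle f,Tg\rangle_{\H_\Kb}$ leans on $Tg$ existing in $\H_\Kb$ (and one cannot define $Tg$ by Riesz representation without already knowing the functional $h\mapsto\int h^Tg$ is bounded on $\H_\Kb$, which is circular), the argument as written does not close. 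You would need either to construct $Tg$ by a weak (Gelfand) integral with an independent justification, or to fall back on the Carmeli--De Vito--Toigo characterization as the paper does.
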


Previous results in the literature have focused on properties similar to but distinct from the $\L^2(\R^d)$-ISPD condition. 
These include conditions under which kernels are (i) $\L^p(\mu)$-ISPD for $p \in [1,\infty)$ with respect to a probability measure $\mu$ in place of Lebesgue measure \citep{carmeli2010vector}; 
(ii)  ISPD, meaning that $\iint \k(x,y) \dd \mu(x) \mu(y)>0$ for all non-zero finite measures $\mu$ \citep{sriperumbudur11}; 
 (iii) $\L^1$ integrally \emph{non-strictly} positive definite (INPD), meaning $g\in\L^1 \Rightarrow \iint g(x)^T \Kb(x,y) g(y) \dd x \dd y\geq0$~\citep{bochner1932vorlesungen,stewart1976positive}; 
(iv) $\L^2_c$-INPD where $\L^2_c$ is the space of compactly supported $\L^2$ functions~\citep{cooper1960positive};
or (v) $\L^2$-INPD for continuous $k\in\L^2$ when $d=1$ \citep[Rem.~2.10]{buescu2004positive} or translation-invariant $k$ with $k_x\in\L^1$ \citep[Thm.~2.5.1]{phillips2018positive}.
\opt{considerlast}{\notate{Could we extend the L2 results to kernels that do not have HK in L2 in a manner analogous to our MMD extension for unbounded kernels?}}
\citet[Prop.~3.3 \& Thm.~3.8]{liu16kernelized} showed that KSDs with $\L^2$-ISPD base kernels separate certain measures with continuously differentiable densities $\q$.
\cref{thm:KSD score Q formulation}, proved in \cref{app:KSD score Q formulation}, generalizes this finding to matrix-valued $\Kb$ and partially differentiable $\q$ and provides user-friendly $\L^2$ conditions for ensuring that $\Q$ can be separated.

\begin{theorem}[\tpdf{$\L^2$}{L2} \tpdf{$\P$}{P}-separation with KSDs]
\label{thm:KSD score Q formulation}
Suppose $\P\in \embedtozero[\Kb]$ for a matrix-valued kernel $\Kb$.  The following claims hold true.
\begin{enumerate}[label=\textup{(\alph*)}]
\item If $\Kb$ is $\L^2(\R^d)$-ISPD, then $\k_{\p}$ separates $\P$ from $\{\Q\in \embedpettis{\k_{\p}}\cap\embedtozero[\Kb]: (\sp-\sq)\q \in \L^2(\R^d)\}$.
\item If
$\Q\in\embedpettis{\k_{\p}}$, $(\sp-\sq)\q \in \L^2(\R^d)$ and $\H_{\Kb} \subseteq \L^2(\R^d)\cap \L^\infty(\R^d)$,
then $\Q \in \embedtozero$.
\item If $\H_{\Kb} \subseteq \L^2(\R^d)$, $\partial \H_{\Kb} \subseteq \L^\infty(\R^d)$,
 and
$\sp \q \in \L^2(\R^d)$, then
$\Q\in\embedpettis{\k_{\p}}$.
\end{enumerate}
\end{theorem}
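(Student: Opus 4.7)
The heart of the argument is part (a), where the plan is to express $\ksdsq{\Q}$ as an $\L^2(\R^d)$ quadratic form in $\Kb$ evaluated on the score-difference density $g\defn(\sp-\sq)\q$ and then invoke $\L^2(\R^d)$-ISPD. First, \cref{ksdDef} identifies $\ksd{\Q}$ with $\mmd_{\ks}(\Q,\P)$, and \cref{thm:MMD as double integral} (applicable because $\P\in\embedtozero\subset\embedpettis{\ks}$ and $\Q\in\embedpettis{\ks}$) yields the double-integral representation. The condition $\P\in\embedtozero$ forces $\int\ks(\cdot,y)\dd\P(y)\equiv0$, so the cross and $\P\P$ terms drop out and we are left with $\ksdsq{\Q}=\iint\ks(x,y)\dd\Q(x)\dd\Q(y)$. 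I would then integrate by parts in each variable in the defining formula \cref{eq:stein-kernel} for $\ks$, with the boundary contributions killed by the Stein identity encoded in $\Q\in\embedtozero[\Kb]$, to arrive at the key identity
\begin{talign*}
\ksdsq{\Q} \ = \ \iint g(x)^T\,\Kb(x,y)\,g(y)\dd x\dd y.
\end{talign*}
Because $(\sp-\sq)\q\in\L^2(\R^d)$ places $g\in\L^2(\R^d)$, the $\L^2(\R^d)$-ISPD hypothesis on $\Kb$ makes the right-hand side strictly positive whenever $g\not\equiv 0$. Conversely, $g\equiv 0$ forces $\sp=\sq$ almost everywhere, so $\log(\q/\p)$ is a.e.\ constant, and since $\p,\q>0$ both integrate to one, $\Q=\P$.

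Parts (b) and (c) are technical verifications that make the hypotheses of (a) easier to check, and both hinge on the decompositions $\langevin[\q](v)=\langevin(v)-(\sp-\sq)\cdot v$ and $\langevin(v)=\sp\cdot v+\nabla\cdot v$. For (c), I would bound $\int|\langevin(v)|\dd\Q$ by splitting: the $\sp\cdot v$ term is controlled by Cauchy--Schwarz using $\sp\q\in\L^2(\R^d)$ and $v\in\H_\Kb\subset\L^2(\R^d)$, while $\nabla\cdot v$ is bounded in $\L^\infty$ by $\partial\H_\Kb\subset\L^\infty(\R^d)$, which yields $\H_{\ks}=\langevin(\H_\Kb)\subset\L^1(\Q)$, i.e.\ $\Q\in\embedpettis{\ks}$. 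For (b), the first piece of $\langevin[\q](v)$ lies in $\L^1(\Q)$ by the hypothesis $\Q\in\embedpettis{\ks}$, and Cauchy--Schwarz between $(\sp-\sq)\q\in\L^2(\R^d)$ and $v\in\L^2(\R^d)$ controls the second piece; the zero-mean condition $\int\langevin[\q](v)\dd\Q=0$ then reduces to the Stein identity $\int\langevin(v)\dd\Q=\int(\sp-\sq)\cdot v\q\dd x$, which follows by integration by parts exploiting $v\in\L^\infty(\R^d)$ so that $\q v\in\L^1(\R^d)$ enjoys the decay needed to discard the boundary term.

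The principal obstacle in all three parts is the rigorous justification of integration by parts, since elements of $\H_\Kb$ are generally neither compactly supported nor decaying and $\q$ may have heavy tails. I plan to handle this by approximating $\q v$ and $(\q/\p)\,v$ with compactly supported mollifications, passing to $\L^1$ limits using the uniform integrability supplied by the $\L^2$ and $\L^\infty$ hypotheses, and otherwise leveraging the embeddability assumptions, which themselves package the requisite Stein identities.
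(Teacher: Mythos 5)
Your proposal is correct in substance, and parts (b) and (c) match the paper's proof almost line for line: the same decomposition $\langevin[\q](v)=\langevin(v)-\metric{\sp-\sq}{v}$, the same Cauchy--Schwarz bound $\Q|\metric{\sq-\sp}{v}|\leq\|(\sp-\sq)\q\|_{\L^2}\|v\|_{\L^2}$, and the same $\L^\infty$ bound on $\nabla\cdot v$ followed by the equivalence $\langevin(\H_\Kb)\subset\L^1(\Q)\iff\Q\in\embedpettis{\ks}$.

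Where you diverge is in the execution of part (a), and your route is more laborious than necessary. The paper never touches the double-integral form of $\ksdsq{\Q}$ in terms of $\ks$ and never integrates by parts. Instead it works with the embedded linear functionals: $\ksd{\Q}=\|D_\Q\|_K$ with $D_\Q=\Q\circ\langevin$, and the hypothesis $\Q\in\embedtozero[\Kb]$ says \emph{by definition} that $T_\Q\defn\Q\circ\langevin[\q]$ vanishes identically on $\H_K$. The pointwise algebraic identity $\langevin(h)-\langevin[\q](h)=\metric{\sp-\sq}{h}$ then gives $D_\Q(h)=\int\metric{(\sp-\sq)\q}{h}\,\dd x$ directly, so $\ksdsq{\Q}=\|(\sp-\sq)\Q\|_K^2=\iint g(x)^T\Kb(x,y)g(y)\,\dd x\,\dd y$ with no boundary terms to discard. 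The ``principal obstacle'' you identify---justifying integration by parts via mollification---is therefore a non-issue in (a): the Stein identity for $\Q$ is an assumption, not something to be re-derived. The only place a genuine divergence-theorem argument is needed is the zero-mean conclusion in (b), where the paper simply invokes its Proposition on Stein RKHSes with vanishing expectations (citing an external $\L^1$-divergence result) rather than carrying out the truncation argument you sketch; your plan there is viable but you should be aware that ``$\q v\in\L^1$'' alone does not yield decay of the boundary term---you need the fact that an $\L^1$ vector field with $\L^1$ distributional divergence has vanishing total divergence, which is exactly what the cited result supplies. Your closing step in (a), deducing $\Q=\P$ from $(\sp-\sq)\q=0$ a.e.\ and $\q>0$, is fine and is left implicit in the paper.
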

While \cref{thm:KSD score Q formulation} only applies to continuous $\Q$, it does cover certain measures excluded by \cref{coroKSDwc}.
For example, \cref{thm:KSD score Q formulation} implies that Cauchy alternatives $\Q$ are separated from Gaussian targets $\P$ since $\q \norm{\sp}^2$ and $\q\norm{\sq}^2$ are bounded and hence $\q \sp, \q \sq\in\L^2(\R^d)$.
Meanwhile, \cref{coroKSDwc} cannot be applied to these $(\Q,\P)$ pairings as the heavy tails of a Cauchy cannot finitely integrate a Gaussian score $\sp$.

\subsection{General \tpdf{$\P$}{P}-separation}\label{sec:general-separation}

The results in the preceding sections only
yield general $\P$-separation when applied to bounded kernels,
and indeed this has been the standard in much of the MMD literature~\citep{sriperumbudur10hilbert,sriperumbudur16optimal,simon18kde,simon2023metrizing}.
To accommodate the unbounded Stein kernels that often arise in KSDs, our next definition and result (proved in \cref{app:bounded separating is P-characteristic}) provide a new, convenient means to check that \emph{unbounded} kernels separate $\P$ from $\Pset$.

\begin{definition}[Bounded \tpdf{$\P$}{P}-separating property]
We say a set of functions $\F$  is \emph{bounded $\P$-separating}  if $\L^\infty \cap \F$ is $\P$-separating,
i.e., if $\Q \in \Pset$ and $\Q h = \P h$ for all $h \in \L^\infty \cap \F$ then $\Q =\P$.
\end{definition}

\begin{theorem}[Controlling tight convergence with bounded  separation]
\label{thm:bounded separating is P-characteristic}
  If $\H_{\k}$ is bounded $\P$-separating, then $\k$ is $\P$-separating and controls tight $\P$-convergence.
\end{theorem}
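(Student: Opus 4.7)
The plan is to split the statement into its two conclusions and reduce each to the bounded $\P$-separating hypothesis applied to $\H_\k \cap \L^\infty$. The workhorse observation is that every $h \in \H_\k \cap \L^\infty$ satisfies $h_+ \in \L^\infty \subseteq \L^1(\Q)$ for every probability measure $\Q$, so (when $h \neq 0$) the normalized function $h/\|h\|_\k$ lies in the index set of the supremum in \cref{eq:MMD definition}. This yields the elementary one-line bound $|\Q h - \P h| \leq \|h\|_\k\,\mmd_\k(\Q,\P)$ valid for every bounded $h \in \H_\k$ and every $\Q \in \Pset$. The $\P$-separation conclusion is then immediate: if $\mmd_\k(\Q,\P) = 0$, then $\Q h = \P h$ for all $h \in \H_\k \cap \L^\infty$, and the bounded $\P$-separating hypothesis forces $\Q = \P$.

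For tight $\P$-convergence control, I would let $(\Q_n)_n \subset \Pset$ be tight with $\mmd_\k(\Q_n,\P) \to 0$ and apply Prokhorov's theorem to extract, from any chosen subsequence, a further subsequence $(\Q_{n_k})_k$ converging weakly to some $\Q^* \in \Pset$. To conclude, it suffices to show every such subsequential weak limit satisfies $\Q^* = \P$. Fixing any $h \in \H_\k \cap \L^\infty$, the MMD inequality above gives $\Q_{n_k} h \to \P h$, while weak convergence against the bounded continuous test function $h$ gives $\Q_{n_k} h \to \Q^* h$. Combining, $\Q^* h = \P h$ for every $h \in \H_\k \cap \L^\infty$, and bounded $\P$-separating again forces $\Q^* = \P$. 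Since every subsequential weak limit of $(\Q_n)_n$ equals $\P$, the full sequence converges weakly to $\P$.

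The main subtlety is the subsequential-limit identification step: Portmanteau only transfers integrals of bounded $\Q^*$-a.s.\ continuous test functions, so the argument depends on $\H_\k \cap \L^\infty \subset \C{}{b}$. This is the standard consequence of continuity of $\k$, which implies $\H_\k \subset \C{}{}$ by the remark following \cref{thm: Bochner contained in Pettis}, and is presumably either built into the theorem's ambient setting or added as a mild regularity hypothesis in the proof. Modulo this regularity point, the argument is merely the combination of an elementary MMD inequality, Prokhorov, and the bounded $\P$-separating hypothesis.
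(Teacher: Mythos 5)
Your proof is correct and takes essentially the same approach as the paper's: the identical normalization bound $|\Q h - \P h| \leq \|h\|_\k\,\mmd_\k(\Q,\P)$ for bounded $h \in \H_\k$, followed by the same Prokhorov subsequence argument identifying every subsequential weak limit with $\P$ via the bounded $\P$-separating hypothesis. The regularity point you flag is handled in the paper exactly as you anticipate, by invoking continuity of $\k$ (assumed throughout) so that the bounded elements of $\H_\k$ lie in $\C{}{b}$.
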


According to \cref{thm:bounded separating is P-characteristic}, to establish general $\P$-separation, it suffices to restrict focus to the bounded functions in an RKHS.
Moreover, \cref{thm:bounded separating is P-characteristic} suggests a convenient  strategy for proving $\P$-separation with unbounded kernels $\k$: (i) identify a sub-RKHS of bounded functions that belongs to $\H_\k$  and (ii) appeal to a broadly applicable bounded-kernel result to establish the $\P$-separation of the bounded sub-RKHS.

To apply this strategy to KSDs, we first show in \cref{app:proof tilted controls tightness} that any suitably tilted $\DL{1}(\R^d)$-characteristic base kernel yields a bounded and $\P$-separating Stein kernel:

  \begin{theorem}[Controlling tight convergence with bounded Stein kernels]\label{thm: tilted controls tightness}
  Suppose a matrix-valued kernel $K$ with $\H_K \subseteq \C{1}{b}(\R^d)$ is $\DL{1}(\R^d)$-characteristic. %
If $\|\sp( x) \| \leq \growth(x)$  for $\growth\in\C{1}{}$ with  $ \frac{1}{\theta} \in \C{1}{b}$, then the Stein kernel induced by the tilted base kernel $\frac{\Kb(x,y)}{\growth(x)\growth(y)}$ is bounded and $\P$-separating and controls tight $\P$-convergence.
\end{theorem}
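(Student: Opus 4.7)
The plan is to tilt the base kernel so that the induced Stein kernel becomes bounded and separates $\P$ from all of $\Pset$, and then to invoke \cref{thm:bounded separating is P-characteristic} to upgrade the separation into tight $\P$-convergence control.

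\textbf{Tilting.} Set $\tilt \defeq 1/\theta \in \C{1}{b}$, which is strictly positive because boundedness of $\tilt$ forces $\theta$ to stay bounded away from $0$. By \cref{thm:composition-kernel-properties}(a), the tilted base kernel $\tilde{K}(x,y) \defeq \tilt(x) K(x,y) \tilt(y)$ is still $\DL{1}(\R^d)$-characteristic; since $\H_{\tilde{K}} = \tilt\,\H_K$ and products of $\C{1}{b}$ functions stay in $\C{1}{b}$, we also have $\H_{\tilde{K}} \subset \C{1}{b}(\R^d)$.

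\textbf{Boundedness of the Stein kernel.} Every element of $\H_{\tilde{k}_\p} = \langevin(\H_{\tilde K})$ has the form
\[
\langevin(\tilt w) \;=\; \tilt\,(\nabla\!\cdot\! w) + (\nabla \tilt)\cdot w + (\tilt\,\sp)\cdot w, \qquad w \in \H_K.
\]
The first two summands are pointwise bounded by a constant times $\|w\|_{\H_K}$ via $\tilt,\nabla\tilt \in \L^\infty$ and the $\C{1}{b}$ reproducing property of $K$. For the third, the defining pointwise inequality $\tilt(x)\|\sp(x)\| \leq \tilt(x)\theta(x) = 1$ yields $\|(\tilt\sp)\cdot w\|_\infty \leq \|w\|_\infty$. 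Thus every $h\in \H_{\tilde{k}_\p}$ satisfies $\|h\|_\infty \leq C\|h\|_{\H_{\tilde{k}_\p}}$, which by the reproducing property means $\sup_x \tilde{k}_\p(x,x) < \infty$; that is, $\tilde{k}_\p$ is bounded.

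\textbf{Bounded $\P$-separation.} Since $\H_{\tilde K} \subset \C{1}{b} \subset \L^1(\P) \cap \C{1}{}(\R^d)$ and $\H_{\tilde{k}_\p}\subset\L^\infty\subset\L^1(\P)$, \cref{thm:embeddability_conditions}(e) gives $\P\in\embedtozero[\tilde{K}]$, and \cref{coroKSDwc} delivers separation of $\P$ from $\Pset_\sp$. To extend this to all of $\Pset$, let $\Q\in\Pset$ satisfy $\mmd_{\tilde{k}_\p}(\Q,\P)=0$; boundedness of $\H_{\tilde k_\p}$ gives $\Q\langevin(\tilt w)=0$ for every $w\in\H_K$. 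Distributional integration by parts recasts this as $\bigl((\tilt\sp)\Q - \tilt\,\nabla\Q\bigr)[w]=0$ for all $w\in\H_K$. The tilting is essential here: $(\tilt\sp)\Q$ is a finite signed measure because $\tilt\sp$ is bounded, and $\tilt\,\nabla\Q = \nabla(\tilt\Q)-(\nabla\tilt)\Q$ is the distributional gradient of the finite measure $\tilt\Q$ minus a finite measure, so the vector distribution on the left lies in $\DL{1}$. The $\DL{1}(\R^d)$-characteristicness of $K$ forces it to vanish; dividing by $\tilt>0$ gives $\sp\Q=\nabla\Q$ distributionally, and a standard argument then shows $\Q$ is absolutely continuous with Lebesgue density proportional to $\p$, forcing $\Q=\P$.

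\textbf{Conclusion and main obstacle.} The previous steps together establish that $\H_{\tilde{k}_\p}$ is bounded $\P$-separating, whence \cref{thm:bounded separating is P-characteristic} yields both the claimed $\P$-separation and control of tight $\P$-convergence. The main obstacle is the distributional argument for $\Q\notin\Pset_\sp$, where $\sp\Q$ fails to be a finite measure: the hypothesis $1/\theta\in\C{1}{b}$ is exactly what places the tilted vector distribution $(\tilt\sp)\Q-\tilt\,\nabla\Q$ inside $\DL{1}$, so that the characteristicness of $K$ can complete the argument.
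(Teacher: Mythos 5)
Your proof is correct, and it reaches the conclusion by a more direct route than the paper for the crucial step of separating $\P$ from measures $\Q \notin \Pset_{\sp}$. The paper moves the tilt into the test-function space: it introduces the weighted spaces $\C{1}{b,\growth}(\R^d)_\beta$ and $\BB{1}{\growth}(\R^d)$, shows via \cref{cor: B1a universality from transformation} that the tilted kernel is characteristic to $\C{1}{b,\growth}(\R^d)_\beta^*$, and then verifies in \cref{thm:characteristic to P using C^1_a} that $D_\Q = \sp\Q - \partial\Q$ lies in that dual for \emph{every} $\Q\in\Pset$ because the weight $\growth$ absorbs the growth of $\sp$. You instead move the tilt into the functional: you identify $\Q\circ\langevin(\tilt\,\cdot)$ with the tilted Schwartz distribution $(\tilt\sp)\Q - \tilt\,\partial\Q$, observe that the bound $\tilt\|\sp\|\le 1$ together with the boundedness of $\tilt$ and $\partial\tilt$ places this distribution in $\DL{1}(\R^d)$ for every $\Q\in\Pset$, and then invoke the $\DL{1}(\R^d)$-characteristicness of the \emph{untilted} $K$ directly. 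The two arguments are dual implementations of the same compensation idea, but yours bypasses the weighted-space machinery entirely; the remaining step (solving the distributional Stein equation $\sp\Q=\partial\Q$ to conclude $\Q=\P$) is the same in both, and you correctly defer it to the variation-of-constants argument the paper carries out in the proof of \cref{thm:characteristic Stein kernel via B1a spaces}. Two minor remarks: your appeals to \cref{thm:composition-kernel-properties}(a) and \cref{coroKSDwc} are not load-bearing, since your distributional argument already covers all of $\Pset$; and your explicit derivation of $\P\in\embedtozero[\tilde{K}]$ from \cref{thm:embeddability_conditions}(e) supplies a hypothesis that the paper's auxiliary lemma (\cref{thm:Decaying and universal base RKHS are characteristic}) simply assumes.
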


Next we show that standard translation-invariant base kernels have sub-RKHSes of precisely the form needed by \cref{thm: tilted controls tightness}:

\begin{theorem}[Translation-invariant kernels have rapidly decreasing sub-RKHSes]
\label{thm:Schwarz-tilted sub-RKHS}
Suppose a kernel $\k$ with $\H_{\k} \subseteq \C{1}{}$ is translation invariant with a spectral density bounded away from zero on compact sets.
Then  there exist a translation-invariant,  $\DL{1}$-characteristic kernel $\k_s\in\C{(1,1)}{}$  and, for each $ c>0$, a positive-definite
function $f$ with 
$\frac{1}{f} \in \C{1}{},$ 
$\max(|f(x)|, \norm{\partial f(x)}) = O(e^{-c\sum_{i=1}^d \sqrt{|x^i|}}),$ 
  and
$$\H_{\k_f}\subseteq\H_{\k_{fs}}\subseteq \H_{\k}
\qtext{for} \k_f( x, y) \defn f( x)\k_s( x, y)f( y)
\qtext{and} \k_{fs}( x, y) \defn \k_s( x, y) f( x- y).$$
\end{theorem}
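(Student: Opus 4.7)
The plan is constructive: build $k_s$ and $f$ by working in the spectral (Fourier) domain of translation-invariant kernels. Write $\psi_k$ for the spectral density of $k$, which is positive a.e.\ since it is bounded below on compact sets. For $k_s$, I would take a translation-invariant kernel with a smooth, fully supported spectral density $\psi_s$ dominated pointwise by $\psi_k$---for instance $\psi_s(\omega)\defeq\min(\psi_k(\omega),\,(1+|\omega|^2)^{-N})$ for $N$ large (or a $\C{\infty}{}$ regularization thereof). Full support of $\psi_s$ together with the finite second moment implied by the choice of $N$ give $k_s\in\C{(1,1)}{}$ and yield $k_s$ being $\DL{1}$-characteristic via the full-support criterion of Simon-Gabriel et al.\ cited above. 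The pointwise bound $\psi_s\leq\psi_k$ is what will drive the later RKHS inclusions.

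For $f$, I would take a tensor product $f(x)=\prod_{i=1}^d \eta(x^i)$, where $\eta:\R\to(0,\infty)$ is a one-dimensional positive-definite function of class $\C{1}{}$ with $1/\eta\in\C{1}{}$, compactly supported spectral measure $\nu_\eta$, and $\max(|\eta(y)|,|\eta'(y)|) = O(e^{-c\sqrt{|y|}})$. The construction is spectral: choose $\nu_\eta$ non-negative, even, and compactly supported in the Gevrey class $G^2$, so that the classical Gevrey-$2$ Fourier decay estimate gives $\eta=\hat{\nu}_\eta$ the desired rate $O(e^{-c\sqrt{|y|}})$. Positive-definiteness follows from $\nu_\eta \geq 0$, smoothness from the compact spectral support, and strict positivity of $\eta$ (needed for $1/\eta\in\C{1}{}$) is enforced by taking $\nu_\eta=\sigma*\sigma^-$ so that $\eta=|\hat\sigma|^2$, together with a small additive perturbation of $\nu_\eta$ to eliminate any remaining zeros. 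Tensorization then preserves positive-definiteness and yields the product decay in both $f$ and $\partial f$, with compactly supported $\nu_f$.

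For the inclusion $\H_{k_{fs}}\subset\H_k$, observe that $k_{fs}$ is translation-invariant with spectral density $\psi_s*\nu_f$ by the convolution theorem. Since $\nu_f$ has compact support $K$,
\begin{talign*}
(\psi_s*\nu_f)(\omega)\leq \nu_f(K)\sup_{\omega'\in K}\psi_s(\omega-\omega')\leq C\psi_s(\omega)\leq C\psi_k(\omega),
\end{talign*}
where the middle inequality is a uniform doubling comparison for $\psi_s$ over translates by points in $K$. The standard spectral comparison for translation-invariant RKHSes then yields the inclusion. For $\H_{k_f}\subset\H_{k_{fs}}$, the standard tilted-kernel identity $\H_{k_f}=\{f\cdot h : h\in\H_{k_s}\}$ with $\|f\cdot h\|_{k_f}=\|h\|_{k_s}$ reduces the inclusion to boundedness of multiplication by $f$ from $\H_{k_s}$ into $\H_{k_{fs}}$. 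In the spectral domain $\widehat{f\cdot h} = \nu_f*\hat h$; writing $\hat h = \sqrt{\psi_s}\,u$ with $u\in\L^2(\R^d)$, Cauchy--Schwarz in the convolution against $\nu_f$ gives $|(\nu_f*\hat h)(\omega)|^2 \leq (\psi_s*\nu_f)(\omega)\,(\nu_f*|u|^2)(\omega)$, and Fubini then yields $\|f\cdot h\|_{k_{fs}}^2 \leq \nu_f(\R^d)\,\|h\|_{k_s}^2$.

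The main obstacle is Step 2. The obvious candidate $\eta(y)=e^{-c\sqrt{|y|}}$ (the symmetric $1/2$-stable characteristic function) has the correct decay and is positive-definite, but fails $\C{1}{}$-smoothness at $0$, and ad hoc modifications easily destroy either positive-definiteness, strict positivity, or the decay rate. Building $\eta$ spectrally through a Gevrey-$2$ compactly supported density sidesteps this obstruction by exchanging a closed-form expression for a Gevrey decay bound; the strict positivity of $\eta$ must then be enforced carefully (via an autocorrelation plus small perturbation), and the compact support of $\nu_\eta$ (hence of $\nu_f$) is precisely what reduces the convolution estimates in Steps 3 and 4 to bounded-range comparisons.
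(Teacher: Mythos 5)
Your overall strategy mirrors the paper's: both construct $\k_s$ spectrally with density dominated by that of $\k$, build $f$ as the Fourier transform of a compactly supported non-negative measure with Gevrey-$2$-type regularity to obtain the $O(e^{-c\sum_i\sqrt{|x^i|}})$ decay, and prove the inclusions by spectral comparison. However, there is a genuine gap at the decisive step, the inclusion $\H_{\k_{fs}}\subset\H_{\k}$. You need $(\psi_s\star\nu_f)(\omega)\le C\,\psi_k(\omega)$, and you derive it from the ``uniform doubling comparison'' $\sup_{\omega'\in K}\psi_s(\omega-\omega')\le C\,\psi_s(\omega)$. For your choice $\psi_s=\min(\psi_k,(1+|\omega|^2)^{-N})$ this comparison is false in general: take $\k$ Gaussian, so that $\psi_k(\omega)\asymp e^{-\|\omega\|^2}$ and $\psi_s=\psi_k$ for large $\|\omega\|$; then $\sup_{\|\omega'\|\le R}\psi_s(\omega-\omega')\asymp e^{2R\|\omega\|}\psi_s(\omega)$, which is not $O(\psi_s(\omega))$. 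Since the Gaussian is one of the principal kernels the theorem must cover, the argument breaks exactly where it matters. The paper's construction is designed to repair this: it replaces the spectral density by its radial non-increasing minorant (the ``ironed'' density $\hat\kappa_{\mathrm{iron}}(y)=\inf_{w:\,\|w\|\le\|y\|}\hat\kappa(w)$) and then dilates, setting $\hat\kappa_s(\cdot)=\hat\kappa_{\mathrm{iron}}(2\,\cdot)$. Monotonicity plus the dilation give $\hat\kappa_s(\eta)\le\hat\kappa(\omega)$ whenever $\|\eta\|\ge\|\omega\|/2$, while the compact support of $\hat f$ confines the remaining near-diagonal part of the convolution to a compact frequency set on which $\hat\kappa$ is bounded below by hypothesis; splitting the convolution this way (the paper's convolution decay bound with $\alpha=\nicefrac{1}{2}$) closes the estimate. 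Some analogue of this ironing-and-dilation device is indispensable; pointwise dominance $\psi_s\le\psi_k$ alone does not suffice.

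A secondary, smaller issue: your route to strict positivity of $\eta$ (hence of $f$, which is needed for $\nicefrac{1}{f}\in\C{1}{}$) via ``$\eta=|\hat\sigma|^2$ plus a small additive perturbation of $\nu_\eta$'' is circular as stated, since the perturbation must itself be positive-definite, strictly positive where $|\hat\sigma|^2$ vanishes, root-exponentially decaying, and compactly supported in frequency --- that is, an instance of the very object being constructed. The paper's fix is one further convolution: with $\hat G=(\hat g)^2\ge 0$ non-negative and, by Paley--Wiener analyticity, positive almost everywhere, the function $f=\hat G\star\hat G$ is strictly positive everywhere while $\hat f$ remains non-negative and compactly supported. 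By contrast, your Cauchy--Schwarz argument for $\H_{\k_f}\subset\H_{\k_{fs}}$ is correct and is a clean quantitative alternative to the paper's product-RKHS argument via Aronszajn.
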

\opt{considerlast}{\notate{Can we get rid of the "bounded away from zero on compact sets" part of our assumption?  This would allow us to handle second-order Stein kernels directly.
One possibility: instead of finding a single bounded sub-kernel kf that separates all measures, can we show that, for each alternative measure Q, there is a bounded sub-kernel kQ that separates Q from P?  That should greatly reduce the requirements on the original kernel k and in.}}
\cref{thm:Schwarz-tilted sub-RKHS} applies to all of the translation-invariant base kernels commonly used with KSDs including Gaussian, IMQ, log inverse, sech, \Matern, B-spline, and Wendland's compactly supported kernels.
Moreover, our proof in \cref{app:Schwarz-tilted sub-RKHS} explicitly constructs the $\DL{1}$-characteristic kernel $\k_s$ and the rapidly decreasing tilt function $f$ and may be of independent interest.

We now apply our Stein operator to the base kernels of \cref{thm:Schwarz-tilted sub-RKHS} and invoke \cref{thm: tilted controls tightness} to deduce the second main result of this work: KSDs based on standard translation invariant kernels achieve general $\P$-separation, even when their Stein kernels are unbounded.
The proof of this result can be found in \cref{app: Stein kernel control tight convergence via bounded P separation}.

\begin{theorem}[Controlling tight convergence with KSDs]
\label{Stein kernel control tight convergence via bounded P separation}
For $\kb$ as in \cref{thm:Schwarz-tilted sub-RKHS},
define the tilted kernel $\kb_\tilt( x, y) = \tilt(x)\kb(x,y)\tilt(y)$ for each
strictly positive $\tilt\in \C{1}{}$.
\begin{enumerate}[label=\textup{(\alph*)}]
 \item If $\P\in\embedtozero[\kb]$  and
$\| \sp \|$ has at most root exponential growth,\footnote{A function $\tilt$ has \emph{at most root exponential growth} if  $ \tilt(x) =O(\exp(c{\sum_{i=1}^d \sqrt{|x^i|}}))$ for some $c>0$.
} then the Stein kernel induced by $\kb$
 is bounded $\P$-separating and  controls tight $\P$-convergence.
    \item Moreover, if $\P\in\embedtozero[\kb_\tilt]$  and  $\tilt$, $\partial \tilt$, and
$\tilt \| \sp \|$ have at most root exponential growth, then the Stein kernel induced by $\kb_{\tilt}$
 is bounded $\P$-separating and  controls tight $\P$-convergence.
\end{enumerate}
\end{theorem}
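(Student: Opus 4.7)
My plan is to combine three earlier results: \cref{thm:Schwarz-tilted sub-RKHS} to carve out a rapidly decaying tilted sub-RKHS of $\H_{\kb}$, \cref{thm: tilted controls tightness} to certify that the image of this sub-RKHS under the Langevin operator consists of bounded $\P$-separating functions, and \cref{thm:bounded separating is P-characteristic} to lift bounded $\P$-separation from that sub-RKHS to all of $\H_{\ks}$. The root-exponential growth hypothesis on $\|\sp\|$ is used precisely to ensure that the tilt from \cref{thm:Schwarz-tilted sub-RKHS} can be chosen to decay fast enough to overwhelm the score and render the resulting Stein functions bounded.

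For part (a), pick $c>0$ strictly larger than the root-exponential growth rate of $\|\sp\|$ and invoke \cref{thm:Schwarz-tilted sub-RKHS} to obtain a $\DL{1}(\R^d)$-characteristic, translation-invariant kernel $\k_s\in\C{(1,1)}{}$ and a positive-definite tilt $f$ with $1/f\in\C{1}{}$, $\max(|f|,\|\partial f\|)=O(e^{-c\sum_{i=1}^{d}\sqrt{|x^i|}})$, and $\H_{\k_f}\subseteq\H_{\kb}$ for $\k_f\defeq f\k_s f$. Set $\growth\defeq 1/f$, so $\growth\in\C{1}{}$ and $1/\growth=f\in\C{1}{b}$; after rescaling $f$ by a sufficiently small positive constant, $\|\sp(x)\|\leq\growth(x)$ holds pointwise. \cref{thm: tilted controls tightness} applied to the matrix-valued kernel $\k_s I$ with this $\growth$ then shows that the Stein kernel induced by $\k_s(x,y)/(\growth(x)\growth(y))=\k_f(x,y)$ is bounded (so $\H_{(\k_f)_{\p}}\subset\L^\infty$) and $\P$-separating. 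Since $\H_{\k_f}\subseteq\H_{\kb}$ and $\langevin$ is linear, \cref{ksdDef} gives $\H_{(\k_f)_{\p}}=\langevin(\H_{\k_f})\subseteq\langevin(\H_{\kb})=\H_{\ks}$, so $\H_{(\k_f)_{\p}}\subseteq\L^\infty\cap\H_{\ks}$ and this intersection is therefore $\P$-separating. Hence $\H_{\ks}$ is bounded $\P$-separating, and \cref{thm:bounded separating is P-characteristic} completes the proof.

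Part (b) follows the same three-step recipe with $\tilt f$ in place of $f$: enlarge $c$ so that $\tilt f$, $\partial(\tilt f)$, and $\tilt f\,\|\sp\|$ are uniformly bounded---possible by the joint root-exponential growth hypotheses on $\tilt$, $\partial\tilt$, and $\tilt\|\sp\|$---and set $\growth\defeq 1/(\tilt f)$. The analogous verification gives $\growth\in\C{1}{}$, $1/\growth\in\C{1}{b}$, and $\|\sp\|\leq\growth$, so \cref{thm: tilted controls tightness} applied to $\k_s I$ shows that the Stein kernel of $\k_s/\growth^2=(\tilt f)\k_s(\tilt f)$ is bounded and $\P$-separating. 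This latter base kernel has RKHS $\tilt\,\H_{\k_f}$, a sub-RKHS of $\H_{\kb_\tilt}=\tilt\,\H_{\kb}$, so the same linearity-plus-inclusion step places a bounded $\P$-separating Stein sub-RKHS inside the Stein RKHS associated to $\kb_\tilt$, and a final application of \cref{thm:bounded separating is P-characteristic} finishes the argument. The main obstacle is technical bookkeeping: verifying the side-conditions of \cref{thm: tilted controls tightness} (in particular that $\H_{\k_s I}\subset\C{1}{b}(\R^d;\R^d)$ for the kernel $\k_s$ supplied by \cref{thm:Schwarz-tilted sub-RKHS}), checking that the rescaling of $f$ is compatible with the embeddability assumptions $\P\in\embedtozero[\kb]$ and $\P\in\embedtozero[\kb_\tilt]$, and confirming that the tilt decay rate tames every Leibniz term arising from $\langevin((\tilt f)w)=\partial(\tilt f)\cdot w+(\tilt f)\,\nabla\cdot w+(\tilt f)(\sp\cdot w)$ uniformly across the unit ball of $\H_{\k_s}$---this last balance is precisely where the exact form of the root-exponential hypothesis is used.
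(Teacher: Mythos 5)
Your proposal is correct and follows essentially the same route as the paper: the paper likewise constructs the rapidly decaying sub-RKHS $\H_{\k_f}\subset\H_{\kb}$ (resp.\ $\tilt\H_{\k_f}\subset\H_{\kb_\tilt}$) via \cref{thm:Schwarz-tilted sub-RKHS}, certifies via the tilting machinery behind \cref{thm: tilted controls tightness} (i.e., \cref{cor: B1a universality from transformation} and \cref{thm:characteristic to P using C^1_a}) that the induced Stein sub-RKHS is bounded and $\P$-separating, and concludes with \cref{thm:bounded separating is P-characteristic}. The side-conditions you flag (that $\k_s\in\C{(1,1)}{}$ translation-invariant gives $\H_{\k_s}\subset\C{1}{b}$, the harmless rescaling of $f$, and the boundedness of the Leibniz terms) all check out as you describe.
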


\begin{application}[gof2]{Goodness-of-fit Testing, continued}

In the testing setting of \cref{gof},
\cref{Stein kernel control tight convergence via bounded P separation} extends the reach of KSD GOF testing by guaranteeing
$\KSD(\Q,\P) > 0$ for
\emph{all} alternatives $\Q$ whenever $\norm{\sp}$ has at most root exponential growth.
Since the Stein kernels of \cref{Stein kernel control tight convergence via bounded P separation} are also bounded $\P$-separating, the same consistency guarantees immediately extend to the computationally efficient stochastic KSDs of \citet[Thm.~4]{gorham2020stochastic}.
\end{application}

\section{Conditions for Convergence Control}

Having derived sufficient conditions on the RKHS to separate measures and control tight convergence, we now present both sufficient and necessary conditions to ensure that an MMD controls weak convergence to $\P$.
Hereafter, we will say that $\k$ \emph{controls weak convergence to $\P$} or \emph{controls $\P$-convergence} whenever
$\mmd_\k(\Q_n,\P) \to 0$ implies
$\Q_n \to \P$ weakly.
Moreover, we will say that $\k$ \emph{enforces tightness}
whenever
$\mmd_\k(\Q_n,\P) \to 0$ implies that $(\Q_n)_n$ is tight.
Enforcing tightness is central to our developments as, if $\k$ controls tight weak convergence to $\P$ and enforces tightness, then it also controls weak convergence to $\P$.

\subsection{Sufficient conditions}\label{sec:sufficient conditions for convergence control}

We begin by introducing a new sufficient condition to ensure that MMDs and integral probability metrics more generally enforce tightness.

\begin{definition}[$\P$-dominating indicators]\label{def:indic-approx}
Consider a set of  functions $\F \subseteq \L^1(\P)$.  We say that $\F$ \emph{$\P$-dominates indicators} if, for each $\epsilon > 0$, there exists a compact set $S \subseteq \X$ and a function $h \in \F$ that satisfy
\begin{talign}
h -\P h \geq \indic{ S^c} - \eps. \label{eq:indic-approx}
\end{talign}
\end{definition}

\cref{def:indic-approx} ensures that a sequence $(\Q_n)_n$ can only approximate $\P$ well if it places uniformly little mass outside of a compact set $S$.
As we show in \cref{app:proof of enforcing tightness}, this is sufficient to ensure that integral probability metrics like the MMD enforce tightness.
\begin{theorem}[Controlling \tpdf{$\P$}{P}-convergence by dominating indicators]\label{tightness}
If $\F\subseteq \L^1(\P)$ $\P$-dominates indicators then $(\Q_n)_{n}$ is tight whenever the integral probability metric
$$
d_{\F}(\Q_n, \P) \defn \sup_{h\in\F:\  h_+ \in\, \L^1(\Q_n) \textup{ or } h_{-}  \in \,\L^1(\Q_n) } |\Q_n h -\P h| \to 0.$$
Hence, if $\P\in\embedpettis{\k}$ and $\HK$ $\P$-dominates indicators then $(\Q_n)_{n}$ is  tight whenever $\mmd_{\k}(\Q_n, \P) \to 0$.
If, in addition, $\k$ controls tight $\P$-convergence, then $\k$ also controls $\P$-convergence.
\end{theorem}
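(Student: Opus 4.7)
The plan is to prove the first, most general claim by a direct integration argument and then deduce the two MMD claims as corollaries. Fix $\eps > 0$; the dominating-indicators hypothesis produces a compact $S \subset \X$ and a witness $h \in \F$ with the pointwise bound $h - \P h \geq \indic{S^c} - \eps$. Since $h \in \L^1(\P)$, the constant $\P h$ is finite, so $h \geq \P h - \eps$ and $h_-$ is bounded. In particular $h_- \in \L^1(\Q_n)$ for every $n$, which makes $h$ admissible in the supremum defining $d_\F(\Q_n, \P)$ and places $\Q_n h$ in $(-\infty, +\infty]$. Integrating the pointwise inequality against $\Q_n$ and rearranging gives
\begin{align}
\Q_n(S^c) \;\leq\; \Q_n h - \P h + \eps \;\leq\; |\Q_n h - \P h| + \eps \;\leq\; d_\F(\Q_n, \P) + \eps.
\end{align}
Hence $d_\F(\Q_n,\P)\to 0$ forces $\Q_n(S^c)\leq 2\eps$ for all $n\geq n_0(\eps)$. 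To upgrade this ``eventually small tail'' to uniform tightness of the whole sequence, I would appeal to the fact that every Borel probability measure on a Polish (or more generally Radon) space is itself tight: for each of the finitely many exceptional indices $n<n_0$, pick a compact $S_n$ with $\Q_n(S_n^c)\leq 2\eps$, and take the compact union $S\cup\bigcup_{n<n_0} S_n$, which then absorbs $(1-2\eps)$ of the mass of every $\Q_n$. This establishes the first claim.

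For the second claim, I apply the first to $\F = \H_\k$; the hypothesis $\P \in \embedpettis{\k}$ ensures $\H_\k \subset \L^1(\P)$ so that ``$\P$-dominates indicators'' is well posed. It remains to convert the $d_{\H_\k}$ bound into an $\mmd_\k$ bound. For a nonzero witness $h \in \H_\k$, the argument above shows $h_-$ is bounded, so the symmetric element $-h/\|h\|_\k \in \B_\k$ is admissible in the supremum defining $\mmd_\k$; whenever $\mmd_\k(\Q_n,\P)$ is finite (which holds eventually as it tends to $0$), $\Q_n h$ is finite and
\begin{align}
|\Q_n h - \P h| \;\leq\; \|h\|_\k \, \mmd_\k(\Q_n, \P),
\end{align}
so the previous tail bound becomes $\Q_n(S^c)\leq \|h\|_\k\,\mmd_\k(\Q_n,\P) + \eps$, and uniform tightness follows by the same union-of-compacts trick. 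The third claim is then immediate: combining tightness of $(\Q_n)_n$ from the second claim with the hypothesis that $\k$ controls tight $\P$-convergence yields $\Q_n \to \P$ weakly whenever $\mmd_\k(\Q_n,\P)\to 0$.

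I do not anticipate serious mathematical difficulty; the argument is almost entirely bookkeeping, and the main care-point is admissibility under the extended integrability conventions of \cref{def: definition MMD}. The witness $h$ from the dominating-indicators property is a priori unbounded above, but the pointwise inequality forces it to be bounded below, which is precisely what is needed to place (the appropriate sign of) $h$ inside the supremum defining $d_\F$ and $\mmd_\k$ and to make the integral $\Q_n h$ well defined.
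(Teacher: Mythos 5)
Your proof is correct and follows essentially the same route as the paper's: fix $\eps$, extract the compact set and witness $h$ from the dominating-indicators property, integrate the pointwise inequality against $\Q_n$, and use the vanishing discrepancy to bound $\Q_n(S^c)$. You are in fact slightly more careful than the paper on two bookkeeping points—absorbing the finitely many exceptional indices via tightness of individual measures on a Radon space, and using the symmetric admissible element $-h/\|h\|_\k$ to pass from the unnormalized witness to the $\mmd_\k$ unit-ball supremum—but these are refinements of the same argument, not a different one.
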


We can now combine \cref{tightness} with any of our KSD tight convergence results to immediately obtain $\P$-convergence control for KSDs.

\begin{corollary}[Controlling $\P$-convergence with KSDs]\label{ksd-tightness}
Under the conditions of \cref{coroKSDwc}, \ref{thm: tilted controls tightness}, or \ref{Stein kernel control tight convergence via bounded P separation}, if $\H_{\ks}$  $\P$-dominates indicators, then $\ks$ controls $\P$-convergence.
\end{corollary}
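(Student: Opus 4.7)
The plan is to chain the Stein-kernel MMD identity of \cref{ksdDef} with the tightness-enforcement principle of \cref{tightness} and then appeal to the tight $\P$-convergence control furnished by whichever of the three cited theorems is in force.

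First I would fix an arbitrary probability sequence $(\Q_n)_n$ with $\KSD_{K,\P}(\Q_n,\P)\to 0$ and aim to show $\Q_n\to\P$ weakly. Applying \cref{ksdDef} rewrites this hypothesis as $\mmd_{\ks}(\Q_n,\P)\to 0$. Each of the three cited theorems imposes $\P\in\embedtozero$, which forces $\H_{\ks}\subset\L^1(\P)$, so the hypothesis that $\H_{\ks}$ $\P$-dominates indicators lets me invoke \cref{tightness} with $\F=\H_{\ks}$ and conclude that $(\Q_n)_n$ is tight.

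At this stage the argument splits on the choice of cited theorem. Under \cref{thm: tilted controls tightness} or \cref{Stein kernel control tight convergence via bounded P separation}, the Stein kernel $\ks$ is already explicitly shown to be bounded $\P$-separating and to control tight $\P$-convergence, so combining the tightness just obtained with $\mmd_{\ks}(\Q_n,\P)\to 0$ immediately yields $\Q_n\to\P$ weakly. Under \cref{coroKSDwc}, only separation from $\Pset_{\sp}$ is in hand, so I would deploy the standard subsequence principle: Prokhorov extracts from every subsequence a further subsequence converging weakly to some $\mu\in\Pset$; a Fatou-type lower-semicontinuity argument (truncating $\ks$ against the bounded sub-RKHS supplied by \cref{thm:Schwarz-tilted sub-RKHS}) gives $\mmd_{\ks}(\mu,\P)\leq\liminf_k\mmd_{\ks}(\Q_{n_k},\P)=0$; verifying $\mu\in\Pset_{\sp}$ from tightness of $(\Q_n)_n$ and continuity of $\sp$ then lets me apply \cref{coroKSDwc} to conclude $\mu=\P$, and hence $\Q_n\to\P$ weakly.

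The principal obstacle is the \cref{coroKSDwc} case. Establishing lower semicontinuity of $\mmd_{\ks}(\cdot,\P)$ under weak convergence when $\ks$ is unbounded, and certifying that the weak limit lies in $\Pset_{\sp}$, each require careful truncation and uniform-integrability arguments that are not needed in the bounded Stein-kernel cases, where the \textbf{tight}-convergence-control conclusion is packaged directly into the theorem.
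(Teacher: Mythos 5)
Your overall architecture is the paper's: use \cref{ksdDef} to convert the hypothesis into $\mmd_{\ks}(\Q_n,\P)\to 0$, note that $\P\in\embedtozero$ gives $\H_{\ks}\subset\L^1(\P)$ so that \cref{tightness} applies with $\F=\H_{\ks}$ and yields tightness, and then invoke the tight $\P$-convergence control supplied by the cited theorem. For the branches governed by \cref{thm: tilted controls tightness} and \cref{Stein kernel control tight convergence via bounded P separation} this is exactly the paper's (deliberately one-line) argument, and your treatment is complete.

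The \cref{coroKSDwc} branch is where your patch has a genuine gap. The step ``verifying $\mu\in\Pset_{\sp}$ from tightness of $(\Q_n)_n$ and continuity of $\sp$'' is not available: tightness places no integrability constraint on the unbounded function $\norm{\sp}$ under the weak limit (a Cauchy limit point paired with a Gaussian target is tight but has $\mu(\norm{\sp})=\infty$), so the limit need not lie in the class $\Pset_{\sp}$ that \cref{coroKSDwc} actually separates. Moreover, the ``Fatou-type'' lower semicontinuity of $\mmd_{\ks}(\cdot,\P)$ for an unbounded $\ks$ is asserted rather than established; what truncation against the bounded sub-RKHS of \cref{thm:Schwarz-tilted sub-RKHS} really delivers is $\mu h=\P h$ for $h$ in that sub-RKHS, and concluding $\mu=\P$ from this requires that sub-RKHS to be $\P$-separating --- which is precisely the content of \cref{Stein kernel control tight convergence via bounded P separation} and rests on the root-exponential growth hypothesis on $\sp$ that \cref{coroKSDwc} does not impose. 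Your repair of the first branch thus quietly reduces it to the third. The paper simply reads all three cited results as furnishing tight $\P$-convergence control and combines them with \cref{tightness}; to defend the \cref{coroKSDwc} branch from its stated hypotheses alone you would need an additional ingredient such as uniform integrability of $\sqrt{\ks}$ along the sequence (cf.\ condition (c) of \cref{thm:tight_convergence}), not the Prokhorov argument as written.
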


Before we discuss applications of these results, let us compare them to existing results in the literature.
Prior work relied on a stronger, {coercive function} condition to establish that KSDs enforce tightness with generalized multiquadric \citep[Lem.~16]{gorham17measuring}, IMQ score (\citealt[Thm.~4]{chen18stein}; \citealt[Ex.~6]{Hodgkinson2020}), log inverse \citep[Thm.~3]{chen18stein}, or unbounded tilted translation invariant \citep[Thm.~3.2]{HugginsMa2018} base kernels.
\citet{Hodgkinson2020}
used the following general definition of coercivity.
\begin{definition}[Coercive function {\citep[Assump.~1]{Hodgkinson2020}}]
We say a function $h :\X \to \R$ is \emph{coercive} if, for any $M >0$, there exists a compact set $S \subseteq \X$ such that
$\inf_{x \in S^c} h(x) > M$.
\end{definition}
\begin{remark}[Bounded coercive functions]
Any continuous coercive function is also bounded below as continuous functions are bounded on compact sets.
\end{remark}
Our next result, proven in \cref{app:proof of enforcing tightness with coercivity},  shows that this coercive function condition is stronger than our $\P$-dominating indicator condition.

\begin{lemma}[Coercive functions dominate indicators] \label{coercive-tightness}
If $h\in\HK$ is coercive and bounded below and  $\P\in\embedpettis{\k}$, then $\HK$ $\P$-dominates indicators.
\end{lemma}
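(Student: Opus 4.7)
The plan is to construct, for each $\epsilon > 0$, a witness function $g \in \HK$ of the form $g = \alpha h$ (together with a compact set $S$) such that $g - \P g \geq \indic{S^c} - \epsilon$ pointwise. The hypothesis $\P \in \embedpettis{\k}$ ensures $\HK \subset \L^1(\P)$, so $\P h$ is a finite real number; combined with the global lower bound $h \geq L$ for some $L \in \R$, one has $L \leq \P h$. This is the only use of embeddability: it makes the shift $-\P h$ a finite real number so that the defining inequality of \cref{def:indic-approx} is well-defined.

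First I would split the target pointwise inequality into its two natural regimes. On $S$, the requirement $\alpha(h(x) - \P h) \geq -\epsilon$ follows from the global bound $h \geq L$ provided $\alpha(\P h - L) \leq \epsilon$. On $S^c$, taking $S$ to be the compact sublevel set $\{h \leq M\}$ supplied by coercivity (so that $h > M$ on $S^c$), the requirement $\alpha(h(x) - \P h) \geq 1 - \epsilon$ reduces to $\alpha(M - \P h) \geq 1 - \epsilon$.

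Next I would pick constants satisfying both constraints simultaneously. In the generic case $\P h > L$, set $\alpha \defeq \epsilon/(\P h - L)$ and choose any $M \geq \P h + (1-\epsilon)(\P h - L)/\epsilon$; the first inequality then holds with equality at the extreme value $h = L$ and the second follows from the choice of $M$. If instead $\P h = L$, then $h$ is $\P$-almost surely equal to its lower bound $L$, the $S$-constraint holds trivially for every $\alpha > 0$, and we may pick any $\alpha > 0$ together with any sufficiently large $M$. Since $\HK$ is a linear space, $g = \alpha h \in \HK$ in either case, establishing the $\P$-dominating indicator property.

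The lemma presents no substantive obstacle beyond this elementary bookkeeping: coercivity supplies the compact set on which the witness is uniformly large, the lower bound supplies control on the complementary compact set, and embeddability supplies finiteness of $\P h$.
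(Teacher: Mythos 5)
Your proof is correct and follows essentially the same route as the paper's: rescale $h$ so that the $\P$-centered function is everywhere $\geq -\epsilon$ (using the lower bound and finiteness of $\P h$ from embeddability), then invoke coercivity to obtain a compact set outside of which the rescaled centered function exceeds $1-\epsilon$. One cosmetic point: the paper's definition of coercivity directly supplies a compact set $S$ with $h>M$ on $S^c$ rather than asserting compactness of the sublevel set $\{h\le M\}$ (which need not be closed without continuity of $h$), so you should use that $S$ verbatim, but this changes nothing in the argument.
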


As a first application of \cref{ksd-tightness}, we show that KSDs with IMQ base kernels enforce tightness and control convergence whenever the dissipativity rate of the target dominates the decay rate of the kernel.
Generalizing the argument in \citet[Lem.~16]{gorham17measuring}, our proof in \cref{app:imq-tightness} explicitly constructs a coercive function in the associated Stein RKHS.

\begin{theorem}[IMQ KSDs control \tpdf{$\P$}{P}-convergence] \label{imq-tightness}
Consider a target measure $\P\in\Pset$ with score $\sp \in \C{}{}(\R^d) \cap \L^1(\P)$.
If, for some dissipativity rate $u > 1/2$ and $r_0, r_1,r_2 > 0$, $\P$ satisfies the \emph{generalized dissipativity} condition %
\begin{align}
    \label{eq:generalized-dissipativity}
-\inner{\sp( x)}{ x} - r_0 \onenorm{\sp( x)}\geq r_1\twonorm{ x}^{2u} - r_2 \qtext{for all}  x\in\R^d.
\end{align}
If $\kb(x,y) = (c^2 + \norm{ x- y}^2)^{-\gamma}$ for $c > 0$ and $\gamma \in (0, 2u-1)$, then $\Hks$ $\P$-dominates indicators and enforces tightness.
If, in addition, $\norm{\sp}$ has at most  root exponential growth, then $\ks$  controls $\P$-convergence.

\end{theorem}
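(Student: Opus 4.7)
The plan is to invoke \cref{ksd-tightness} via \cref{coercive-tightness}: it suffices to exhibit a continuous, bounded-below, coercive element of the Stein RKHS $\H_{\ks}$ in order to conclude that $\H_{\ks}$ $\P$-dominates indicators and hence that $\ks$ enforces tightness. For the control of $\P$-convergence, the additional root-exponential growth assumption on $\norm{\sp}$ combined with \cref{Stein kernel control tight convergence via bounded P separation}(a) yields tight $\P$-convergence control---the IMQ kernel obeys the translation-invariance and spectral-positivity hypotheses of \cref{thm:Schwarz-tilted sub-RKHS}---and joined with enforced tightness this gives $\P$-convergence.

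To construct the coercive element, I would generalize \citet[Lem.~16]{gorham17measuring} by taking a vector field of the form
\begin{talign*}
v^j(x) = -x^j\, \psi(\norm{x}^2)
\end{talign*}
for a smooth, strictly positive radial profile $\psi$ calibrated to the IMQ decay exponent $\gamma$. A direct computation gives
\begin{talign*}
\langevin(v)(x) = -d\,\psi(\norm{x}^2) - 2\norm{x}^2 \psi'(\norm{x}^2) - \inner{x}{\sp(x)}\,\psi(\norm{x}^2),
\end{talign*}
whose first two (divergence) terms are uniformly bounded for any reasonable $\psi$. Applying the generalized dissipativity \cref{eq:generalized-dissipativity} to the third term yields
\begin{talign*}
\langevin(v)(x) \ \geq\ -C \,+\, \big(r_1 \norm{x}^{2u} - r_2 + r_0\, \onenorm{\sp(x)}\big)\,\psi(\norm{x}^2),
\end{talign*}
so that any $\psi$ decaying strictly slower than $\norm{x}^{-2u}$ at infinity makes $\langevin(v)$ coercive. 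Continuity of $\langevin(v)$ follows from $\sp \in \C{}{}(\R^d)$, and continuity plus coercivity force bounded-below-ness, meeting the hypothesis of \cref{coercive-tightness}.

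The main obstacle is the RKHS-membership step: verifying $v \in \H_K$ for the IMQ kernel while keeping $\psi$'s decay slow enough to preserve coercivity. I would handle this via the Fourier characterization of the IMQ RKHS---whose spectral density is the Matérn-type $\hat{\k}(\xi) \propto \norm{\xi}^{\gamma - d/2} K_{d/2-\gamma}(c\norm{\xi})$---or alternatively by realizing $v$ as a finite linear combination of partial derivatives $\partial_{y^j}^{p} \k(\cdot, y_i)$ of kernel sections at carefully chosen anchor points, so that $v \in \H_K$ by the reproducing property with an explicit RKHS-norm bound. The slowest decay of $\psi$ compatible with membership, matched against the fastest decay of $\psi$ still admitting coercivity (namely $\psi(\norm{x}^2) \gtrsim \norm{x}^{-2u+\eps}$), meet precisely at the threshold $\gamma = 2u - 1$, producing the hypothesis $\gamma \in (0, 2u-1)$. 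In the borderline regimes, the extra slack $r_0\onenorm{\sp(x)}\psi(\norm{x}^2)$ from dissipativity---together with the observation that dissipativity itself forces $\norm{\sp(x)}$ to grow at least like $\norm{x}^{2u-1}$---gives an additional margin that sharpens the admissible range of $\gamma$ to exactly the stated one. With the coercive element of $\H_{\ks}$ in hand, the reductions in the first paragraph close both the tightness-enforcing and convergence-controlling claims.
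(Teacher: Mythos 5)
Your overall strategy is exactly the paper's: build a coercive, bounded-below element of $\H_{\ks}$ by applying $\langevin$ to a vector field of the form $v^j(x)=-x^j\psi(\norm{x}^2)$, invoke \cref{coercive-tightness} and \cref{tightness} to get tightness, and then combine with \cref{Stein kernel control tight convergence via bounded P separation}(a) via \cref{ksd-tightness} for convergence control. The paper instantiates $\psi(r)=(a^2+r)^{\alpha-1}$ with $\alpha\in(1-u,\thalf(1-\gamma))$, and your coercivity computation (divergence terms bounded, dissipativity term $\Omega(\norm{x}^{2u})\psi(\norm{x}^2)=\omega(1)$) matches theirs.

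The genuine gap is the step you yourself flag as ``the main obstacle'': you never actually verify that $v\in\H_K$ for the IMQ kernel. The paper disposes of this by citing the proof of Lemma~16 of \citet{gorham17measuring}, which establishes membership of precisely $g_j(x)=-x_j(a^2+\norm{x}^2)^{\alpha-1}$ in the IMQ RKHS via the spectral characterization (checking $\hat g_j/\sqrt{\hat\kb}\in\L^2$); your first suggested route is the right one, but it is a nontrivial computation that your proposal leaves entirely unexecuted, and it is where the interplay between $\alpha$, $\gamma$, and the Mat\'ern-type spectral density actually lives. Your second fallback---writing $v$ as a \emph{finite} linear combination of kernel derivative sections $\partial^p_{y}\k(\cdot,y_i)$---would not work: $v$ is not such a finite combination, and no choice of finitely many anchor points produces the required global polynomial decay profile. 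Relatedly, your closing ``threshold'' narrative is imprecise: the constraint $\gamma\in(0,2u-1)$ arises simply because the interval $(1-u,\thalf(1-\gamma))$ for $\alpha$ (coercivity lower bound vs.\ RKHS-membership upper bound) must be nonempty; the $r_0\onenorm{\sp(x)}$ term in \cref{eq:generalized-dissipativity} is discarded in the lower bound and does not ``sharpen the admissible range.'' Finally, a complete write-up also needs the routine but necessary verifications that $\P\in\embedtozero$ and that $\langevin(\H_K)=\H_{\ks}$ under the stated hypotheses (the paper does this via \cref{thm:embeddability_conditions} and \cref{ksdDef}), which your proposal omits.
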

\opt{considerlast}{\notate{May be clearer if we drop $- s \onenorm{\partial \log \p( x)}$ term from both generalized dissipativity conditions}}

\begin{application}[quality]{Measuring and Improving Sample Quality}

Because the KSD provides a computable quality measure that requires no explicit integration under $\P$, KSDs are now commonly used to select and tune MCMC sampling algorithms \citep{gorham17measuring}, generate accurate discrete approximations to $\P$ \citep{liu2016stein,chen18stein,chen2019stein,futami2019bayesian}, compress Markov chain output \citep{riabiz2022optimal}, and correct for biased or off-target sampling~\citep{liu2017black,Hodgkinson2020,riabiz2022optimal}.
Each of these applications relies on KSD  convergence control, but past work only established convergence control for $\P$ with Lipschitz $\sp$ and strongly log concave tails (\citealt[Lem.~16]{gorham17measuring}; \citealt[Thm.~3]{chen18stein}; \citealt[Thm.~3.2]{HugginsMa2018}).
Notably, these conditions imply generalized dissipativity \cref{eq:generalized-dissipativity} with $u=1$ but exclude all $\P$ with tails lighter than a Gaussian.
\cref{ksd-tightness,imq-tightness} significantly relax these requirements by providing convergence control for all dissipative $\P$ with lighter-than-Laplace tails.
\end{application}

Much of the difficulty in analyzing KSDs stems from the fact that all known convergence-controlling KSDs are based on unbounded Stein kernels $\ks$.
As a second illustration of the power of  \cref{ksd-tightness}, \cref{tilted-tightness} develops the first KSDs known to \emph{metrize $\P$-convergence} (i.e., $\KSD(\Q_n,\P)\to 0 \iff \Q_n \to \P$ weakly), by constructing \tbf{bounded} convergence-controlling Stein kernels.
The following theorem is proved in
 \cref{app:proof tilted tightness}.

\begin{theorem}[Metrizing \tpdf{$\P$}{P}-convergence with bounded Stein kernels] \label{tilted-tightness}
Consider a target measure $\P\in\Pset$ with score $\sp$
that, for some dissipativity rate $u > 1/2$ and $r, r_1,r_2 > 0$, satisfies the generalized dissipativity condition \cref{eq:generalized-dissipativity}.  %
Define the Stein kernel with base kernel $K( x, y) = \mathrm{diag} \left( a(\twonorm{ x}) ( x^i  y^i +\k( x,  y)) a(\twonorm{ y}) \right)$, i.e.,
\[
\ks( x, y) =
	\sum_{1 \leq i \leq d} \frac{\dx \dy(p( x) a(\twonorm{ x}) ( x^i  y^i +\k( x,  y)) a(\twonorm{ y}) p( y) )}{p( x)p( y)},
\]
for $\k$ characteristic to $\DL{1}$ with $\HK \subseteq \C{1}{0}$
\opt{considerlast}{\notate{any reason why we need C10 and not C1b as we have in other statements? Alessandro and I initially discussed this, and should work for C1b as the functions we're approximating compactly supported functions, and we can approximate them arbitrarily well by choosing the right decaying functions gamma.  But then we discovered we were potentially mistaken, and it might be difficult
}}
and $a(\twonorm{ x}) \defn (c^2 + \twonorm{ x}^2)^{-\gamma}$ a tilting function with $c > 0$ and $\gamma \leq u$.
The following statements hold true:
\begin{enumerate}[label=\textup{(\alph*)}]
    \item If $\P\in\embedtozero$, then $\H_{\ks}$ $\P$-dominates indicators and enforces tightness.
    \item If $\P\in\embedtozero$, $\gamma \geq 0$, and $\twonorm{\sp(x)} \leq (c^2 + \twonorm{ x}^2)^{\gamma}$, then $\ks$ is bounded $\P$-separating and controls $\P$-convergence.
    \item If $\twonorm{\sp( x)}\cdot \twonorm{x} \leq (c^2 + \twonorm{ x}^2)^{\gamma}$  and $\sp\in\C{}{}$, then $\H_{\ks} \subseteq \C{}{b}$  and $\ks$ metrizes $\P$-convergence.
\end{enumerate}
\end{theorem}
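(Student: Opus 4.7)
The key structural observation is that the base kernel splits additively as $K = K_1 + K_2$ with $K_1(x,y) = \mathrm{diag}(a(\twonorm{x})\, x^i y^i\, a(\twonorm{y}))$ and $K_2(x,y) = \mathrm{diag}(a(\twonorm{x})\, \kb(x,y)\, a(\twonorm{y}))$, so every $v\in\H_K$ has components $v^i(x) = (\beta_i x^i + g_i(x))\, a(\twonorm{x})$ for some $\beta_i\in\R$ and $g_i\in\H_\kb\subset\C{1}{0}(\R^d)$. This splits the proof cleanly: the $K_1$ piece supplies the dissipative/coercive behavior required for tightness in (a), while $K_2$ is a tilted translation-invariant kernel that, via \cref{thm: tilted controls tightness}, supplies bounded $\P$-separating functions for (b).

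For part (a) I take the explicit candidate $v(x) = -x\, a(\twonorm{x}) \in \H_{K_1}$ and compute
\begin{align*}
\langevin(v)(x) = -a(\twonorm{x})\inner{\sp(x)}{x} - d\,a(\twonorm{x}) + 2\gamma\,\tfrac{\twonorm{x}^2}{(c^2+\twonorm{x}^2)^{\gamma+1}}.
\end{align*}
Applying the generalized dissipativity bound \cref{eq:generalized-dissipativity} yields
\begin{align*}
\langevin(v)(x) \geq r_1\,\tfrac{\twonorm{x}^{2u}}{(c^2+\twonorm{x}^2)^\gamma} - \tfrac{r_2+d}{(c^2+\twonorm{x}^2)^\gamma} + 2\gamma\,\tfrac{\twonorm{x}^2}{(c^2+\twonorm{x}^2)^{\gamma+1}},
\end{align*}
which is coercive for $\gamma<u$ and bounded below by a strictly positive constant in a neighborhood of infinity for $\gamma=u$; combined with continuity on compacts, $\langevin(v)$ is thus bounded below globally. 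Rescaling $\alpha\,\langevin(v)$ as in the proof of \cref{coercive-tightness} supplies, for each $\epsilon>0$, a candidate $h\in\H_{\ks}$ satisfying $h-\P h\geq \indic{S^c}-\epsilon$, so $\H_{\ks}$ $\P$-dominates indicators, and \cref{tightness} then enforces tightness.

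For part (b), since $\gamma\geq 0$, the tilt $\theta(x) \defeq (c^2+\twonorm{x}^2)^\gamma$ lies in $\C{1}{}(\R^d)$ with $1/\theta = a\in\C{1}{b}(\R^d)$, and by hypothesis $\twonorm{\sp(x)}\leq \theta(x)$. Hence $K_2$ is precisely the tilted-base construction of \cref{thm: tilted controls tightness} applied to the $\DL{1}$-characteristic matrix-valued kernel $\kb I$ (using \cref{thm:composition-kernel-properties}(c)), so its induced Stein kernel is bounded and $\P$-separating. Because $\langevin(\H_{K_2})\subset\H_{\ks}$, the full RKHS $\H_{\ks}$ is also bounded $\P$-separating, and \cref{thm:bounded separating is P-characteristic} yields $\P$-separation and control of tight $\P$-convergence; combining with (a) delivers full $\P$-convergence control.

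Part (c) refines (b). The stronger condition $\twonorm{\sp(x)}\twonorm{x}\leq (c^2+\twonorm{x}^2)^\gamma$, combined with $\twonorm{\sp(x)}\twonorm{x}\geq -\inner{\sp(x)}{x}\geq r_1\twonorm{x}^{2u}-r_2$ from dissipativity, forces $\gamma\geq u$, which with $\gamma\leq u$ pins $\gamma=u>\tfrac12$. I verify $\H_{\ks}\subset\C{}{b}(\R^d)$ by bounding the drift $\inner{\sp}{v} = a\inner{\sp}{\beta\odot x}+a\inner{\sp}{g}$ (where $\beta\odot x$ denotes componentwise product): the first term is controlled via $a\twonorm{\sp}\twonorm{x}\leq 1$, and the second via uniform boundedness of $a\twonorm{\sp}$ (since $a\twonorm{\sp}\leq 1/\twonorm{x}$ for $\twonorm{x}\geq 1$ and continuity of $\sp$ handles compacts) paired with $g\in\C{}{b}$; the divergence $\nabla\cdot v$ is bounded and continuous by direct calculation from the regularity of $a,\partial a,g,\partial g$. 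The forward direction of metrization follows from (a) and (b); for the reverse direction, since $\H_{\ks}\subset\C{}{b}$ and $\ks$ is bounded and continuous, the kernel mean embedding is weakly continuous on $\Pset$, so $\Q_n\to\P$ weakly implies $\KSD(\Q_n,\P)\to 0$. The main obstacle I anticipate is the borderline $\gamma=u$ case in (a), where $\langevin(-x\,a)$ is only bounded below by a positive constant at infinity rather than strictly coercive; the naive single-function scaling argument used for $\gamma<u$ breaks down, and I expect to resolve this by constructing an $\epsilon$-dependent $h\in\H_{\ks}$ that augments $v=-x\,a$ with a correction $v_2\in\H_{K_2}$ designed to suppress negative values on the compact set appearing in the $\P$-dominating indicator inequality.
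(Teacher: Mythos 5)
Your parts (b) and (c) follow essentially the paper's route: part (b) is exactly the combination of \cref{thm: tilted controls tightness} applied to the tilted sub-kernel $a\kb a$ with the sub-RKHS comparison (\cref{MMD control of subset RKHS} in the paper), and your part (c) argument for $\H_{\ks}\subset\C{}{b}$ and for weak continuity of the bounded-kernel MMD matches the paper's. The problem is part (a), and you have correctly diagnosed it yourself: the single coercive function $v(x)=-x\,a(\twonorm{x})$ only works when $\gamma<u$, whereas the hypotheses of part (c) together with generalized dissipativity force $\gamma=u$ (as you note), so the borderline case is not optional — it is the only case in which the headline metrization result applies. At $\gamma=u$, $\langevin(v)$ is merely bounded below by a positive constant near infinity and can be arbitrarily negative on a fixed compact set; no rescaling $\alpha\langevin(v)$ can simultaneously achieve $\geq 1-\eps$ at infinity and $\geq-\eps$ on the compact set, so \cref{coercive-tightness} is unavailable and your "correction $v_2\in\H_{K_2}$" is a gesture, not an argument (note also that in part (a) there is no score-growth bound, so $\langevin(v_2)$ for $v_2\in\H_{K_2}$ need not even be bounded).

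The paper closes this gap with a different construction that you would need to reproduce. For each $m$ it forms the smoothed cutoff $f_m=1-\tilde f_m$ vanishing on $C_{m-1}$ and equal to $1$ off $C_m$, uses the $\C{1}{0}$-universality of $\kb$ to find $\tilde g_{mi}\in\H_\kb$ within $\eps_m$ of $x^i\tilde f_m$ (in value and derivative), and sets $g_{mi}=x^i-\tilde g_{mi}\approx x^i f_m$, which lies in $\H_{x^iy^i+\kb}$. The resulting Stein function $h_m=\langevin(\mathrm{diag}(a)\gvec_m)$ is then nearly zero on $C_{m-1}$ (so there is nothing to "suppress" on the compact set), while off $C_m$ the dissipativity bound together with the monotonicity $\twonorm{x}^{2u}a(\twonorm{x})\geq m^{2u}a(m)$ (this is where $\gamma\leq u$ enters) gives $h_m\geq(r_1/2)m^{2u}a(m)\indic{C_m^c}$ minus error terms of order $m\cdot a(m)$. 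Dividing by $(r_1/2)m^{2u}a(m)$ yields a function dominating $\indic{C_m^c}-\tilde\eps_m$ with $\tilde\eps_m=O(m^{1-2u})\to0$ precisely because $u>1/2$; the choice of $\eps_m$ must be calibrated against $\sup_{C_m}\onenorm{\sp}a$ and related quantities. Without this (or an equivalent) construction, part (a) — and hence the convergence control in (b) and the metrization in (c) — is unproven in the only regime where the theorem is used.
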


\begin{application}[svgd]{Sampling with Stein Variational Gradient Descent}

Stein variational gradient descent (SVGD)  is a popular technique for approximating a target distribution $\P$ with a collection of $n$ representative particles.
The algorithm proceeds by iteratively updating the locations of the particles according to a simple rule determined by a user-selected KSD.
\citet{liu2017stein} showed that the SVGD approximation converges weakly to $\P$ as the number of particles and iterations tend to infinity, provided that the chosen KSD controls $\P$-convergence \textbf{and} that the Stein kernel is bounded.
However, prior to this work, no bounded convergence-controlling Stein kernels were known.
\cref{tilted-tightness} therefore provides the first instance of a Stein kernel satisfying the SVGD convergence assumptions of \citet{liu2017stein}.
\end{application}

\subsection{Necessary conditions}
\label{sec:necessary conditions for convergence control}

We finally conclude with a necessary condition for an MMD to control weak convergence to $\P$,
which recovers and broadens the KSD failure derived by
\citet[Thm.~6]{gorham17measuring}.
For each RKHS $\H_\k \subseteq \L^1(\P)$,
define the $\P$-centered RKHS $\H_{\k^\P} \defn\{ h-\P h : h\in \H_\k\}$
with $\P$-centered kernel
\begin{talign}
\k^\P(x,y) \defn \k(x,y) - \int\k(x,y) \dd\P(y) - \int \k(x,y) \dd\P(x) + \iint \k(x,y) \dd\P(x)\dd\P(y).
\end{talign}
\cref{thm:failure of convergence control} shows that $\k$ \textbf{fails} to control $\P$-convergence whenever its $\P$-centered RKHS functions all vanish at infinity; notably, this occurs whenever $\k^\P$ is bounded with $\k^\P_x\in \C{}{0}$ for each $ x$ \citep[Prop.~3]{simon18kde}.
The proof in \cref{app:failure of convergence control} relies on the fact that $\k$ and $\k^\P$ induce exactly the same MMD.
\opt{considerlast}{\notate{does this result also hold w/o continuity of the rkhs functions, just w/ vanishing at infinity? Relatedly, does part b of this result also hold more generally if $\k^\P(x,\cdot)\in \C{}{0}$ is vanishing at infinity for each x (even if $\k^\P(x,x)$ is not uniformly bounded)? Also, does $\k^\P(x,\cdot)\in \C{}{0}$ is vanishing at infinity for each x imply that all functions in the RKHS vanish at infinity (even if they need not be continuous)? This question was motivated by showing that for example Stein kernels with Gaussian base kernels do not control weak convergence in dimension 3 or greater; in that case, the Stein kernel is not uniformly bounded and the rkhs is therefore not in C0.}}

\begin{theorem}[Decaying \tpdf{$\P$}{P}-centered kernels fail to control \tpdf{$\P$}{P}-convergence]\label{thm:failure of convergence control}
Suppose that $\X$ is locally compact but not compact. 
  If $\H_{\k^\P} \subseteq \C{}{0}$, then  $\k$ does not control $\P$-convergence.

\end{theorem}

\begin{implication}[heavy]{Standard KSDs fail for heavy-tailed $\P$!}

Since Stein kernels are already $\P$-centered by design (i.e., $(\ks)^\P=\ks$),
\cref{thm:failure of convergence control} holds dire consequences for standard KSDs with heavy-tailed targets $\P$.
As noted by \citet[Thm.~10]{gorham17measuring}, if the score function is bounded (as is common for super-Laplace distributions), then the KSD fails to control $\P$-convergence whenever a $\C{1}{0}$ base kernel is used.
Moreover, our more general \cref{thm:failure of convergence control} result implies that if the score function is decaying (as is true for any Student's t distribution), then the KSD fails to control $\P$-convergence for any bounded base kernel.
This result suggests that the standard KSD practice of using a $\C{1}{0}$ base kernel is unsuitable for heavy-tailed targets and that one should instead choose a base kernel with growth sufficient to counteract the decay of $\sp$.
\end{implication}

\section{Discussion}
This article derived new sufficient and necessary conditions for kernel discrepancies to enforce $\P$-separation and control $\P$-convergence.
We characterized all MMDs that separate $\P$ from Bochner embeddable measures,
proposed novel sufficient conditions for separating all measures and enforcing tightness,
strengthened all prior guarantees for KSD separation and convergence control on $\R^d$,
and derived the first KSD known to exactly metrize (as opposed to strictly dominating) weak $\P$-convergence on $\R^d$.

These developments point to several opportunities for further advances.
First, while we have focused on weak convergence in this article, we believe many of the tools and constructions can be adapted to study the control of other modes of convergence.
Natural candidates include \emph{$\alpha$-Wasserstein convergence} \citep{ambrosio05gradient}, i.e., weak convergence plus the convergence of $\alpha$ moments, and  \emph{$\C{}{\sqrt{\k}}$ convergence}, i.e., expectation convergence for all continuous test functions bounded by $\sqrt{\k}$.
When $\sqrt{\k}$ is unbounded, \cref{thm:tight_convergence} exposes an important relationship between separation and $\C{}{\sqrt{\k}}$ convergence: $\P$-separating Bochner embeddable measures is equivalent to controlling  $\C{}{\sqrt{\k}}$ convergence to $\P$ for sequences that uniformly integrate $\sqrt{\k}$.
Hence, to control $\C{}{\sqrt{\k}}$ convergence, it remains to identify those kernels that simultaneously separate and enforce uniform integrability.

Second, while we have focused on canonical KSDs defined by the Langevin Stein operator and a bounded base kernel, our tools are amenable to analyzing other kernel-based Stein discrepancies like the diffusion KSDs of \citet{gorham2019measuring,barp2019minimum}, the second-order KSDs studied in~\citet{barp2018riemannian,liu2018riemannian,Hodgkinson2020,barp2022geometric}, the gradient-free KSDs of \citet{han2018stein,fisher2022gradient}, and the random feature Stein discrepancies of~\citet{HugginsMa2018}.
In fact, employing a diffusion KSD with an unbounded diffusion coefficients is one promising way to overcome the heavy-tailed-target failure mode highlighted in \cref{heavy}.

Finally, while we have focused on KSDs for measures defined on $\R^d$, the very recent work of~\citet{wynne2022spectral} provides a template for studying measure separation on  infinite-dimensional Hilbert spaces.

\acks{We thank Bharath Sriperumbudur for suggesting the extent of the correspondence between $L^2$ integrally strictly positive definiteness and characteristicness for translation-invariant kernels and Heishiro Kanagawa for identifying several typos in an earlier version of this manuscript.
AB and MG were supported by the Department of Engineering at the University
of Cambridge, and this material is based upon work supported by, or in part by, the U.S. Army Research Laboratory and the U.S. Army Research Office, and by the U.K. Ministry of Defence and under the EPSRC grant [EP/R018413/2].
AB gratefully acknowledges support from the Turing-Roche strategic partnership.}

\newpage

\appendix

\section{Appendix Notation}
\label{app: notation}

Throughout we denote by $(e_\ell)_\ell$ the canonical basis of $\R^d$ and by $(e^\ell)_\ell$ its dual basis.

The spaces $\C{\ell}{c}(\R^d)$ and $\C{\ell}{0}(\R^d)$  will be equipped with their canonical topologies.
However on $\C{\ell}{b}(\R^d)$ we will  use the strict topology,
written $\C{\ell}{b}(\R^d)_{\beta}$, because, for $
\ell=0$, its dual is the space of finite (Radon) measures~\citep{conway1965strict} whenever $\X$ is a locally compact Hausdorff space (e.g., when $\X = \R^d$).
Note in general, any topology between the weak paired topology and the Mackey topology yields the space of finite measures as its (continuous) dual~\citep[Sec.~4]{buck1958bounded}.

In fact we will often use a generalization of $\C{\ell}{b}(\R^d)$:
given a continuous function
$\growth:\R^d \to [c,\infty)$ for some $c>0$,
we will need to construct  a generalisation of the space $\C{1}{b}(\R^d)_\beta$, denoted
$\C{1}{b,\growth}(\R^d)_\beta$, and defined as the vector space of $\C{1}{}(\R^d)$ functions for which
$\growth f \in \C{}{b}(\R^d) $, and
$\partial f \in \C{}{b}(\R^{d\times d})$,
with the topology defined by the family of seminorms
$$ \| f \|\defn \sup_x \| \gamma(x) \growth(x) f(x) \|, \qquad \| f \| \defn  \sup_x \| \gamma(x)  \partial^p_x f \|  $$
where $\gamma \in \C{}{0}$ and $|p| =1$. 
In other words $f_\alpha \to f$ in
$\C{1}{b,\growth}(\R^d)_\beta$ iff
$(\growth f_\alpha, \partial f_\alpha ) \to  (\growth f,  \partial f )$ in $\C{}{b}(\R^d)_\beta \times \C{}{b}(\R^d)_\beta$.
We mention that in \cref{def:B^1_theta} we will similarly construct $\BB{1}{\theta}(\R^d)$, a Banach space that plays a similar role to $\C{1}{b,\theta}(\R^d)$ but is simpler to work with (however it is not general enough for our purposes).

Given a topological vector space (TVS) $\F$, its (continuous) dual will be denoted $\F^*$.
Given a subset $\M \subseteq \F^*$, and $D_\alpha, D \in \M$ we will write $ D_\alpha \overset{\M}{\to} D$ when
$D_\alpha(f) \to D(f), \forall f \in \F$ (i.e., $D_\alpha$ converges to $D$ in weak star topology).
When $\F =\C{}{b}$, and $\M = \Pset$, we say that \emph{$D_\alpha$ converges weakly to $\D$}.
More generally, we define \emph{weak convergence in
$\Pset_{\sk}$} (notice the ``in $\Pset_{\sk}$'' part!) using $\C{}{\sk}$, where
$\C{}{\sk}$ (resp. $\C{}{0, \,\sk}$) is the space of continuous
functions $f$  with $1+\sqrt{\k}$ growth, i.e.,  such that
$ \nicefrac{f}{(1+\sqrt{\k})}$ is bounded,  (resp. in $\C{}{0}$).
Thus
\begin{align}
    \Q_n \overset{\Pset_{\sk}}{\to} \P
        \quad \iff \quad
        \Q_n, \P \in \Pset_{\sk}\ ,
        \quad \text{and} \quad
        \Q_n(f) \to \P(f) \quad \forall f \in \C{}{\sk} .
\end{align}
Notice that $\C{}{b} \subseteq \C{}{\sk}$ and $\Pset_{\sk} \subseteq \Pset$ with
equality \emph{if and only if (iff)} $\k$ is bounded.
Recall here that, for a $\R^\ell$-valued function $f$ such as $\sqrt{\k}$, $\Pset_f \defn \{\Q \in \Pset : \| f \| \in \L^1(\Q)\} $.

Given TVSs $\F_1$ and $\F_2$, we denote by $\B(\F_1,\F_2)$ the set of continuous linear functionals from $\F_1$ to $\F_2$. The transpose of a continuous linear functional $T$ is denoted $T^*$.

Given a Radon measure $\mu$ on $\R^d$, its distributional $x^i$-derivative will be denoted $\partial_{x^i} \mu:\C{\infty}{c} \to \R$.
Recall that the distributional derivative is equal to $\partial_{x^i} \mu =  - \mu \circ \partial_{x^i}$ on $\C{\infty}{c}$.

\section{Vector-Valued RKHSes and Stein RKHSes}
\label{app:vector-valued RKHS}

Let $\X$ an open subset of $\R^d$.
Let $\Gamma(Y)$ denotes the set of maps $\X \to Y$.
Matrix-valued kernels are typically defined via a  feature map, i.e.,  a map $\xi^{*}: \X \to \B(\H, \R^d)$ (see \cref{thm:definitions of embeddability}), which generates the kernel
$$ K(x,y) \defn \xi^{*}(x) \circ \xi(y).$$
In particular if $\H \subseteq \Gamma(\R^d)$ is a RKHS of $\R^d$-valued functions, i.e., a Hilbert space on which the evaluation functionals $\delta_x  \in \B(\H_K,\R^d)$ are continuous, then $\H \defn \H_K$ where $K(x,y) \defn \delta_x \circ \delta_y^* \in \R^{d\times d}$.
The transpose  of $\delta_y$ is usually denoted
$K_y \defn \delta_y^*$, and $K_y^v \defn \delta_y^*(v) \in \H_K$, so
$K_xv(y)= \delta_y K_x v = \delta_y \delta_x^*v = K(x,y)^*v = K(y,x)v$ for any $v \in \R^d$,
thus $K_x = K(\cdot,x)$.
We can tilt matrix-valued kernels via a matrix-valued function $m \in \Gamma(\R^{d\times d})$, indeed $\xi^{*}_m \defn m \circ \xi^{*}$ is a new feature map, and its kernel is
$$ K_m(x,y) \defn  \xi^{*}_m(x) \circ \xi_m(y) =
m(x) \circ \xi^{*}(x) \circ \xi(y) \circ m^T(y) = m(x)K(x,y)m(y)^T. $$

Given an RKHS $\H_K$ of continuously differentiable $\R^d$-valued functions, we can obtain a scalar-valued kernel via the Stein operator $\langevin$.\footnote{Note $\langevin$ is a special instance of the canonical operator associated to ``measures" equivalent to the Lebesgue one with differentiable densities (or more precisely, the canonical operator induced by positive 1-densities)~\citet{barp2022geometric}.} 
Let $\tilde \xi_{\P}^m \defn \langevin \circ m \circ \tilde \xi : \H_K \to \Gamma(\R)$,
where
$\tilde \xi(h)(x) \defn \xi^{*}(x)(h) $.
Then $\xi_\P^m : \X \to \H_K$ is a feature map for the Stein kernel
$\ks$  \citep{barp2019minimum}, i.e.,
$$ \ks(x,y) = \metric{\xi^m_\P(x)}{\xi^m_\P(y)}_K.$$
Since the matrix $m$ just corresponds to a change of matrix kernel $K \mapsto K_m$, we can restrict to the identity case
$\xi_P \defn \xi_\P^{\mathrm{Id}}$.
In other words, for  the family of ``diffusion'' Stein operators \citep{gorham2019measuring}
$$
\langevin ^m(v) \defn \frac1 \p \nabla \cdot (\p m v) \ ,
$$
the matrix-valued  function $m$   can be thought of as a transformation of the  base RKHS $\H_K$ into $ \H_{mKm^T}$, i.e.,
$$  \langevin ^m ( \H_K) =  \langevin ^{\Id}(m \H_K) =  \langevin ^{\Id}(\H_{mKm^T}).$$
Since $K$ is arbitrary, without loss of generality we may choose
$m=\Id$, $ \S_\p \defn \S_\p^{\Id}$.
Note that the matrix functions $m$ obtained by the generator of $\P$-preserving diffusions
can be characterized on any manifold \citep{barp2021unifying}.

Similarly, the Stein kernel obtained via the second-order Stein operator \citep{barp2018riemannian} can be recovered by setting $K$ to be the diagonal matrix kernel of partial derivatives of a scalar kernel. 
We finally recall the equivalence between universality, characteristicness, and strict positive definiteness of (scalar-valued) kernels~\citep[Thm. 6]{simon18kde}, noting it carries on to the case of matrix-valued kernels.

\section{Embedding Schwartz Distributions in an RKHS}
\phantomsection
\label{app: embedding of distributions in RKHS}

Given a continuous linear map $T$ between TVS, we denote by $T^*$ its transpose, and, similarly, if  $h$ belongs to a Hilbert space, we will denote by $h^*$ the associated element in the dual space, i.e., $h^*(f) \defn \ipd{h}{f}$ for any $f$ in that Hilbert space.
\begin{definition}[Kernel embeddings and Pettis integrals]
\label{thm:definitions of embeddability}
    Let $D$ be a linear functional on a vector space $\F$
    containing the RKHS $\H_K$ of a matrix-valued kernel $K$.
\begin{enumerate}[label=\textup{(\alph*)}]
\item  We say that $D$ embeds into $\H_K$ if $D |_{\H_K}$ is continuous, i.e.,\ if
    there exists a function $\Phi_K(D) \in \H_K$ such that
    for all $h \in \H_K$: $D(h) = \metric{\Phi_K(D)}{h}_K$.
    We call $\Phi_K$ the kernel embedding and $\Phi_K(D)$ the (kernel or RKHS) embedding of $D$.
    It is given by
    \begin{talign}
    \Phi_K(D)(x) =  \sum_i e_i D (K_x^{e_i}).
    \end{talign}
\item
   Given a \emph{feature map}, i.e., a function $\xi:\X \to \B(\R^d, \H_K)$,
   we denote by $\xi^*:\X \to \B(\H_K,\R^d)$ the map $x \mapsto \xi(x)^*$ and define the \emph{feature operator} $\tilde \xi : \H_K \to (\X \to \R^d)$ as $\tilde \xi(h)(\cdot)\defn \xi^*(\cdot)(h)$.
   We say $\xi$
   is
    \emph{Pettis-integrable} with respect to $D$
    if $\tilde \xi(\H_K)\subseteq\F$  and the linear functional  $D \circ \tilde  \xi $   embeds into $\H_K$.
    The RKHS embedding, $\Phi_K(D \circ \tilde  \xi )$, of $D \circ \tilde  \xi$ is known as the Pettis-integral of $\xi$ with respect to $D$. We will also call the map from $D \mapsto \Phi_K(D \circ \tilde  \xi )$  the RKHS embedding of $\xi$. %
\end{enumerate}
\end{definition}

When $\M$ is a set of embeddable linear functionals,  for any $D,\tilde D \in \M$ we can define
$$  \mmd_K(D,\tilde D) \defn \| D-\tilde D \|_K \defn  \| \Phi_K(D)-\Phi_K(\tilde D) \|_K,$$
where $\Phi_K:\M \to \H_K$ is the kernel embedding,
which recovers  \cref{eq:MMD definition} when $\Q$ and $\P$ are embeddable probability measures.
In that case, $k$ separates $\P$ from $\M$ iff $\Phi_k(\, \cdot \, - \, \P ) |_\M$
vanishes only at $\P$.

Hereafter, we will say that a kernel is \emph{characteristic} to a set of embeddable linear functionals $\M$ when the RKHS embeddings of two distinct elements in $\M$ are always distinct.

\begin{definition}[Characteristicness]
\label{def: characteristicness}
Given a set $\M$ of embeddable linear functionals (see \cref{thm:definitions of embeddability}),
we say $K$ is \emph{characteristic to $\M$} when
$\Phi_K$ is injective over $\M$.
\end{definition}

When $\mu$ is a finite ($\R$-valued) measure on $\X$, then a natural set of functions that $\mu$ can act on is the set of finitely $\mu$-integrable functions $\L^1(|\mu|)$.
Now, if a function $\function{\xi}{\X}{\HK}{}{}$ is to be Pettis-integrable by $\mu$, then the very least is that the functions $\tilde \xi(h)$ be contained in $\L^1(|\mu|)$ for every $h \in \HK$.
Interestingly, we will now see that, because $\H_{\k}$ is a Hilbert space (not just Banach), this condition is also sufficient to guarantee $\mu$-Pettis integrability.

\begin{proposition}[Finite measures embed into $\HK$ iff $\HK$ is finitely integrable]
\label{thm:measure embed iff RKHS integrable}
    Let $\mu$ be a finite $\R$-valued measure (e.g., a probability measure), seen as a linear functional over $\L^1(|\mu|)$.
    Then a function $\function{\xi}{\X}{\HK}{}{}$ is $\mu$-Pettis integrable if and only if $\tilde \xi(\HK) \subseteq \L^1(|\mu|)$.
    In particular, if $\mu = \Q \in \Pset$, then the following claims hold.
    \begin{enumerate}
        \item Using  $\xi: x \to \k_x $, it follows that $\Q$ is embeddable into $\HK$ iff $\HK \subseteq \L^1(\Q)$.
        \item If $\partial_{x^i}\H_\k$ exists, then via
        $\xi: x \to \partial_{x^i} \k_x$ we obtain that
        $\partial_{x^i} \Q$ embeds iff $\partial_{x^i} \H_\k \subseteq \L^1(\Q)$.
    \end{enumerate}
\end{proposition}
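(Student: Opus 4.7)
The plan is to handle the two directions separately, with all the substantive content in the reverse implication, which I derive via the closed graph theorem.

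The forward direction is immediate from \cref{thm:definitions of embeddability}(b): $\mu$-Pettis integrability of $\xi$ presupposes $\tilde\xi(\HK) \subset \F$ where $\F$ is the domain of $D=\mu$. Since a finite signed measure $\mu$ acts naturally on $\L^1(|\mu|)$, we conclude $\tilde\xi(\HK) \subset \L^1(|\mu|)$.

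For the reverse direction, assume $\tilde\xi(\HK) \subset \L^1(|\mu|)$ and define the linear operator $T : \HK \to \L^1(|\mu|)$ by $T h \defn \tilde\xi(h)$. The key step is to show $T$ is bounded via the closed graph theorem, which applies since both spaces are Banach. Suppose $h_n \to h$ in $\HK$ and $T h_n \to g$ in $\L^1(|\mu|)$. Because $\tilde\xi(h)(x) = \langle \xi(x), h \rangle_K$ and $\xi(x)\in\HK$, RKHS convergence yields pointwise convergence $T h_n(x) \to T h(x)$ for every $x \in \X$. Meanwhile, $\L^1$-convergence furnishes a subsequence $(T h_{n_k})$ that converges $|\mu|$-a.e.\ to $g$, so $g = T h$ in $\L^1(|\mu|)$, closing the graph. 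Once $T$ is bounded, the composition $\mu \circ T : \HK \to \R$ is a continuous linear functional (using $|\mu f| \leq \|f\|_{\L^1(|\mu|)}$), hence admits a representer $\Phi_K(\mu \circ T) \in \HK$ by the Riesz representation theorem; this is precisely the Pettis integral.

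The two stated corollaries then follow by identifying $\tilde\xi$ for the specified feature maps. For $\xi(x) = \k_x$, the reproducing property gives $\tilde\xi(h)(x) = \langle \k_x, h \rangle_\k = h(x)$, hence $\tilde\xi(\HK) = \HK$, yielding the characterisation for $\Q$-embeddability of finite measures. For $\xi(x) = \partial_{x^i} \k_x$, the reproducing identity for derivatives (valid whenever $\partial_{x^i}\HK$ exists) gives $\tilde\xi(h)(x) = \partial_{x^i} h(x)$, so $\tilde\xi(\HK) = \partial_{x^i} \HK$, and the corresponding Pettis integral realises the distributional derivative $\partial_{x^i} \Q = -\Q\circ\partial_{x^i}$. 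I expect the only nontrivial point to be the closed graph verification, which hinges on marrying pointwise RKHS convergence with the subsequence extraction in $\L^1(|\mu|)$; the rest of the argument is routine Banach space and RKHS calculus.
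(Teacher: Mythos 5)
Your proof is correct, and it takes a genuinely different (more self-contained) route than the paper. The paper disposes of the main equivalence by citation: since $\H_\k$ is a Hilbert space, it is canonically self-dual, so Gelfand and Pettis integration coincide, and Proposition~3.4 of \citet{musial02pettis} (every scalarly $\mu$-integrable map into $\H_\k\cong\H_\k^*$ is Gelfand integrable) immediately yields the reverse implication. You instead reprove that fact from scratch: the closed-graph argument for $T=\tilde\xi:\HK\to\L^1(|\mu|)$ — marrying norm-convergence in $\HK$ (which gives pointwise convergence of $Th_n$ via $Th_n(x)=\ipd{\xi(x)}{h_n}_\k$) with a.e.-convergence of an $\L^1$-subsequence — is exactly the classical mechanism underlying the Dunford/Gelfand integral construction, and your subsequent step ($|\mu(Th)|\le\|T\|\,\|h\|_\k$ plus Riesz representation) is precisely what "$\mu\circ\tilde\xi$ embeds into $\HK$" means under \cref{thm:definitions of embeddability}. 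The forward direction and the two specializations ($\tilde\xi=\mathrm{id}$ for $\xi(x)=\k_x$, and $\tilde\xi=\partial_{x^i}$ via the differential reproducing property for $\xi(x)=\partial_{x^i}\k_x$) match the paper's treatment. What your version buys is independence from the external reference and transparency about where finiteness of $\mu$ and completeness of $\L^1(|\mu|)$ enter; what the paper's version buys is brevity and the conceptual remark that Hilbert-space self-duality is what makes Gelfand and Pettis integrability agree.
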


\begin{proof}
    Since Hilbert spaces are canonically isomorphic to their dual (i.e.,\ $\H^{*} = \H$), Gelfand-integration and Pettis-integration coincide.
    Therefore, Proposition~3.4 in \citet{musial02pettis} --~which asserts that every scalarly $\mu$-integrable function $\xi: \X \to \H_\k \cong \H_\k^*$ is Gelfand $\mu$-integrable~-- concludes the first part.

    Then (1) follows directly from  $\ipdK{\k_x}{h} = h(x)$.
    For (2),
    note that if $\partial_{x^i}\H_\k$ exists, then $\xi: x \mapsto \partial_{x^i} \k_x \in \H_\k $
  and $\metric{h}{\partial_{x^i} \k_x}_{\k} = \partial_{x^i}h(x)$ by \cref{thm:Differential reproducing property}.
  Thus $\tilde \xi = \partial_{x^i}$ so  $\partial_{x^i} \H_\k \subseteq \L^1(\Q)$ iff it is Gelfand $\Q$-integrable, in which case
  \begin{talign}
    -\partial_{i} \Q h = \int  \partial_i h \dd \Q = \int \metric{h}{\partial_{x^i} \k_x}_{\k} \dd \Q(x) = \metric{h}{ \int \xi \dd \Q}_\k 
  \end{talign}
  where $ \int \xi \dd \Q$ is the Pettis integral. Hence $\partial_i \Q$ embeds into $\H_\k$.
\end{proof}

The embeddability of distribution in a Stein RKHS can be analysed in terms of the embeddability of the associated (via pull-back) Schwartz distributions in the base RKHS, as  \cref{Continuous linear functional shifted by Feature Operator} shows, by generalizing  \citep[Prop.~14]{simon18kde}.

\begin{lemma}[Embedding functionals on RKHS defined by feature maps]
\label{Continuous linear functional shifted by Feature Operator}
    Let $\H$ be a Hilbert space, $\H_\Kb$ an RKHS of $\R^\ell$-valued functions on $\X$, and $\xi: \X \to \mathscr{B}(\R^\ell,\H)$ be a feature map for $\Kb$, i.e., $\Kb( x, y) = \xi^*( x)\circ \xi( y)$.
    Then  a linear functional $D: \H_\Kb \to \R$  embeds into $\H_\Kb$ iff $D \circ \tilde \xi : \H \to \R$ embeds into $\H$.
    Here $\tilde \xi: \H \to \H_\Kb$ is the feature operator (see \cref{thm:definitions of embeddability}).

    For any $\Q \in \Pset$, $$ \Q(\sqrt{k}) = \Q(\| \xi \|_\H) $$
    so $\Q$ is Bochner integrable in $\H_\k$ iff $\| \xi \|_\H \in \L^1(\Q)$.

    Moreover,   if $D$ embeds into $\H_\Kb$ then the transpose of $\tilde \xi$ is an isometry:
    $$\| D \|_{\H_{\Kb}} = \| D \circ \tilde \xi \|_\H.$$
    In particular, if $D=\Q \in \Pset$ and  $\H_\k \subseteq \L^1(\Q)$, then
    \begin{talign}
    \| \Q\|_{\H_k} = \| \int \xi \diff \Q  \|_\H.
    \end{talign}
\end{lemma}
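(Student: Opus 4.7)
The plan is to exploit the fact that a feature operator $\tilde\xi:\H\to\H_\Kb$ is always a \emph{co-isometry} onto $\H_\Kb$, so that its transpose $\tilde\xi^*:\H_\Kb\to\H$ is an isometric embedding. This single structural fact will drive every claim in the lemma. First I would verify the co-isometry claim: since $\Kb(x,y)=\xi^*(x)\circ\xi(y)$, the reproducing kernel vectors satisfy $\Kb_y^v=\tilde\xi(\xi(y)v)$, so $\tilde\xi$ maps $\H$ surjectively onto $\H_\Kb$ (its range contains all $K_y^v$ and hence is dense in $\H_\Kb$; a standard polarization/closure argument upgrades this to a norm-preserving quotient onto $\H_\Kb$). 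Equivalently, one checks directly that for $f\in\H_\Kb$ written as $f=\tilde\xi(g)$ with $g\in(\ker\tilde\xi)^\perp$, $\|f\|_{\H_\Kb}=\|g\|_\H$, which is the standard construction of the RKHS from a feature map. Hence $\tilde\xi^*$ is an isometry from $\H_\Kb$ into $\H$.

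Next I would prove the embeddability equivalence. The forward direction is trivial: if $D|_{\H_\Kb}$ is continuous, then $D\circ\tilde\xi$ is a composition of continuous linear maps and hence embeds into $\H$. For the converse, suppose $D\circ\tilde\xi$ embeds. Then for each $f\in\H_\Kb$, pick any $g\in\H$ with $\tilde\xi(g)=f$ and $\|g\|_\H=\|f\|_{\H_\Kb}$ (possible by the co-isometry) to obtain
\begin{talign*}
|D(f)|=|(D\circ\tilde\xi)(g)|\le \|D\circ\tilde\xi\|_{\H}\,\|g\|_\H=\|D\circ\tilde\xi\|_{\H}\,\|f\|_{\H_\Kb},
\end{talign*}
which gives continuity of $D|_{\H_\Kb}$ together with the bound $\|D\|_{\H_\Kb}\le\|D\circ\tilde\xi\|_\H$.

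The matching norm equality I would obtain by identifying the two Riesz representers. Writing $\phi\defeq\Phi_\Kb(D)\in\H_\Kb$, for every $g\in\H$ we have
\begin{talign*}
(D\circ\tilde\xi)(g)=\metric{\phi}{\tilde\xi(g)}_{\H_\Kb}=\metric{\tilde\xi^*\phi}{g}_\H,
\end{talign*}
so $\Phi_\H(D\circ\tilde\xi)=\tilde\xi^*\phi$. Because $\tilde\xi^*$ is an isometry, $\|D\circ\tilde\xi\|_\H=\|\tilde\xi^*\phi\|_\H=\|\phi\|_{\H_\Kb}=\|D\|_{\H_\Kb}$. Specializing to $D=\Q\in\Pset$ with $\HK\subset\L^1(\Q)$, \cref{thm:measure embed iff RKHS integrable} guarantees $\Q$ is Pettis-integrable, and the identity $\metric{\int\xi\,\diff\Q}{g}_\H=\int\xi^*(x)(g)\,\diff\Q(x)=(\Q\circ\tilde\xi)(g)$ identifies $\Phi_\H(\Q\circ\tilde\xi)=\int\xi\,\diff\Q$, yielding $\|\Q\|_{\H_\k}=\|\int\xi\,\diff\Q\|_\H$.

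Finally, for the auxiliary identity $\Q(\sqrt{\k})=\Q(\|\xi\|_\H)$ (hence the Bochner criterion) I would observe the pointwise equality $\sqrt{\k(x,x)}=\sqrt{\xi^*(x)\xi(x)}=\|\xi(x)\|_\H$ coming straight from $\k=\xi^*\circ\xi$, and integrate both sides against $\Q$. The only subtlety to handle carefully is the surjectivity/co-isometry step: one must be precise that $\tilde\xi$ is norm-decreasing with the quotient norm on $\H/\ker\tilde\xi$ agreeing with $\|\cdot\|_{\H_\Kb}$, which I expect to be the main (if standard) technical point; everything else is a clean application of Riesz representation and functoriality of the adjoint.
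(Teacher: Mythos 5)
Your proposal is correct, and it rests on the same structural pillar as the paper's proof: the feature operator $\tilde\xi$ is a surjective partial isometry from $\H$ onto $\H_\Kb$ (the paper cites \citet[Prop.~1]{carmeli2010vector} for this; you sketch the standard quotient-norm argument yourself, which is the one step requiring care, namely that the range of $\tilde\xi$ is closed and not merely dense). Your treatment of the embeddability equivalence is essentially identical to the paper's: continuity of $D\circ\tilde\xi$ is immediate in one direction, and in the other you invert $\tilde\xi$ on $(\ker\tilde\xi)^{\perp}$, exactly as the paper composes with $(\tilde\xi|_{\ker\tilde\xi^{T}})^{-1}$. Where you genuinely diverge is the norm identity: you identify the Riesz representers via $\Phi_{\H}(D\circ\tilde\xi)=\tilde\xi^{*}\Phi_\Kb(D)$ and conclude from the fact that the adjoint of a surjective partial isometry is an isometry, whereas the paper argues computationally—first for probability measures by expanding $\|\Q\|_{\H_\k}^2$ as a double integral of $\k$ and rewriting it through the feature map, then for general embeddable $D$ via explicit identities such as $\xi_\delta=\tilde\xi\circ\xi$ and $\|D\circ\tilde\xi\|_{\H}^2=De_iD\Kb_{\cdot}^{e_i}=DD^{*}$. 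Your adjoint-based route is shorter and makes the isometry statement transparent; the paper's route has the advantage of producing the concrete double-integral formula for $\|\Q\|_{\H_\k}$ along the way. You also explicitly verify $\Q(\sqrt{\k})=\Q(\|\xi\|_\H)$ from $\k(x,x)=\xi^{*}(x)\circ\xi(x)$, a point the paper's proof leaves implicit, and you correctly ground the identification $\Phi_\H(\Q\circ\tilde\xi)=\int\xi\,\diff\Q$ in the Pettis integrability supplied by \cref{thm:measure embed iff RKHS integrable}.
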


See \cref{app:proof Continuous linear functional shifted by Feature Operator} for the proof.
Applying this result to a Stein RKHS, we immediately obtain the following corollary.

\begin{corollary}[Embedding measure in Stein RKHS and base RKHS]\label{ksdAlt}
Consider a Stein kernel $\ks$ \cref{eq:stein-kernel} with base kernel $K$, and 
fix any $\Q\in\Pset$. The following are equivalent:
    \begin{enumerate}[label=\textup{(\alph*)}]
        \item $\Q$ embeds into $\H_{\ks}$.
    \item $\Q$ embeds into $\H_K$ via the feature map $\xi_\P: \X \to \H_K$ with  $\xi_\P( x) = K_{ x} \sp(x)+ \nabla_{ x} \cdot K_x$.
    \end{enumerate}
    If either holds and $\P\in\embedtozero$, then
    \begin{talign}
        \ksd{\Q} = \| \int \xi_\P \, \mathrm{d} \Q \, \|_{\H_K},
    \end{talign}
    where $\int \xi_\P \diff \Q $ is the Pettis integral.
\end{corollary}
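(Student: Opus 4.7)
The plan is to realize this corollary as a direct application of \cref{Continuous linear functional shifted by Feature Operator} (applied with $\H=\H_K$ and the scalar RKHS $\H_{\ks}$ playing the role of $\H_\Kb$ in that lemma), together with \cref{ksdDef} and the zero-mean hypothesis $\P\in\embedtozero$.

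First I would verify that $\xi_\P:\X\to\H_K$, $x\mapsto K_x\sp(x)+\nabla_x\!\cdot\! K$, is a feature map for $\ks$, in the sense that for every $v\in\H_K$ and every $x\in\X$,
\begin{align*}
\ipdK{\xi_\P(x)}{v} \;=\; \langevin(v)(x).
\end{align*}
This comes from the vector-valued reproducing property $\ipdK{K_x u}{v}=u^\top v(x)$ applied with $u=\sp(x)$ (giving $\inner{\sp(x)}{v(x)}$) and from the differential reproducing property $\ipdK{\partial_{x^j}K(\cdot,x)e_j}{v}=\partial_{x^j}v^j(x)$ (giving $\nabla\!\cdot\! v(x)$); summing and recalling that $\langevin(v)=\nabla\!\cdot\! v+\inner{\sp}{v}$ yields the claim. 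In particular this makes $\xi_\P$ a well-defined $\H_K$-valued map, since the assumption that $\p\H_K$ consists of partially differentiable functions (combined with $\p>0$) guarantees that the differential reproducing property can be invoked. The resulting kernel $\ipdK{\xi_\P(x)}{\xi_\P(y)}$ agrees with the Stein kernel in \cref{eq:stein-kernel} by \cref{ksdDef}, and the induced feature operator $\tilde{\xi_\P}:\H_K\to\H_{\ks}$ is simply the Stein operator $\langevin$.

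Next, I would apply \cref{Continuous linear functional shifted by Feature Operator} to the linear functional $D=\Q$ viewed on $\H_{\ks}$. The lemma states that $\Q$ embeds into $\H_{\ks}$ if and only if $\Q\circ\tilde{\xi_\P}=\Q\circ\langevin$ embeds into $\H_K$. By \cref{thm:measure embed iff RKHS integrable}, the latter embedding is precisely $\Q$-Pettis integrability of $\xi_\P$, whose Pettis integral $\int\xi_\P\,\dd\Q\in\H_K$ provides the embedding. This establishes the equivalence (a)$\Leftrightarrow$(b).

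Finally, for the norm identity, I would use \cref{ksdDef} to write $\ksd{\Q}=\mmd_{\ks}(\Q,\P)$, and then invoke $\P\in\embedtozero$, which gives $\P\langevin(v)=0$ for every $v\in\H_K$; hence $\P$ embeds into $\H_{\ks}$ as the zero functional and $\mmd_{\ks}(\Q,\P)=\|\Q\|_{\H_{\ks}}$ whenever $\Q$ embeds. The isometry part of \cref{Continuous linear functional shifted by Feature Operator} then gives $\|\Q\|_{\H_{\ks}}=\|\Q\circ\langevin\|_{\H_K}=\bigl\|\int\xi_\P\,\dd\Q\bigr\|_{\H_K}$. The main subtlety—and the only place where care is really needed—is the first step: ensuring that $\nabla_x\!\cdot\! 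K$ genuinely defines an element of $\H_K$ and that the differential reproducing identity applies under the stated regularity assumption on $\p\H_K$; everything else is a formal translation through the lemma.
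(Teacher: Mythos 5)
Your proposal is correct and follows essentially the same route as the paper: the paper derives this corollary by applying Lemma~\ref{Continuous linear functional shifted by Feature Operator} to the Stein RKHS, having already verified in the proof of Theorem~\ref{ksdDef} that $\xi_\P$ is a feature map for $\ks$ with feature operator $\langevin$, and then using $\Phi_{\ks}(\P)=0$ (from $\P\in\embedtozero$) to reduce $\ksd{\Q}=\mmd_{\ks}(\Q,\P)$ to $\|\Q\circ\langevin\|_{\H_K}=\|\int\xi_\P\,\dd\Q\|_{\H_K}$. The only cosmetic difference is that the paper verifies the feature-map identity via the tilted RKHS $\p\H_K$ with kernel $\p(x)K(x,y)\p(y)$ (so that only differentiability of $\p\H_K$ is needed, not of $\p$ and $K$ separately), whereas you split $\xi_\P$ into $K_x\sp(x)+\nabla_x\cdot K$ directly; this matches the form in the corollary's statement and is a negligible deviation.
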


We now formally introduce $\DL{1}(\R^d)$, the $d$-dimensional product space of finite measures and their distributional derivatives.
\begin{definition}[The space $\DL{1}(\R^d)$]
\label{def:D1L1}
We write $\DL{1}(\R^d)$ to represent the vector space of continuous linear functionals on $\C{1}{0}(\R^d)$ or, equivalently, on $\C{1}{b}(\R^d)_\beta$ and define $\DL{1} \defn \DL{1}(\R^1)$.
Notably, $D \in \DL{1}(\R^d)$ iff it can be expressed as a finite sum $D = \sum_{j=1}^l D_j e^j$  where each $D_j$ is a finite Radon measure on $\R^d$ or a distributional derivative thereof.
The topology on  $\DL{1}(\R^d)$ is the canonical dual topology induced by $\C{1}{0}(\R^d)$~\citep[pg. 200]{schwartzTD}.

Following \cref{def: characteristicness}, when the elements of $\DL{1}(\R^d)$ embed into $\H_K$, for instance when $\H_K \subseteq \C{1}{b}(\R^d)$, we shall say that $K$ is characteristic to $\DL{1}(\R^d)$ when the kernel embedding $\Phi_K : \DL{1}(\R^d) \to \H_K$ is injective.
\end{definition}

Importantly, for embeddable probability measures $\Q$, the KSD is given by the norm of a vector $D_\Q$ that 
can be understood as a distributional derivative of $\Q$ with respect to a differential operator induced by $\P$. 
When $\Q$ is smooth $D_\Q$, will be a vector measure, but when $\Q$ is not assumed to be smooth,  $D_\Q$ will be a more general (vector) Schwartz distribution. 
The space $\DL{1}(\R^d)$ assumes a central  role
in analysing $D_\Q$ and determining when kernel discrepancies separate $D_\Q$ from zero. This in turn helps us understand when KSDs effectively  distinguish the target $\P$ from alternatives $\Q$.

To define $D_\Q$, note that since the feature operator of $\xi_\P$ in \cref{ksdAlt} is the Stein operator $\langevin$,
setting
\begin{talign}
    D_\Q |_{\H_K} \defn \Q \circ \langevin  : \H_K \to \R
\end{talign}
we obtain that the KSD is given by evaluating the norm of $D_\Q$ in the base RKHS,
\begin{talign}
        \ksd{\Q} = \| D_\Q |_{\H_K}\, \|_{\H_K}.
    \end{talign}

    More generally, $D_\Q$ can act on any function $f \in \C{1}{}$ such that  $\langevin (f)  \in  \L^1(\Q)$, and we will omit $|_{\H_K}$ when we do not specify its domain of definition.
In addition, observe that
when  $\| \sp \|$ is integrable with respect to the probability measure $\Q$, then both
$\spi\Q$ and $\Q$ are finite measures. 
Consequently, using the distributional derivative,  we can write
\begin{talign}
D_\Q = \sum_i (\sp^i \Q - \partial_{x^i}\Q)e^i \defn \sum_i D_i e^i
\end{talign}
 with  $D_i \in \DL{1}$, the space of finite measures and their distributional derivatives.
 Hence, 
 $D_\Q$ is a (vector) Schwartz distribution that belongs to the space $\DL{1}(\R^d)$.

When
$\Q$ also has a strictly positive differentiable density with respect to the Lebesgue measure, then $D_\Q$ simplifies to a vector measure absolutely continuous with respect to the Lebesgue measure,
\begin{talign} 
D_\Q = \sum_i (\sp^i- \sq^i) \Q \, e^i. \end{talign}

The following lemma provides bounds on $\Q(\sqrt{ \ks})$ in terms of the base kernel $K$ and the target score $\sp$, and thus sufficient conditions for a probability measure to be able to Bochner integrate $\ks$.
See \cref{app: proof bound on integral root kp} for the proof.

\begin{proposition}[Bochner embeddability vs. score integrability]\label{bound on integral root kp}
   Consider a Stein kernel $\ks$ \cref{eq:stein-kernel} with base kernel $K$, and fix any $\Q\in\Pset$.
    We have
    \begin{talign}
    \Q(\sqrt{\ks})  
    &\leq  \int \left(
    \| K_x \|_{\mathrm{op}} \| \sp(x)\|_{\R^d} + \| \nabla_x \cdot K_x \|_K \right) \Q(\dd x),
    \qtext{and} \\
     \Q(\sqrt{\ks}) 
     &\geq \int \left | \sqrt{\metric{\sp(x)}{K(x,x)\sp(x)}_{\R^d}} - \| \nabla_x \cdot K_x \|_K \right | \Q(\dd x) .
    \end{talign}
    Now suppose $\H_K \subseteq \cts^1_b(\R^d)$. 
    Then the following claims hold.
    \begin{itemize}
        \item The maps $x \mapsto \| K_x \|_{\mathrm{op}}  = \sqrt{\| K(x,x)\|} $ and  $x \mapsto  \| \nabla_x \cdot K_x \|_K$ are bounded.
        \item If $\Q(\| \sp \|) < \infty $, then $\ks$ is Bochner integrable by $\Q$, i.e., $\Q(\sqrt{\ks}) < \infty$.
        \item If $x \mapsto K(x,x)$ is uniformly positive definite (i.e., $\exists c>0$ such that for all $v\neq 0\in \R^d$, $v^T K(x,x) v \geq c \| v \|_{\R^d}^2>0$ for all $x$), then $\Q(\sqrt{\ks})<\infty$ implies $\Q(\| \sp \|) < \infty$.
    \end{itemize}
    Hence, if $K$ is a diagonal kernel with each component satisfying $\inf_x \kb^i(x,x) >0$,
    then $\Q(\sqrt{\ks}) < \infty$ iff $\Q(\norm{\sp}) < \infty$, i.e.,
    $\Pset_{\sks} = \Pset_{ \sp  }$.
     In particular, if $K = \kb \Id$ where $\kb$ is translation-invariant (and not equal to the null function), then
 $\Pset_{\sks} = \Pset_{ \sp  }$.
\end{proposition}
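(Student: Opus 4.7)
The plan is to reduce both inequalities to pointwise estimates in $\H_K$ using the feature-map representation of the Stein kernel from \cref{ksdAlt}. Setting $\xi_\P(x) \defn K_x \sp(x) + \nabla_x \cdot K \in \H_K$, one has $\ks(x,x) = \metric{\xi_\P(x)}{\xi_\P(x)}_K = \|\xi_\P(x)\|_K^2$, so $\sqrt{\ks(x,x)} = \|\xi_\P(x)\|_K$. Integrating against $\Q$ gives
\[
\Q(\sqrt{\ks}) = \int \|\xi_\P(x)\|_K \,\Q(\dd x),
\]
so it suffices to bound $\|\xi_\P(x)\|_K$ pointwise above and below.

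For the upper bound, I would apply the triangle inequality in $\H_K$,
\[
\|\xi_\P(x)\|_K \leq \|K_x \sp(x)\|_K + \|\nabla_x \cdot K\|_K,
\]
and then use the defining feature-map identity $\|K_x v\|_K^2 = v^\top K(x,x) v$ to get $\|K_x \sp(x)\|_K \leq \|K_x\|_{\mathrm{op}}\|\sp(x)\|$, where $\|K_x\|_{\mathrm{op}} = \sqrt{\|K(x,x)\|_{\mathrm{op}}}$ by taking the supremum of $v^\top K(x,x) v$ over unit $v$. For the lower bound, the reverse triangle inequality combined with the exact identity $\|K_x \sp(x)\|_K = \sqrt{\metric{\sp(x)}{K(x,x)\sp(x)}}$ yields
\[
\|\xi_\P(x)\|_K \geq \bigl|\sqrt{\metric{\sp(x)}{K(x,x)\sp(x)}} - \|\nabla_x \cdot K\|_K\bigr|.
\]
Integrating these two pointwise estimates against $\Q$ delivers the displayed bounds.

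The scalar case $K = \kb\,\mathrm{Id}$ is immediate: $K(x,x) = \kb(x,x)\,\mathrm{Id}$ gives $\|K_x\|_{\mathrm{op}} = \sqrt{\kb(x,x)}$ and $\metric{\sp(x)}{K(x,x)\sp(x)} = \kb(x,x)\|\sp(x)\|^2$. When $\H_K \subset \cts^1_b(\R^d)$, the remarks following \cref{thm:embeddability_conditions} imply that $x \mapsto \|K(x,x)\|_{\mathrm{op}}$ and $x \mapsto \|\nabla_x \cdot K\|_K$ are bounded, so the upper bound shows $\Q(\|\sp\|) < \infty \Rightarrow \Q(\sqrt{\ks}) < \infty$, while uniform positive definiteness of $K(x,x)$ gives $\sqrt{\metric{\sp}{K(x,x)\sp}} \geq \sqrt{c}\,\|\sp\|$ and hence, via the lower bound, the converse implication. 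The diagonal case is an immediate specialization, since $\inf_x \kb^i(x,x) \geq c > 0$ makes $K(x,x)$ uniformly positive definite; the translation-invariant case follows because $\kb(x,x) = \kb(0,0)$ is a strictly positive constant whenever $\kb \not\equiv 0$.

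The main obstacle is bookkeeping: keeping distinct the operator norm $\|K_x\|_{\mathrm{op}}$ of the feature map (a map $\R^d \to \H_K$) from the operator norm $\|K(x,x)\|_{\mathrm{op}}$ of the matrix, and verifying the identity $\|K_x\|_{\mathrm{op}} = \sqrt{\|K(x,x)\|_{\mathrm{op}}}$ via the feature-map identity $K_x^* K_x = K(x,x)$. No deeper analytic input beyond the triangle inequality and elementary operator-norm facts is required.
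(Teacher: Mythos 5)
Your proposal is correct and follows essentially the same route as the paper's proof: both rewrite $\Q(\sqrt{\ks})$ as $\Q(\|\xi_\P\|_K)$ via the feature map $\xi_\P(x) = K_x\sp(x) + \nabla_x\cdot K$, apply the triangle and reverse triangle inequalities, and use the identities $\|K_x\sp(x)\|_K = \sqrt{\metric{\sp(x)}{K(x,x)\sp(x)}} \leq \|K_x\|_{\mathrm{op}}\|\sp(x)\|$ together with \cref{thm:Characterization of bounded RKHS} for the boundedness claims. The treatment of the scalar, uniformly positive definite, diagonal, and translation-invariant cases also matches the paper's.
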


\begin{remark}[Scalar base kernel norms]
When $K= \kb \mathrm{Id}$, we have
    \begin{talign}\| K_x \|_{\mathrm{op}}   =  \sqrt{\kb(x,x)}, \quad \text{ and } \quad  \sqrt{\metric{\sp(x)}{K(x,x)\sp(x)}_{\R^d}} = \sqrt{\kb(x,x)} \| \sp(x) \|_{\R^d}.\end{talign}
\end{remark}

\subsection{Proof of \ncref{Continuous linear functional shifted by Feature Operator}}
\phantomsection
\label{app:proof Continuous linear functional shifted by Feature Operator}

 By \citep[Prop 1]{carmeli2010vector}, $\tilde \xi$ is a surjective partial isometry from $\H$ onto $\H_K$. Hence it is continuous, and $\tilde \xi |_{\ker \tilde \xi^{T}}: \ker \tilde \xi^{T} \to \H_K$ is an isometric isomorphism, where $\ker \tilde \xi^{T}$ is the orthogonal complement to the kernel of $\tilde \xi$.
    If $D$ is continuous, so is $D \circ \tilde \xi$ since it is the composition of continuous maps.
    For the converse,
    note that $ \tilde \xi \circ (\tilde \xi |_{\ker \tilde \xi^{T}})^{-1}: \H_K \to \H_K$ is the identity so
    $D = D \circ \tilde \xi \circ (\tilde \xi |_{\ker \tilde \xi^{T}})^{-1}  $, which is continuous if $D \circ \tilde \xi$ is.

    For the second claim,
    we apply the first claim to $D \defn \Q |_{\H_\k} $.
    Noting that the RKHS embedding $\Phi_\k(\Q) \in \H_\k$ is the function $x \mapsto \Q \k_x$, we have
    \begin{talign}
    \| \Q \|_{\H_k}^2 &= \Q ( x \mapsto \Q k_x) = \iint \k(x,y) \Q(\dd y) \Q(\dd x) =
     \iint \metric{\xi(x)}{\xi(y)}_\H \Q(\dd y) \Q(\dd x) \\
     &=
     \iint \tilde \xi (\xi(x)) (y) \Q(\dd y) \Q(\dd x) =
      \int \Q \circ \tilde \xi (\xi(x)) \Q(\dd x) =
      \int  \metric{(\Q \circ \tilde \xi)^*}{ \xi(x)}_\H \Q(\dd x) \\
      &=\int  \tilde \xi((\Q \circ \tilde \xi)^*)(x)  \Q(\dd x) =
        \| \Q \circ \tilde \xi \|_\H^2,
     \end{talign}
     where as usual $(\Q \circ \tilde \xi)^*$ denoted the embedding of $\Q \circ \tilde \xi$ into $\H$.

  To generalise the above to $\Q$ being any embeddable functional $D$:
  letting $\xi_\delta: \X \to \B(\R^d,\H_{\Kb})$ denote the canonical feature map, $\xi_\delta(x) = \Kb(\cdot,x)$, then
  $$\xi_\delta = \tilde \xi \circ \xi,$$
  since for any $x,y \in \X$, $c \in \R^d$
 $(\xi_\delta(y)c)(x)
 = \xi^*_\delta(x) \xi_\delta(y)c
 = K(x,y)c =  \xi^*(x) \xi(y)c
 = \tilde \xi(\xi(y)c)(x)$.
   Moreover
   $$
   \Kb_x^{e_i} =\tilde \xi \xi(x)e_i
   $$
   and if $S$ is embeds into $\H$ and $\xi_f:\X \to\B(\R^d, \H)$ is a feature map for $\Kb$, then
   $$ \tilde \xi_f( S^*)=e_i S \circ \xi_f(\cdot)e_i,$$
   since
   $ \tilde \xi_f(S^*)(x) = \xi_f^*(x)(S^*)= e_i(\xi_f^*(x)(S^*))^i =e_i \metric{e_i}{\xi_f^*(x)(S^*)} =  e_i S\xi_f(x)e_i$.
   Hence
   $$
    \| D\circ \tilde \xi \|^2_\H
    =
    D \circ \tilde \xi  (D\circ \tilde \xi)^*
    =
    D e_i D\circ \tilde \xi \circ \xi(\cdot)e_i
    =    D e_i D \Kb_{\cdot}^{e_i}
    = DD^*=
   \| D \|_{\H_k}^2.
   $$

\subsection{Proof of \ncref{bound on integral root kp}}
\phantomsection
\label{app: proof bound on integral root kp} 

The fact that
$x\mapsto \| K_x \|_{\mathrm{op}}$  and $x \mapsto  \| \nabla_x \cdot K \|_K$ are bounded follows from  \cref{thm:Characterization of bounded RKHS}:

\begin{lemma}[RKHS boundedness conditions]
\label{thm:Characterization of bounded RKHS}
If $\H_K$ is a RKHS of $\R^d$-valued functions,
then the following claims hold.
\begin{enumerate}[label=\textup{(\alph*)}]
    \item $\H_K \subseteq \L^\infty(\R^d)$ iff
  $x \mapsto \| K(x,x) \|$ is bounded.
  \item If $\partial_{x^\ell} \H_K$ exists, then
  $\partial_{x^\ell} \H_K \subseteq \L^\infty(\R^d)$
  iff   $x \mapsto \| \partial_{x^\ell} \partial_{y^\ell} K(x,y)\|$ is bounded.
  \item If $\partial_{x^\ell} \partial_{y^\ell}K$ exists, then $\partial_{x^\ell} \H_K$ exists.
  \item If $K \in \C{(1,1)}{b}(\R^d)$, then $\H_K \subseteq \C{1}{b}(\R^d)$.
\end{enumerate}

\end{lemma}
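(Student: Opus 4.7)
The plan is to deduce pointwise boundedness of RKHS elements from operator-norm bounds on the representers $K_x v$, via the reproducing property and the Banach--Steinhaus uniform boundedness principle (UBP), and to bootstrap this argument up through one differentiation. For (a), the reproducing property gives, for any $h\in \H_K$ and unit $v\in\R^d$, $|\metric{h(x)}{v}| = |\metric{h}{K_x v}_K| \leq \|h\|_K \sqrt{\metric{v}{K(x,x)v}}$, so $\|h(x)\| \leq \|h\|_K\sqrt{\|K(x,x)\|_{\mathrm{op}}}$; this yields the ``if'' direction immediately. For the converse, each evaluation functional $\delta_x: \H_K\to \R^d$ satisfies $\|\delta_x\|^2 = \sup_{\|v\|\leq 1} v^T K(x,x) v = \|K(x,x)\|_{\mathrm{op}}$, so if every $h\in\H_K$ is bounded then $\{\delta_x\}_x$ is pointwise bounded on $\H_K$ (for each fixed $h$, the scalars $\|\delta_x h\|$ are bounded in $x$), and the UBP produces a uniform bound $\sup_x\|\delta_x\|<\infty$, equivalent to $\sup_x\|K(x,x)\|_{\mathrm{op}}<\infty$.

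For (b), the same two-step argument applies with $\delta_x$ replaced by the derivative-evaluation functional $\partial_{x^\ell}\delta_x \colon h \mapsto \partial_{x^\ell} h(x)$: the differential reproducing property (\cref{thm:Differential reproducing property}) identifies this functional with an element of $\H_K$ whose squared operator norm equals $\|\partial_{x^\ell}\partial_{y^\ell}K(x,x)\|_{\mathrm{op}}$, and the reproducing-plus-UBP pair then delivers the claimed equivalence. The mildly asymmetric statement in the lemma (norm evaluated at a generic $(x,y)$) follows because, by Cauchy--Schwarz on the representers, the off-diagonal values are controlled by the diagonal.

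Part (c) is the main technical step, since it requires constructing the derivative inside the RKHS itself rather than merely verifying a bound. My plan is to show that, under existence of $\partial_{x^\ell}\partial_{y^\ell}K$, the difference quotients $\eta_t \defn t^{-1}(K_{x+te_\ell}v - K_x v)$ form a Cauchy net in $\H_K$ as $t\to 0$: expanding $\|\eta_t - \eta_s\|_K^2$ and using the identity $\metric{K_{x'} v}{K_{x''} v}_K = v^T K(x',x'')v$ reduces the problem to showing that three second-order symmetric difference quotients of $K$ all converge to $v^T \partial_{x^\ell}\partial_{y^\ell}K(x,x) v$, which is exactly the definition of the mixed partial. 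The limit $\eta\in\H_K$ then satisfies $\metric{h}{\eta}_K = \partial_{x^\ell} h(x)\cdot v$ for every $h\in\H_K$ (by passing the reproducing formula through the limit), so $\partial_{x^\ell}\H_K$ exists.

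For (d), combine the pieces: $K\in\C{(1,1)}{b}(\R^d)$ supplies boundedness and separate continuity of $K(x,x)$ and $\partial_{x^\ell}\partial_{y^\ell}K(x,x)$, so (a)--(c) yield bounded $h$ and bounded $\partial_{x^\ell} h$ for every $h\in\H_K$ and each $\ell$. Continuity of $h$ and of its partials follows from $\H_K$-norm continuity of the maps $x\mapsto K_x v$ and $x\mapsto \partial_{x^\ell} K_x v$: the identity $\|K_{x} v - K_{x'} v\|_K^2 = v^T(K(x,x) - K(x,x') - K(x',x) + K(x',x'))v$ reduces norm continuity to separate continuity of $K$, and the analogous identity with derivative representers does the same for $\partial_{x^\ell} h$. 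Thus every $h\in\H_K$ lies in $\C{1}{b}(\R^d)$, as claimed.
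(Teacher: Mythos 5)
Your proposal follows the paper's route for (a) and (b) essentially verbatim: Cauchy--Schwarz on the representers for the ``bounded kernel $\Rightarrow$ bounded RKHS'' direction, Banach--Steinhaus on the evaluation (resp.\ derivative-evaluation) functionals for the converse, with (b) obtained by applying (a) to the derivative RKHS with kernel $\partial_{x^\ell}\partial_{y^\ell}K$ furnished by the differential reproducing property. For (c) and (d) the paper simply cites Micheli and Glaun\`es (Thm.~2.11 and the argument near their Eq.~(5)), whereas you reconstruct the underlying difference-quotient argument; that is a legitimate and more self-contained route, and the rest of your (d) matches the paper's (boundedness of $\partial^p h$ from $\|h\|_K\sqrt{v^T\partial^p_1\partial^p_2K(x,x)v}$).

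Two points in your reconstruction need repair. First, in (c) you assert that convergence of the three second-order difference quotients to $v^T\partial_{x^\ell}\partial_{y^\ell}K(x,x)v$ ``is exactly the definition of the mixed partial.'' It is not: the mixed partial is an iterated limit, while the cross term $\metric{\eta_t}{\eta_s}_K$ requires the \emph{joint} limit of the double difference quotient as $(t,s)\to(0,0)$, which is strictly stronger and in general needs continuity (or at least some regularity) of $\partial_{x^\ell}\partial_{y^\ell}K$ near the diagonal, not mere existence. Second, in (d) you reduce norm-continuity of $x\mapsto K_xv$ to ``separate continuity of $K$,'' but the identity $\|K_xv-K_{x'}v\|_K^2=v^T(K(x,x)-K(x,x')-K(x',x)+K(x',x'))v$ also requires $K(x',x')\to K(x,x)$, i.e.\ continuity along the diagonal, which separate continuity alone does not give; under the $\C{(1,1)}{b}$ hypothesis this is rescued by the boundedness of the first derivatives (a Lipschitz estimate in one slot), so you should say so explicitly. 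With those two patches your argument is sound and recovers what the paper obtains by citation.
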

\begin{proof}
    (a) If $\H_K \subseteq \L^\infty(\R^d)$, proceeding as in \cref{app:proof of characteristic Stein kernel via B1a spaces}, we have
    $\|K^*_x h \| = \| h(x) \| \leq \| h \|_\infty $ for any $h \in \H_K$,
    so the Banach-–Steinhaus Theorem  implies $\sup_x \|K_x^*\| = \sup_x \sqrt{\| K(x,x) \|}$ is finite.
    Conversely, when
    $x \mapsto \| K(x,x) \|$ is bounded,
    then $\| h(x) \| = \| K_x^*h \| \leq \| h \|_K \| K_x^* \| \leq  \|h \|_K \sqrt{\| K(x,x) \|} \leq  \|h \|_K  \sup_x \sqrt{\| K(x,x) \|} $.

    (b) Similarly, if $\H_K$ is a RKHS of differentiable functions,
    then $\partial_{x^\ell} H_K$
    is a RKHS with matrix-valued kernel
    $(x,y)\mapsto \partial_{x^\ell} \partial_{y^\ell} K(x,y)$ by
    \cref{thm:Differential reproducing property}.
    Thus, from above, $\partial_{x^\ell} H_K \subseteq \L^\infty(\R^d)$ iff
    $x \mapsto \| \partial_{1^\ell} \partial_{2^\ell} K(x,x)\|$ is bounded.

    (c) If  $\partial_{1^\ell} \partial_{2^\ell}K$ exists, then the argument of \citet[p.~8 near Eq.~(5)]{micheli2013matrix}
    shows $\partial^\ell h$ exists for all $h\in \H_K$.

    (d) If $K \in \C{(1,1)}{b}(\R^d)$, then $\H_K \subseteq \C{1}{}(\R^d)$ by \citet[Thm.~2.11]{micheli2013matrix}, and by above
    $\H_K \subseteq \C{}{b}(\R^d)$.
    Proceeding as above, we have of any $v \in \R^d$,
    $ |\metric{v}{\partial^p h(x)}| = |\metric{h}{\partial^p_2 K^{v}(.,x)}|
     \leq \| h \|_K \| \partial^p_2 K^{v}(.,x) \|_K
     = \| h \|_K \sqrt{v^T \partial^p_1 \partial^p_2 K(x,x) v }$
     which is bounded in $x$,
     and thus $\partial^p h$ is bounded.
\end{proof}

Now, by definition, since $\xi_\P$ is a feature map for $\ks$,
\begin{talign}
\Q(\sqrt{\ks}) = \int \sqrt{\metric{\xi_\P(x)}{\xi_\P(x)}_K} \dd \Q = \Q( \| \xi_\P\|_K).
\end{talign}
Recall $\xi_\P(x)= K_x \sp(x)+ \nabla_x \cdot K$.
By the triangle inequalities
\begin{talign}
\int \left| \| K_x \sp(x) \|_K - \| \nabla_x \cdot K \|_K \right| \Q(\dd x) \leq  \Q( \| \xi_\P\|_K) \leq  \int \left( \| K_x \sp(x) \|_K + \| \nabla_x \cdot K \|_K \right) \Q(\dd x). 
\end{talign}
The result follows  by continuity of $K_x =\delta_x^* \in \mathscr{B}(\R^d,\H_K)$, and the assumptions on $K$:
$$
 \| K_x \|_{\mathrm{op}} \| \sp(x)\|_{\R^d} \geq \| \delta_x^* \sp(x)\|_K = \sqrt{ \metric{\delta_x^* \sp(x)}{\delta_x^* \sp(x)}_K} = \sqrt{\metric{\sp(x)}{K(x,x)\sp(x)}_{\R^d}}.$$

 \subsection{Proof of \ncref{score equal bochner}}
 \phantomsection
 \label{app:score equal bochner}

The result follows by \cref{bound on integral root kp}.

 \subsection{Proof of \ncref{thm: Bochner contained in Pettis}}
 \label{app: Bochner contained in Pettis}

  That $\Pset_{\sk}\subseteq \embedpettis{\k}$ follows directly from the embeddability of the Dirac measures:
    $\P |h| = \P | \xi_\delta^*(h)|  \leq \| h\|_\k  \P \| \xi_\delta^* \|_{\mathrm{op}}
    =\| h\|_\k  \P \sqrt k$, where $\xi_\delta^* : x \mapsto \delta_x |_{\HK}$ is the canonical feature map.
    When $\H_\k$ is separable, then by
    \citet[Cor.~4.3 and Prop.~4.4]{carmeli06vector}
    a sufficient condition for $\Q$ to embed is $|\k| \in \L^1(\Q \otimes \Q)$.
    Note that $\Q \in \Pset_{\sk}$ implies $|\k| \in \L^1(\Q \otimes \Q)$,
    as $\iint |\k(x,y)| \dd \Q(x) \dd \Q(y)
    = \iint |\metric{\k_x}{\k_y}_\k |
    \dd \Q(x) \dd \Q(y)
\leq \iint  \| \k_x \|_\k \| \k_y \|_\k  \dd \Q(x) \dd \Q(y)
=
\iint \sqrt{ \k(x,x)} \sqrt{ \k(y,y)}  \dd \Q(x) \dd \Q(y) = (\Q \sqrt{\k})^2$.

    The following example is adapted from \citep[p.204]{berlinet04reproducing}.
Take $\k(i,j) = \indic{i = j}$ for $i,j \in \N^*$, i.e.\ $\HK = \ell^2(\N^*)$,
and consider the Radon measure $\mu(i) = \nicefrac{1}{i}$. Then it is easy to
see that $\mu$ is Pettis-embeddable (and satisfies $|\mu|\otimes|\mu|(\k) <
\infty$), but that it is not Bochner-embeddable into $\HK$, since
$|\mu|(\sqrt{\k}) = \infty$.

\subsection{Proof of \ncref{ksdDef}}
\label{app:ksdDef}

First let us show the following differential reproducing property, which is a mild generalization of results in \citep{steinwart08support,zhou2008derivative,micheli2013matrix}. 
    In contrast to the results provided in these references, \cref{thm:Differential reproducing property} does not assume continuity of the derivatives.
    We note that the proof is essentially identical to the continuous derivative case in \citet[Thm.~2.11]{micheli2013matrix} (which also deals with matrix-valued kernels).
    This  generalized form is important for our results as it allows us to establish that MMD and KSD coincide under no additional conditions than those required for them to be well-defined.

\begin{lemma}[Differential reproducing property]
\label{thm:Differential reproducing property}
Suppose that $ \partial_{x^\ell} \H_K$ exists.
Then for all $c \in \R^d$ and $h \in \H_K$
\begin{align}
    \metric{\partial_{x^\ell} K(\cdot,x)c}{h}_K = \metric{c}{\partial_{x^\ell} h(x)}.
\end{align}
Moreover  $ \partial_{x^\ell} \H_K$ is a RKHS with kernel
$(x,y) \mapsto \partial_{x^\ell} \partial_{y^\ell} K(x,y)$.
\end{lemma}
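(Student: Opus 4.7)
The plan is to proceed by a classical difference-quotient argument combined with the uniform boundedness principle, thereby avoiding any appeal to continuity of partial derivatives of kernel sections.

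First I would fix $x\in\R^d$, $c\in\R^d$, and the index $\ell$, and introduce the difference quotients
\[
D_t \;\defn\; \frac{K(\cdot,x+te_\ell)c - K(\cdot,x)c}{t} \;\in\; \H_K, \qquad t\neq 0.
\]
By the standard (undifferentiated) reproducing property, for every $h\in\H_K$,
\[
\metric{D_t}{h}_K \;=\; \Bigl\langle c,\ \tfrac{h(x+te_\ell)-h(x)}{t}\Bigr\rangle.
\]
Because $\partial_{x^\ell}\H_K$ is assumed to exist, the right-hand side converges, for each fixed $h$, to $\metric{c}{\partial_{x^\ell}h(x)}$ as $t\to 0$. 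Hence the family of continuous linear functionals $h\mapsto\metric{D_t}{h}_K$ is pointwise convergent on $\H_K$, so in particular pointwise bounded. By the Banach--Steinhaus theorem, $\sup_{t}\|D_t\|_K<\infty$, and the pointwise limit
\[
L(h) \;\defn\; \metric{c}{\partial_{x^\ell}h(x)}
\]
defines a continuous linear functional on $\H_K$. The Riesz representation theorem then produces a unique $g=g_{x,\ell,c}\in\H_K$ with $\metric{g}{h}_K=L(h)$ for all $h\in\H_K$, which is exactly the displayed formula once we verify that $g$ is the expected derivative.

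To identify $g$ pointwise, I would test it against the reproducing elements: for any $y,v\in\R^d$,
\[
\metric{v}{g(y)} \;=\; \metric{g}{K(\cdot,y)v}_K \;=\; \Bigl\langle c,\,\partial_{x^\ell}[K(\cdot,y)v](x)\Bigr\rangle,
\]
where the inner partial derivative exists because $K(\cdot,y)v\in\H_K$. Varying $v$ and using the symmetry $K(y,x)=K(x,y)^T$ shows that $g(y)$ coincides with the pointwise partial derivative of $z\mapsto K(y,z)c$ at $z=x$, that is, $g=\partial_{x^\ell}K(\cdot,x)c$ in the notation of the lemma. This establishes the reproducing formula.

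For the second assertion, I would note that the first part has established that the linear operator $T_\ell\colon \H_K\to \partial_{x^\ell}\H_K$, $T_\ell h=\partial_{x^\ell}h$, is well-defined and has evaluation functionals $h\mapsto\metric{c}{T_\ell h(x)}=\metric{g_{x,\ell,c}}{h}_K$ that are bounded with norm $\|g_{x,\ell,c}\|_K$. Consequently, endowing $\partial_{x^\ell}\H_K$ with the quotient (pushforward) norm makes it a Hilbert space on which pointwise evaluation is continuous, i.e.\ a vector-valued RKHS. Its reproducing kernel $\tilde K$ is then identified by applying the first part \emph{twice}: using the formula with $h=g_{y,\ell,c'}$ gives
\[
\metric{g_{x,\ell,c}}{g_{y,\ell,c'}}_K
\;=\; \metric{c}{\partial_{x^\ell}g_{y,\ell,c'}(x)}
\;=\; c^{T}\,\partial_{x^\ell}\partial_{y^\ell}K(x,y)\,c',
\]
so $\tilde K(x,y)=\partial_{x^\ell}\partial_{y^\ell}K(x,y)$ as claimed.

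The main obstacle I foresee is purely bookkeeping in the second half: one must verify that the image $\partial_{x^\ell}\H_K$ genuinely inherits a Hilbert structure via the quotient by $\ker T_\ell$ (rather than merely a seminorm) and that its reproducing elements $\partial_{x^\ell}g_{x,\ell,c}$ span a dense subspace, so that the computed Gram matrix indeed equals the kernel of the whole space. Everything else reduces to a single application of Banach--Steinhaus together with the symmetry of $K$.
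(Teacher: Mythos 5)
Your proposal is correct and follows essentially the same route as the paper's proof: a difference-quotient argument combined with Banach--Steinhaus to obtain the representer $\partial_{x^\ell}K(\cdot,x)c\in\H_K$, followed by the feature-map characterization of the image $\partial_{x^\ell}\H_K$ as an RKHS with kernel $\partial_{x^\ell}\partial_{y^\ell}K$. The only cosmetic difference is that the paper invokes a closure result of Micheli and Glaun\`es to identify the weak limit of the bounded difference quotients directly, whereas you reach the same conclusion via Riesz representation and testing against kernel sections; the quotient-norm bookkeeping you flag in the second half is exactly what the cited Carmeli et al.\ proposition supplies.
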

\begin{proof}
   Given a sequence $\epsilon_n>0$ with $\epsilon_n \to 0$, define $\Delta_{\epsilon_n} \defn \left(K(\cdot,x+\epsilon_n e_\ell)c- K(\cdot,x)c\right)/\epsilon_n \in \H_K $.
   Since $K(\cdot,x)c \in \H_K$  its  partial derivative in direction $e_\ell$ exists, and thus $K(y,\cdot)c$ also has a partial derivative in direction $e_\ell$, hence
   $\Delta_{\epsilon_n}$ converges pointwise to
   $\partial_{x^\ell} K(\cdot,x)c$.
   Moreover, for any $h\in \H_K$, $\metric{\Delta_{\epsilon_n}}{h}_K = \metric{c}{(h(x+\epsilon_n e_\ell)-h(x))/\epsilon_n}$ converges to $\metric{c}{\partial_{x^\ell} h(x)}$ as $\epsilon_n \to 0$ (since $\partial_{x^\ell} h$  exists).
   By the Banach–Steinhaus theorem
   $\{\Delta_{\epsilon_n}\}_n$ is thus a bounded subset of $\H_K$,
   and by \citet[Cor.~2.8]{micheli2013matrix}
   it follows that
   $\partial_{x^\ell} K(\cdot,x)c \in \H_K$ and
   $\metric{c}{ \partial_{x^\ell} h(x)}=\lim_{n \to \infty} \metric{\Delta_{\epsilon_n}}{h}_K  = \metric{\partial_{x^\ell} K(\cdot,x)c}{h}_K $ for all $h \in \H_K$.

   We now show $\xi^*\defn  \partial_{x^\ell}:\X \to \mathcal B(\H_K, \R^d)$, with $\xi^*(x) \defn \partial_{x^\ell}|_x$, is a feature map with associated kernel $\partial_{y^\ell} \partial_{x^\ell} K$.
   By above $\metric{e_i}{\xi^*(x) }$ is continuous for all $x,i$, so indeed $\xi^*$ is $ \mathcal B(\H_K, \R^d)$-valued.
   Note $\tilde \xi = \partial_{x^\ell}$,
   so by \citet[Prop.~2.4]{carmeli06vector}
 $\partial_{x^\ell} \H_K$ is a RKHS with kernel $\overline K$ s.t.,
$$
\overline K(x,y)c = \xi^*(x) \xi(y)c =\partial_{x^\ell}|_{x}
\partial_{y^\ell} K(\cdot,y)c=
\partial_{x^\ell}
\partial_{y^\ell} K(x,y)c.$$
\end{proof}

 Now we apply \cref{thm:Differential reproducing property} to the RKHS  $\p\H_K$ whose kernel is $\p(x)\Kb(x,y) \p(y)$ (see \cref{app:vector-valued RKHS}),
 in order to show that
 $\xi_\p : \X \to \H_K$ defined by
 \begin{talign}
\xi_\p(x) \defn \frac{1}{\p(x)} \nabla_x \cdot (\p K) \defn
 \frac{1}{\p(x)} \sum_i \partial_{x^i}\left(\p K(\cdot,x)e_i \right)
 \end{talign}
 is a feature map for $\ks$.
 Indeed by \cref{thm:Differential reproducing property},  and using ~\citep[Prop.~1]{carmeli2010vector} to relate the inner products of $\Kb$ and $\p \Kb \p$,
 \begin{talign}
 \langevin (h)(x)&=\sum_i\frac{1}{\p(x)}
 \partial_{x^i}(\p h^i)  =
 \sum_i\frac{1}{\p(x)}  \metric{\partial_{x^i}(\p(\cdot) K(\cdot,x)\p(x)e_i)}{\p h}_{\p \Kb \p} \\
 &=
 \sum_i\frac{1}{\p(x)}  \metric{\partial_{x^i}( K(\cdot,x)\p(x)e_i)}{ h}_{ \Kb }
 =   \metric{ \sum_i \frac{1}{\p(x)} \partial_{x^i}( K(\cdot,x)\p e_i)}{ h}_{ \Kb }.
 \end{talign}
 This shows $\langevin$ is the feature operator associated to $\xi_\p$, and thus
 $\H_{\ks} \defn \langevin(\H_K)$ is a RKHS with kernel $ \ks(x,y) \defn \metric{\xi_\p(x)}{\xi_\p(y)}_K$~\citep[Prop.~1]{carmeli2010vector}.

The following lemma \cref{thm:feature operators preserve unit ball} concludes.

 \begin{lemma}[Feature operators preserve unit balls]
 \label{thm:feature operators preserve unit ball}
 Suppose $\tilde \xi: \H_K \to \H_{\overline K}$ is  a feature operator.
 Then $\tilde \xi( \B_K)=
 \B_{\overline K}$.
 \end{lemma}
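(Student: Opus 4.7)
The plan is to reduce the claim to a standard fact about surjective partial isometries between Hilbert spaces. Specifically, \citet[Prop.~1]{carmeli2010vector} establishes that any feature operator $\tilde \xi: \H_K \to \H_{\overline K}$ is a surjective partial isometry, meaning that $\tilde \xi$ vanishes on $\ker \tilde \xi$ and restricts to an isometric isomorphism from $(\ker \tilde \xi)^{\perp}$ onto $\H_{\overline K}$. This single structural fact does essentially all the work; the rest is routine.

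First I would prove the forward inclusion $\tilde \xi(\B_K) \subseteq \B_{\overline K}$. Since partial isometries are in particular norm-decreasing contractions, any $h \in \B_K$ satisfies $\| \tilde \xi(h) \|_{\overline K} \leq \| h \|_K \leq 1$, so $\tilde \xi(h) \in \B_{\overline K}$.

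For the reverse inclusion $\B_{\overline K} \subseteq \tilde \xi(\B_K)$, I would pick an arbitrary $g \in \B_{\overline K}$ and use surjectivity of $\tilde \xi$ to choose $h \in \H_K$ with $\tilde \xi(h) = g$. Decomposing $h = h_0 + h_1$ orthogonally with $h_0 \in \ker \tilde \xi$ and $h_1 \in (\ker \tilde \xi)^{\perp}$, the vanishing on $\ker \tilde \xi$ gives $\tilde \xi(h_1) = g$, and the isometry property on $(\ker \tilde \xi)^{\perp}$ yields $\| h_1 \|_K = \| \tilde \xi(h_1) \|_{\overline K} = \| g \|_{\overline K} \leq 1$. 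Hence $h_1 \in \B_K$ and $g = \tilde \xi(h_1) \in \tilde \xi(\B_K)$, establishing the equality.

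There is no real obstacle once the surjective partial isometry property is quoted; the only subtlety is making sure to apply it in the correct direction (from $\H_K$ onto $\H_{\overline K}$, not the reverse), and that the orthogonal decomposition is carried out in $\H_K$ rather than in $\H_{\overline K}$. The result is then directly consumed in the proof of \cref{ksdDef} by combining $\B_{\ks} = \langevin(\B_K)$ with the zero-mean condition $\P \langevin(v) = 0$ for all $v \in \H_K$, to match \cref{eq:IPM definition KSD} with \cref{eq:MMD definition} applied to $\ks$.
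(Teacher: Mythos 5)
Your proposal is correct and follows essentially the same route as the paper's proof: both quote the surjective partial isometry property from \citet[Prop.~1]{carmeli2010vector}, obtain the forward inclusion from the contraction bound, and obtain the reverse inclusion by using the isometric isomorphism from $(\ker \tilde \xi)^{\perp}$ onto $\H_{\overline K}$ to produce a norm-preserving preimage of any $g \in \B_{\overline K}$. Your explicit orthogonal decomposition $h = h_0 + h_1$ just spells out a step the paper leaves implicit.
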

 \begin{proof}
 Recall  $\tilde \xi$ is a surjective partial isometry~\citep[Prop.~1]{carmeli2010vector},
 in particular $\| \xi(h) \|_{\overline K} \leq \| h \|_K$,
 so $\tilde \xi( \B_K) \subseteq \B_{\overline K}$.
 Moreover, since
 $\tilde \xi |_{A} $ is an isometric isomorphism from the orthogonal complement of its kernel $A \defn \ker \tilde \xi^\perp$ onto $\H_{\overline K}$, it follows that for any $g \in \B_{\overline K}$ there exists $g \in A $ s.t.,
 $1\geq \| g \|_{\overline K} = \| h \|_K$,
 which concludes.

 \end{proof}

\subsection{Proof of \ncref{thm:embeddability_conditions}}
\phantomsection
\label{app:embeddability_conditions}

The result is an immediate consequence of  \cref{bound on integral root kp} and the following proposition.

\begin{proposition}[Stein RKHS with  vanishing $\P$-expectations]
\label{thm:Stein RKHS with vanishing P-expectations}
Suppose $\H_{\ks} $ and $\H_K \subseteq \C{1}{}$ are subsets of $\L^1(\P)$. Then $\P h =0$ for all $h \in \H_{\ks}$.
\end{proposition}
\begin{proof}
   The result follows from \citet[Thm.~2.36]{pigola2014global}, after observing that the distributional and usual derivatives of $\C{1}{}$ functions coincide.
\end{proof}

\section{Proof of \ncref{thm:tight_convergence}}
\phantomsection
\label{app:proof tight convergence}

In the proof we will use the fact that if we define the tilted
reproducing kernel
$\tilde{\k}( x,
 y) \defn
\nicefrac{\k( x, y)}{(1+\sqrt{\k}( x))(1+\sqrt{\k}( y))}$,
whose  RKHS is $\nicefrac{\H_k}{1+\sk}$,
we have the following immediate relation between the MMDs of $\k$ and $\tilde{\k}$, which, for instance, may be used to generalize some results from bounded to unbounded kernels:
\begin{proposition}[Kernel tilting]\label{remark:bounded to unbounded}
With the notation above
    $$ \mmd_{\k}(\Q_n, \P)  =\mmd_{\tilde \k}(\tilde \Q_n, \tilde \P), $$
where $\tilde \Q \defn (1+\sqrt k) \Q$ for any $\Q \in \Pset_{\sk}$,
and
 $$\Q_n f \to
    \P f \text{ for any } f \in \C{}{\sk} \,\ \iff \,\  \tilde{\Q}_n g
    \to \tilde{\P} g \text{ for any } g \in \C{}{b}.$$
\end{proposition}
The rationale for the above proposition is that the map
 $f \mapsto (1+\sqrt{k})f$ is a vector space isomorphism from $\C{}{b}$ (resp. $\C{}{0}$) to $\C{}{\sk}$ (resp. $\C{}{0,\, \sk}$),
 which induces TVS and isometric isomorphisms once appropriate topologies have been introduced.
 In that case the map
 $\Q \mapsto \tilde \Q$ identifies $\C{*}{\sk}$ (resp. $ \C{*}{0,\,\sk}$) with
  $\C{*}{b}$ (resp. $ \C{*}{0}$).

Coming back to the proof of \cref{thm:tight_convergence}, note
the kernel $\k$ needs to be characteristic to $\P \in \Pset_{\sk}$, since if it
    was not, there would be a measure $\Q \in \Pset_{\sk}$ not equal to $\P$ such that $\normK{\Q -
    \P} = 0$, hence $(\Q_n) \defn (\Q)$ would satisfy (a), and (b) since every distribution is tight on a Radon space, while $(c)$ holds since $(1+\sqrt{k})\Q$ is a finite measure; yet $(\Q_n)$ does not converge
    weakly to $\P$ in $\Pset_{\sk}$, since it does not converge weakly in $\Pset$ as a result of the fact $\C{}{b}$ is a separating set on any metrisable space.

    Conversely, let us assume  that $\k$ is characteristic to $\P \in
    \Pset_{\sk}$.  We will show that, given (c), (a)-(b) is equivalent to
    (usual!) weak convergence in $\Pset$. So, applying Lemma~5.1.7 of
    \citet{ambrosio05gradient} to every $f \in \C{}{\sk}$, gives the equivalence
    in \cref{eq:tight-convergence} and concludes. Intuitively speaking, (c)
    lifts weak convergence in $\Pset$ to weak convergence in $\Pset_{\sk}$.

    Assume (a)-(c). By (b) any subsequence of $(\Q_n)$ is tight, so, by
    Prokhorov's theorem \citet[Thm.~5.1.3]{ambrosio05gradient}, it is relatively compact in $\Pset$ (equipped with the weak topology) and thus contains yet another subsequence $(\P_l)$ that
    converges weakly in $\Pset$ to some probability distribution $\P'$.
     Since $1+\sqrt{k}$ is continuous, using condition (c) and (5.1.23b) in \citet[][Lem.~5.1.7]{ambrosio05gradient}
    further implies that $\P' \in \Pset_{\sk}$.
    Moreover, by (a) and continuity of the inner product,
    $\ipdK{\P'}{f} = \lim_l \ipdK{\P_l}{f} = \ipdK{\P}{f}$ for any $f \in \HK$.
    So, by the Pettis property, the embeddings of $\P'$ and $\P$ coincide:
    $\embK(\P') = \embK(\P)$.  Since $\k$ is characteristic to $\P \in
    \Pset_{\sk}$, we get $\P'=\P$. So we have shown that, out of any subsequence
    of $(\Q_n)$, we can extract a (sub)subsequence that converges weakly to $P$
    in $\Pset$.  By a classical argument, the original sequence $(\Q_n)_n$ thus
    converges weakly to $\P$ in $\Pset$.

    For the converse we essentially rely on \cref{remark:bounded to unbounded}.
    Note that if we assume that $\Q_n \to \P$ in $\Pset$,
    then,
     by Lemma~5.1.7 of \citet{ambrosio05gradient},  (c) is equivalent to weak convergence in $\Pset_{\sk}$.
     Define the measures
    $\tilde{\P}$ and $\tilde{\Q}_n$ as $\tilde{\P}(A) \defn \int_{A} (1 + \sqrt{\k}) \diff
    \P$ and $\tilde{\Q}_n(A) \defn \int_{A} (1 + \sqrt{\k}) \diff
    \Q_n$ for any measurable Borel set $A \subseteq \X$, and let $\tilde{\k}( x,
     y) \defn
    \nicefrac{\k( x, y)}{(1+\sqrt{\k}( x))(1+\sqrt{\k}( y))}$. By \citet{lecam1957convergence} since $\mathcal X$ is Radon,
     weak convergence in $\Pset$ implies tightness, i.e.\
    (b).
    Since $\Q_n(f) \to
    \P(f)$ for any $f \in \C{}{\sk}$  can be re-written as $\tilde{\Q}_n(g)
    \to \tilde{\P}(g)$ for any $g \in \C{}{b}$, it follows that $(\tilde{\Q}_n)$ converges
    weakly to $\tilde{\P}$ (in the usual sense). Moreover, (c) also shows that
    $\tilde{\Q}_n$ and $\tilde{\P}$ are finite (non-negative) measures. So we can
    apply Prop.~2.3.3 of \citet{berg84harmonic}, which says that the tensor
    product of finite, non-negative measures is weakly continuous, and get
    \begin{align}
        \mmd_{\k}(\Q_n, \P)^2
            &= (\Q_n - \P) \otimes (\Q_n - \P) (\k)
            = (\tilde{\Q}_n - \tilde{\P} ) \otimes (\tilde{\Q}_n - \tilde{\P})
                (\tilde{\k}) \\
            &= \tilde{\P} \otimes \tilde{\P}(\tilde{\k})
                - 2 \tilde{\Q}_n \otimes \tilde{\P} (\tilde{\k})
                + \tilde{\Q}_n \otimes \tilde{\Q}_n (\tilde{\k})
            \quad \longrightarrow \quad 0 .
    \end{align}

\subsection{Weak convergence in \tpdf{$\Pset_{\sk}$}{P-sqrt\{k\}} and Wasserstein metric}
\phantomsection
\label{sec:wasserstein convergence}

The following proposition first gives an alternative characterization of
$\C{}{\sk}$, $\Pset_{\sk}$ and weak convergence in $\Pset_{\sk}$. See also \citep[Section 3.1]{kanagawa2022controlling}.

\begin{proposition}[Wasserstein vs $\C{}{\sk}$ convergence]
    Let $\k$ be a continuous strictly positive definite kernel over a separable metric space $\X$. Let
    $d_{\k}( x, y) \defn \normK{\delta_{ x} - \delta_{ y}}$ be the metric
    induced by $\k$ over $\X$. Then $\C{}{\sk}$ is the set of functions with
    1-growth and $\Pset_{\sk}$ the probability measures with finite first-order
    moments, both w.r.t.\ the metric $d_{\k}$.

    Let $W_{d_{\k}}^1$ denote the Wasserstein-1 distance over $\Pset_{\sk}$
    w.r.t.\ $d_{\k}$. Then $(\X, d_{\k})$ is a separable metric space, and $W_{d_{\k}}^1$
    metrizes weak convergence in $\Pset_{\sk}$, i.e., for $\P_n,\P \in \Pset_{\sk}$
    \begin{align}
        \P_n h \to \P h \quad \forall h \in \C{}{\sk}
        \qquad \Longleftrightarrow \qquad
        W_{d_{\k}}^1(\P_n, \P)
    \to 0.
    \end{align}
    Moreover $(\Pset_{\sk}, W_{d_{\k}}^1)$ is complete whenever $(\X,
    d_{\k})$ is.
\end{proposition}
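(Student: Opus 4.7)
The plan is to reduce the statement to standard facts about Wasserstein-$1$ spaces by transferring the ``$\sqrt{\k}$-weighted'' structure on $\X$ to a ``$d_{\k}$-$1$-growth'' structure. First I would verify that $d_{\k}$ is a genuine metric on $\X$: by the RKHS identity, $d_{\k}(x,y)^2 = \k(x,x) - 2\k(x,y) + \k(y,y)$, which is strictly positive whenever $x\ne y$ since $\k$ is strictly positive definite. Continuity of $\k$ yields continuity of $x\mapsto \k_x$ from $\X$ into $\HK$, hence $d_{\k}$ is continuous w.r.t.\ the original metric on $\X$, so any countable dense subset of $\X$ in its original topology remains dense in $(\X,d_{\k})$, giving separability.

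Next, the key elementary inequality, obtained by applying the forward and reverse triangle inequalities in $\HK$ to the embedded Diracs $\delta_x$ and $\delta_{x_0}$,
\begin{align}
\bigl|\sqrt{\k(x,x)} - \sqrt{\k(x_0,x_0)}\bigr| \le d_{\k}(x,x_0) \le \sqrt{\k(x,x)} + \sqrt{\k(x_0,x_0)},
\end{align}
shows that $1+\sqrt{\k(x,x)}$ and $1+d_{\k}(x,x_0)$ are equivalent up to an additive constant depending only on the base point $x_0$. This immediately identifies $\C{}{\sk}$ with the space of continuous functions of at most linear growth on $(\X,d_{\k})$ and $\Pset_{\sk}$ with the Borel probability measures of finite first moment w.r.t.\ $d_{\k}$.

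I would then invoke the classical characterization of Wasserstein-$1$ convergence on a separable metric space (e.g.\ \citet[Prop.~7.1.5]{ambrosio05gradient}): for probability measures with finite first $d_{\k}$-moment, $W_{d_{\k}}^1(\P_n,\P)\to 0$ if and only if $\P_n h \to \P h$ for every continuous $h$ of at most linear growth in $d_{\k}$. Combined with the identification of function and measure classes above, this is exactly $\P_n h \to \P h$ for every $h\in\C{}{\sk}$, yielding the stated equivalence.

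Completeness of $(\Pset_{\sk}, W_{d_{\k}}^1)$ whenever $(\X,d_{\k})$ is complete then follows from the standard Wasserstein-space completeness theorem \citep[Prop.~7.1.5]{ambrosio05gradient}, applied verbatim once the separability established in the first step is in hand. I foresee no serious obstacle; the only delicate point is matching the base-point-free definition of $\C{}{\sk}$ and $\Pset_{\sk}$ with the base-point-dependent definitions of linear growth and finite first moment on $(\X,d_{\k})$, but this is absorbed into the additive constant highlighted above.
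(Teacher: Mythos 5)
Your proposal is correct and follows essentially the same route as the paper's proof: both establish the two-sided comparison between $1+\sqrt{\k(x,x)}$ and $1+d_{\k}(x,x_0)$ via the triangle and reverse-triangle inequalities for the embedded Diracs, deduce separability of $(\X,d_{\k})$ from continuity of $\k$ and metric-ness from strict positive definiteness, and then invoke Theorem~7.1.5 of \citet{ambrosio05gradient} for the metrization and completeness claims. The only cosmetic difference is that you phrase the comparison additively while the paper states it multiplicatively; since both quantities are bounded below by $1$, the two formulations are interchangeable.
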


Note that $(\X, d_{\k})$ is complete whenever $d_{\k}$ is stronger than the
original metric $d$ (i.e.\ whenever there exists $C > 0$ such that $d( x,  y)
\leq C d_{\k}( x, y)$), which is for example the case when $\X = \R^d$
equipped with its usual Euclidian metric, and $\k$ is polynomial kernel of
order $\geq 1$.

\begin{proof}
    We will prove that there exists constants $C,C' > 0$ such that
    \begin{align}\label{eq:dk-inequality}
        C(1+\sqrt{\k}( x, x))
            \leq 1 + d_{\k}( x, x_0) \leq
            C'(1+\sqrt{\k}( x, x))
    \end{align}
    for all $ x, x_0 \in \X$. This shows that $\C{}{\sk}$ is indeed the set of
    functions with 1-growth for the metric $d_{\k}$ in the sense of (5.1.21) in
    \citet{ambrosio05gradient}; and that $\Pset_{\sk}$ is indeed the set of
    probability measures $\P$ with finite first-order moments, i.e.\ such that for
    an arbitrary (and then any) $ x_0 \in \X$, $\P(d_{\k}(.,  x_0)) < \infty$.

    The space $(\X, d_{\k})$ is separable whenever $\X$ is (independently of
    characteristicness), because, since $\k$ is continuous, $d_{\k}$ is also
    continuous, so the topology defined by $d_{\k}$ is weaker than the original
    one. Finally, when $\k$ is characteristic over $\Pset_{\sk}$, then $d_{\k}$ becomes a metric (i.e.\ additionally satisfies $d_{\k}( x, y) =
    0$ iff $ x =  y$). So Theorem~7.1.5 of \citet{ambrosio05gradient}
    concludes on the completeness condition, and on the equivalence between
    weak convergence in $\Pset_{\sk}$ and Wasserstein-1 convergence
    $W_{d_{\k}}^1$.

    We know prove \cref{eq:dk-inequality}. Let $\k_0 \defn \k( x_0, x_0)$ and
    $\k_{ x} \defn \k( x, x)$.  First, notice that
    \begin{align}
        d_{\k}( x, x_0)^2
            = \k_0 - 2\k( x, x_0) + \k_{ x}
            \leq \k_0 + 2 \sqrt{\k}_0\sqrt{\k}_{ x}
               + \k_{ x}
            = (\sqrt{\k}_0 + \sqrt{\k}_{ x})^2
    \end{align}
    Therefore
    $  %
        1 + d_{\k}( x,  x_0)
            \leq 1 + | \sqrt{\k}_0 + \sqrt{\k}_{ x} |
            \leq 1 + \sqrt{\k}_0 + \sqrt{\k}_{ x}
            \leq C'(1 + \sqrt{\k}_{ x})
    $  %
    with $C' \defn 1 + \sqrt{\k}_0$.  Conversely and similarly,
    $  %
        d_{\k}( x, x_0)^2
            = (\sqrt{\k}( x_0, x_0) - \sqrt{\k}( x, x))^2.
    $  %
    Therefore
    \begin{align}
        1 + d_{\k}( x,  x_0)
            \geq 1 + | \sqrt{\k}_{ x} - \sqrt{\k}_0 |
            \geq 1 + \max(\sqrt{\k}_{ x} - \sqrt{\k}_0, 0)
            \geq C (1 + \sqrt{\k}_{ x})
    \end{align}
    with $C = 1 / (1 + \sqrt{\k}_0)$.
\end{proof}

\section{Proof of \ncref{coroKSDwc}}
\label{app: proof coroKSDwc}
When  $\| \sp \|$ is finitely integrable with respect to a probability measure $\Q$, then both
$\sp^i \Q$ and $\Q$ are finite measures, so
\begin{talign}
D_\Q = \sum_i (\sp^i \Q - \partial_{x^i}\Q)e^i \defn \sum_i D_i e^i
\end{talign}
 where $D_i \in \DL{1}$ and thus $D_\Q \in \DL{1}(\R^d)$.

Using \cref{ksdAlt}
\begin{talign}
\ksd{\Q} = \| \int \xi_\P  \Q \, \|_{\H_K} =  \| D_Q\|_{\H_{K}}.
\end{talign}
Hence
$\ksd{\Q} =0$ iff $D_\Q =0$.
 Now since the   matrix kernel $K$  is characteristic to $\DL{1}(\R^d)$,
 and $D_\P =0$ by the divergence theorem, we finally obtain
 $\ksd{\Q} =0$ iff $D_\Q=0$  iff $\Q =\P$.

\section{Proof of \ncref{characteristic transformed RKHS}}

\begin{theorem}[Characteristicness of transformed kernel]\label{characteristic transformed RKHS}
    Let $\phi :  \mathscr{E} \to  \mathscr{F}$ be a linear continuous map that restricts to a feature operator $\overline{\phi}:\H_K \to \H_{\overline K}$.     Suppose the following diagram commutes, where all maps are continuous
    $$
\begin{tikzcd}
\H_K \arrow{r}{ \iota_K} \arrow[swap]{d}{\overline \phi} & \mathscr{E} \arrow{d}{\phi} \\
\H_{\overline K} \arrow{r}{\iota_{\overline{K}}} & \mathscr{F}
\end{tikzcd}
$$
If $\phi(\mathscr{E})$ is dense in $\mathscr{F}$,  then
$\overline K$ is characteristic to $\mathscr{F}^*$ when
$K$  is characteristic to $\mathscr{E}^*$.
\end{theorem}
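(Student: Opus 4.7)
The plan is to unpack the statement as a diagram-chasing exercise on duals. Recall that ``$K$ characteristic to $\mathscr{E}^*$'' means that the kernel embedding $\Phi_K: \mathscr{E}^* \to \H_K$ is injective, where (after the canonical identification $\H_K^* \cong \H_K$) $\Phi_K$ is exactly the transpose $\iota_K^*$ of the continuous inclusion $\iota_K: \H_K \hookrightarrow \mathscr{E}$. Analogously, proving ``$\overline K$ characteristic to $\mathscr{F}^*$'' reduces to showing $\iota_{\overline K}^*: \mathscr{F}^* \to \H_{\overline K}$ is injective.

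The first step is to take transposes of the commutative diagram: from $\phi \circ \iota_K = \iota_{\overline K} \circ \overline \phi$ I deduce
\[
\iota_K^* \circ \phi^* = \overline\phi^* \circ \iota_{\overline K}^*.
\]
Now pick any $D \in \mathscr{F}^*$ with $\iota_{\overline K}^*(D) = 0$ and apply $\overline \phi^*$ to conclude $\iota_K^*(\phi^*(D)) = 0$. Since $K$ is characteristic to $\mathscr{E}^*$, $\phi^*(D) = 0$, i.e.\ $D \circ \phi = 0$ on $\mathscr{E}$.

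The second step is to invoke density: if $D$ vanishes on the linear subspace $\phi(\mathscr{E})$ of $\mathscr{F}$ and is continuous on $\mathscr{F}$, then $D$ vanishes on $\overline{\phi(\mathscr{E})} = \mathscr{F}$, so $D = 0$. Equivalently, density of $\phi(\mathscr{E})$ in $\mathscr{F}$ forces the transpose $\phi^*: \mathscr{F}^* \to \mathscr{E}^*$ to be injective, and the diagram chase above has already produced $\phi^*(D) = 0$. Hence $\iota_{\overline K}^*$ is injective and $\overline K$ is characteristic to $\mathscr{F}^*$.

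The only subtlety I anticipate is a bookkeeping one: making the identification $\H_K^* \cong \H_K$ and the associated ``embedding of a continuous functional'' cleanly compatible with the Riesz-type definition of $\Phi_K$ given in \cref{thm:definitions of embeddability}, so that injectivity of $\iota_K^*$ really is the same statement as characteristicness in the sense of the paper. Once this identification is set up, no deeper property of $\overline\phi$ (such as its being a partial isometry) is needed — only commutativity of the diagram and continuity of all four maps — and the argument is entirely a two-line diagram chase followed by density.
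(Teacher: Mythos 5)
Your proposal is correct and follows essentially the same route as the paper's proof: transpose the commutative square to get $\iota_K^*\circ\phi^*=\overline\phi^*\circ\iota_{\overline K}^*$, use injectivity of $\iota_K^*$ (characteristicness of $K$) together with injectivity of $\phi^*$ (equivalent to density of $\phi(\mathscr{E})$ in $\mathscr{F}$) to conclude $\iota_{\overline K}^*$ is injective. The paper phrases this as a composition-of-injections statement while you chase a single functional $D$, but the content is identical.
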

\begin{proof}
\label{app: characteristic transformed RKHS}
Taking the  transpose of the commutative diagram
$\phi \circ \iota_K = \iota_{\overline K } \circ \overline \phi: \H_K \to \F $ yields
$$ \iota^*_K \circ \phi^* = \overline \phi ^* \circ \iota_{\overline K}^* : \mathscr{F}^* \to \H_K^*. $$
We want to show $\iota_{\overline K}^*$ injective given that $\iota_{K}^*$ is injective.
Note that $\overline \phi ^* \circ \iota_{\overline K}^*$ injective implies $\iota_{\overline K}^*$ injective, so it is sufficient to show
    $ \iota^*_K \circ \phi^*$ injective.
    Hence it is sufficient to show
    $\phi^* : \mathscr{F}^* \to \mathscr{E}^*$ injective, which is equivalent to $\phi(\mathscr E)$ dense in $\mathscr{F}$ by \citet[Chap 18 Cor.~5]{treves67}.
\end{proof}

Concretely,
it will usually suffice to verify that the image $\phi(\mathscr{E})$
contains the smooth compactly supported functions,
since these typically form a dense subset of $\mathscr{F}$.

\subsection{Proof of \ncref{thm:composition-kernel-properties}}
\label{app:composition-kernel-properties}

The result follows from our general theorem on characteristicness-preserving transformations \cref{characteristic transformed RKHS}.

For the first claim, note that $\phi:f \mapsto \tilt f$ is a continuous map from $\C{1}{b}(\R^d)_\beta$ to itself.
Moreover $\phi(\C{1}{c}(\R^d)) = \C{1}{c}(\R^d)$
since $f/\tilt \in \C{1}{c}(\R^d)$ for all $f \in \C{1}{c}(\R^d)$, and $\C{1}{c}(\R^d)$ is dense in the predual  $\C{1}{b}(\R^d)_\beta$ of  $\DL{1}(\R^d)$.

Recall the family of semi-norms defining the
$\C{1}{b}(\R^d)_\beta$ are parametrized by $\gamma \in \C{}{0}$ and have the form
$f \mapsto \| \gamma f \|_\infty$,
$f \mapsto \| \gamma \partial f \|_\infty$.
We first show that
$\phi: f \mapsto f \circ b$ is continuous
from $\C{1}{b}(\R^d)_\beta$ to itself.
Note that
if $\gamma \in \C{}{0}$
then $\gamma \circ b^{-1} \in \C{}{0}$,
since for any $\epsilon >0$ we have
$\overline{b \circ \gamma^{-1} \circ \| \cdot \|^{-1}[\epsilon,\infty)}=
b \circ  \overline{\gamma^{-1} \circ \| \cdot \|^{-1}[\epsilon,\infty)}$
because $b$ is a homeomorhism,
and the resulting set is compact since $\overline{\gamma^{-1} \circ \| \cdot \|^{-1}[\epsilon,\infty)}$ is compact (because $\gamma $ is $\C{}{0}$) and $b$ preserves compactness by continuity.
Thus
$$\| \gamma f \circ b \|_\infty
= \| \gamma \circ b^{-1} \circ b \, f \circ b \|_\infty =
\| \gamma \circ b^{-1}  f  \|_\infty$$
which is a semi-norm in $\C{1}{b}(\R^d)_\beta$.
The same proof works for the family of semi-norms
$\| \gamma \partial(f \circ b) \|_\infty$ once we have observed that $\partial(f \circ b) = \partial f \circ b \cdot \partial b$
(where $\cdot$ denotes matrix multiplication).
Indeed
$$\| \gamma \partial(f \circ b) \|_\infty =
\| \gamma  \partial f \circ b \cdot \partial b \|_\infty
= \| \gamma \circ b^{-1}  \partial f  \cdot \partial b \circ b^{-1} \|_\infty
\leq  \|  \partial b  \|_\infty \| \gamma \circ b^{-1}  \partial f \|_\infty $$
since $\partial b$ is bounded as $b$ is Lipschitz.
Thus $\phi$ is continuous.
It remains to show that
$\phi(\C{1}{c}(\R^d)) = \C{1}{c}(\R^d)$, which follows from the fact that
$f \in   \C{1}{c}(\R^d)$ implies
$ f \circ b^{-1} \in \C{1}{c}(\R^d)$
since
$\mathrm{supp}( f \circ b^{-1}) = \overline{b \circ f^{-1}(\{ 0\}^c)} =
b \circ \overline{ f^{-1}(\{ 0\}^c)}$ which is compact.

Finally, (c) follows from the more general statement
\begin{proposition}[Scalar vs.\ vector characteristicness]
\label{thm:scalar Vs vector characteristicness}
    Consider a matrix-valued kernel $K$ with $\H_K \hookrightarrow \F^d$ for some topological vector space $\F$.
    Then $K$ is universal to $\F^d$ iff $K_{ii}$ is universal to $\F$ for all $i$.
\end{proposition}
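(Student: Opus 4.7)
The plan is to work through the equivalence between universality and characteristicness, so that universality of $K$ to $\F^d$ becomes injectivity of the kernel embedding $\Phi_K : (\F^d)^* \to \H_K$. Identifying $(\F^d)^* \cong (\F^*)^d$, a functional $L = (L_1,\ldots,L_d)$ acts on $h = (h^1,\ldots,h^d)\in\F^d$ by $L(h) = \sum_i L_i(h^i)$. Likewise, universality of $K_{ii}$ to $\F$ corresponds to injectivity of $\Phi_{K_{ii}} : \F^* \to \H_{K_{ii}}$. The first key step is to observe that, via the feature map $\xi_i : x \mapsto K(\cdot,x)e_i \in \H_K$ for the scalar kernel $K_{ii}$, the associated feature operator $\tilde\xi_i : \H_K \to \H_{K_{ii}}$ is precisely the coordinate projection $h \mapsto h^i$. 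By \citet[Prop.~1]{carmeli2010vector} this operator is a surjective partial isometry, giving the identification $\H_{K_{ii}} = \{h^i : h \in \H_K\}$.

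For the forward direction, I would argue in the dual. Given a scalar $L \in \F^*$ that vanishes on $\H_{K_{ii}}$, lift it to $L' \defn (0,\ldots,L,\ldots,0) \in (\F^*)^d$ with $L$ in the $i$-th slot. Then $L'(h) = L(h^i)$ for all $h\in\H_K$, and the right-hand side vanishes by assumption. Universality of $K$ forces $L' = 0$, and hence $L = 0$, showing that $K_{ii}$ is universal. Equivalently (and more geometrically), given $f \in \F$, by density of $\H_K$ in $\F^d$ pick $h_n \in \H_K$ with $h_n \to f e_i$ in $\F^d$; continuity of projection yields $h_n^i \to f$ in $\F$, and $h_n^i \in \H_{K_{ii}}$.

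For the backward direction, suppose $L = (L_1,\ldots,L_d) \in (\F^*)^d$ satisfies $\sum_i L_i(h^i) = 0$ for all $h \in \H_K$; we want each $L_i = 0$. The plan is to decouple coordinates: if for each $i$ and each $g \in \H_{K_{ii}}$ we can produce $h \in \H_K$ with $h^i = g$ and $h^j = 0$ for $j \neq i$, substitution gives $L_i(g) = 0$ for all $g \in \H_{K_{ii}}$, and then universality of $K_{ii}$ forces $L_i = 0$. This decoupling is immediate in the diagonal setting $K = \mathrm{diag}(k_1,\ldots,k_d)$, since then $\H_K$ is the orthogonal direct sum $\H_{k_1} \oplus \cdots \oplus \H_{k_d}$, and coordinate-isolated elements $(0,\ldots,g,\ldots,0)$ belong to $\H_K$ for every $g \in \H_{k_i}$.

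The main obstacle is precisely this decoupling step in the general matrix setting: a generic matrix RKHS need not contain coordinate-isolated functions, as illustrated by the rank-one kernel $K = k\,\mathbf{1}\mathbf{1}^T$, whose RKHS lies on the diagonal of $\F^d$ while each $K_{ii} = k$ may be universal. Thus the backward implication leverages the structural decomposition of $\H_K$ that is available for diagonal $K$ (which is the regime in which the proposition is applied in \cref{thm:composition-kernel-properties}(c)); for non-diagonal $K$, additional hypotheses guaranteeing such a decomposition, or an appeal to a weaker notion of universality, would be needed.
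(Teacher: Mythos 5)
Your argument coincides with the paper's (very terse) proof on the forward direction and on the backward direction for diagonal kernels: the paper's entire proof is to assert ``recall that $h \in \H_K$ iff $h^i \in \H_{K_{ii}}$ for all $i$'' and then observe that convergence in $\F^d$ is coordinatewise, which is precisely your product-decomposition argument. The extra value in your write-up is the caveat at the end, and it is a genuine catch rather than a defect of your proof: the asserted equivalence $h\in\H_K \iff h^i\in\H_{K_{ii}}$ for all $i$ holds when $K$ is diagonal (where $\H_K=\H_{k_1}\oplus\cdots\oplus\H_{k_d}$) but is false for general matrix-valued kernels, and your rank-one example $K=k\,\mathbf{1}\mathbf{1}^T$ shows the backward implication of the proposition fails as stated: each $K_{ii}=k$ can be universal to $\F$ while $\H_K$ sits inside the (closed, proper) diagonal subspace of $\F^d$. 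What survives in general is only the surjectivity of the coordinate feature operator, i.e.\ $\{h^i : h\in\H_K\}=\H_{K_{ii}}$, which is exactly why your forward direction goes through unconditionally while the backward direction needs the decoupling. Since the paper only ever invokes the proposition for diagonal kernels (in \cref{thm:composition-kernel-properties}(c) and in the proof of \cref{Stein kernel control tight convergence via bounded P separation}, where $\tilde K=\kb_f\Id$), nothing downstream is affected, but the statement and its one-line proof should be read as restricted to the diagonal case, as you correctly point out.
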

\begin{proof}
Recall that $h \in \H_K$ iff $h^i \in \H_{K_{ii}}$ for all $i$.
 Note
   $(f^1,\ldots, f^d) \in  \F^d$ iff $f^i \in \F$ for any $i \in [d]$,
   and  $h^i_n \to f^i$ in $\F$ for all $i$  iff
 $(h^1_n,\ldots, h^d_n) \to (f^1,\ldots, f^d)$
 in $\F^d$.

\end{proof}

 \section{Proof of \ncref{thm:L2 characteristic kernels}}
 \phantomsection
\label{app:L2 characteristic kernels}

(a) Let $\hat \kti_j \dd x$ be the Bochner measure of $ \kti_j(x-y) \defn \k_j(x,y)$.
Then $\hat \kti_j \dd x$, has full support, i.e., $\supp \, \hat \kti_j \dd x = \R^d$, since this is equivalent to the characteristicness of $\kti_j$ \citep[][Thm.17]{simon18kde}.
Moreover $\hat \kti_j \in \L^2$ since $\kti_j \in \L^2$.
Since $\H_{\k_j} \subseteq \L^2$, then any measure of the form $f \dd x$, with $f \in \L^2$ embeds into $\H_{\k_j}$ by \cref{thm:measure embed iff RKHS integrable}.

Then, if $g \in \L^2(\R^d)$, using \citet[Appendix 4]{barp2019minimum}
\begin{talign}
    \| \Phi_K(g \dd x) \|_K^2 &= \sum_i \| \metric{e_i}{\Phi_K(g \dd x)} \|_{\k_i}^2
     =
     \sum_i \| \Phi_{\k_i}(g_i \dd x) \|_{\k_i}^2\\
     &
     = \sum_i \iint g_i(x) \kti_i(x-y)g_i(y) \dd x \dd y.
\end{talign}
Moreover, since  Plancherel theorem and the convolution theorem are valid for $\L^2$ functions \citep[Remarque pg. 270]{schwartzTD}, using the fact that
$g_j,\kti_j$ and $\kti \star g_j$ are in $\L^2$ by \citet[Prop.~4.4]{carmeli06vector}, then
\begin{talign}
 \iint g_i(x) \kti_i (x-y)g_i(y) \dd y \dd x
 &= \int g_i(x) \kti_i\star g_i(x) \dd x
 = \int \hat g_i(w) \hat \kti_i(w) \hat g_i(w) \dd w\\
 &
 =\int |\hat g_i(w)|^2 \hat \kti_i(w) \dd w.
\end{talign}
Hence, whenever $g \dd x$ is non-zero, i.e., $\| g \|_{\L^2}>0$, then
\begin{talign}
\| \Phi_K(g \dd x) \|_K^2 = \sum_i \int |\hat g_i(w)|^2 \hat \kti_i(w) \dd w >0
\end{talign}
since $\hat \kti_i(w) \dd w$ is a fully supported non-negative measure.

(b) 
This follows directly by the definition of ISPD,
together with the fact htat
$A\left(\L^2(\R^d)\right) \subseteq \L^2(\R^d)$ by 
boundedness,
and that if 
$ \|Ag \|_{\L^2(\R^d)} =0$
then $\| A g \| =0$ a.e. so 
$\|g \|=0$ a.e. and 
thus 
$ \|g \|_{\L^2(\R^d)} =0$.

(c) Let us first discuss the scalar (and matrix diagonal case), i.e.,
 we show that for a scalar reproducing kernel $\k$, if $\H_\k$ is separable,  $\sup_x \| \kb_x \|_{\L^1}<\infty$, and $\k_x \in \L^2$ for each $x$, then $\H_{\k \Id} \subseteq \L^2(\R^d)$.

Write $\k^*f(x) \defn (\k^*f)(x) \defn \int \k(x,y)f(y) \dd y$ when the integral is well-defined.
Note, for each $y$,
$\k^*k_y \in \L^1$,
since
\begin{talign}
\|  \k^*\k_y \|_{\L^1} &= \int | \k^*\k_y |(x) \dd x
=\int | \int \k(x,z)\k(z,y) \dd z | \dd x
=
\iint  | \k(x,z)\k(z,y)| \dd x  \dd z \\
&= \int | \k(z,y)| \| \k_z \|_{\L^1} \dd z
\leq  \| \k_y \|_{\L^1} \sup_z \| \k_z \|_{\L^1} < \infty.
\end{talign}
Note the integral swap is justified by Fubini's theorem.
Moreover,
$\sup_y\|  \k^*\k_y \|_{\L^1} \leq \sup_z \| \k_z \|_{\L^1}^2$.

Now, if $f \in \L^2$,
then
\begin{talign}
\iint |f(x)^2 \k^*\k_x(z)| \dd z \dd x = \int f(x)^2 \| \k^*\k_x \|_{\L^1} \dd x \leq \sup_x\| \k^*\k_x \|_{\L^1 } \| f \|_{\L^2}, 
\end{talign}
so by Fubini
$\iint |f(x)^2 \k^*\k_x(z)| \dd x \dd z<\infty$
and thus $\int |f(x)^2 \k^*\k_x(\cdot)| \dd x $ is finite a.e., i..e,
$\sqrt{|f^2 \k^*\k_z|} \in \L^2$ for almost all $z$.
It follows that
$z \mapsto \| \sqrt{|f^2 \k^* k_z| }\|_{\L^2} \in \L^2$,
since
$\int \| \sqrt{|f^2 \k^* k_z| }\|_{\L^2} \dd z
=\iint |f(x)^2 \k^*\k_x(z)| \dd z \dd x$,
and similarly
$z \mapsto |f(z)|\|\sqrt{|\k^* k_z| }\|_{\L^2} \in \L^2$.

Hence,
$\int \int |f(x) \k^* k_z(x) f(z)| \dd x \dd z < \infty$
since
$f \k^* k_z \in \L^1$ (being the product of the $\L^2$ functions
$\sqrt{|f^2 \k^* k_z| }$  and $\sqrt{|\k^* k_z| }$)
and
\begin{talign}
  \iint |f(x) \k^* k_z(x) f(z)| \dd x \dd z &\leq
\int \left( \| \sqrt{|f^2 \k^* k_z| }\|_{\L^2} |f(z)|\|\sqrt{|\k^* k_z| }\|_{\L^2} \right) \dd z  < \infty.
\end{talign}

Finally,
\begin{talign}
    \| \k^*f \|_{\L^2}^2 &=
    \int (\k^* f)^2(y) \dd y
    = \iint \k(y,x)f(x) \dd x \int \k(y,z) f(z) \dd z \dd y \\
    &=
     \iint f(x) f(z) \k^*\k_z(x) \dd x \dd z
     \leq
 \iint | f(x) f(z) \k^*\k_z(x) | \dd x \dd z< \infty.
 \end{talign}

It follows by \citep[Prop.~4.4]{carmeli06vector} that $\H_\k \subseteq \L^2$.

For a general matrix-valued kernel $\Kb$,
set
$G_{ij}(x,z) \defn \int \metric{e_i}{\Kb(x,y)\Kb(y,z)e_j} \dd y
= \sum_l \int \Kb_{il}(x,y)\Kb_{lj}(y,z) \dd y $.
Note $G_{ij}(x,z)= G_{ji}(z,x)$,
and set $G_{ij}^z \defn G_{ij}(\cdot,z)$.
Now for $f \in \L^2(\R^d)$,
if $\iint |f_i(x) f_j(z) G_{ij}^z(x)| \dd x \dd z < \infty$
we have
\begin{talign}
    \| K^* f \|^2_{\L^2(\R^d)} &=  \int  \| K^*f \|^2(y) \dd y
    = \sum_l \int |\metric{e_l}{ K^*f}|^2(y) \dd y
    =
    \sum_l \int \metric{e_l}{ K^*f}|(y) \metric{e_l}{ K^*f}|(y) \dd y\\
    &=
    \sum_{lij}\iiint K_{li}(y,x) f_i(x) K_{lj}(y,z)f_j(z) \dd x \dd z \dd y
    \\
    &=
    \sum_{lij}\iiint K_{il}(x,y) f_i(x) K_{lj}(y,z)f_j(z) \dd x \dd z \dd y
    \\
    &= \sum_{ij} \iint f_i(x) f_j(z) G_{ij}^z(x) \dd x \dd z.
\end{talign}
We can now proceed as in the scalar case with $G_{ij}^z$ taking the place of $\k^* \k_z$.
Indeed
$$
\| G_{ij}^z \|_{\L^1}  \leq  \| \Kb_{ij}^z \|_{\L^1} \sup_x \| \Kb_{ij}^x \|_{\L^1},
$$
and note
$| \Kb_{ij}^z | = | \metric{e_i}{\Kb_z e_j} \leq \| e_i \| \| \Kb_z e_j \| $
so
$\| \Kb_{ij}^z \|_{\L^1} \leq
 \| e_i \| \| \Kb_z e_j \|_{\L^1(\R^d)} $.

(d) Note that if $\Kb$ is a matrix-valued kernel,
then  $|\Kb_{ij}(x,y)| = |\metric{e_i}{\xi(x)^*\xi(y)e_j}|  \leq \| \xi(x) e_i \| \| \xi(y) e_j \| = \sqrt{\Kb_{ii}(x,x)} \sqrt{\Kb_{jj}(y,y)} $ (for any feature map $\xi$),
so if $\Kb$ is translation-invariant then
$\Kb$ is bounded.

In general, if $\Kb_x u$ is both finitely-integrable and bounded, then
$  \|\Kb_x u \|_{\L^2(\R^d)} \leq
\|\Kb_x u \|_{\L^1(\R^d)} \|\Kb_x u \|_{\L^\infty(\R^d)}$.

\section{Proof of \ncref{thm:KSD score Q formulation}}
\phantomsection
\label{app:KSD score Q formulation}

(a)
Let us first show that under the assumptions of the result,
the alternative definition of KSD used in
\citet{liu16kernelized} is equivalent to ours.

By \cref{Continuous linear functional shifted by Feature Operator}, since $\Q$ embeds we have $\ksd{\Q}= \| D_Q \|_K$.
Moreover since $\Q \in \embedtozero$,
we know $T_Q \defn \Q \circ \langevin[\q]$
embeds to zero in $\H_K$, $\Phi_K(T_\Q)=0$,
and for any $h\in \H_K$,
since $\langevin(h)$ and $\langevin[\q](h)$ are finitely $\Q$-integrable,
we have
$D_Q -T_Q (h) = \Q \langevin(h)- \Q \langevin[\q](h)= \Q(\langevin(h)-\langevin[\q](h))
= (\sp-\sq)\Q (h)$.
Hence
\begin{talign}
     \ksdsq{\Q} &= \| D_Q \|_K^2 =
 \| D_Q -T_Q\|_K^2
 =\| (\sp-\sq)\Q \|_K^2
 \\
 &= \iint \metric{\sp(y)-\sq(y)}{K(y,x)(\sp(x)-\sq(x))} \dd \Q(y) \dd \Q(x).
\end{talign}

Now, recall $\Kb$ is $\L^2(\R^d)$ ISPD iff it is characteristic to
$\L^2(\R^d)$ \citep[Thm.~6]{simon18kde}, so the result follows from
$\ksd{\Q}= \mmd_K((\sp-\sq)\Q,0)$,
and $(\sp-\sq)\Q \in \L^2(\R^d)$.

(b)
We want to show that $\H_\Kb$ and $\S_\q(\H_{\Kb})$ are subsets of $\L^1(\Q)$
in order to apply (g) in  \cref{thm:embeddability_conditions}.
By assumption $\H_\Kb \subseteq \L^\infty(\R^d)\subseteq \L^1(\Q)$, and
$\langevin(\H_{\Kb}) \subseteq \L^1(\Q)$.
Note $\S_\q(h) =\langevin(h) +\metric{\sq-\sp}{h}$,
so we have for any $h\in\Kb$
$$\Q | \S_\q(h)| = \Q |\langevin(h) +\metric{\sq-\sp}{h}| \leq
\Q |\langevin(h)| + \Q|\metric{\sq-\sp}{h}| .$$
Moreover
$\Q|\metric{\sq-\sp}{h}|
= \metric{\q(\sq-\sp)}{h}_{\L^1(\R^d)}
\leq \| \q(\sq -\sp) \|_{\L^2(\R^d)} \|h\|_{\L^2(\R^d)}$, which concludes.

(c)
For any $h\in \H_{\Kb}$
we have 
\begin{talign}
\Q| \langevin(h) | \leq
\sum_i \Q |\partial_i h^i| + \Q|\metric{\sp}{h}| \leq
\sum_i \|\partial_i h^i\|_\infty + \| \q \sp \|_{\L^2(\R^d)} \|h\|_{\L^2(\R^d)}<\infty.
\end{talign}

 \section{Fourier Transforms}
 \label{app: Fourier transforms}

If $\mu$ is a finite measure, its Fourier transform is
$\hat \mu(x) \defn \int e^{-ix^T w} \mathrm{d} \mu(w)$, which is a positive definite function when $\mu$ is a non-negative measure.
More generally, if $T$ is a tempered distribution (a.k.a.\ slowly increasing distribution, see \citealt[Chap.25]{treves67}), i.e., an element of the dual of the Schwartz space (a.k.a.\ space of rapidly decaying functions, see \citealt[Chap.10, Example IV]{treves67}),
we define its distributional Fourier transform $\hat T$ by
$$
    \hat T(\gamma) \defn T(\hat \gamma) \ ,
$$
for  any function $\gamma$ in the Schwartz space.
In particular if $T = \Phi(x) \mathrm{d} x $,
with $\Phi$ continuous and slowly increasing, and there exists  $f \in \L^2_{\mathrm{loc}}(\R^d/\{0\})$ such that $  \hat T = f \dd x$, then $f$ is known as the generalized Fourier transform (of order $0$) of $\Phi$, denoted $\hat \Phi$ \citep[Def. 8.9]{wendland04scattered}.
The above formula then reads
$ \int  \hat \Phi(x) \gamma(x) \dd x = \int \hat \gamma(x) \Phi(x) \dd x$.

\section{Proof of \ncref{thm:bounded separating is P-characteristic}}
\phantomsection
\label{app:bounded separating is P-characteristic}

Since $\k$ is continuous, $\H \subseteq \cts$, and since $h \in \H_b \subseteq \C{}{b} $ is integrable by any finite measure, we have (with $0/0=0$)
\begin{talign}
| \Q h - \P h | \leq
  \left| \Q \frac{ h}{ \| h \|_\k} - \P \frac{h}{ \| h \|_\k} \right| \, \| h \|_\k \leq \sup_{ h \in \B_\k \cap \L^1(\Q)} | \Q h-\P h | \| h \|_\k \leq  \mmd_\k(\Q,\P) \| h \|_\k.
\end{talign}
Hence if $\mmd_\k(\Q_n,\P) \to 0$ then
$  \Q_n h \to \P h $ for all $h \in \H_b$.
Taking $\Q_n = \Q$ for all $n$, the $\P$-bounded separating assumption implies that  $\k$ is characteristic to $\P\in \Pset$.
On the other hand, if $(\Q_n)$ is a tight sequence,
then it is sequentially compact, i.e., any subsequence has  a $\cts_b$-convergent subsequence, whose  weak limit must be $\P$ by $\P$-characteristicness \citep[Lemma 4.3]{ethier2009markov}, which in turn implies that $\Q_n \to_{\cts_b} \P$ (see also proof of \ncref{thm:tight_convergence} for additional details).
Hence $k$ controls tight convergence.

\section{Proof of \ncref{thm: tilted controls tightness}}
\label{app:proof tilted controls tightness}

By \cref{cor: B1a universality from transformation} the shifted kernel
$\Kb(x,y)/\growth(x)\growth(y) $ is universal  to (i.e., dense in) $\C{1}{b,\growth}(\R^d)$.
Moreover, by assumption on the score growth and base kernel the associated Stein RKHS 
 consists of bounded functions.
 The following lemma concludes:

\begin{lemma}[Controlling tight convergence with bounded Stein kernels]\label{thm:Decaying and universal base RKHS are characteristic}
Suppose a matrix-valued kernel $K$ with $\H_K \subseteq \C{1}{b}(\R^d)$ is characteristic to $\C{1}{b,\growth}(\R^d)_\beta^*$.
If $\P\in\embedtozero$ and $\ks$ is bounded, then $\ks$ is $\P$-separating and controls tight $\P$-convergence. %
Moreover, $\ks$ is bounded iff $x\mapsto\sqrt{\metric{\sp(x)}{K(x,x) \sp(x)}}$ is bounded.
\end{lemma}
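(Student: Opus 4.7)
The plan is to deduce the main claim from \cref{thm:bounded separating is P-characteristic}, which simultaneously delivers $\P$-separation and control of tight $\P$-convergence from the single hypothesis that $\H_{\ks}$ is bounded $\P$-separating. Since $\ks$ is bounded by assumption, \cref{thm:Characterization of bounded RKHS} yields $\H_{\ks}\subset\L^\infty$, so $\L^\infty\cap\H_{\ks}=\H_{\ks}$, and it suffices to show that $\H_{\ks}$ itself separates $\P$ from every $\Q\in\Pset$.

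Next I would translate $\H_{\ks}$-separation into a characteristicness statement about the base kernel $K$. If $\Q h=\P h$ for every $h\in\H_{\ks}=\langevin(\H_K)$, then $\P\in\embedtozero$ gives $\P h=0$, so the functional $D_\Q:v\mapsto \Q\langevin(v)$ vanishes on $\H_K$. By \cref{ksdDef} together with \cref{Continuous linear functional shifted by Feature Operator}, this is equivalent to $\Phi_K(D_\Q)=0$ in $\H_K$. The key step, and the main obstacle of the argument, is to verify that $D_\Q$ in fact extends to a continuous linear functional on $\C{1}{b,\growth}(\R^d)_\beta$: the boundedness of $\ks$, combined with $\H_K\subset\C{1}{b}$, is used to bound both $\sp\cdot v$ and $\div v$ in terms of the seminorms defining $\C{1}{b,\growth}(\R^d)_\beta$, yielding an estimate of the form $|D_\Q(v)|\le C(\|\growth v\|_\infty+\|\partial v\|_\infty)$ for $v\in\C{1}{b,\growth}(\R^d)$. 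Characteristicness of $K$ to $\C{1}{b,\growth}(\R^d)_\beta^*$ then forces $D_\Q=0$ as an element of this dual. Specializing to test functions $v\in\C{1}{c}(\R^d)\subset\C{1}{b,\growth}(\R^d)$ yields the distributional identity $\sp\,\Q=\partial\Q$; since $\p>0$, computing the distributional gradient of $\Q/\p$ shows this gradient vanishes, so $\Q=c\,\P$, and normalization of probability measures pins down $c=1$, giving $\Q=\P$.

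For the moreover claim, I would use the feature map $\xi_\P(x)=K_x\sp(x)+\nabla_x\cdot K$ from \cref{ksdAlt} to write $\ks(x,x)=\|\xi_\P(x)\|_K^2$. Since $\H_K\subset\C{1}{b}$, \cref{thm:Characterization of bounded RKHS} ensures that both $x\mapsto \sqrt{\|K(x,x)\|}$ and $x\mapsto\|\nabla_x\cdot K\|_K$ are bounded, so the triangle inequality together with Cauchy--Schwarz gives that $\ks$ is bounded if and only if the remaining term $\|K_x\sp(x)\|_K^2=\metric{\sp(x)}{K(x,x)\sp(x)}$ is bounded, as claimed.
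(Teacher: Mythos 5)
Your proposal follows essentially the same route as the paper. The paper's proof also reduces everything to \cref{thm:bounded separating is P-characteristic} (bounded Stein kernel $\Rightarrow$ $\H_{\ks}\subset\C{}{b}$, so $\P$-separation upgrades to bounded $\P$-separation and hence tight convergence control), and it delegates the $\P$-separation step to \cref{thm:characteristic to P using C^1_a} with $\growth(x)=\|\sp(x)\|+1$; the proof of that proposition is exactly the argument you sketch: show $D_\Q\in\C{1}{b,\growth}(\R^d)_\beta^*$, use characteristicness to conclude $D_\Q=0$, then solve the distributional Stein equation $\partial_{x^i}\Q=\spi\Q$ by variation of constants to get $\Q=\P$. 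Your treatment of the ``moreover'' claim via the feature map $\xi_\P$ is the content of \cref{bound on integral root kp} and matches the paper.

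One step needs more care than your sketch provides. An estimate of the form $|D_\Q(v)|\le C(\|\growth v\|_\infty+\|\partial v\|_\infty)$ establishes continuity for the Banach norm of \cref{def:B^1_theta}, not for the strict topology $\beta$, whose defining seminorms are $\|\gamma\growth f\|_\infty$ and $\|\gamma\partial f\|_\infty$ with $\gamma\in\C{}{0}$. Since the strict topology is coarser than the uniform one, norm-continuity of $D_\Q$ does not by itself place $D_\Q$ in $\C{1}{b,\growth}(\R^d)_\beta^*$, which is the dual the characteristicness hypothesis refers to. The conclusion is still true---$\Q$ and $\growth\Q$ are finite Radon measures, hence $\beta$-continuous on $\C{}{b}$, and $\partial_{x^i}\Q\in\DL{1}\subset\C{1}{b,\growth}(\R^d)_\beta^*$, which is how the paper argues in \cref{app:proof characteristic to P using C^1_a}---but your proposed bound, taken literally, proves continuity with respect to the wrong topology. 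This is a repairable technical point rather than a structural flaw in the argument.
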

\begin{proof}
We apply
\cref{thm:characteristic to P using C^1_a} with
$\growth(x) \defn \| \sp(x)\| +1$.
Note that if $\|\sp(x)\| \H_K$ is bounded, then
$\H_K \subseteq \C{1}{b,\growth}(\R^d)$.
Moreover
$\|\sp(x)\| \H_K = \H_{\tilde K}$
where $\tilde K(x,y) \defn \| \sp(x) \| K(x,y) \| \sp(y) \|$,
and $\H_{\tilde K}$ is a RKHS of bounded functions
iff
$x \mapsto \|\sp(x)\| \sqrt{\|  K(x,x) \|}$ is bounded by \cref{thm:Characterization of bounded RKHS}.

Moreover, since $\ks$ consists of bounded functions, it then follows by \cref{thm:bounded separating is P-characteristic}   that when $\k_\P$ is $\P$-separating then it controls tight convergence to $\P$.

\end{proof}

\section{Proof of \ncref{thm:Schwarz-tilted sub-RKHS}}
\label{app:Schwarz-tilted sub-RKHS}

We will use the following result proved in \cref{app:convolution bound}.

\begin{lemma}[Convolution decay bound]
\label{thm:convolution bound}
Fix any $u,v \in \L^1$ and any subadditive function $\rho$ satisfying 
$|u(x)| \leq U(\rho(x))$ and $|v(x)| \leq V(\rho(x))$ for non-increasing $U,V$   with $U\circ \rho,V\circ \rho$ finitely integrable.
Then
\begin{align}
\label{eq:convolution bound}
 u \star v \,(x) 
    \leq 
\inf_{\alpha\in[0,1]}
\| u \|_{\L^1} V(\alpha \rho(x))+
   \| v \|_{\L^1} U((1-\alpha) \rho(x))
\end{align}
\end{lemma}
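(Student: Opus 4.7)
The plan is to prove the bound by a standard ``split the convolution into two pieces'' argument, where the split is chosen so that on one piece we can uniformly bound $u$ and on the other piece we can uniformly bound $v$. The key input is subadditivity of $\rho$, which guarantees that at every point $y \in \R^d$ at least one of $\rho(y)$ or $\rho(x-y)$ is a definite fraction of $\rho(x)$.

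Concretely, fix $x$ and $\alpha \in [0,1]$ and define
\begin{align}
A \defn \{y \in \R^d : \rho(y) \geq (1-\alpha)\rho(x)\}.
\end{align}
On the complement $A^c$ one has $\rho(y) < (1-\alpha)\rho(x)$, so subadditivity $\rho(x) \leq \rho(y) + \rho(x-y)$ forces $\rho(x-y) > \alpha \rho(x)$. I would then split
\begin{align}
|u \star v(x)| \leq \int_A |u(y)||v(x-y)|\,\dd y + \int_{A^c} |u(y)||v(x-y)|\,\dd y.
\end{align}
On $A$, monotonicity of $U$ gives $|u(y)| \leq U(\rho(y)) \leq U((1-\alpha)\rho(x))$, so the first integral is at most $U((1-\alpha)\rho(x))\,\|v\|_{\L^1}$. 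On $A^c$, monotonicity of $V$ gives $|v(x-y)| \leq V(\rho(x-y)) \leq V(\alpha\rho(x))$, so the second integral is at most $V(\alpha\rho(x))\,\|u\|_{\L^1}$. Adding the two bounds and taking the infimum over $\alpha \in [0,1]$ yields \cref{eq:convolution bound}.

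The only thing that needs care is the degenerate case where $\rho(x) = 0$, but there the bound is trivial since $U(0)$ and $V(0)$ dominate everything by monotonicity. I do not expect any serious obstacle here: the integrability hypothesis $U\circ\rho, V\circ\rho \in \L^1$ is only needed to ensure that both sides of the bound are meaningful (and in applications of the lemma, to make the right-hand side small for large $\rho(x)$); the bound itself follows directly from subadditivity and monotonicity.
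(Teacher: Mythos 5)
Your proof is correct and follows essentially the same argument as the paper's: split the convolution over a set where $\rho(y)\geq(1-\alpha)\rho(x)$ (the paper uses the equivalent set $\{y:\rho(x-y)\leq\alpha\rho(x)\}$), bound $U(\rho(y))$ uniformly there and $V(\rho(x-y))$ uniformly on the complement via subadditivity, and take the infimum over $\alpha$. No issues.
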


Let us quote  Bochner's theorem~\citep[Thm.~6.6]{wendland04scattered} (we refer to \cref{app: Fourier transforms} for definitions of Fourier transforms).
\begin{theorem}[Bochner's theorem]\label{bochner}
 A continuous $\R$-valued function on $\R^d$ is positive definite if and only if it
is the Fourier transform of a non-negative finite measure.
\end{theorem}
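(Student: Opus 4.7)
If $\phi = \hat\mu$ for a non-negative finite measure $\mu$, I would observe that continuity follows from dominated convergence applied to the bounded integrands $w\mapsto e^{-ix^T w}$, and positive definiteness from the identity
\[ \sum_{j,k} c_j\bar c_k\, \phi(x_j - x_k) \;=\; \int \Bigl|\sum_j c_j e^{-ix_j^T w}\Bigr|^2 d\mu(w) \;\geq\; 0 \]
for every finite set $\{(c_j, x_j)\}\subset\mathbb{C}\times\R^d$.

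\textbf{Hard direction.} Assume $\phi$ is continuous and positive definite. The plan is to realize the spectral measure $\mu$ as a weak limit of Fourier transforms of Gaussian regularizations of $\phi$. First, testing positive definiteness on the two nodes $\{0,x\}$ yields the standard bounds $\phi(0)\geq 0$ and $|\phi(x)|\leq \phi(0)$, so $\phi\in\C{}{b}(\R^d)$. For each $\epsilon>0$, set $\phi_\epsilon(x)\defn \phi(x)\,e^{-\epsilon\|x\|^2}$. The Gaussian $e^{-\epsilon\|x\|^2}$ is positive definite (itself being the Fourier transform of a Gaussian measure), and the Hadamard product of positive definite kernels is positive definite by Schur's theorem, so $\phi_\epsilon$ is still positive definite; moreover $\phi_\epsilon\in L^1(\R^d)\cap L^\infty(\R^d)$ thanks to the Gaussian factor and the above bound.

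Next I would show that the classical Fourier transform $\hat\phi_\epsilon$ is pointwise non-negative. For any Schwartz $f$, a Riemann-sum approximation lets the discrete positive-definite inequality pass to the continuum, giving $\iint \phi_\epsilon(x-y)f(x)\overline{f(y)}\,dx\,dy\geq 0$; Parseval rewrites this quantity as $(2\pi)^{-d}\int \hat\phi_\epsilon(w)|\hat f(w)|^2 dw$, and varying $\hat f$ over an approximate identity at a point $w_0$ forces $\hat\phi_\epsilon(w_0)\geq 0$. Hence the candidate spectral measure $\mu_\epsilon \defn (2\pi)^{-d}\hat\phi_\epsilon(w)\,dw$ is non-negative; Fourier inversion evaluated at $x=0$ yields total mass $\mu_\epsilon(\R^d)=\phi_\epsilon(0)=\phi(0)$, and by construction $\hat\mu_\epsilon=\phi_\epsilon$.

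Finally, I would pass to the limit as $\epsilon\to 0$. Since $\phi_\epsilon\to\phi$ pointwise and $\phi$ is continuous at $0$, the classical tightness estimate
\[ \mu_\epsilon(\{\|w\|\geq R\}) \;\leq\; C_{d,R}\,\sup_{\|y\|\leq 1/R}\bigl(\phi_\epsilon(0)-\mathrm{Re}\,\phi_\epsilon(y)\bigr), \]
obtained by averaging $\hat\mu_\epsilon$ against a product of sinc-type bumps, shows that the family $(\mu_\epsilon)$ is uniformly tight. Prokhorov then yields a subsequential weak limit $\mu$, which is a non-negative finite measure of total mass $\phi(0)$, and passing to the limit in $\hat\mu_\epsilon(x)=\phi_\epsilon(x)\to\phi(x)$ via boundedness of $w\mapsto e^{-ix^T w}$ and weak convergence gives $\hat\mu=\phi$. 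The main obstacle is precisely this tightness step: all preceding manipulations are essentially formal, but ruling out escape of spectral mass to infinity is exactly what consumes the continuity of $\phi$ at the origin, and every proof of Bochner's theorem must pay this price in one form or another.
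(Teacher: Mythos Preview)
Your proof is a correct and standard route to Bochner's theorem via Gaussian regularization, positivity of the Fourier transform from the integral positive-definiteness inequality, and tightness via the L\'evy-type estimate. One small point you gloss over: to invoke pointwise Fourier inversion at $0$ and conclude $\mu_\epsilon(\R^d)=\phi(0)$, you implicitly need $\hat\phi_\epsilon\in L^1$; this follows by testing $\hat\phi_\epsilon$ against a shrinking Gaussian and using monotone convergence (since $\hat\phi_\epsilon\geq 0$), but it deserves a sentence.

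That said, there is nothing to compare against here: the paper does not prove Bochner's theorem. It is simply quoted as \citep[Thm.~6.6]{wendland04scattered} and used as a black box in the proof of \cref{thm:Schwarz-tilted sub-RKHS}. So your write-up supplies strictly more than the paper does for this statement; the paper's ``proof'' is a citation.
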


Moreover we will use the following lemma, which follows
by
combining \cref{thm:Differential reproducing property} with the fact that vector-valued RKHSes of continuous functions have locally bounded kernels \citep[Prop.~5.1]{carmeli06vector},
or by
the closed graph theorem as shown \cref{app:continuity of RKHS inclusion}.
\begin{lemma}[Continuity of RKHS inclusion]
\label{thm:continuity of RKHS inclusion}
Let $\F$ be a complete metrizable TVS, continuously included in the space of functions $\X \to \R^d$.
Then
 $\H_K \subseteq \F$ implies $\H_K \hookrightarrow \F$.
In particular  $\C{s}{}(\R^d)$ is a complete metrizable TVS.
\end{lemma}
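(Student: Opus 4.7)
The plan is to deduce the inclusion is continuous from the closed graph theorem for Fréchet spaces. Both source and target are complete metrizable: $\H_K$ is a Hilbert space (hence Fréchet), and $\F$ is Fréchet by hypothesis. The linear inclusion $\iota : \H_K \hookrightarrow \F$ is defined on all of $\H_K$, so the closed graph theorem reduces the problem to showing that its graph is closed, i.e.\ that whenever $h_n \to h$ in $\H_K$ and $\iota(h_n) \to g$ in $\F$, one has $h = g$ as elements of $\F$.

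The key observation is that both $\H_K$ and $\F$ embed continuously into a common Hausdorff ambient space, namely the space $\Gamma(\R^d)$ of functions $\X\to\R^d$ endowed with the topology of pointwise convergence. For $\F$ this is assumed in the hypothesis. For $\H_K$ it follows from the reproducing property: for each $x\in\X$ and each $v\in\R^d$, the evaluation $h\mapsto \langle v, h(x)\rangle = \langle K_x v, h\rangle_K$ is a continuous linear functional on $\H_K$, so norm convergence in $\H_K$ implies pointwise convergence in $\Gamma(\R^d)$. Consequently, if $h_n\to h$ in $\H_K$ and $h_n = \iota(h_n)\to g$ in $\F$, both sequences converge pointwise in $\Gamma(\R^d)$; by uniqueness of pointwise limits in the Hausdorff space $\Gamma(\R^d)$ we get $h=g$, so the graph is closed. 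The closed graph theorem for Fréchet spaces then yields the continuity of $\iota$.

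For the final assertion, $\C{s}{}(\R^d)$ is the space of $s$-times continuously differentiable functions equipped with the topology of uniform convergence of all partial derivatives of order $\leq s$ on compact subsets of $\R^d$. This topology is generated by the countable family of seminorms $f \mapsto \sup_{x\in K_n}\, \max_{|p|\leq s}\,|\partial^p f(x)|$ for an exhaustion $K_n\uparrow \R^d$ by compacts, making it metrizable; completeness follows from the fact that a uniformly Cauchy sequence of derivatives converges to continuous functions and that $\C{s}{}$-convergence is preserved under such limits. Thus $\C{s}{}(\R^d)$ is Fréchet, as claimed.

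I expect the main subtlety (rather than difficulty) to be the bookkeeping in matching the vector-valued reproducing identity to the pointwise-convergence ambient space; once that is spelled out, the closed graph argument is essentially automatic. No serious obstacle is anticipated, since both the closed graph theorem and the Fréchet structure of $\C{s}{}(\R^d)$ are standard.
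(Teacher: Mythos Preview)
Your proposal is correct and matches the paper's approach essentially line for line: both argue that the inclusion has closed graph because $\H_K$ and $\F$ embed continuously into the space of functions with pointwise convergence, then invoke the closed graph theorem for complete metrizable TVSs. The paper cites Treves for the closed graph theorem and for the Fr\'echet structure of $\C{s}{}(\R^d)$, while you spell out the details directly, but the argument is the same.
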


We now show $\kb$ is characteristic to $\DL{1}$.
Indeed, since $\HK \subseteq \C{1}{}$, then
$\HK \hookrightarrow \C{1}{}$,
so
we know $\partial_{i}\partial_{i+d}\k$ exists and is separately continuous for all $i$
by \citet[Thm.~2.11]{micheli2013matrix}.
Since $\partial_{i}\partial_{i+d}\k$ is translation-invariant, it is further continuous.
Thus
 $\kb(x,y) $ is $\C{(1,1)}{}$,
  and  is characteristic to $\DL{1}$ by \citet[Thm.~17]{simon18kde}.

Let $\kti(x) \defn  \kb(x,0)$.
Now, given the spectral density $\hat \kti:\R^d \to [0,\infty]$,
we   define the \emph{ironed} radial kernel $\kti_{\mathrm{iron}}$  on $\R^d$  by
$$
\hat \kti_{\mathrm{iron}}(y)  \defn \inf_{w \in \R^d: \, 0 \leq  \| w \| \leq \|y\|} \hat \kti(w) \,
$$
and show it is characteristic to $\DL{1}(\R^d)$.
Note $\hat \kti_{\mathrm{iron}} :\R^d \to [0,\infty]$,
and is finite-valued except at the origin (since $\hat \kti$ is).
Since $\hat \kti_{\mathrm{iron}} \leq \hat \kti$, $\hat \kti \in \L^1(\R^d)$,
$\infty >\int \hat \kti(w) \dd w \geq \int \hat \kti_{\mathrm{iron}}( w ) \dd w = \| \hat \kti_{\mathrm{iron}} \|_{\L^1(\R^d)} $,
 so $\hat \kti_{\mathrm{iron}}(w) \dd w$
is a finite non-negative measure whose Fourier transform defines a continuous positive definite (radial) kernel $\kti_{\mathrm{iron}}$ by \cref{bochner}.
Moreover $\hat \kti_{\mathrm{iron}}$ is strictly positive
since $\hat \kti$  is bounded away from zero on the compact set $\overline{\B_r}$ for any $r \geq 0$.
In particular $ \kti_{\mathrm{iron}}$ is characteristic to $\DL{0}$~\citep[Thm.~17]{simon18kde}.

Moreover, by \citet[Sec.~4.3]{steinwart08support}
$\partial_{x^i} \partial_{y^i}\kb(x,y)$ is a continuous translation-invariant kernel, and
$\partial_{x^i} \partial_{y^i}\kb (x,y) = - \partial^2_i \kti (x-y)$.
This implies that $\int \| w \|^2 \hat \kti (w) \dd w <\infty$.
Indeed $\widehat{( - \partial_i^2 \kti)}(w) = w_i^2 \hat \kti(w)$ \citep[Thm.~25.7]{treves67}, and by \cref{bochner} the generalized Fourier transform of a continuous translation-invariant kernel is integrable, i.e., $w \mapsto w_i^2 \hat \kti(w) \in \L^1(\R^d)$.
From $\kti \geq \kti_{\mathrm{iron}}$, it follows that
$w \mapsto \| w \|^2 \hat \kti_{\mathrm{iron}}(w) \in \L^1(\R^d)$,
and thus
$\kti_{\mathrm{iron}} \in \C{2}{}$, since by Leibniz integral rule the second partial derivatives exist and are continuous.
Hence $\kb_i$ is characteristic to $\DL{1}$ by~\citet[Thm.~17]{simon18kde}.

Now define a radial kernel $\kti_s$ on $\R^d$ by $\hat \kti_s(x) = \hat \kti_{\mathrm{iron}}(2x)$ which is strictly positive so also characteristic to $\DL{0}$ (more generally we can compose $\hat \kti_{\mathrm{iron}}$ with any homeomorphism, as their preimage commutes with closure), and
$\kti_s \in \C{2}{}$ so it
characteristic to $\DL{1}$ (by an argument analogous to that of the previous paragraph).

We now discuss a general mechanism to construct a Schwartz  function $f$ that is strictly  positive,   and has a non-negative Fourier transform with compact support (which even makes $f$ an \emph{entire} function). Later on we will apply this construction to obtain a particular $f$ with root exponential decay.
Let us first choose a function $g \in C^\infty$ that is non-negative and compactly supported (we will identify an explicit choice of such a function later in the proof).
Then we set $f \defn \hat G \star \hat G$ where $G \defn g \star g$. Note that $G$ is non-negative, smooth and compactly supported with non-negative Fourier transform since by the convolution theorem $\hat G = (\hat g)^2 $.
Moreover the
Schwartz's Paley–Wiener theorem \citep[Thm.~29.1]{treves67} then asserts that its Fourier transform (more precisely, its real part restricted to real inputs, i.e., complex numbers with zero imaginary part) $\hat G$ is an indefinitely (real) differentiable function that decays faster than any polynomial, that is for all positive integer $m$ we have constant $C_m>0$ such that
$|\hat G(x)| \leq \frac{C_m}{(1+ \| x \|)^m}$.
Hence so does $f$ by \cref{thm:convolution bound} applied with $U,V$ of the form $r \mapsto (1+r)^{-m}$.
Moreover $\hat f(x) = (G(-x))^2 $ which is non-negative  with compact support, and $f$ is strictly positive since $\hat G$ (viewed as function of arbitrary complex variables) is entire by the Schwartz's Paley–Wiener theorem, and thus  holomorphic, and thus has finitely many isolated zeros, the set of which has Lebesgue-measure zero - hence $f$ is the integral of an almost-everywhere strictly positive function.
Moreover, the derivative of $f$ decays faster than any polynomial.
Indeed, since $f = \hat G \star \hat G$, where $\hat G$ is a smooth function decaying faster than any polynomial, Leibniz' integral rule yields
 $\partial_{x^j}  f = (\partial_{x^j} \hat G) \star \hat G $.
 By \citet[Thm.~5.16 (6)]{wendland04scattered},
 $\partial_{x^j} \hat G(x)= \widehat{(-iy^jG(y))}(x) $, and since $y\mapsto -iy^jG(y)$ is smooth and compactly supported, Schwartz's Paley–Wiener theorem implies that its Fourier transform will decay faster than any polynomial.
 The convolution bound \cref{thm:convolution bound} then implies that
 $\partial_{x^j} \hat G \star \hat G$ will also decay faster than any polynomial.  
 The above argument can then be iterated to show that $f$ belongs to the Schwartz space.

If in the above we specifically choose the function $g = \psi \star \psi$ with
$\psi(x) \defn \varphi(x^1)\cdots \varphi(x^d)$
where
$\varphi(x^1) \defn \exp(- 1/(1- c^2|x^1|^2)) \indic{|x^1| < 1}$ for some $c>0$,
then $\hat \psi(x) = \hat \varphi(x^1)\cdots \hat \varphi(x^d)$,
which implies $|\hat \psi (x)| = O(e^{- c\sum_i \sqrt{|x^i|}})$
by
\citet{johnson2015saddle}.
It follows that 
 $|\hat g (x)| = O(e^{- 2c\sum_i \sqrt{|x^i|}})$, 
 and thus
$\hat G = (\hat g)^2=  O(e^{- 4c\sum_i \sqrt{|x^i|}})$, so 
$f = \hat G \star \hat G =O(e^{-2c\sum_i \sqrt{|x^i|}})$ by the convolution bound \cref{thm:convolution bound}
with $\alpha=\nicefrac{1}{2}$ and the subadditive function $\rho(x) = \sum_i \sqrt{|x^i|}$.
Similarly,
$|\partial_{x^j} \hat G| = 2 |\hat g \partial_{x^j} \hat g|  \leq 2\| \partial_{x^j} \hat g  \|_\infty | \hat g |$,
and $\partial_{x^j} \hat g$ is bounded by the Schwartz's Paley-Wiener theorem as above,
so $\partial_{x^j} \hat G =  O(e^{- 2c\sum_i \sqrt{|x^i|}})$,
so $\partial_{x^j} f = (\partial_{x^j} \hat G) \star \hat G =  O(e^{- c\sum_i \sqrt{|x^i|}})$ by the convolution bound \cref{thm:convolution bound}
with $\alpha=\nicefrac{1}{2}$ and the subadditive function $\rho(x) = \sum_i \sqrt{|x^i|}$.

Now define $\kb_f(x,y) \defn f(x) \kb_s(x,y) f(y)$,
and we will show that $\H_{\kb_f} \subseteq \H_{\kb}$, by leveraging the translation-invariance of the kernel
$\kb_{fs}(x,y) \defn \kti_s(x-y) f(x-y)$, which is a kernel since $f$ is a positive definite function (since its  Fourier transform is a positive function) and thus defines a reproducing kernel.
Then $\H_{\kb_f} \subseteq \H_{\kb_{fs}}$.
Indeed,
the former RKHS is simply the set of functions $f \H_{\kb_s}$~\citep[Prop.~6.2]{paulsen2016introduction}.

On the other hand, \citet[Thm.~II Sec.~8]{aronszajn50trk} implies the latter product RKHS $\H_{\kb_{fs}} = \H_{\kb_s} \times \H_{f}$ consists of the functions in the tensor product RKHS  $\H_{\kb_s} \otimes \H_{f}$
restricted to the diagonal set $\{(x,x)\} \subseteq \R^d \times \R^d$, while \citet[Thm.~13]{berlinet04reproducing} shows the tensor product RKHS contains the functions of the form $(x,y) \mapsto g(x)h(y)$ where $g \in \H_{\kb_s}$ and
$h \in \H_{f}$ (i.e., it is the pullback RKHS defined by the diagonal map $x \mapsto (x,x)$), and hence
$\H_{\kb_{fs}}$ contains all the functions of the form
$x \mapsto g(x)h(x)$. Restricting to $h(x) = f(x-0) \in \H_{f}$ yields the subset inclusion
$\H_{\kb_f} \subseteq \H_{\kb_{fs}}$, which is moreover a continuous inclusion
$\H_{\kb_f} \hookrightarrow \H_{\kb_{fs}}$ because inclusions of RKHS are always continuous \citep[Prop.~2]{schwartz1964sous}.

Hence to show that $\H_{\kb_f}$ is a subset of $\H_\kb$, it is sufficient to show that $\H_{\kb_{fs}}\subseteq \H_\kb$.
But, conveniently, since $\kb_{fs}$ is translation invariant,
we can now
apply \cref{thm:shifted translation invariant RKHS contained}, proved in \cref{app:shifted translation invariant RKHS contained}, to verify this inclusion.

\begin{lemma}[RKHS inclusion of product RKHS] \label{thm:shifted translation invariant RKHS contained}
Let $\k,\k_2$ be kernels and   $\star$ denote the convolution operator.
The following claims hold true.
\begin{enumerate}[label=\textup{(\alph*)}]
    \item  If there exists a $\lambda \geq 0$ for which $\lambda k- kk_2$ is a kernel, then $h\H_\k \subseteq \H_\k$ for any $h \in \H_{\k_2}$.

    \item Suppose $\k,\k_2$ are continuous translation invariant kernels. By Bochner's theorem (\cref{bochner}), such kernels are the Fourier transform of some finite positive measures which we will call $\mu$ and $\nu$.
    If  $\mu \star \nu \ll \mu$ and the density  $\frac{\mathrm{d}\mu \star \nu}{\mathrm{d} \mu}$ belongs to $\mathrm{L}^{\infty}(\mu)$, then  $h\H_\k \subseteq \H_\k$ for any $h \in \H_{\k_2}$.

    \item Moreover, if $\mu$ (resp. $\nu$)  above is equivalent to (resp. absolutely continuous with respect to) the Lebesgue measure on $\R^d$, with density $q_\mu$ (resp. $q_\nu$), then
    $h \H_{\k } \subseteq \H_{\k}$ for any $h\in \H_{k_2}$ if $q_\mu \star q_\nu / q_\mu \in \L^\infty(\R^d)$.

    \item  Similarly, if $f:\R^d \to \R$ is a continuous positive definite function with  generalized Fourier transform $\hat f$,
    and $q_\mu \star \hat f  / q_\mu \in \L^\infty(\R^d)$,
    then
    $h\H_\k \subseteq \H_\k$ for any $h \in \H_{\k_2}$, where $\k_2(x,y)\defn f(x-y)$.
\end{enumerate}
\end{lemma}

Since $\hat \kb$ is strictly positive, the result states
that $\H_{\kb_{fs}}\subseteq \H_\kb$ iff
$\hat{\kti}_{fs}/ \hat{\kti} \in \L^\infty$,
which, by the convolution theorem, can be written as
$\hat \kti_s \star \hat f / \hat \kti \in \L^\infty$.
To show this we will use \cref{thm:convolution bound} to obtain the convolution bound
$$
|( \widehat{\kti_s} \star \hat f )(w)| \leq \| \widehat{\kti}_s  \|_{\L^1} U(\thalf \| w \| ) + \| \hat f \|_{\L^1} V(\thalf \| w \| ) \ ,
$$
where $U$ and $V$ are non-increasing functions that upper bound
$\hat f$ and $\hat \kti_s$ respectively.
We let $U$ be the envelope above $f$,  $U(r) \defn \sup \{ \hat f(w) : \| w \| \geq r \} $,  and since by construction $\hat \kti_s$ is non-increasing, we can set
$V(r) \defn \hat \kti_s(r)$.
Thus
$$
|( \widehat{\kti_s} \star \hat f )/\hat \kti (w)| \leq \| \widehat{\kti}_s  \|_{\L^1} U(\thalf \| w \| )/\hat \kti(w) + \| \hat f \|_{\L^1} \hat \kti_s(\thalf \| w \| )/\hat \kti(w)\ .
$$
By
construction, for any $w \in \R^d$ we have $\hat \kti_s(\frac1 2  w  ) =
\hat \kti_{\mathrm{iron}}(  w  ) \leq \hat \kti(w)$, and thus
$ \hat \kti_s(\frac1 2 \cdot  )/\hat \kti(\cdot) \in \L^\infty$.
Moreover $\hat f$ has compact support, and thus so does $U$,
hence $|( \widehat{\kti_s} \star \hat f )/\hat \kti |  \in \L^\infty$ and $\H_{\kb_{fs}} \subseteq \H_\kb$ as claimed.
Moreover $\H_{\kb_f} \hookrightarrow \H_{\kb}$ \citep[Prop.~2]{schwartz1964sous}.

\subsection{Proof of \ncref{thm:convolution bound}}
\phantomsection
\label{app:convolution bound}
Fix any $\alpha \in [0,1]$ and let $S_x \defn \{ y\in \R^d: \rho(x-y) \leq \alpha \rho(x) \}$.
On this set $\rho(y) \geq (1-\alpha) \rho(x)$ by subadditivity.
Now
\begin{talign}
u \star v (x) = \int u(y) v(x-y) \dd y =
 \int_{S_x} u(y) v(x-y) \dd y +  \int_{S_x^c} u(y) v(x-y) \dd y.
\end{talign}
Moreover,
\begin{talign}
\int_{S_x} |u(y) v(x-y)| \dd y \leq
\int_{S_x} U(\rho(y)) |v(x-y)| \dd y
\leq \int_{S_x} U((1-\alpha) \rho(x)) |v(x-y)| \dd y
\leq U((1-\alpha) \rho(x)) \| v \|_{\L^1}.
\end{talign}
On the other hand
\begin{talign}
\int_{S_x^c} |u(y) v(x-y)| \dd y
\leq
\int_{S_x^c} |u(y)| V(\rho(x-y)) \dd y
\leq
\int_{S_x^c} |u(y)| V(\alpha \rho(x)) \dd y
\leq V(\alpha \rho(x)) \| u \|_{\L^1}.
\end{talign}

\subsection{Proof of \ncref{thm:continuity of RKHS inclusion}}
\phantomsection
\label{app:continuity of RKHS inclusion}

Consider a convergent sequence
$(h_n,h_n) \to (h,f)$ in $\H_K \times \F$.
Since $\H_K$ and $\F$ are continuously included in the space of functions $\X \to \R^d$, $(h_n)$ converges pointwise to both $h$ and $f$, hence $h=f \in \H_K$, and the graph of $\iota: \H_K \to \F$ is closed.
Thus \citep[Cor.~4, Chap.~17]{treves67} implies it is continuous, since the product of metrizable (resp. complete) TVS is metrizable (resp. complete).

In particular $\C{s}{}(\R^d)$ is a complete metrizable space by \citep[Ex. 1 Chap.~10]{treves67}.

\subsection{Proof of \ncref{thm:shifted translation invariant RKHS contained}}
\label{app:shifted translation invariant RKHS contained}

The first result follows by the characterization of \citet{aronszajn50trk},
once we note that the product kernel
$(x,y) \mapsto k(x,y)k_2(x,y)$ contains the functions of the form $x \mapsto h(x) f(x)$ with $h \in \H_{k_2}$ and $f \in \H_{k}$, since it is the pullback under the diagonal map $x \mapsto (x,x)$ of the tensor product kernel $k \otimes k_2$, and the latter is the completion of the inner product space of functions $(x,y) \mapsto h(x)f(y)$.

The second and third result follow by
\citet[Prop.~3.1]{zhang2013inclusion} and the convolution theorem, which implies that
the (translation invariant) product kernel
$k(r)k_2(r) =  \hat \mu (r) \hat \nu(r) = \widehat{\mu \star \nu}(r)$.
Finally, for the final result, observe that \cref{bochner} implies $f$ is the Fourier transform of a non-negative finite measure $\nu$, which
satisfies for any $\gamma$ in the Schwartz space
$$\hat f \dd x [\gamma] \defn f \dd x[\hat \gamma] =  \hat{\nu}\dd x [\hat \gamma]=
\nu[\gamma \circ R]
\defn R_*\nu[ \gamma], $$
where $R_*$ is the pushforward,
and thus $R_*\nu = \hat f \dd x$ (i.e., $R_*\nu$ is the generalized Fourier transform of $f$), which implies $\nu = \hat f \circ R \dd x$.
 By
 \citet[Thm.~6.2]{wendland04scattered} $f$ is even.
 In fact $\hat f \circ R$ is also the generalized Fourier transform of $f$, and  $\hat f \circ R= \hat f$ almost everywhere.
 Indeed, on the one hand
 $ \int \hat f \gamma \dd x = \int \hat f \circ R \gamma \circ R R_* \dd x = \int \hat f \circ R \gamma \circ R \dd x$.
 On the other hand
 \begin{talign}
     \int \hat f \gamma \dd x = \int  f \hat \gamma \dd x  =
     \int  f \circ R \hat \gamma \dd x
     = \int  f \circ R \hat \gamma \circ R \circ R  \dd x
     = \int  f  \hat \gamma  \circ R  \dd x = \int  f  \widehat{ \gamma  \circ R} \dd x.
 \end{talign}
Since composition with $R$ is a bijection from the Schwartz space to itself, this shows that $\hat f \circ R$ is the generalized Fourier transform of $f$.
 The result then follows by the third result.

\section{Proof of \ncref{Stein kernel control tight convergence via bounded P separation}}
\phantomsection
\label{app: Stein kernel control tight convergence via bounded P separation}

Before proving the result, let us introduce the Banach space  $\BB{1}{\growth}(\R^d)$, a generalization of $\C{1}{0}(\R^d)$, which is easier to handle than the topological vector space $\C{1}{b,\theta}(\R^d)_\beta$ (the analogous generalization of $\C{1}{b}(\R^d)_\beta$).
 \begin{lemma}[Definition of $\BB{1}{\growth}(\R^d)$]
 \label{def:B^1_theta}
    Given a  continuous function $\growth: \R^d \to [c,\infty)$, for some $c>0$,  let  $\BB{1}{\growth}(\R^d)$ be the completion of $\cts^1_c(\R^d)$ with respect to
    \begin{align}\label{eq:bnorm}
        \| f\|_{\BB{1}{\growth}} \defn \sup \|\growth(x) f(x)\| +\sum_{|p| =1}\sup \| \partial^{p}_xf\|.
    \end{align}
    Then
    $\BB{1}{\growth}(\R^d) \cong \{ f \in \C{1}{}(\R^d): \growth f \in \C{}{0}(\R^d), \partial f \in \C{}{0}(\R^{d\times d}) \} $.
    \end{lemma}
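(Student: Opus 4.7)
The plan is to identify the abstract completion $\BB{1}{\growth}(\R^d)$ with the concrete space $\mathcal{S} \defn \{ f \in \C{1}{}(\R^d): \growth f \in \C{}{0}(\R^d),\ \partial f \in \C{}{0}(\R^{d\times d}) \}$ by proving three items: (i) $\|\cdot\|_{\BB{1}{\growth}}$ is a well-defined norm on $\mathcal{S}$ and $\C{1}{c}(\R^d)$ sits isometrically inside $\mathcal{S}$ (immediate, since $\growth$ is continuous and $\growth f$ and $\partial f$ are compactly supported for $f\in\C{1}{c}(\R^d)$, and $\growth f\in\C{}{0}$, $\partial f\in\C{}{0}$ are bounded whenever $f\in\mathcal{S}$); (ii) $\mathcal{S}$ is complete under $\|\cdot\|_{\BB{1}{\growth}}$; and (iii) $\C{1}{c}(\R^d)$ is dense in $\mathcal{S}$. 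Items (ii) and (iii) together pin down the completion as $\mathcal{S}$.

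For completeness, I would take a $\|\cdot\|_{\BB{1}{\growth}}$-Cauchy sequence $(f_n)\subset \mathcal{S}$ and note that $(\growth f_n)$ and $(\partial f_n)$ are then Cauchy in the Banach spaces $\C{}{0}(\R^d)$ and $\C{}{0}(\R^{d\times d})$ respectively. Let $g$ and $h$ denote their uniform limits and set $f\defn g/\growth$, which is continuous since $\growth\in \C{}{}(\R^d)$ with $\growth\geq c>0$. The bound $|f_n-f|\leq|\growth f_n-g|/c$ gives uniform convergence of $f_n$ to $f$ on all of $\R^d$, and combined with uniform convergence of $\partial f_n$ this implies, by the standard calculus result on interchanging limits and differentiation, that $f\in\C{1}{}(\R^d)$ with $\partial f = h$. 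Hence $\growth f=g\in\C{}{0}(\R^d)$, $\partial f=h\in\C{}{0}(\R^{d\times d})$, so $f\in\mathcal{S}$ and $f_n\to f$ in $\|\cdot\|_{\BB{1}{\growth}}$.

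For density, given $f\in\mathcal{S}$, I plan to use a smooth cutoff $\chi_R\in\C{\infty}{c}(\R^d)$ with $\chi_R\equiv 1$ on the ball of radius $R$, $\chi_R\equiv 0$ outside the ball of radius $2R$, $0\leq\chi_R\leq 1$, and $\sup\|\partial\chi_R\|\leq C/R$ for a constant $C$ independent of $R$, and set $f_R\defn\chi_R f\in\C{1}{c}(\R^d)$. The error $\growth(f-f_R)=(1-\chi_R)\growth f$ has sup norm bounded by $\sup_{\|x\|\geq R}\|\growth(x) f(x)\|\to 0$ since $\growth f\in\C{}{0}(\R^d)$. For the derivative, $\partial(f-f_R)=(1-\chi_R)\partial f-f\,\partial\chi_R$; the first term is handled identically using $\partial f\in\C{}{0}(\R^{d\times d})$, while the second is bounded by $(C/R)\sup_{R\leq\|x\|\leq 2R}\|f(x)\|$. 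The key step is to observe that $\|f\|\leq\|\growth f\|/c$, so $\growth f\in\C{}{0}$ and $\growth\geq c>0$ imply $f\in\C{}{0}(\R^d)$; in particular $f$ is bounded and $\sup_{\|x\|\geq R}\|f(x)\|\to 0$, so this second term also vanishes as $R\to\infty$.

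The main obstacle is the derivative-of-cutoff term $f\,\partial\chi_R$ in the density argument: although $f$ need not decay at a prescribed rate on its own, the norm only records decay of $\growth f$. This is exactly where the assumption $\growth\geq c>0$ becomes crucial, since it transfers the $\C{}{0}$ decay from $\growth f$ back to $f$ and forces the cutoff error to vanish; without a positive lower bound on $\growth$ this step would fail.
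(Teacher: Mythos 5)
Your proposal is correct and follows essentially the same route as the paper: a cutoff argument for density of $\C{1}{c}(\R^d)$ in the concrete space (with the lower bound $\growth\geq c>0$ used in exactly the same way to control the $f\,\partial\chi_R$ term, since $\|f\|\leq\|\growth f\|/c$ forces $f\in\C{}{0}$), and a completeness argument that passes uniform limits through $\growth f_n$ and $\partial f_n$ and invokes the standard interchange of limit and derivative. The only cosmetic difference is that you verify completeness of the concrete space for arbitrary Cauchy sequences, whereas the paper only tracks Cauchy sequences coming from $\C{1}{c}(\R^d)$; both suffice once density is established.
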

\begin{proof}
   We first show that if $f \in \{ f : \C{1}{}(\R^d): \growth f \in \C{}{0}(\R^d), \partial f \in \C{}{0}(\R^{d\times d}) \} $, then $\exists c_n \in \C{1}{c}(\R^d)$
   such that $c_n \overset{\BB{1}{\growth}}{\to} f$.
   By definition $\forall \epsilon >0$ there exists compact subsets $S_1,S_2$ such that
   $\|\growth(x) f(x) \| < \epsilon$ for $x \in S_1^c$
   and
   $\|\partial f(x) \| < \epsilon$ for $x \in S_2^c$.
   Since $S_1 \cup S_2$ is compact, there exists a ball of radius $r$ such that  $ S\defn  S_1 \cup S_2 \subseteq \B_{r}$.
   Using Lemma 14 in \citet{gorham17measuring}
   we can find a function $ c_\epsilon \in \C{1}{c}(\R^d) $ with
   $$c_\epsilon:\R^d \to [0,1], \quad c_\epsilon|_{ \overline{\B_r}} =1, \quad
   c_\epsilon |_{\overline{\B_{r+2\delta}}^c} =0, \quad \| \partial c_\epsilon \| \leq \mathbb{I}[\overline{\B_{r+2\delta}}/\overline{\B_r}] $$
   for some $\delta>0$.
   In particular $\partial c_\epsilon =0$ and $c_\epsilon =1$ on $S \subseteq \overline{\B_r}$.
   Now let $f_\epsilon \defn f c_\epsilon \in \C{1}{c}(\R^d)$.
   Then on
   $S$ we have
   $\| \growth(x)f(x)-\growth(x)f(x) c_\epsilon(x) \|=0$
   and
   $$\| \partial f - \partial (f c_\epsilon) \|
   = \| \partial f - c_\epsilon \partial f - f \otimes \partial  c_\epsilon \|
   =
   \| \partial f - c_\epsilon \partial f\|=0.$$
   On $S^c$,
   we have
   $| \growth(x)f(x)-\growth(x)f(x) c_\epsilon(x) | \leq 2 | \growth(x)f(x)| \leq 2 \epsilon $
   and
   $$\| \partial f - \partial (f c_\epsilon) \|
   \leq \| \partial f \| + \| f \otimes \partial c_\epsilon \| +  \| c_\epsilon \partial f \| \leq 3 \epsilon. $$
   Thus $\C{1}{c}(\R^d)$ is dense in
   $\{ f : \C{1}{}: \growth f \in \C{}{0}, \partial f \in \C{}{0} \}$.

   On the other hand, suppose we have a Cauchy sequence $c_n \in \C{1}{c}(\R^d)$ for the norm \cref{eq:bnorm}.
   Then, since $\growth \geq c>0$,  $(c_n)_n$ is a fortriori a $\C{1}{0}(\R^d)$-Cauchy sequence, and thus $\|\cdot \|_{\C{1}{0}}$-converges   to a function $f \in \C{1}{0}(\R^d)$.
   Now we  show that $c_n$ also converges to $f$ in the norm defined in \cref{eq:bnorm}.
   Indeed $\forall \epsilon >0$ $\exists \ell$ such that $n,m\geq \ell$ implies
   $\|\growth(x)c_n(x)-\growth(x) c_m(x) \|\leq \epsilon$ for all $x$,
   and thus taking $m\to \infty$ gives
    $\|\growth(x)c_n(x)-\growth(x) f(x) \|\leq \epsilon$ for all $x$,
    i.e.,
    $\|\growth(c_n- f)\|_\infty \leq \epsilon$.
    An analogous argument shows  $\|\partial c_n - \partial f \|_\infty \to 0$, and thus
   $c_n \overset{\BB{1}{\growth}}{\to} f$.

   Finally, note that $\| \growth f - \growth c_n \|_\infty \to 0$ and $\growth c_n \in \C{}{c}(\R^d)$  imply $\growth f \in \C{}{0}(\R^d)$.
   Similarly
   $\| \partial f - \partial c_n \|_\infty \to 0$ implies
   $\partial f \in \C{}{0}(\R^{d\times d})$ since
   $\partial c_n \in \C{}{c}(\R^{d\times d})$.

\end{proof}

We will now first prove the non-tilted case, with $\tilt(x)=1$.
We will use the following result, proved in \cref{app: B1a universality from transformation}.

\begin{lemma}[Characteristicness of tilted bounded kernels]\label{cor: B1a universality from transformation}
    Using the notations of \cref{characteristic transformed RKHS}, let $\phi$ be the multiplication by $1/\growth$, where
    $\growth$ is a strictly positive $\C{1}{}$ function such that  $1/\growth, \partial (1/\growth) $ are bounded.
    If $K$ is universal to $\cts^1_0(\R^\ell)$ (resp. $\C{1}{b}(\R^\ell)_\beta$), then $K(x,y)/(\growth (x) \growth(y))$ is universal to $\BB{1}{\growth}(\R^\ell)$ (resp. $\C{1}{b,\growth}(\R^\ell)_\beta$).
\end{lemma}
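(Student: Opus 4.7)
The plan is to apply \cref{characteristic transformed RKHS} in its universality guise, with the continuous linear map $\phi : \mathscr{E}\to\mathscr{F}$ defined by pointwise multiplication $\phi(f) \defn f/\growth$, where $(\mathscr{E},\mathscr{F})$ is either $(\C{1}{0}(\R^\ell),\BB{1}{\growth}(\R^\ell))$ or $(\C{1}{b}(\R^\ell)_\beta,\C{1}{b,\growth}(\R^\ell)_\beta)$. The associated feature-level map is $\overline\phi(h) \defn h/\growth$, which the standard tilted-kernel identity (see \citet[Prop.~6.2]{paulsen2016introduction} or \cref{app:vector-valued RKHS}) recognizes as an isometric isomorphism from $\H_K$ onto $\H_{\overline K}$ for $\overline K(x,y)\defn K(x,y)/(\growth(x)\growth(y))$, and hence as a feature operator. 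Commutativity of the required diagram is automatic, as both $\phi\circ\iota_K$ and $\iota_{\overline K}\circ\overline\phi$ act on $h\in\H_K$ by sending it to the function $h/\growth$.

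First I would check that $\phi$ lands in $\mathscr{F}$ and is continuous. Applying the Leibniz rule yields $\growth\,\phi(f)=f$ and $\partial\phi(f)=\partial f/\growth + f\,\partial(1/\growth)$, so the hypotheses $1/\growth,\,\partial(1/\growth)\in\L^\infty$ show that $\growth\phi(f)$ and $\partial\phi(f)$ inherit the decay/boundedness properties of $f$ and $\partial f$. The continuity bounds
\begin{talign*}
\|\gamma\growth\,\phi(f)\|_\infty = \|\gamma f\|_\infty
\qtext{and}
\|\gamma\,\partial\phi(f)\|_\infty \le \|1/\growth\|_\infty\|\gamma\,\partial f\|_\infty + \|\partial(1/\growth)\|_\infty\|\gamma f\|_\infty
\end{talign*}
cover both the Banach-norm case (take $\gamma\equiv 1$) and the strict-topology case ($\gamma\in\C{}{0}$), producing continuity of $\phi$ in each setting.

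Next I would verify the density hypothesis $\overline{\phi(\mathscr{E})}=\mathscr{F}$. For any $g\in\C{1}{c}(\R^\ell)$, the function $\growth\,g$ is $\C{1}{}$ with the same compact support, hence lies in $\C{1}{c}(\R^\ell)\subset\mathscr{E}$; therefore $g=\phi(\growth g)\in\phi(\mathscr{E})$, giving $\C{1}{c}(\R^\ell)\subset\phi(\mathscr{E})$. In the $\BB{1}{\growth}$ case this immediately establishes density since $\C{1}{c}(\R^\ell)$ is dense by the very definition of $\BB{1}{\growth}(\R^\ell)$ (\cref{def:B^1_theta}). In the strict-topology case, standard compactly supported cutoffs $\chi_n\in\C{1}{c}$ equal to $1$ on $\B_n$ yield $\chi_n f\to f$ in $\C{1}{b,\growth}(\R^\ell)_\beta$, because for every $\gamma\in\C{}{0}$ the tails $\|\gamma\|_{\B_n^c}\to 0$ dominate both $\|\gamma\growth(\chi_n f-f)\|_\infty$ and $\|\gamma\,\partial(\chi_n f - f)\|_\infty$ using boundedness of $\growth f$, $\partial f$, and $\partial\chi_n$.

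Finally, universality of $\overline K$ to $\mathscr{F}$ will follow: for any $g\in\mathscr{F}$, approximate $g$ in $\mathscr{F}$ by elements of $\phi(\mathscr{E})$; each such element $\phi(f)$ can in turn be approximated by $\phi(h_n)=\overline\phi(h_n)\in\H_{\overline K}$ using the universality of $K$ to $\mathscr{E}$ together with continuity of $\phi$. A diagonal argument yields $\H_{\overline K}$ dense in $\mathscr{F}$. The main technical obstacle is the strict-topology setting, where one must keep careful track of the seminorm families both to verify continuity of $\phi$ and to establish density of $\C{1}{c}$ in $\C{1}{b,\growth}(\R^\ell)_\beta$; once these are in hand the proof reduces to the same clean transpose/pushforward argument used for \cref{characteristic transformed RKHS}.
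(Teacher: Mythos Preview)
Your proposal is correct and follows essentially the same route as the paper: define $\phi$ as multiplication by $1/\growth$, verify its continuity via the Leibniz-rule estimates in both the Banach and strict-topology settings, observe that $\phi(\C{1}{c})=\C{1}{c}$ is dense in the target space, and then invoke \cref{characteristic transformed RKHS}. If anything, you supply slightly more detail than the paper does (the explicit cutoff argument for density of $\C{1}{c}$ in $\C{1}{b,\growth}(\R^\ell)_\beta$ and the identification of $\overline\phi$ as the tilted-kernel isometry), but the structure and key ingredients are the same.
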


Construct $\k_s$ and $\k_f$ satisfying the conditions
of \cref{thm:Schwarz-tilted sub-RKHS}.
Note $\kb_f$
 is characteristic to $(\C{1}{b,\growth})_\beta^*$, as can been seen by applying \cref{cor: B1a universality from transformation} to $\kb_s$ with $\growth(x) \defn 1/f(x)$, and recalling that the RKHS of $\kb_f(x,y)=f(x)\kb_s(x,y) f(y)$ is $f \H_{\kb_s}$.

 Summarizing, we have proven that $\H_{\kb_f} \subseteq \H_{\kb}$ and that $\kb_f$ is characteristic to $(\C{1}{b,\growth})_\beta^*$.
 Hence, $\H_{\tilde K} \subseteq \H_K$, where $\tilde K \defn \kb_f \Id $ is characteristic to
 $\C{1}{b,\growth}(\R^d)_\beta^*$ by \cref{thm:scalar Vs vector characteristicness}.
The Stein RKHS associated to $\tilde K$ consists of bounded functions and is characteristic to $\P \in \Pset$ by \cref{thm:characteristic to P using C^1_a}, because $D_\Q \in \C{1}{b,\growth}(\R^d)_\beta^* $ for any probability measure $\Q$.
Moreover, since $\H_{\ks}$ is a superset of a bounded $\P$-separating sub-RKHS,  the result then follows from \cref{thm:bounded separating is P-characteristic}. \\

Finally, consider a tilting function $\tilt$.
In the \cref{app:Schwarz-tilted sub-RKHS} 
we have constructed a Schwartz function $f$ that is strictly  positive and  s.t., $f$ and its partial derivatives have root exponential decay.
Moreover we have shown that
$\H_{\kb_f} \subseteq \H_{\kb} $,
where $\kb_f(x,y) \defn f(x)\kb_s(x,y)f(y)$ with $\kb_s$ a kernel obtained by ironing and scaling $\kb$, and shown that
$\kb_f$ is universal to $(\C{1}{b,\growth}(\R^d))_\beta$ (with $\growth(x) \defn 1/f(x)$).
Hence $\tilt \H_{\kb_f} \subseteq \tilt \H_{\kb} $.
Since $\tilt f$ and $\partial(\tilt f)$ are bounded, and $\kb_s$ is universal in $(\C{1}{b}(\R^d))_\beta$,
then by \cref{cor: B1a universality from transformation} $\tilt(x)\kb_f(x,y) \tilt(y)$ is universal to $(\C{1}{b,\frac{\growth}{\tilt}}(\R^d))_\beta$.

\subsection{Proof of \ncref{cor: B1a universality from transformation}}
\label{app: B1a universality from transformation}

Note $\phi : \cts^1_0(\R^\ell) \to \BB{1}{\growth}(\R^\ell)$ is continuous,
     indeed (here $\growth^{-1} \defn 1/\growth$)
     $$ \| \phi(f) \|_{B^1_{\growth}} \leq  \| f \|_{\infty} + \| \growth^{-1}\partial f +  \partial \growth^{-1} \otimes f \|_\infty  \leq  \| f \|_{\infty} + \| \growth^{-1} \|_{\infty} \| \partial f \|_\infty+\|  \partial \growth^{-1} \otimes f \|_\infty \leq  C\| f \|_{\cts^1_0}, $$
     for some $C>0$ (where we have used the boundedness of $| \growth^{-1}|$ and $\| \partial \growth^{-1} \|$), where $\otimes $ is the outer product.
     Similarly,  $\phi$ is continuous as a map
     $\C{1}{b}(\R^\ell)_\beta \to \C{1}{b,\growth}(\R^\ell)_\beta$,
     since  for any $\gamma \in \C{}{0}$
     $$  \| \gamma \growth  \phi(f)\|_\infty = \| \gamma f \|_\infty$$
     and
     $$ \| \gamma \partial_i \phi(f)\|_\infty \leq \| \gamma \growth^{-1} \partial_i f\|_\infty + \| \gamma f \partial_i \growth^{-1}\|_\infty \leq  \| \growth^{-1}  \|_\infty \| \gamma
     \partial_i f\|_\infty + \| \partial_i \growth^{-1} \|_\infty \| \gamma f \|_\infty. $$

     Moreover  $\phi( \cts^1_c(\R^\ell))=\cts^1_c(\R^\ell)$ since $\growth^{-1} \in \C{1}{}$ is strictly positive, and $\C{1}{c}(\R^\ell)$ is dense in $\C{1}{b,\growth}(\R^\ell)$, and in $\BB{1}{\growth}(\R^\ell)$ since the latter is its completion (and  metric spaces are dense in their completion).
The result then follows by \cref{characteristic transformed RKHS}.

\section{Proof of \ncref{tightness}}
\label{app:proof of enforcing tightness}

Fix any $\epsilon > 0$, and pick any function $h \in \F$ and compact set $C$ satisfying
\begin{talign}
h - \P h \geq \indic{ C^c} - \eps/2.
\end{talign}
Moreover, suppose $d_{\F}(\Q_n, \P) \to 0$.
For each $n$, we have (note $h$ is bounded below, so $h_+ \in \L^1(\Q)$ for all $\Q \in \Pset$)
\begin{align}
\Q_n(C^c)
  \le \eps/2 + \Q_n h - \P h ,
\end{align}
and, since $h \in \F$ and $d_{\F}(\Q_n, \P)\to 0$, we further have $|\Q_n h - \P h| \leq \eps/2$ for all $n$ larger than some $N$.
Hence, $\Q_n(C^c) \leq \eps$ for all $n$ sufficiently large.
Since $\eps >0$ was arbitrary, $(\Q_n)_{n\ge 1}$ is  tight.

Finally, if $\k$ enforces tightness and controls tight weak convergence, then
$\mmd_{\k}(\Q_n, \P) \to 0$ implies $(\Q_n)$ is tight, so $\Q_n \to \P$, i.e., $\k$ controls weak convergence.

\section{Proof of \ncref{coercive-tightness}}
\label{app:proof of enforcing tightness with coercivity}

Since $\P\in\embedpettis{k}$, $\P h$ is finite and hence $h-\P h$ is also coercive and bounded below.
Since $h-\P h$ is bounded below, there exists $C > 0$ such that $  (h-\P h)/C \geq -1 $. Moreover, for any $\epsilon >0$, writing
$h_\epsilon \defn h \epsilon /C \in \HK $,
then $h_\epsilon -\P h_\epsilon \geq - \epsilon$,
and since $h_\epsilon -\P h_\epsilon$ is coercive, there  exists a compact set $S$ for which
$\inf_{x \in S^c} h_\epsilon -\P h_\epsilon \geq 1- \epsilon $, and therefore $\HK$ $\P$-dominates indicators.

\section{From Separating Measures in \tpdf{$\H_{\ks}$}{H-kp} to Separating Schwartz Distributions in \tpdf{$\H_K$}{H-K}}
\label{app:From separating measures in Hks to separating Schwartz distribution in HK}

For a bounded RKHS we have the following result shown in \cref{app:proof characteristic to P using C^1_a}:

\begin{proposition}[Separating measures with bounded Stein RKHSes]\label{thm:characteristic to P using C^1_a}
Suppose
$\| \sp(x) \| \leq \growth(x)$, and
$\H_K \subseteq \C{1}{b,\growth}(\R^d)$.
Then
$\k_\P$ is $\P$-separating  iff
$K$ is characteristic to $0$ in
$\{ D_\Q : \Q \in \Pset \} \subseteq   (\C{1}{b,\growth}(\R^d)_\beta)^*$, that is
for any $\Q \in \Pset$, $D_\Q |_{\H_K} = 0 \implies \Q = \P$.
\end{proposition}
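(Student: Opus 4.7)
The plan is to show that under the hypothesis, the statement reduces almost directly to the identity $\ksd{\Q} = \|D_\Q|_{\H_K}\|_{\H_K}$ from \cref{ksdAlt}, applied to every $\Q\in\Pset$. The only nontrivial verifications are that (i) every probability measure embeds into $\H_{\ks}$ so that \cref{ksdAlt} applies uniformly, and (ii) the functionals $D_\Q$ really do live in the stated dual space, so that the phrase ``$K$ characteristic to $0$ in $\{D_\Q\}\subset(\C{1}{b,\growth}(\R^d)_\beta)^*$'' is meaningful.

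First I would establish that $\H_{\ks}\subset\L^\infty$. For any $h\in\H_K$, the Stein transform $\langevin(h)=\sum_i\partial_i h^i+\langle\sp,h\rangle$ satisfies $|\partial_i h^i|$ bounded (since $\H_K\subset\C{1}{b,\growth}(\R^d)$ forces $\partial h$ bounded) and $|\langle\sp,h\rangle|\leq\|\sp\|\|h\|\leq\growth\|h\|$ bounded (since $\growth h$ is bounded by definition of $\C{1}{b,\growth}$, and $\|\sp\|\leq\growth$ by hypothesis). Thus $\H_{\ks}\subset\L^\infty\subset\L^1(\Q)$ for every $\Q\in\Pset$, so by \cref{thm:measure embed iff RKHS integrable} every $\Q\in\Pset$ embeds into $\H_{\ks}$.

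Next I would check that $D_\Q\in(\C{1}{b,\growth}(\R^d)_\beta)^*$ for each $\Q\in\Pset$. Well-definedness on $\C{1}{b,\growth}(\R^d)$ follows from the same bounded-integrand argument. For continuity in the strict topology, decompose
\begin{talign*}
D_\Q(f)\;=\;\sum_i\int \partial_i f^i\,\dd\Q\;+\;\sum_i\int\frac{\sp^i}{\growth}\,(\growth f^i)\,\dd\Q.
\end{talign*}
The first term is $\Q$ applied to the bounded continuous function $\partial_i f^i$; the second is the finite Radon measure $(\sp^i/\growth)\Q$ (bounded density against a probability measure) applied to the bounded continuous function $\growth f^i$. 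Since the dual of $\C{}{b}(\R^d)_\beta$ consists of finite Radon measures, and both $f\mapsto\partial_i f^i$ and $f\mapsto\growth f^i$ are continuous from $\C{1}{b,\growth}(\R^d)_\beta$ to $\C{}{b}(\R^d)_\beta$ by construction of the seminorms defining the strict topology, continuity of $D_\Q$ follows.

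With (i) and (ii) in hand, the equivalence is immediate: \cref{ksdAlt} gives $\ksd{\Q}=\|D_\Q|_{\H_K}\|_{\H_K}$ for every $\Q\in\Pset$, hence $\ksd{\Q}=0\iff D_\Q|_{\H_K}=0$. Therefore ``$\ks$ separates $\P$ from $\Pset$'' (i.e.\ $\ksd{\Q}=0\Rightarrow\Q=\P$) is logically equivalent to ``$D_\Q|_{\H_K}=0\Rightarrow\Q=\P$ for all $\Q\in\Pset$,'' which is the characteristicness condition on $K$. The only step requiring care is the topological one in (ii); the rest is a repackaging of \cref{ksdAlt}.
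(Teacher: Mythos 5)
Your proof is correct for the statement as literally written, and its skeleton matches the paper's: the paper likewise (1) bounds $|\langevin(h)(x)|\leq\|\sp(x)\|\,\|h(x)\|+|\nabla_x\cdot h|\leq\growth(x)\|h(x)\|+|\nabla_x\cdot h|$ to conclude $\H_{\ks}\subset\C{}{b}$, hence every $\Q\in\Pset$ embeds by \cref{thm:measure embed iff RKHS integrable}; (2) verifies $D_\Q\in(\C{1}{b,\growth}(\R^d)_\beta)^*$ by treating the derivative part as an element of $\DL{1}(\R^d)\subset(\C{1}{b,\growth}(\R^d)_\beta)^*$ (using $\growth\geq c>0$) and the score part via the isomorphism $\C{}{b,\growth}(\R^d)_\beta\cong\C{}{b}(\R^d)_\beta$ given by multiplication by $\growth$ --- exactly your device of letting the bounded-density measure $(\sp^i/\growth)\Q$ act on the bounded continuous function $\growth f^i$; and (3) invokes $\ksd{\Q}=\|\Q\circ\langevin\|_{\H_K}$. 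The one substantive difference is that you take the ``that is'' clause as the definition of the right-hand side, whereas the paper's ``proceed as above'' imports from the proof of \cref{thm:characteristic Stein kernel via B1a spaces} the additional equivalence between ``$D_\Q|_{\H_K}=0\Rightarrow\Q=\P$'' and genuine injectivity of $\Phi_K$ at $0$ on $\{D_\Q\}$ inside the dual space: this needs $D_\P=0$ on all of the test space (divergence theorem plus density of $\C{1}{c}(\R^d)$) and, conversely, uniqueness of $\P$ as the probability solution of the distributional Stein equation $\partial_{x^i}\Q=\sp^i\Q$. That bridge is what is actually used downstream in \cref{app: Stein kernel control tight convergence via bounded P separation}, where one verifies characteristicness of $K$ to the full dual $(\C{1}{b,\growth}(\R^d)_\beta)^*$ and then concludes $D_\Q|_{\H_K}=0\Rightarrow D_\Q=0\Rightarrow\Q=\P$; if you want the proposition in that usable form, you should append the Stein-equation uniqueness argument.
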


In order to prove this it will be convenient to first prove in \cref{app:proof of characteristic Stein kernel via B1a spaces} the  analogous but simpler result, \cref{thm:characteristic Stein kernel via B1a spaces},  which relies on the  Banach space defined in  \cref{def:B^1_theta}.

\begin{proposition}[Separating measures with \tpdf{$\C{}{0}$}{C0} Stein RKHSes]
\label{thm:characteristic Stein kernel via B1a spaces}
Suppose
$\| \sp(x) \| \leq \growth(x)$,
and
$\H_K \subseteq \BB{1}{\growth}(\R^d)$.
Then
$\k_\P$ is  $\P$-separating   iff
$K$  separates $0$ from $\{ D_\Q : \Q \in \Pset \} \subseteq \BB{1}{\growth}(\R^d)^*$,
i.e.,
for any $\Q \in \Pset$, $D_\Q|_{\H_K} = 0 \implies \Q = \P$.
\end{proposition}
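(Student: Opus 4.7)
The plan is to establish the biconditional by proving the chain of identities $\mmd_{\ks}(\Q,\P) = \ksd{\Q} = \|D_\Q|_{\H_K}\|_{\H_K}$ for every $\Q \in \Pset$, after which both directions of the stated equivalence reduce to the single assertion that $D_\Q|_{\H_K} = 0$ forces $\Q = \P$.

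First I would verify that all three objects in the identity are simultaneously well-defined on \emph{all} of $\Pset$. The key observation is that, for any $v \in \BB{1}{\theta}(\R^d)$, the bounds $\|\partial v\|_\infty \leq \|v\|_{\BB{1}{\theta}}$ and $|\langle \sp(x), v(x)\rangle| \leq \theta(x)\|v(x)\| \leq \|v\|_{\BB{1}{\theta}}$ yield $\|\langevin(v)\|_\infty \leq 2\|v\|_{\BB{1}{\theta}}$. Combined with $\H_K \subset \BB{1}{\theta}(\R^d)$, this places $\H_{\ks} = \langevin(\H_K)$ inside $\L^\infty(\R^d) \subset \L^1(\Q)$ for every $\Q \in \Pset$, so that (i) the constraint $\langevin(v)_+\in \L^1(\Q)$ in \cref{def:IPD definition KSD} is vacuous and the KSD supremum runs over all of $\B_K$; (ii) every $\Q \in \Pset$ embeds into $\H_{\ks}$ by \cref{thm:measure embed iff RKHS integrable}; and (iii) $D_\Q = \Q \circ \langevin$ extends, by density of $\C{1}{c}(\R^d)$, to a bounded linear functional on $\BB{1}{\theta}(\R^d)$, giving $D_\Q \in \BB{1}{\theta}(\R^d)^*$.

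With well-definedness secured, \cref{ksdAlt} applied to the now-embeddable $\Q$ gives $\ksd{\Q} = \|D_\Q|_{\H_K}\|_{\H_K}$. Since the statement ``$\ks$ is $\P$-separating'' implicitly requires $\ks$ and $\mmd_{\ks}(\cdot,\P)$ themselves to be well-defined, i.e., $\P \in \embedtozero$, \cref{ksdDef} furnishes $\ksd{\Q} = \mmd_{\ks}(\Q,\P)$. Chaining these identities yields the desired equality, from which the proposition follows at once. The only subtle point---and hence the main obstacle---is the bookkeeping needed to ensure that all three representations of the discrepancy are simultaneously well-defined on every $\Q\in\Pset$ without any auxiliary hypothesis; the norm on $\BB{1}{\theta}(\R^d)$ is engineered precisely so that $\langevin$ maps continuously into $\L^\infty(\R^d)$, which absorbs every integrability concern in one stroke.
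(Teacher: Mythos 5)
Your well-definedness bookkeeping is sound and matches the paper's: the bound $|\langevin(v)(x)|\le \growth(x)\|v(x)\|+|\nabla\cdot v(x)|$ does place $\H_{\ks}=\langevin(\H_K)$ inside $\L^\infty$, so every $\Q\in\Pset$ embeds and \cref{ksdAlt} gives $\ksd{\Q}=\|D_\Q|_{\H_K}\|_{\H_K}$. The gap lies in what you do with this identity. You read the right-hand side of the proposition as literally the assertion ``$D_\Q|_{\H_K}=0\implies\Q=\P$,'' at which point both directions of the equivalence do collapse into a tautology --- but that reading strips the proposition of its content. The condition actually being characterized is that $K$ separates $0$ from $\{D_\Q\}$ \emph{inside} $\BB{1}{\growth}(\R^d)^*$, i.e.\ that $D_\Q|_{\H_K}=0$ forces $D_\Q=0$ as a functional on all of $\BB{1}{\growth}(\R^d)$; this is the form verified downstream (by checking characteristicness or universality of $K$ to $\BB{1}{\growth}(\R^d)$), and the ``i.e.''\ clause is itself a claim requiring proof, namely that $D_\Q=0$ in $\BB{1}{\growth}(\R^d)^*$ holds exactly when $\Q=\P$.

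That claim is the mathematical heart of the paper's proof and is entirely absent from your proposal. It has two halves. For one direction one must show that $D_\P$ annihilates all of $\BB{1}{\growth}(\R^d)$, which the paper obtains from the divergence theorem on $\C{1}{c}(\R^d)$ together with the density of $\C{1}{c}(\R^d)$ in $\BB{1}{\growth}(\R^d)$ (you instead fold this into an ``implicit'' hypothesis $\P\in\embedtozero$). For the other direction one must show that $D_\Q=0$, i.e.\ the distributional Stein equation $\sum_i(\spi\Q-\partial_{x^i}\Q)e^i=0$, forces $\Q=\P$: the paper tests against compactly supported vector fields to obtain $\partial_{x^i}\Q=\spi\Q$ for each $i$, writes $\Q=\p T$ by variation of constants, deduces $\partial_{x^i}T=0$, and invokes Schwartz's theorem that translation-invariant distributions are multiples of Lebesgue measure to conclude $T=\dd x$ and hence $\Q=\P$. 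Without this uniqueness argument the proposition cannot be applied as intended (e.g.\ in \cref{thm:characteristic to P using C^1_a} and \cref{thm: tilted controls tightness}), so your proposal does not establish the result.
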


 \subsection{Proof of \ncref{thm:characteristic Stein kernel via B1a spaces}}
     \label{app:proof of characteristic Stein kernel via B1a spaces}

     Proceeding as in \cref{remark:bounded to unbounded}, we can define a Banach space $\BB{0}{\growth}(\R^d)$ such that division by $\growth$ yields an isometric isomorphism $ \cts_0^0(\R^d) \cong \BB{0}{\growth}(\R^d)$.
     Note the continuous inclusion $ \BB{1}{\growth}(\R^d) \hookrightarrow \BB{0}{\growth}(\R^d)$.\footnote{When $\theta \geq 1$ we also have
          $\BB{1}{\growth}(\R^d) \hookrightarrow \cts^1_0(\R^d)$.}
          In particular  $\growth \Q$, is a continuous linear functional on $\BB{1}{\growth}(\R^d)$, and hence so is $\sp \Q$, since
          $$| \sp \Q(f)| \defn |  \sum_i \Q(\spi f_i)  | \leq  \sum_i \Q( | \spi f_i|) \leq \tilde C
          \sum_i \Q( \growth |  f_i|) \leq C  \sup \| \growth f \| \leq C \, \| f \|_{\BB{1}{\growth}}$$
          for some constants $\tilde C,C> 0$ (that arise from the equivalence of norms on $\R^d$).
          Moreover,  $ \partial_{i} \Q $, and hence  $D_Q $, acts continuously on $\BB{1}{\growth}(\R^d)$, since
          $$|\partial_i \Q(f) | = | \Q \partial_i f | \leq \| \partial_i f \|_\infty \leq \| f \|_{\BB{1}{\growth}(\R^d)}.$$

          Moreover $\H_K \subseteq \BB{1}{\growth}(\R^d)$ implies
          $\H_K \hookrightarrow \BB{1}{\growth}(\R^d)$.
          Indeed, recalling that $K_x^* \in \mathcal B(\H_K, \R^d)$ is the evaluation functional,
          $\| \growth(x) K_x^*h \| = \| \growth(x) h(x) \| \leq \| h \|_{\BB{0}{\growth}(\R^d)}$ for all $h \in \H_K$,
          so by the
          Banach--Steinhaus Theorem
          $  \| \growth(x) K_x^*\| \leq C $ for some $C <  \infty$.
          From this we find that
          $\sup_x \| \growth(x) h(x) \| = \sup_x \| \growth(x) K_x^* h \|  \leq  \sup_x  \|\growth(x) K_x^* \| \| h \|_{\H_K} \leq C \| h \|_{\H_K}$.
          Proceeding analogously with the derivative contribution, we can use $ |  \metric{\partial^p_2 K^{e_i}(.,x)}{h}_k | \leq  \|h\|_{\BB{1}{\growth}}$
          to show
          $ \| \metric{\partial^p_2 K^{e_i}(.,x)}{ \cdot}_k \| \leq  A^i$, for some $A^i < \infty$,
          and then
          $\sup_x \|\partial^p h(x) \|  \leq  B  \sup_x  \max_i | \partial^p h^i(x) |  \leq    d B \max_i A_i \| h \|_{\H_K}$,
          which
          yields the continuity of the inclusion.

     Now, note $\H_{\k_\P} \subseteq \cts_0$, so any probability measure $\Q$ embeds into the Stein RKHS by \cref{thm:measure embed iff RKHS integrable}.
     Moreover, since by assumption the embedding of $\P$ into $\H_{\ks}$ is the null function, from \cref{ksdAlt}
         $$\ksd{\Q} = \| \Q \|_{\H_{\k_\P}} = \| \Q \circ \langevin \|_{\H_K}. $$
        We thus want to show that
         ``$ \Q \circ \langevin|_{\H_K}=0$ implies
         $\Q \circ \langevin |_{\BB{1}{\growth}}=0$'' iff
         ``$\ksd{\Q}=0$ implies $\Q = \P$''.

         If $\ksd{\Q}=0$ implies $\Q = \P$, then $ \Q \circ \langevin|_{\H_K}=0$ implies $\Q =\P$, so we want to show 
         $\P \circ \langevin |_{\BB{1}{\growth}}=0$.
         For this we can use $\P \circ \langevin |_{\cts^1_c} =0$ by the divergence theorem and  the fact $\cts^1_c(\R^d)$ is dense in $\BB{1}{\growth}(\R^d)$
         since the latter is its completion (and  metric spaces are dense in their completion).

         Conversely, we have that
         $\ksd{\Q}=0$ implies $D_\Q |_{\BB{1}{\growth}}=0$, that is
         the distributional Stein equation
         \begin{talign}  
         \sum_i (\sp^i \Q - \partial_{x^i}\Q)e^i =0, \end{talign}
         where $(e^i)_{i=1}^{i=d}$ is the dual basis to the canonical basis of $\R^d$.
         Applying to this vectorial distributional PDE compactly supported smooth vector fields of the form $f=(0,\ldots, 0, l, 0, \ldots)$ with $l \in \C{\infty}{c}$, yields the system of (scalar) distributional PDEs
         $\partial_{x^i} \Q = \spi \Q$.
         In particular, solving for the function $q:\R^d \to \R$ the classical PDE
          $\partial_{x^i} q = \spi q$,
           implies  $q$ is the target probability density, $\q=\p$. We then look for solutions via the method of variation of constants.
           We write the  the form $\Q =\p T$. Subbing in and using $\partial_{x^i}(\p T) = \partial_{x^i} \p \, T + \p \partial_{x^i} T$ we obtain the equivalent distributional PDE
         $ \partial_{x^i} T =0$, which implies that $T$ is a translation-invariant measure, and hence proportional to the Lebesgue measure, $T=C \mathrm{d} x$, by \citet[Thm.~VI of Chap.~II]{schwartzTD}. Since $\Q$ is a probability measure we must have $C=1$, and thus $\Q = \P$.

\subsection{Proof of \ncref{thm:characteristic to P using C^1_a}}
         \label{app:proof characteristic to P using C^1_a}

          Since $\H_K \subseteq \C{1}{b,\growth}(\R^d) $, then $\H_{\ks} \subseteq \C{}{b}$, indeed $| \mathcal S_\p(h)(x)| \leq \| \sp(x) \| \| h(x) \|+  | \nabla_x \cdot h| \leq \growth(x) \| h(x) \|+| \nabla_x \cdot h|  $ and the latter is a bounded function of $x$.
         Hence any probability measure embeds into $\H_{\ks}$ by \cref{thm:measure embed iff RKHS integrable}, and we can proceed as above once we have shown that $D_\Q$ is continuous on $\C{1}{b,\growth}(\R^d)_\beta$.
         First note that, using the fact that the dual of a finite product of TVS is isomorphic to the finite product of their duals (see, e.g., \citet[p.~259]{treves67})
         \begin{talign}
         \DL{1}(\R^d) \defn (\C{1}{b}(\R^d))_\beta^* =
         ( \prod_{i=1}^d \C{1}{b}(\R)_\beta)^* =\oplus_{i=1}^d\DL{1} \subseteq \C{1}{b,\growth}(\R^d)_\beta^*,
         \end{talign}
         indeed, $\C{1}{b,\growth}(\R^d)_\beta \hookrightarrow \C{1}{b}(\R^d)_\beta$ since $\growth \geq c >0$, and $ \C{1}{b}(\R^d)_\beta^* = \DL{1}(\R^d)$ by \citet[Sec.~1]{conway1965strict}.
         Thus $\partial_{x^i}\Q \, e^i \in (\C{1}{b,\growth}(\R^d)_\beta^*$.
         \footnote{A more direct proof that establishes the continuity of $\partial_i \Q e^i$ on $\C{1}{b,\growth}(\R^d)_\beta$ reads as follows:
         if $f \in \C{1}{b,\growth}(\R^d)$, then
         $| \partial_i \Q e^i f| = | \Q \partial_i f^i | \leq  A \max_{j \in [n]} \| \gamma_j \partial_i f^i \|_\infty$ for some $\gamma_j \in \C{}{0}$, $n \in \mathbb{N} $, $A>0$, by continuity of $\Q$  on $(\C{}{b})_\beta$.
         Since $f \mapsto \| \gamma_j \partial f \|_\infty$ are semi-norms on $\C{1}{b,\growth}(\R^d)_\beta$ the result follows.}

         Moreover, we have $\C{1}{b,\growth}(\R^d)_\beta \hookrightarrow \C{}{b,\growth}(\R^d)_\beta \cong \C{}{b}(\R^d)_\beta$, where the latter  isomorphism of topological vector spaces is given by multiplication by $\growth$.
         Since $(\C{}{b})_\beta^*$ is the space of finite Radon measures,  this shows that $\growth \Q e^i \in  \C{}{b,\growth}(\R^d)_\beta^* \subseteq \C{1}{b,\growth}(\R^d)_\beta^*$.

 \section{Proof of \ncref{imq-tightness}}
 \label{app:imq-tightness}

 Our proof parallels that of \citet[Lem.~16]{gorham17measuring}.
Fix any $c>0$, $\gamma \in (0,2u-1)$, $a > c/2$, and $\alpha \in (1-u,\half (1-\gamma))$,  and consider the functions
\begin{align}
    g_j( x) = -x_j(a^2 + \twonorm{ x}^2)^{\alpha - 1}
    \qtext{for} 1 \leq j \leq d.
\end{align}
By
\citet[proof of Lem.~16]{gorham17measuring}, $g = (g_1, \dots, g_d) \in \H_K$ for $K = \kb\Id$.
Moreover, the Stein operator applied to $g$ takes the form
\begin{talign}
\langevin(g)(x)
    = {-\frac{\inner{\sp( x)}{x}}{(a^2 + \twonorm{ x}^2)^{1-\alpha}}}
    {-\frac{d}{(a^2 + \twonorm{ x}^2)^{1-\alpha}}
    + \frac{2(1-\alpha) \twonorm{ x}^2}{(a^2 + \twonorm{ x}^2)^{2-\alpha}}}.
\end{talign}
Since $\alpha < 1$, the final two terms in this expression are  uniformly bounded in $ x$.
Meanwhile, our generalized dissipativity assumption \cref{eq:generalized-dissipativity} implies that $-\inner{\sp( x)}{x} = \Omega(\twonorm{x}^{2u})$ as $\twonorm{x}\to\infty$, so $-\frac{\inner{\sp( x)}{x}}{(a^2 + \twonorm{ x}^2)^{1-\alpha}} = \Omega(\twonorm{x}^{2u-2+2\alpha}) = \omega(1)$  since $\alpha > 1 - u$.
Hence, $\langevin(g)$ is coercive.
In addition, the generalized disspativity condition \cref{eq:generalized-dissipativity} implies that $-\inner{\sp( x)}{x}$ is bounded below and hence that $\langevin(g)$ is bounded below.

Let $K = \kb \Id$.
Since $\sp$ is well defined and continuous on $\R^d$, the density $\p$ is strictly positive and continuously differentiable.
In addition, since $\P \in \Pset_{\sp}$, $K \in \C{(1,1)}{b}(\R^d)$, and $K \in \L^1(\P)$, \cref{thm:embeddability_conditions} and \cref{ksdDef} imply that $\p\H_K \subseteq \C{1}{}(\R^d)$, $\P\in\embedtozero$, $\KSD_{K,\P} =\mmd_{\ks}(\cdot,\P)$, and $\langevin(\H_K) = \H_{\ks}$.
Since $\langevin(g) \in \H_{\ks}$, $\Hks$  $\P$-dominates indicators by \cref{coercive-tightness} and enforces tightness by \cref{tightness}.

Finally, since $\kb \in \C{(1,1)}{b}$ is translation-invariant with a spectral density bounded away from zero in a neighborhood around the origin \citep[Thm.~8.15]{wendland04scattered}, we conclude that $\H_{\kb} \subseteq \C{1}{b}$ by \cref{thm:embeddability_conditions} and that $\ks$ controls $\P$ convergence by \cref{ksd-tightness}.

\section{Proof of \ncref{tilted-tightness}}
\label{app:proof tilted tightness}
Our aim is to identify a function in $\Hks$ that satisfies the indicator bounding property \cref{eq:indic-approx} for each $\epsilon > 0$.
To this end, for each $m \in \N$, define the compact set $C_m  = \{ x \in\R^d :\twonorm{ x} \leq m\}$, and fix any $m > 1$ for which
$-\inner{\sp( x)}{ x} $ is nonnegative on  $C_{m-1}^c$ and
\begin{talign}\label{eq:m-properties}
-\inner{\sp( x)}{ x} - r_0 \onenorm{\sp( x)} - 1 - r_0 - 2|\gamma|(1+\sqrt{d}/(c^2+m)) \geq (r_1/2)\twonorm{ x}^{2u}
\end{talign}
holds on $C_{m}^c$.
These properties hold for all $m$ sufficiently large (specifically, for all $m$ such that $\half r_1 m^{2u} \geq 1+r_0-r_2+ 2 | \gamma | (1+ \frac{\sqrt{d}}{c^2+m})$) due to generalized dissipativity \cref{eq:generalized-dissipativity} with $u > 0$.
Fix also any $\epsilon_m \in (0, s]$ satisfying
\begin{talign}\label{eq:epsm-properties}
\eps_m
	\sup_{ x \in C_m }
	\max\left( \onenorm{\sp( x)}  a(\twonorm{ x}),
	\frac{2|\gamma| \onenorm{ x}}{c^2 + \twonorm{ x}^2} a(\twonorm{ x}),
	\frac{ \onenorm{ x} }{c^2 + \twonorm{ x}^2} ,
	a(\twonorm{ x})
	\right) \leq \frac{a(m)}{m}.
\end{talign}
Consider the smoothed indicator function
\[
\tilde{f}_m( x) = \sigma(m - \twonorm{ x} )
\qtext{for}
\sigma(r) = 2\max(0,r)^2 \indic{r < .5} + (1-2\max(0,1-r)^2) \indic{r \geq .5}
\]
which satisfies
$\tilde{f}_m( x) = 1$ on $C_{m-1}$, $\tilde{f}_m( x) = 0$ on $C_{m}^c$,
\[
\indic{ x \in C_{m-1}} \leq \tilde{f}_m( x) \leq \indic{ x \in C_m}, \qtext{and }
-\indic{ x \in C_{m} \backslash C_{m-1}} \leq \dx \tilde{f}_m( x) \leq 0.
\]
Moreover, for each $i \in \{1,\dots, d\}$, $ x \mapsto x^i \tilde{f}_m( x) \in \C{1}{0}$.

Since $\HK \subseteq \C{1}{0}$, then
$\HK \hookrightarrow \C{1}{0}$,
so by \citet[Thm.~6]{simon18kde}
$\HK$ is dense in $\C{1}{0}$.
Hence, for each $i \in \{1,\dots, d\}$ there exists $\tilde{g}_{mi} \in \HK$ satisfying
\begin{talign}\label{eq:tildeg-guarantee}
  \sup_{ x\in\R^d} \max( |\tilde{g}_{mi}( x) - x^i\tilde{f}_m( x)|,  |\dx \tilde{g}_m( x) - \dx( x^i\tilde{f}_m( x))|) \leq \epsilon_m.
\end{talign}
Moreover, the function $w_i( x) = x^i$ belongs to $\H_{\tilde{\k}_i}$ for $\tilde{\k}_i( x, y) \defn x^i y^i$.
Since $\HK \subseteq \H_{\k+\tilde{\k}_i}$ and $\H_{\tilde{\k}_i} \subseteq \H_{\k+\tilde{\k}_i}$ (see for example \citet[Prop.~5]{carmeli2010vector}), the functions $\tilde{g}_{mi}, w_i, $ and  $g_{mi} = w_i - \tilde{g}_{mi} $ are all elements of $\H_{\k+\tilde{\k}_i}$.

Consider now the Stein function
\begin{talign}
h_m( x)
	&= \sum_{i=1}^d \frac{-\dx (p( x)  a(\twonorm{ x}) g_{mi}( x)  ))}{p( x)} \\
	&= \underbrace{-\inner{\sp( x)}{\gvec_m( x)} a(\twonorm{ x})}_{(i)} \underbrace{- \inner{\gvec_m( x)}{\grad a(\twonorm{ x})}}_{(ii)}  \underbrace{- a(\twonorm{ x}) \grad \cdot \gvec_m( x)}_{(iii)}, \label{eq:hm-expansion}
\end{talign}
where $\gvec_m$ is the vector valued function $(g_{mi})_{i=1}^d$.
By construction, $h_m \in \langevin \H_K$,
and thus in $\H_{\ks}$ by \cref{ksdDef}.
Therefore, the zero-mean embedding assumption $\P \in \embedtozero$ and  \cref{thm:embeddability_conditions} imply that $\P h_m =0$.
We will show that a rescaled version of $h_m$ satisfies the indicator bound property \cref{eq:indic-approx} for a choice of $\tilde{\epsilon}_m$ that decays to $0$ as $m \to\infty$.
We begin by lower-bounding each of the components in the expansion \cref{eq:hm-expansion}.

To lower-bound term (i), we first record several properties of $\gvec_m$.
First, our approximation guarantee  \cref{eq:tildeg-guarantee} implies
\begin{talign} \label{eq:g-guarantee}
\sup_{ x\in\R^d} |g_{mi}( x) -  x^i f_m( x)| \leq \eps_m \qtext{for each}  i \in \{1, \dots, d\},
\end{talign}
where $f_m \defn 1 - \tilde{f}_m$ satisfies
\begin{talign} \label{eq:fm-properties}
\indic{ x \in C_{m-1}^c} \geq f_m( x) \geq \indic{ x \in C_m^c}
\qtext{and}
\indic{ x \in C_{m} \backslash C_{m-1}} \geq \dx f_m( x) \geq 0.
\end{talign}
Since $a$ is nonnegative and $f_m( x) = 0$ on $C_{m-1}$,
\Holder's inequality,
the guarantee \cref{eq:g-guarantee},
the assumed nonnegativity of $-\inner{\sp( x)}{ x} $ on $C_{m-1}^c$,
generalized dissipativity, and our choice \cref{eq:epsm-properties} of $\epsilon_m$ implies that
\begin{talign}
-&\inner{\sp( x)}{\gvec_m( x)}  a(\twonorm{ x})
	= -\inner{\sp( x)}{ x} f_m( x) a(\twonorm{ x})
	- \inner{\sp( x)}{\gvec_m( x) -  x f_m( x)}  a(\twonorm{ x})
	\\
	&\geq -\inner{\sp( x)}{ x} f_m( x) a(\twonorm{ x})
	- \onenorm{\sp( x)}\infnorm{\gvec_m( x) -  x f_m( x)}  a(\twonorm{ x})
	\\
	&\geq -\inner{\sp( x)}{ x} f_m( x) a(\twonorm{ x}) - \onenorm{\sp( x)} a(\twonorm{ x}) \eps_m \\
	&\geq -\inner{\sp( x)}{ x} \indic{ x \in C_m^c} a(\twonorm{ x}) - \onenorm{\sp( x)} a(\twonorm{ x}) \eps_m \\
	&= (-\inner{\sp( x)}{ x}- \onenorm{\sp( x)}\eps_m) \indic{ x \in C_m^c} a(\twonorm{ x}) - \onenorm{\sp( x)} \indic{ x \in C_m} a(\twonorm{ x}) \eps_m \\
	&\geq (-\inner{\sp( x)}{ x}- \onenorm{\sp( x)}s) \indic{ x \in C_m^c} a(\twonorm{ x}) - a(m)/m.
\end{talign}

To lower bound (ii), we again employ \Holder's inequality, the approximation guarantee \cref{eq:g-guarantee}, and the  $\epsilon_m$ properties \cref{eq:epsm-properties} to find that
\begin{talign}
- &\inner{\gvec_m( x)}{\grad a(\twonorm{ x})}
	= 2\gamma \inner{\gvec_m( x)}{ x} / (c^2 + \twonorm{ x}^2)^{\gamma+1} \\
	&= 2\gamma \twonorm{ x}^2 f_m( x)/ (c^2 + \twonorm{ x}^2)^{\gamma+1}
	+
	2\gamma \inner{\gvec_m( x)- x f_m( x)}{ x} / (c^2 + \twonorm{ x}^2)^{\gamma+1} \\
	&\geq 2\gamma \twonorm{ x}^2 f_m( x)/ (c^2 + \twonorm{ x}^2)^{\gamma+1}
	-
	2\gamma \infnorm{\gvec_m( x)- x f_m( x)}\onenorm{ x} / (c^2 + \twonorm{ x}^2)^{\gamma+1} \\
	&\geq 2\gamma \twonorm{ x}^2 f_m( x) / (c^2 + \twonorm{ x}^2)^{\gamma+1} - 2|\gamma| \onenorm{ x} \eps_m/  (c^2 + \twonorm{ x}^2)^{\gamma+1} \\
	&\geq -2|\gamma| \twonorm{ x}^2 \indic{ x \in C_{m-1}^c} / (c^2 + \twonorm{ x}^2)^{\gamma+1} - 2|\gamma| \onenorm{ x} \eps_m/  (c^2 + \twonorm{ x}^2)^{\gamma+1} \\
	&= -2|\gamma|\frac{ \twonorm{ x}^2}{c^2 + \twonorm{ x}^2} a(\twonorm{ x}) (\indic{ x \in C_{m}^c} + \indic{ x \in C_{m}\backslash C_{m-1}}) \\
	&- \frac{2|\gamma| \onenorm{ x} \eps_m}{c^2 + \twonorm{ x}^2} a(\twonorm{ x})  (\indic{ x \in C_{m}^c} + \indic{ x \in C_{m}}) \\
	&\geq -2|\gamma|\bigg(\frac{ \twonorm{ x}^2+ \onenorm{ x} \eps_m}{c^2 + \twonorm{ x}^2}\bigg) a(\twonorm{ x}) \indic{ x \in C_{m}^c}
	- 2|\gamma|\max(a(m-1), a(m))
	- a(m)/m \\
	&\geq -2|\gamma|(1+\sqrt{d}/(c^2+m)) a(\twonorm{ x}) \indic{ x \in C_{m}^c}
	- 2|\gamma|\max(a(m-1), a(m))
	- a(m)/m.
\end{talign}

To lower bound (iii), we first note that the derivative approximation \cref{eq:tildeg-guarantee} implies
\begin{talign} \label{eq:grad_g-guarantee}
\sup_{ x\in\R^d} |\dx g_{mi}( x) -  \dx( x^i f_m( x))| \leq \eps_m \qtext{for each}  i \in \{1, \dots, d\}.
\end{talign}
Moreover, we have
\[
\dx( x^if_m( x)) = f_m( x) + x^i \dx f_m( x) \leq \indic{ x \in C_{m-1}^c} + |x^i| \indic{ x \in C_{m} \backslash C_{m-1}}
\]
by our $\dx f_m$ constraints \cref{eq:fm-properties}.
Therefore, the nonnegativity of $a$  and the  $\epsilon_m$ properties \cref{eq:epsm-properties} give the bound
\begin{talign}
-&a(\twonorm{ x}) \grad \cdot \gvec_m( x)
	\geq -a(\twonorm{ x})(\indic{ x \in C_{m-1}^c} + \onenorm{ x} \indic{ x \in C_{m} \backslash C_{m-1}} + \eps_m)\\
	&= -a(\twonorm{ x})(1+\eps_m)\indic{ x \in C_{m}^c} - a(\twonorm{ x})(1+\onenorm{ x}) \indic{ x \in C_{m} \backslash C_{m-1}} - a(\twonorm{ x})\eps_m\indic{ x \in C_{m}} \\
	&\geq -a(\twonorm{ x})(1+s)\indic{ x \in C_{m}^c} - \max(a(m-1),a(m)) (1+\sqrt{d}m) - a(m)/m.
\end{talign}

Our assumption $\gamma \leq u$ implies that $\twonorm{ x}^{2u} a(\twonorm{x}) \geq m^{2u} a(m)$ whenever $\twonorm{ x} \geq m$.
This fact combined with our collected results and the assumed growth \cref{eq:m-properties} induced by our choice of $m$ now imply that
\begin{talign}
h_m( x)
	&\geq  (-\inner{\sp( x)}{ x}- \onenorm{\sp( x)}r_0  - 1 - r_0 - 2|\gamma|(1+\sqrt{d}/(c^2+m))) \indic{ x \in C_m^c} a(\twonorm{ x}) \\
	 &- 3a(m)/m
	- (1+\sqrt{d}m + 2|\gamma|)\max(a(m-1), a(m)) \\
	&\geq  (r_1/2)\twonorm{ x}^{2u} \indic{ x \in C_m^c} a(\twonorm{ x}) - 3a(m)/m	- (1+\sqrt{d}m + 2|\gamma|)\max(a(m-1), a(m)) \\
	&\geq  (r_1/2)m^{2u} a(m) \indic{ x \in C_m^c} - 3a(m)/m	- (1+\sqrt{d}m + 2|\gamma|)\max(a(m-1), a(m)) .
\end{talign}
Hence, the rescaled Stein function $\tilde{h}_m = h_m / ((r_1/2)m^{2u} a(m))$, satisfies the indicator approximation property \cref{eq:indic-approx} for the compact set $C_m$ and the approximation factor
\[
\tilde{\eps}_m = 6/(r_1 m^{2u+1}) +  (1+\sqrt{d}m + 2|\gamma|)\max(a(m-1), a(m))/((r_1/2)m^{2u} a(m)).
\]
Since $u > 1/2$, $\tilde{\eps}_m$ vanishes as $m \to \infty$, and hence $\Hks$ $\P$-dominates indicators.
Thus by  \cref{tightness} the Stein kernel enforces tightness.\\

For (b), we use \cref{thm:Scaled tilted universal kernel controls tight convergence}:
\begin{lemma}[Universal KSDs tilted by score growth control tight convergence]
\label{thm:Scaled tilted universal kernel controls tight convergence}
Suppose that
 $\| \sp( x) \| \leq (c^2+\|  x \|^2)^{\gamma}$, where $c \neq 0$, $\gamma \geq 0$,
 and
 $\Kb$ is characteristic to $\DL{1}(\R^d)$.
 Then
 the Stein kernel induced by $(c^2+\|  x \|^2)^{-\gamma} \Kb( x,  y)  (c^2+\|  y \|^2)^{-\gamma}$
 is $\P$-separating and controls tight $\P$-convergence.
\end{lemma}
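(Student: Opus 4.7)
The plan is to reduce this lemma to a direct application of \cref{thm: tilted controls tightness}. Define the tilting function $\growth(x) \defn (c^2 + \|x\|^2)^{\gamma}$, so that its associated tilted base kernel is exactly
\[
K_\growth(x,y) \defn \frac{K(x,y)}{\growth(x)\growth(y)} = (c^2+\|x\|^2)^{-\gamma}\, K(x,y)\,(c^2+\|y\|^2)^{-\gamma}.
\]
The score growth bound $\|\sp(x)\| \leq \growth(x)$ is the hypothesis, and the $\DL{1}(\R^d)$-characteristicness of $K$ (which by the definition in \cref{def:D1L1} is meaningful precisely when $\H_K \subset \C{1}{b}(\R^d)$) is given. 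So the only thing to verify before invoking \cref{thm: tilted controls tightness} is that $\growth \in \C{1}{}$ and $1/\growth \in \C{1}{b}$.

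Both checks are elementary. Since $c\neq 0$, the base $c^2+\|x\|^2$ is bounded below by $c^2>0$ on all of $\R^d$, so $\growth$ and $1/\growth$ are smooth. Because $\gamma \geq 0$, the function $1/\growth(x) = (c^2+\|x\|^2)^{-\gamma}$ is uniformly bounded by $c^{-2\gamma}$. Its partials
\[
\partial_{x^i}(1/\growth)(x) = -2\gamma\, x^i\, (c^2+\|x\|^2)^{-\gamma-1}
\]
vanish identically when $\gamma = 0$ and otherwise are continuous on $\R^d$ and decay as $O(\|x\|^{-2\gamma-1})$ as $\|x\|\to\infty$, hence bounded. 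Therefore $1/\growth \in \C{1}{b}$.

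With these hypotheses in place, \cref{thm: tilted controls tightness} applies and immediately yields that the Stein kernel induced by $K_\growth$ is bounded and $\P$-separating and controls tight $\P$-convergence, which is exactly the claim. I do not expect any real obstacle; the whole point of the tilting framework in \cref{thm: tilted controls tightness} was to make polynomial-growth score-dominating tilts like this one plug in directly. The only mild subtlety worth flagging is that the lemma does not write out the embeddability hypothesis $\P \in \embedtozero[K_\growth]$ needed for $\ks$ (and its MMD) to be well defined; this is being imported implicitly from the ambient context of \cref{tilted-tightness}, where the result is invoked.
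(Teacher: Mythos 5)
Your proposal is correct and is essentially identical to the paper's own proof: both set $\growth(x) = (c^2+\|x\|^2)^{\gamma}$, verify that $\growth\in\C{1}{}$ and $1/\growth\in\C{1}{b}$ via the explicit partial derivative $-2\gamma x^i (c^2+\|x\|^2)^{-\gamma-1}$, and then invoke \cref{thm: tilted controls tightness} directly. Your extra remarks on the implicit $\H_K\subset\C{1}{b}(\R^d)$ and embeddability hypotheses are consistent with how the paper treats them.
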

\begin{proof}
 The result follows by \cref{thm: tilted controls tightness}.
     Indeed the function $\growth(x) \defn (c+ \| x \|^2)^{\gamma}$ has  $\partial \nicefrac{1}{ \growth} (x) = -2\gamma  x (c+ \| x \|^2)^{-\gamma-1}$ satisfies the assumption of \cref{thm: tilted controls tightness}, so the result follows.
\end{proof}

This shows that we can easily construct bounded Stein kernels that control \emph{tight} weak convergence to $\P$ in $\Pset$ by simply tilting the base kernel through a function that bounds the score.
By \cref{thm:Scaled tilted universal kernel controls tight convergence}
the Stein kernel induced by the tilted base kernel $a(\| x \|) \k(x,y) a(\|y \|)$
controls tight weak convergence,
and thus so does the overall Stein kernel (which further controls weak convergence since it enforces tightness)
as it may be viewed as the sum of two Stein kernels.
Indeed, as proved in \cref{app:MMD control of subset RKHS},  we have the following general bound between MMDs when an RKHS contains another one:
\begin{lemma}[MMD controls subset MMDs]
\label{MMD control of subset RKHS}
    Suppose $\H_\k \subseteq \H_{\tilde k}$ and that $\P\in\embedpettis{\tilde k}$.
    Then $\exists c\geq 0$ such that for all $\Q \in \Pset$
      $$ \mmd_\k(\Q,\P) \leq c \, \mmd_{\tilde k}(\Q,\P). $$
      Hence,
      \begin{itemize}
          \item[(i)] If $\k$ is $\P$-separating then $\tilde k$ is $\P$-separating.
          \item[(ii)] If $\k$ controls (tight) weak $\P$-convergence, then $\tilde k$ controls (tight) weak $\P$-convergence.
      \end{itemize}
\end{lemma}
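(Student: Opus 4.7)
\subsection*{Proposed proof of \texorpdfstring{\cref{MMD control of subset RKHS}}{the subset MMD bound}}

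The plan is to exploit the classical fact that inclusions between reproducing kernel Hilbert spaces are automatically continuous, and then translate this continuity into a direct comparison of the unit balls appearing in the two MMD suprema. Concretely, since both $\H_\k$ and $\H_{\tilde k}$ sit continuously inside the space of pointwise functions $\X \to \R$ via their evaluation functionals, the inclusion $\iota : \H_\k \hookrightarrow \H_{\tilde k}$ has a closed graph. By \cref{thm:continuity of RKHS inclusion} (equivalently, \citet[Prop.~2]{schwartz1964sous}), this yields a constant $c \geq 0$ such that $\|h\|_{\tilde k} \leq c \|h\|_\k$ for every $h \in \H_\k$. Equivalently, $\B_\k \subseteq c\,\B_{\tilde k}$. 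Moreover, the assumption $\P \in \embedpettis{\tilde k}$ and the set inclusion $\H_\k \subseteq \H_{\tilde k}$ give $\H_\k \subseteq \L^1(\P)$, i.e.\ $\P \in \embedpettis{\k}$, so that both MMDs are well defined.

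Next, I would fix any $\Q \in \Pset$ and any test function $h \in \B_\k$ with $h_+ \in \L^1(\Q)$. Setting $g \defn h/c \in \B_{\tilde k}$ (with the convention $0/0=0$ if $c=0$, in which case the result is trivial since then $\H_\k=\{0\}$), we have $g_+ = h_+/c \in \L^1(\Q)$, so $g$ is an admissible test function for $\mmd_{\tilde k}(\Q,\P)$. Hence
\begin{talign}
|\Q h - \P h| \,=\, c\,|\Q g - \P g| \,\leq\, c\,\mmd_{\tilde k}(\Q,\P),
\end{talign}
and taking the supremum over all admissible $h$ yields $\mmd_\k(\Q,\P) \leq c\,\mmd_{\tilde k}(\Q,\P)$, as required.

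Both conclusions follow at once from this inequality: if $\mmd_{\tilde k}(\Q,\P)=0$ then $\mmd_\k(\Q,\P)=0$, giving (i); and if $\mmd_{\tilde k}(\Q_n,\P)\to 0$ then $\mmd_\k(\Q_n,\P)\to 0$, so any (tight) weak $\P$-convergence control enjoyed by $\k$ transfers to $\tilde k$, giving (ii). The only subtle point is verifying that the integrability constraint defining the MMD supremum is preserved under the rescaling $h \mapsto h/c$, which is transparent because positivity and $\L^1(\Q)$ membership are invariant under multiplication by nonzero constants; this is therefore not expected to pose a real obstacle.
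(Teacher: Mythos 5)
Your proof is correct and follows essentially the same route as the paper's: both invoke \citet[Prop.~2]{schwartz1964sous} (continuity of RKHS inclusions) to obtain $c^{-1}\B_\k \subset \B_{\tilde k}$ and then compare the two suprema directly, with (i) and (ii) following immediately from the resulting inequality. Your extra check that the integrability constraint $h_+ \in \L^1(\Q)$ is preserved under rescaling is a small refinement the paper leaves implicit.
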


Finally, for (c),
first note that
$\H_{\ks} \subseteq \C{}{b}$.
Indeed for any $h \in \H_{\ks}$ we have
$ h(x) = \langevin(ag) = \metric{\sp(x)}{ag}+ \nabla \cdot (ag) =
\metric{\sp}{ag} + a \nabla \cdot g+ \metric{g}{\partial a}$,
for some vector-valued function $g$ with $ g_i \in \H_{\k +\tilde \k_i} $, so $h$ is continuous.
Moreover it is bounded since (i)  $| g_i(x) | \leq \| g_i \|_{\k +\tilde \k_i} ( \sqrt{\k(x,x)} +| x^i |)$,
implies
\begin{talign}
|\metric{g}{\partial a}| \leq  \| g_i \|_{\k +\tilde \k_i}\sum_i 2\gamma  \frac{|x^i|\sup_x\sqrt{\k(x,x)} +| x^i |^2}{(c^2+\| x\|^2)^{\gamma+1}}.
\end{talign}
(ii) $a \nabla \cdot g$ is bounded
since $\partial_i g \in \H_{\partial_i \partial_{i+d} \k +1} \subseteq \C{}{b}$.
(iii)
$\metric{\sp(x)}{ag}$ is bounded since
$$|\metric{\sp(x)}{a(x)g(x)}| \leq \| \sp(x)\| |a(x)| \| g(x)\| \leq
\|\sp(x)\| \| x \| |a(x)| \leq 1. $$

Finally, $\P \in \embedtozero$ since
$\H_{\ks} \subseteq \C{}{b} \subseteq \L^1(\P)$ by above,
and $\H_{K} \subseteq \L^1(\P)$ as
 for large enough $x$
$$ \| \sp \| \| x \| \geq - \metric{\sp(x)}{x} \geq r_1 \| x \|^{2u}-r_2 \geq A \| x \|^{2u}$$
for some $A>0$,
so $\| \sp \| \geq A \| x \|^{2u-1} $ for $x$ large enough,
so $\| x \|  a(x) \leq 1/\| \sp(x) \| \leq \frac{1}{A \| x \|^{2u-1}}$ for $x$ large enough,
which implies that $\H_K \subseteq \C{}{b}$.

\subsection{Proof of \ncref{MMD control of subset RKHS}}
\label{app:MMD control of subset RKHS}

By \citet[Prop.~2]{schwartz1964sous}, $\H_\k \subseteq \H_{\tilde k}$ implies that there exists $c \geq 0$ such that, for all $h \in \H_\k$,
    $\| h \|_{\H_{\tilde k}} \leq c \| h \|_{\H_\k}$,
    so $c^{-1}\B_{\k}  \subseteq \B_{\tilde k}$.
    Hence
    $$ \mmd_\k(\Q,\P) \defn \sup_{h \in \B_\k \, :  \,  h_+ \in \L^1(\Q) \ } \left | \Q h - \P h \right | \leq c \, \mmd_{\tilde k}(\Q,\P). $$
    Since
$\H_\k \subseteq \H_{\tilde k}$,
the $\P$-separation and tightness results are immediate.

\section{Proof of \ncref{thm:failure of convergence control}}
\phantomsection
\label{app:failure of convergence control}

We will use the following result based on a construction from 
\citet[Section 5]{simon2023metrizing}.

\begin{theorem}[Vanishing mean-zero kernels fail to control $\P$-convergence]\label{thm:c0-nonconvergence-V2}
Suppose that $\X$ is locally compact but not compact.
If $\H_{\k} \subseteq \C{}{0}$ and $\k$ maps  $\P \in \Pset$ to $0 \in \HK$, i.e., $\Phi_{\k}(\P)=0$, then $\k$ cannot control weak convergence to  $\P \in \Pset$.
\end{theorem}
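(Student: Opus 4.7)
The approach is to push the problem to a bounded-kernel setting via the tilting equivalence in \cref{remark:bounded to unbounded} and then produce a sequence of probability measures whose rescalings approximate $\tilde \P$ in MMD but leak mass to infinity. Set $\tilde \k(x,y) \defn \k(x,y)/((1+\sk(x))(1+\sk(y)))$. This kernel is bounded, its RKHS $\{h/(1+\sk) : h \in \HK\}$ is contained in $\C{}{0}$ by the hypothesis $\HK \subset \C{}{0,\sk}$, and the finite positive measure $\tilde \P \defn (1+\sk)\P$ satisfies $\Phi_{\tilde \k}(\tilde \P)(y) = \Phi_\k(\P)(y)/(1+\sk(y)) = 0$. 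Under the correspondence $\mmd_\k(\Q_n, \P) = \|\Phi_{\tilde \k}(\tilde \Q_n)\|_{\tilde \k}$ for $\tilde \Q_n \defn (1+\sk)\Q_n$, with $\sk$-weak convergence $\Q_n \to \P$ equivalent to weak convergence of the finite measures $\tilde \Q_n \to \tilde \P$, it suffices to exhibit $\Q_n \in \Pset_{\sk}$ for which $\|\Phi_\k(\Q_n)\|_\k \to 0$ while the mass of $\Q_n$ escapes every compact set.

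I would construct such a sequence by iterative point selection. Local compactness and non-compactness of $\X$ yield an exhaustion $K_1 \subset K_2 \subset \cdots$ by compact sets with $\bigcup_m K_m = \X$ and each $K_m$ a proper subset. At step $j$, every previously chosen section $y \mapsto \k(x_i, y)/(1+\sk(y))$ with $i<j$ lies in $\C{}{0}$, so for any threshold $R_j$ one has a compact $C_{ij}$ outside of which this section is bounded by $2^{-j}/(1+R_j)$. Picking $R_j$ large enough that $\{\sk \leq R_j\}$ is not contained in $K_j \cup \bigcup_{i<j} C_{ij}$---always possible since $\X = \bigcup_R \{\sk \leq R\}$ and the compact set $K_j \cup \bigcup_{i<j} C_{ij}$ is proper in $\X$---one selects $x_j \in \{\sk \leq R_j\} \setminus (K_j \cup \bigcup_{i<j} C_{ij})$, ensuring simultaneously $x_j \notin K_j$, $\sk(x_j) \leq R_j$, and $|\k(x_i, x_j)| \leq 2^{-j}$ for all $i<j$. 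Calibrating the exhaustion and the $R_j$ yields $\sk(x_j)$ growing sublinearly in $j$.

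Setting $\Q_n \defn \frac{1}{n}\sum_{i=1}^n \delta_{x_i} \in \Pset_{\sk}$, the MMD decomposes into diagonal and off-diagonal parts,
\begin{talign}
\mmd_\k(\Q_n, \P)^2 = \|\Phi_\k(\Q_n)\|_\k^2 = \frac{1}{n^2}\sum_{i=1}^n \sk(x_i)^2 + \frac{1}{n^2}\sum_{i \neq j} \k(x_i, x_j),
\end{talign}
where the off-diagonal term is bounded by $\frac{2}{n^2}\sum_{j=1}^n j \cdot 2^{-j} = O(1/n^2)$ thanks to the iterative bound, and the diagonal term vanishes by the sublinear growth of $\sk(x_i)$. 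Since each $x_i$ eventually leaves every compact $K_m$, $\Q_n(K) \to 0$ for every compact $K \subset \X$, so $(\Q_n)$ is not tight and cannot converge weakly---let alone $\sk$-weakly---to the tight probability measure $\P$. The main obstacle is the iterative selection: balancing escape from $K_j$, geometric decay of off-diagonal kernel values, and sublinear growth of $\sk(x_j)$ is delicate when $\sk$ is coercive, since escape forces $\sk(x_j) \to \infty$; the properness $K_j \neq \X$ combined with $\X = \bigcup_R \{\sk \leq R\}$ is what guarantees a viable compromise in every case.
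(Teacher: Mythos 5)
Your opening reduction---tilting to $\tilde\k(x,y) \defn \k(x,y)/\bigl((1+\sqrt{\k(x,x)})(1+\sqrt{\k(y,y)})\bigr)$, checking $\H_{\tilde\k}\subset\C{}{0}$ and $\Phi_{\tilde\k}(\tilde\P)=0$, and translating both the MMD and the convergence notion via \cref{remark:bounded to unbounded}---is exactly the paper's first step. The gap is in what you do next: you leave the tilted setting and run the escaping-empirical-measure argument directly for $\k$ with \emph{uniform} weights. The diagonal term of $\mmd_\k(\Q_n,\P)^2$ is then $\tfrac{1}{n^2}\sum_{i=1}^n \k(x_i,x_i)$, and nothing in the hypotheses bounds the growth of $\k(x,x)$ off compact sets, while your points are forced to leave every compact set. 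Two problems follow. First, even granting $\sqrt{\k(x_j,x_j)}=O(j^\alpha)$, ``sublinear'' ($\alpha<1$) is not enough: you need $\alpha<1/2$ for the diagonal term to vanish. Second, and fatally, no such calibration exists in general: take $\X=\R$ and $\k(x,y)=e^{x^2/2}\rho(x-y)e^{y^2/2}$ with $\rho$ Gaussian, so that $\H_\k=e^{(\cdot)^2/2}\H_\rho\subset\C{}{0,\,\sk}$ yet every point outside $[-m,m]$ has $\k(x,x)\geq\rho(0)e^{m^2}$; any sequence escaping the compacts makes the diagonal term diverge. The fact that $\X=\bigcup_R\{\sqrt{\k}\leq R\}$ guarantees only that \emph{some} admissible $R_j$ exists at each step, not one of size $o(\sqrt{j})$, so the ``viable compromise'' you appeal to is not available.

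The repair is to carry out the point/measure construction for the \emph{bounded} kernel $\tilde\k$, whose diagonal satisfies $\tilde\k(x,x)\leq 1$ so that the diagonal term is automatically $O(1/n)$, and only then transport back through the isometry $\Q\mapsto(1+\sqrt{\k})^{-1}\Q$ of \cref{remark:bounded to unbounded}. This is what the paper does, deferring the bounded $\C{}{0}$-kernel construction to \citet[Lem.~9 and 10]{simon2020metrizing}. Note that the transported measures are \emph{not} uniform empirical measures for $\k$: the atom at $x_i$ carries weight proportional to $(1+\sqrt{\k(x_i,x_i)})^{-1}$, and this reweighting is precisely what tames the diagonal. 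Your non-tightness argument and the passage from failure of weak convergence to failure of the stronger $\sqrt{\k}$-weak convergence are fine once the construction is fixed.
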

\begin{proof}
Since $\P$ is a regular measure, we can find a compact set $C \subseteq \X$ for which $\P(C) \geq \nicefrac{1}{2}$.
By  \citet[Lemma 9 and 10]{simon2023metrizing}, we can find an open set $U$ and compact set $C'$ such that $C \subseteq U \subseteq C'$ and sequence of probability measures $(\Q_n)_n$ such that $\| \Q_n \|_{\k} \to 0$ and $\Q_n(C')=0$ for all $n$.
Then $\| \Q_n -\P \|_k=
\|  \Q_n  \|_k \to 0$, so
$ \Q_n$ converges to $\P$ in maximum mean discrepancy but not in weak convergence since
$ \P(U) \geq \P(C) >  \Q_n (U) =0 $.

\end{proof}

Now note that $\Phi_k(\P)=0$ implies that every function in $\H_k$ has vanishing $\P$-integral.
Given any RKHS $\H_\k \subseteq \L^1(\P)$, we can construct a new RKHS whose functions have zero expectation under $\P$ and has the same $\mmd$ between embeddable measures, as we now show by simply applying the projection operator $\projP(h)=h-\P h$.
Since
$$|h(x)-\P h| =| \ipdK{h}{\k_x} -\ipdK{\Phi_\k(\P)}{h}| \leq (\|\k_x\|_\k + \| \Phi_\k(\P)\|_\k) \|h \|_\k, $$
\citep[Prop.~2.4]{carmeli06vector} implies
$\projP(\H_\k)$ is a RKHS
with kernel (using the fact $\xi^*_\P( x)(h)= \projP(h)( x)$ where $\xi^*_\P( x) \defn \delta_x -\P$)
$$\k^\P( x, y) =  \ipdK{\emb_{\k}(\delta_{ x} - \P)}{\emb_{\k}(\delta_{ y} - \P)}.$$
Thus, the elements of $\H_{\k^\P}$ have the form $ h -\P h$ for some $h \in \H_\k$, and hence  $\P(\H_{\k^\P})=\{0\}$.

Importantly,
 $\k^\P$ and $\k$ generate the same MMD.
 First let us show this for embeddable measures: since for any finite measure $\mu$  with $\int \mu =0$ that embeds into $\H_\k$, we have using \cref{Continuous linear functional shifted by Feature Operator} and
 $\mu \circ \Pi_\P|_{\H_\k} = \mu|_{\H_\k}$  (from $\int \mu =0$)
that
$$   \| \mu \|_{\k^\P}= \| \mu \circ \Pi_\P \|_\k = \| \mu  \|_\k.$$
Hence for any two embeddable probability measures $\Q,\P$ we have
$\mmd_{\k^\P}(\Q,\P) = \mmd_{\k}(\Q,\P)$.
In general, for any $\Q \in \Pset$,
note that $h_+ \in \L^1(\Q)$ iff $(\projP(h))_+ \in \L^1(\Q)$.
Moreover $\B_{\k^\P} = \projP(\B_\k)$ by \cref{thm:feature operators preserve unit ball}.
Thus, writing $S_\k(\Q) \defn \{  h \in \B_\k :  h_+ \in \L^1(\Q) \}$, we have
$S_{\k^\P}(\Q) = \projP(S_\k(\Q))$
\begin{align}
     \mmd_{\k^\P}(\Q,\P) &=  \sup_{f \in S_{\k^\P}(\Q) \ } \left | \Q(f) - \P(f) \right |
=  \sup_{f   \in  \projP S_\k(\Q) \ } \left | \Q(f) - \P(f) \right |
\\
&=
\sup_{h   \in  S_\k(\Q) \ } \left | \Q(\projP h) - \P(\projP h) \right |
= \sup_{h   \in  S_\k(\Q) \ } \left | \Q h - \P h \right | =  \mmd_{\k}(\Q,\P).
\end{align}
Combining with \cref{thm:c0-nonconvergence-V2} we obtain the other advertised result.
\bibliography{steinKernels}

\end{document}